\documentclass[12pt,oneside,phd]{dms}
\usepackage[utf8]{inputenc} 
\usepackage[T1]{fontenc} %

\usepackage{lmodern}
\DeclareSymbolFont{largesymbols}{OMX}{cmex}{m}{n}

\anglais
\usepackage[numbers]{natbib}
\usepackage[english,frenchb]{babel}


\def\sloppy{%
  \tolerance 500
  \emergencystretch 3em%
  \hfuzz .5pt
  \vfuzz\hfuzz}
\sloppy   

\usepackage{graphicx,amssymb,icomma}


\usepackage[labelfont=bf, width=\linewidth]{caption}

\usepackage{hyperref}  
\hypersetup{colorlinks=true,allcolors=black}
\usepackage{hypcap}   
\usepackage{bookmark} 
\numberwithin{equation}{section}
\numberwithin{table}{chapter}
\numberwithin{figure}{chapter}




\usepackage{bbold}
\usepackage{xcolor}
\usepackage[frozencache,cachedir=minted]{minted}
\usepackage{pifont}
\usepackage{cleveref}
\usepackage{enumitem}
\usepackage{booktabs}
\usepackage{wrapfig}
\usepackage{multirow}
\usepackage{subcaption}
\usepackage[export]{adjustbox}
\usepackage{algorithm}
\usepackage{algpseudocode}

\usepackage{thmtools}
\usepackage{thm-restate}
\newtheorem{theorem}{Theorem}
\newtheorem{proposition}{Proposition}
\newtheorem{lemma}{Lemma}
\newtheorem{corollary}{Corollary}
\theoremstyle{definition}
\newtheorem{definition}{Definition}

\theoremstyle{remark}

\newcommand{\restatableeq}[3]{\label{#3}#2\gdef#1{#2\tag{\ref{#3}}}}
\algnewcommand{\LineComment}[1]{\State \(\triangleright\) #1}

\counterwithout{equation}{section}
\counterwithin{equation}{chapter}
\counterwithin{algorithm}{chapter}
\counterwithin{theorem}{chapter}
\counterwithin{proposition}{chapter}
\counterwithin{lemma}{chapter}
\counterwithin{corollary}{chapter}
\counterwithin{definition}{chapter}
\counterwithin{assumption}{chapter}
\counterwithin{remark}{chapter}

\newcommand{\cmark}{\ding{51}}
\newcommand{\xmark}{\ding{55}}

\newcommand{\mathbbm}[1]{\mathbb{1}}
\newcommand{\ozero}{\text{\textcircled{\scriptsize{0}}}}
\newcommand{\oone}{\text{\textcircled{\scriptsize{1}}}}
\newcommand{\edge}[1]{$\langle$\textit{#1}$\rangle$}

\newcommand{\graph}{\mathcal{G}}
\newcommand{\grel}{\mathcal{G}_r}
\newcommand{\gtrain}{\mathcal{G}_{\textit{train}}}
\newcommand{\ginf}{\mathcal{G}_{\textit{inf}}}

\newcommand{\etrain}{\mathcal{V}_{\textit{train}}}
\newcommand{\einf}{\mathcal{V}_{\textit{inf}}}
\newcommand{\rtrain}{\mathcal{R}_{\textit{train}}}
\newcommand{\rinf}{\mathcal{R}_{\textit{inf}}}

\newcommand{\rels}{\mathcal{R}}

\newcommand{\rfund}{\gR_{\textit{fund}}}
\newcommand{\vrfund}{\mR_{\textit{fund}}}

\newcommand{\easy}[1]{#1}
\newcommand{\tp}[1]{#1 \cmark}
\newcommand{\fp}[1]{#1 \xmark}
\newcommand{\stack}[1]{\begin{tabular}{@{}c@{}}#1\end{tabular}}

\definecolor{mygreen}{RGB}{84, 130, 53}
\definecolor{myblue}{RGB}{68, 114, 196}
\definecolor{myorange}{RGB}{237, 125, 49}
\definecolor{mygray}{RGB}{246, 246, 246}
\definecolor{greenmarker}{RGB}{150, 255, 48}

\usepackage{bm}









\def\eqref#1{equation~\ref{#1}}









\def\1{\bm{1}}








\def\vb{{\bm{b}}}

\def\ve{{\bm{e}}}

\def\vg{{\bm{g}}}
\def\vh{{\bm{h}}}

\def\vm{{\bm{m}}}

\def\vq{{\bm{q}}}
\def\vr{{\bm{r}}}
\def\vs{{\bm{s}}}

\def\vv{{\bm{v}}}
\def\vw{{\bm{w}}}
\def\vx{{\bm{x}}}
\def\vy{{\bm{y}}}


\def\mA{{\bm{A}}}

\def\mD{{\bm{D}}}
\def\mE{{\bm{E}}}

\def\mR{{\bm{R}}}

\def\mW{{\bm{W}}}

\DeclareMathAlphabet{\mathsfit}{\encodingdefault}{\sfdefault}{m}{sl}
\SetMathAlphabet{\mathsfit}{bold}{\encodingdefault}{\sfdefault}{bx}{n}

\def\gA{{\mathcal{A}}}

\def\gC{{\mathcal{C}}}
\def\gD{{\mathcal{D}}}
\def\gE{{\mathcal{E}}}

\def\gG{{\mathcal{G}}}
\def\gH{{\mathcal{H}}}
\def\gI{{\mathcal{I}}}

\def\gL{{\mathcal{L}}}
\def\gM{{\mathcal{M}}}
\def\gN{{\mathcal{N}}}

\def\gP{{\mathcal{P}}}
\def\gQ{{\mathcal{Q}}}
\def\gR{{\mathcal{R}}}

\def\gT{{\mathcal{T}}}

\def\gV{{\mathcal{V}}}

\def\gX{{\mathcal{X}}}



\def\sR{{\mathbb{R}}}








\newcommand{\E}{\mathbb{E}}



\DeclareMathOperator*{\argmax}{arg\,max}

\DeclareMathOperator*{\topk}{top-k}

\begin{document}

\version{1}
\pagenumbering{roman}



\title{Learning Representations for Reasoning:\linebreak Generalizing Across Diverse Structures}

\author{Zhaocheng Zhu}

\copyrightyear{2024}

\department{Département d'informatique et de recherche opérationnelle}

\date{1 août 2024}

\sujet{Informatique}

\president{Jian-Yun Nie}

\directeur{Jian Tang}


\membrejury{Bang Liu}

\examinateur{Pasquale Minervini}   





\pdfbookmark[chapter]{Couverture}{PageUn}

\maketitle

\maketitle


\francais
\chapter*{Résumé}

Le raisonnement, la capacité de tirer des conclusions logiques à partir de connaissances existantes, est une caractéristique marquante de l'être humain. Avec la perception, ils constituent les deux thèmes majeurs de l'intelligence artificielle. Alors que l'apprentissage profond a repoussé les limites de la perception au-delà des performances humaines en vision par ordinateur et en traitement du langage naturel, les progrès dans les domaines du raisonnement sont loin derrière. L'une des raisons fondamentales est que les problèmes de raisonnement ont généralement des structures flexibles à la fois pour les connaissances (par exemple, les graphes de connaissances) et les requêtes (par exemple, les requêtes en plusieurs étapes), et de nombreux modèles existants ne fonctionnent bien que sur les structures vues pendant l'entraînement.

Dans cette thèse, nous visons à repousser les limites des modèles de raisonnement en concevant des algorithmes qui généralisent à travers les structures de connaissances et de requêtes, ainsi que des systèmes qui accélèrent le développement sur des données structurées. Cette thèse est composée de trois parties.
Dans la partie~\ref{part:inductive}, nous étudions des modèles qui peuvent généraliser de manière inductive à des graphes de connaissances invisibles, qui impliquent de nouveaux vocabulaires d'entités et de relations. Pour les nouvelles entités, nous proposons un nouveau cadre qui apprend les opérateurs neuronaux dans un algorithme de programmation dynamique calculant des représentations de chemin~\cite{zhu2021neural}. Ce cadre peut être étendu à des graphes de connaissances à l'échelle d'un million en apprenant une fonction de priorité~\cite{zhu2023net}. Pour les relations, nous construisons un graphe de relations pour capturer les interactions entre les relations, convertissant ainsi les nouvelles relations en nouvelles entités. Cela nous permet de développer un modèle pré-entraîné unique pour des graphes de connaissances arbitraires~\cite{galkin2024towards}.
Dans la partie~\ref{part:multi-step}, nous proposons deux solutions pour généraliser les requêtes en plusieurs étapes sur les graphes de connaissances et sur le texte respectivement. Pour les graphes de connaissances, nous montrons que les requêtes en plusieurs étapes peuvent être résolues par plusieurs appels de réseaux neuronaux graphes et d'opérations de logique floue~\cite{zhu2022neural}. Cette conception permet la généralisation à de nouvelles entités~\cite{galkin2022inductive}, et peut être intégrée à notre modèle pré-entraîné pour prendre en charge des graphes de connaissances arbitraires~\cite{galkin2024zero}. Pour le texte, nous concevons un nouvel algorithme pour apprendre des connaissances explicites sous forme de règles textuelles afin d'améliorer les grands modèles de langage sur les requêtes en plusieurs étapes~\cite{zhu2023large}.
Dans la partie~\ref{part:system}, nous proposons deux systèmes pour faciliter le développement de l'apprentissage automatique sur des données structurées. Notre bibliothèque open source traite les données structurées comme des citoyens de première classe et supprime la barrière au développement d'algorithmes d'apprentissage automatique sur des données structurées, y compris des graphes, des molécules et des protéines~\cite{zhu2022torchdrug}. Notre système d'intégration de nœuds résout le goulot d'étranglement de la mémoire GPU des matrices d'intégration et s'adapte aux graphes avec des milliards de nœuds~\cite{zhu2019graphvite}.

\noindent {\bf \keywords: raisonnement, apprentissage de représentation, graphes de connaissances, grands modèles de langage, systèmes d'apprentissage automatique}


\anglais
\chapter*{Abstract}

Reasoning, the ability to logically draw conclusions from existing knowledge, is a hallmark of human. Together with perception, they constitute the two major themes of artificial intelligence. While deep learning has pushed the limit of perception beyond human-level performance in computer vision and natural language processing, the progress in reasoning domains is way behind. One fundamental reason is that reasoning problems usually have flexible structures for both knowledge (e.g.\ knowledge graphs) and queries (e.g.\ multi-step queries), and many existing models only perform well on structures seen during training.

In this thesis, we aim to push the boundary of reasoning models by devising algorithms that generalize across knowledge and query structures, as well as systems that accelerate development on structured data. This thesis is composed of three parts.
In Part~\ref{part:inductive}, we study models that can inductively generalize to unseen knowledge graphs, which involve new entity and relation vocabularies. For new entities, we propose a novel framework that learns neural operators in a dynamic programming algorithm computing path representations~\cite{zhu2021neural}. This framework can be further scaled to million-scale knowledge graphs by learning a priority function~\cite{zhu2023net}. For relations, we construct a relation graph to capture the interactions between relations, thereby converting new relations into new entities. This enables us to develop a single pre-trained model for arbitrary knowledge graphs~\cite{galkin2024towards}.
In Part~\ref{part:multi-step}, we propose two solutions for generalizing across multi-step queries on knowledge graphs and text respectively. For knowledge graphs, we show multi-step queries can be solved by multiple calls of graph neural networks and fuzzy logic operations~\cite{zhu2022neural}. This design enables generalization to new entities~\cite{galkin2022inductive}, and can be integrated with our pre-trained model to accommodate arbitrary knowledge graphs~\cite{galkin2024zero}. For text, we devise a new algorithm to learn explicit knowledge as textual rules to improve large language models on multi-step queries~\cite{zhu2023large}.
In Part~\ref{part:system}, we propose two systems to facilitate machine learning development on structured data. Our open-source library treats structured data as first-class citizens and removes the barrier for developing machine learning algorithms on structured data, including graphs, molecules and proteins~\cite{zhu2022torchdrug}. Our node embedding system solves the GPU memory bottleneck of embedding matrices and scales to graphs with billion nodes~\cite{zhu2019graphvite}.

\noindent {\bf \keywords: reasoning, representation learning, knowledge graphs, large language models, machine learning systems}


\cleardoublepage
\pdfbookmark[chapter]{\contentsname}{toc}  
\tableofcontents
\cleardoublepage
\phantomsection  
\listoftables
\cleardoublepage
\phantomsection
\listoffigures


\chapter*{List of Acronyms and Abbreviations}

\begin{twocolumnlist}{.2\textwidth}{.75\textwidth}
    MLP & Multi-Layer Perceptron \\
    ReLU & Rectified Linear Unit \\
    SGD & Stochastic Gradient Descent \\
    ASGD & Asynchronous Stochastic Gradient Descent \\
    EPFO & Existential Positive First-Order \\
    FOL & First-Order Logic \\
    GNN & Graph Neural Network \\
    PNA & Principal Neighborhood Aggregation \\
    NBFNet & Neural Bellman-Ford Network \\
    A*Net & A* Network \\
    GNN-QE & Graph Neural Network Query Executor \\
    Ultra & Unified, Learnable \& Transferable Knowledge Graph Representations \\ 
    MR & Mean Rank \\
    MRR & Mean Reciprocal Rank \\
    MAPE & Mean Absolute Percentage Error \\
    AUROC & Area Under the Receiver Operating Characteristic Curve \\
    AP & Average Precision \\
    OOM & Out-of-Memory \\
    SOTA & State-of-the-Art \\
    LLM & Large Language Model \\
    GPT & Generative Pretrained Transformer \\
    CoT & Chain-of-Thought \\
    LtM & Least-to-Most \\
    CPU & Central Processing Unit \\
    GPU & Graphics Processing Unit \\    
\end{twocolumnlist}


\chapter*{Acknowledgements}

It felt like yesterday when I stepped onto the campus of UdeM and joined Mila as a Ph.D. student in 2018. I was thrilled to conduct fundamental research at one of the birthplaces of deep learning, yet anxious about whether I could tackle all the challenges along the way. As the journey comes to an end, I can confidently say that I have completed a wonderful and fruitful Ph.D. life, which would definitely not be possible without help from many people.

The highest acknowledgement should be given to my parents, Yuming Zhao and Jianguo Zhu, for their unconditional love and support. They always believe in my abilities and trust my decisions more than anyone else. When I was frustrated by research failures and the COVID pandemic, they listened to my troubles and provided help as much as they could. I feel privileged to have Ph.D. parents who fully understand the impact and hardship of research. My mother provided me with practical suggestions at every crucial moment in my career. Their support established my initial confidence and strategy for solving daunting challenges. Thank you to my parents for their profound impact on my life.

I would like to express my deep gratitude to my advisor, Jian Tang, for giving me the opportunity to study at Mila and teaching me numerous skills. Over the past years, Jian has significantly influenced me in various aspects, such as choosing research directions, maintaining high standards and presenting works to a broad audience. When I was to submit my first paper at Mila, Jian provided close guidance and rewrote almost every section of the paper, which I learned later how challenging it is to polish ``doodles'' from a junior student. Jian always encouraged me to broaden my vision with fundamental machine learning techniques, which equipped me with a solid understanding of graphical models, unsupervised learning and meta learning. Most of my writing skills were acquired from Jian's edits on my manuscripts, and most of my presentation skills were obtained from the courses and tutorials we prepared together. I also really appreciate the freedom Jian gave me, even when my research did not always align with his interests.

Mikhail Galkin deserves special thanks for being my closest collaborator and friend for years. I never imagined that I could collaborate with a well-known blogger in graph machine learning. Mikhail has long been the first one I brainstorm with about my crazy ideas, and he always provided constructive suggestions to help me realize my dreams, including several works in this thesis. Mikhail also brought me to many talented researchers, especially those in European graph communities. Look forward to seeing you again in California!

I have learned a lot from my mentors, Yuan (Emily) Xue and Hanjun Dai, when I took an internship at Google. Both are not only knowledgeable collaborators, but also role models for my career. Emily was both energetic and passionate at work, and I always felt invigorated after each discussion with her. Hanjun respected my curiosity and helped me sort out the best plan to balance curiosity and output. I wish myself could be as nice as them when I mentor interns. I am grateful to Dale Schuurmans for invaluable advice on career choices, and Xinyun Chen, Denny Zhou, Xiaowei Li, Bo Dai, Xuezhi Wang for insightful discussions.

I would like to express my heartfelt thanks to my second cousin, Hao Tang, for discussing all kinds of stuff with me, ranging from techniques, gossip to life plans. While I am not sure if there is genetic evidence for shared research interests, ours seem to have a large overlap, which renders Hao's suggestions valuable to my works. I would also extend my gratitude to Yao Lu, who played an important role in my research and life during my second year.

Mila is an awesome place to conduct research, and I am grateful to the diverse environment, computation resources and freedom Mila granted me. I am grateful to my coauthors who played an integral role in my research journey. Among many, thank you Meng Qu, Zuobai Zhang, Sophie Xhonneux, Chence Shi, Jianan Zhao and Xinyu Yuan. I would like to extend my thanks to Andreea Deac, Shengchao Liu, Weiping Song, Minkai Xu, Jiarui Lu, Farzaneh Heidari, Huiyu Cai, Minghao Xu, Yangtian Zhang, Chuanrui Wang and Zhihao Zhan in Jian's group, as well as Jie Fu, Min Lin, Ladislav Rampášek and Chen Sun across Mila. I thank my mentees who helped me practice my mentoring skills, including Zhijian Duan, Shengding Hu, Shiyu Zhao, Zhanke Zhou, Michelle Liu and Emy Yue Hu.

I have learned many precious lessons during collaboration with Jingbo Shang, Michael Bronstein, Laurent Charlin and Maxime Gazeau, as well as conversations with Jiliang Tang, Bruno Ribeiro, Ming Zhang and Shuiwang Ji. I am grateful to people who taught me important engineering skills, which laid a solid foundation for my implementation. Jifeng Dai and Xizhou Zhu educated me on how to break down model performance through ablation studies. Wenbin Hou and Shizhen Xu guided me in debugging and profiling CUDA implementation. I am fortunate to meet many like-minded researchers in the world, whom I always find excited and inspired to talk with. Thanks to Pasquale Minervini, Yihong Chen, Qian Huang, Petar Veličković and Leon Bergen for discussions on graph reasoning. Thanks to Eric Zelikman, Abulhair Saparov and Mehran Kazemi for insights on large language models. It was an honor to receive comments from people who work on classical neural-symbolic methods, such as Artur Garcez, Giuseppe Marra and Francesco Giannini.

The support and encouragement from my friends really made my Ph.D. journey an enjoyable one. Thanks Jun Gao for reminding me of our original motivation for research. Thanks to Tian Li, Ziniu Hu, Hongyu Ren, Yao Fu, Guodong Zhang and Jiaxuan You for sharing their research, stories and thoughts with me. Thanks to Yue Dong, Linqing Liu, Dinghuai Zhang and Jiayi Weng for helping me relieve the stress during job search. I am grateful to have overcome challenges in the pandemic with my roommate Maksym Korablyov.

Having a few hobbies to prevent burnout from intensive research work is an effective remedy for the Ph.D. journey. I would like to thank Daniel Kordan and Michael Shainblum for their fantastic artworks and tutorials, which motivated me to go out with cameras and curious eyes, regardless of the harsh winters in Montreal. Thanks to all friends who created unforgettable memories with me during our hikes and skiing trips.

Last but not the least, I would like to deliver special thanks to my mother's friend, Bo Huang, and her family, for helping me settle down in Montreal and navigate many life challenges, especially during the difficult pandemic period.

 %
 %

\NoChapterPageNumber
\cleardoublepage
\pagenumbering{arabic}


\chapter{Introduction}
\label{cha:introduction}

\begin{figure}[t]
    \centering
    \includegraphics[width=0.48\linewidth,valign=t]{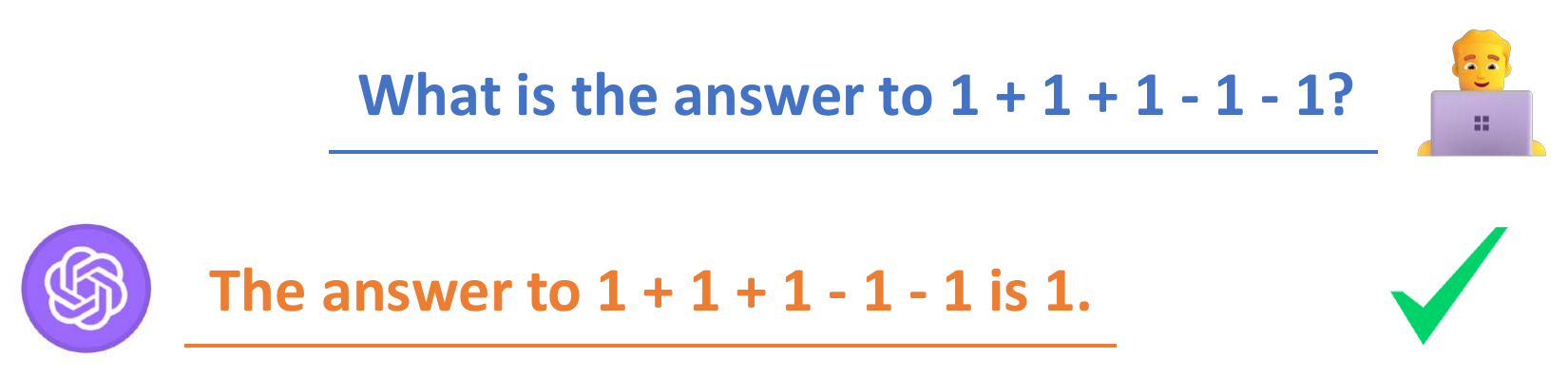}
    \hfill
    \includegraphics[width=0.48\linewidth,valign=t]{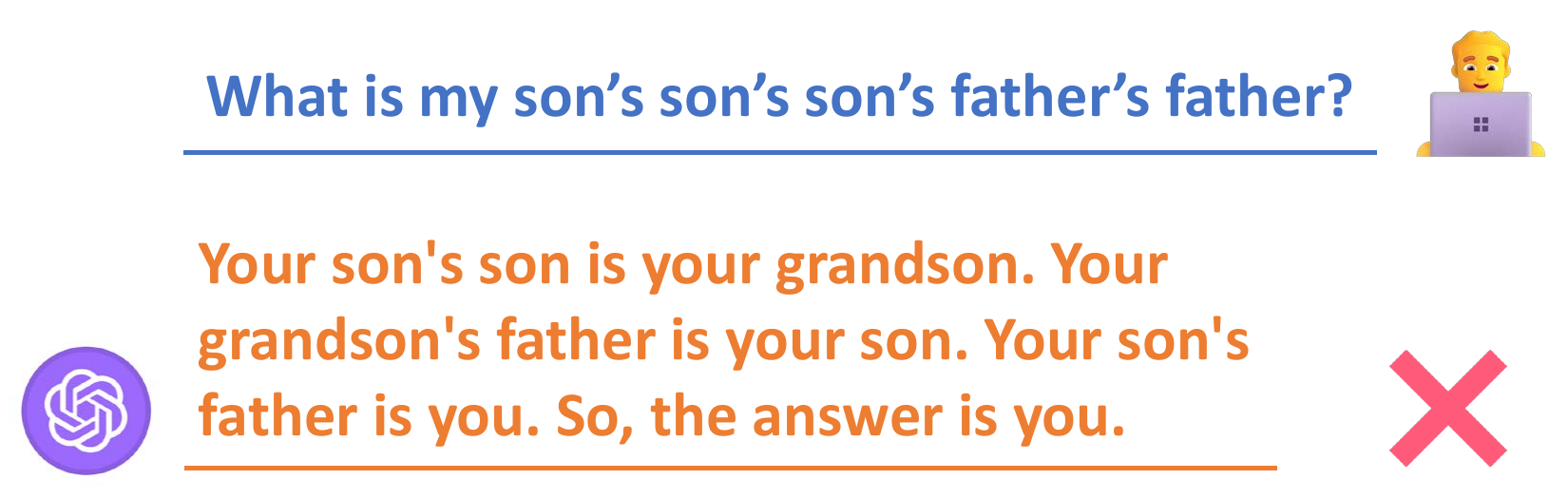}
    \\
    \vspace{1em}
    \begin{minipage}{0.48\linewidth}
        \centering
        \includegraphics[width=0.7\linewidth]{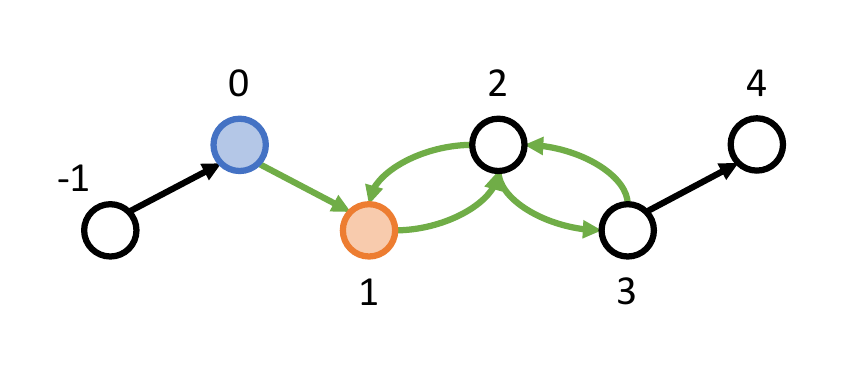}
    \end{minipage}
    \hfill
    \begin{minipage}{0.48\linewidth}
       \centering
        \includegraphics[width=0.7\linewidth]{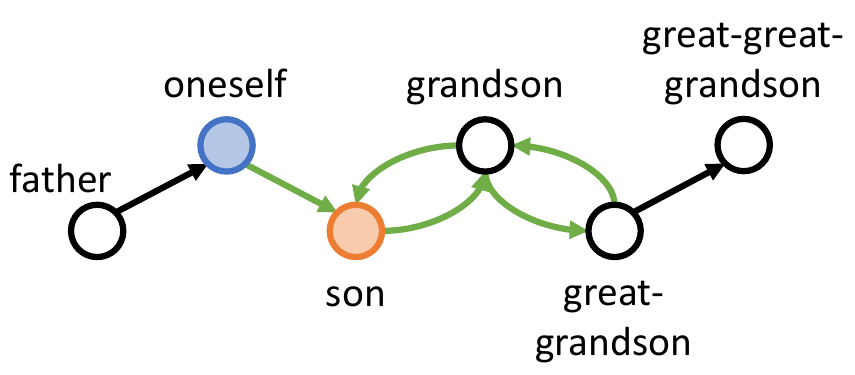}
    \end{minipage}
    \caption[Failure of GPT-4 in generalizing to new reasoning problems]{Failure of GPT-4 in generalizing to new reasoning problems. Top: GPT-4 solves the arithmetic problem, but fails to answer the kinship problem. Bottom: both problems aim to find the ending node of a given path on a path-like knowledge structure.}
    \label{fig:gpt_failure}
\end{figure}

Perception and reasoning are two major themes of artificial intelligence. Perception endows an agent with the ability to perceive the environment and process them into knowledge, while reasoning empowers it to use the stored knowledge to answer questions and draw new conclusions. With the rise of deep learning, there have been numerous advanced models in perception domains such as computer vision~\cite{he2016deep} and natural language processing~\cite{vaswani2017attention}. These models further reach a zenith when trained on massive data from the Internet, often condensing into a single large model that can serve a wide range of tasks~\cite{achiam2023gpt, team2023gemini, ramesh2022hierarchical, alayrac2022flamingo}---termed as foundation models nowadays. Nevertheless, breakthroughs of deep learning in reasoning domains are very limited, and consequently many applications are still bottlenecked by the reasoning abilities of current models.

One fundamental reason explaining this discrepancy is that reasoning domains require much more complicated generalization than perception domains. Unlike perception domains where unseen problems are mostly interpolation of seen problems, problems in reasoning domains are usually extrapolation of seen problems, e.g.\ composition, factorization, abstraction or substitution of seen problems. If a model does not possess proper inductive biases to deal with such dimensions of generalization, it will inevitably fail to solve new reasoning problems. Figure~\ref{fig:gpt_failure} (top) shows such a failure of GPT-4~\cite{achiam2023gpt}. GPT-4 can successfully answer an arithmetic problem, but not a similar kinship problem, despite the fact that it has learned kinship commonsense. This highlights the shortcomings of existing model architectures in performing reasoning, which cannot even be amended by training on the whole Internet.

In this thesis, we aim to push the boundary of representation learning models in reasoning domains. We notice that reasoning problems, despite their various surface forms, are underpinned by structures that represent the background knowledge in a reasoning process. Answering a query is thus cast to a series of actions on such structures, often resulting in a sub structure. Figure~\ref{fig:gpt_failure} (bottom) illustrates the corresponding structures for the two problems mentioned above. From this perspective, the two problems have identical knowledge and query structures, and the only difference lies in the vocabularies associated with the structures. Motivated by this observation, we are interested in representation learning models that can generalize across diverse structures, especially those unseen at training time. We consider the following types of generalization in reasoning problems
\begin{itemize}[label=$\bullet$, leftmargin=*]
    \item \textbf{Generalization across knowledge structures.} Most knowledge structures evolve over time, requiring reasoning models to adapt to new knowledge structures in the same domain. Structures of two domains can be similar even if they have distinct semantics for nodes and edges. In particular, we investigate models that can generalize across graphs with different entity and relation vocabularies.
    \item \textbf{Generalization across query structures.} Many queries of interest are composed of multiple reasoning steps. Consequently, there are an exponential number of possible query structures, and it is infeasible to train a model with all structures. Therefore, we consider studying models that can generalize to new or longer combinations of steps.
\end{itemize}

Both goals are considerably challenging for representation learning models, because representation learning models are always good at fitting all the information provided by the training set and the test set differs from that in some aspects. Hence, we need to inject certain inductive biases in the model architecture, such that the model only learns functions over information that are universal to different structures. By contrast, symbolic algorithms, such as personalized PageRank~\cite{page1999pagerank} and subgraph matching, generalize perfectly to unseen structures no matter what training structures they are implemented for\footnote{Strictly speaking, a non-trivial training structure is required to verify the correctness of an implementation of a symbolic algorithm.}. Nevertheless, symbolic algorithms are handcrafted by humans and only applicable to limited scenarios they are designed for, e.g.\ a complete knowledge structure. To meet our goal of generalization across structures, we devise representation learning models with inductive biases inspired by symbolic algorithms. This core idea is a recurring theme throughout this thesis.

Developing representation learning models for structures is often cumbersome, since modern machine learning frameworks are designed and optimized for tensors. To solve this issue, we aim to develop a library to simplify machine learning development on structured data and engage more developers into this field. Additionally, given the large scale of knowledge structures in the real world, we would like to study scalable solutions for both popular and our representation learning methods on structures, with the goal of extending their application to million-scale or even larger graphs.

To summarize, this thesis addresses the challenge of generalization across diverse structures, including entity vocabularies, relation vocabularies, multi-step queries in both graph and text modalities. Our works in this thesis reveal the possibility of unifying various knowledge structures and query structures, leading to the first foundation model for both single- and multi-step queries on knowledge graphs. These works have changed the long-held convention of learning shallow embeddings of structures and unlocked many opportunities in reasoning domains. We anticipate our assets and findings will accelerate progress towards human-level reasoning models and the ultimate goal of artificial intelligence.

\section{Summary of Contributions}

Here we provide an overview of this thesis and summary of our contributions. In Chapter~\ref{cha:background}, we discuss the definition of problems and the goal of generalization across structures. We summarize related works on representation learning for structures in order to provide readers a better understanding of our contributions. Chapter~\ref{cha:nbfnet} \& \ref{cha:ultra} study models that generalize to structures with unseen entity and relation vocabularies. Chapter~\ref{cha:gnn-qe} \& \ref{cha:htt} demonstrate models for solving multi-step queries in both graph and text modalities. Chapter~\ref{cha:torchdrug} \& ~\ref{cha:graphvite} present systems to facilitate development of machine learning on structured data.

\smallskip\noindent\textbf{(Chapter~\ref{cha:nbfnet}) Generalization to Knowledge Graphs with Unseen Entities.}
Knowledge graph reasoning is typically solved by embedding methods, which learn an embedding vector for each entity and relation in a knowledge graph. Such embedding vectors restrict the prediction of these methods to entities they are trained on. We proposed NBFNet~\cite{zhu2021neural} to learn the representation of every entity pair as a function of the relational paths between them, which eliminates the need for entity embeddings and generalizes to new entities or even new knowledge graphs of the same relation vocabulary. NBFNet achieved state-of-the-art results in both transductive and inductive settings, and can be made more efficient by learning a priority function to select nodes and edges on the fly~\cite{zhu2023net}. NBFNet ranked 12 out of 39 teams in 1st OGB large-scale challenge, being the strongest single model and the most parameter efficient on a knowledge graph with 87 million entities and 504 million facts.

\smallskip\noindent\textbf{(Chapter~\ref{cha:ultra}) Generalization to Any Knowledge Graph with Arbitrary Entity and Relation Vocabularies.}
Generalization across knowledge structures plays a key role in the era of foundation models. In order to train a single generic model that performs reasoning on arbitrary input, we need to enable generalization to new relation vocabularies in addition to the entity vocabularies studied in Chapter~\ref{cha:nbfnet}. In our work Ultra~\cite{galkin2024towards}, we solved this challenge by parameterizing relative relation representations as a function of relation interactions, resulting in two nested NBFNet, one for entities and one for relations. By training on 3 standard knowledge graphs, Ultra shows strong zero-shot generalization performance on 40 knowledge graphs of various domains and sizes, on par or even surpassing state-of-the-art methods on 32 datasets. Ultra eliminated the need of training models separately for each graph and established the first foundation model for knowledge graph reasoning.

\smallskip\noindent\textbf{(Chapter~\ref{cha:gnn-qe}) Solving Multi-hop Queries on Knowledge Graphs.}
Many applications of reasoning require to deal with queries that inherently contain multiple steps, e.g. \emph{at what universities do Turing Award winners in the field of deep learning work?} Common embedding methods modeling such queries simulate logic operations with neural networks, which do not generalize well to different combinations of steps. We proposed GNN-QE~\cite{zhu2022neural} to decompose queries into individual steps, and parameterize each step with graph neural networks (GNNs) or fuzzy logic. Such a design aligns with the subgraph matching algorithm that generalizes perfectly when the graph is complete. GNN-QE not only achieves a relative gain of 22.3\% on existential positive first-order (EPFO) queries and 95.1\% on negation queries, but is also applicable to knowledge graphs with unseen entities~\cite{galkin2022inductive}. Meanwhile, GNN-QE supports visualization of entity distributions for every intermediate steps, and can be further integrated with Ultra to answer queries on any knowledge graph~\cite{galkin2024zero}.

\smallskip\noindent\textbf{(Chapter~\ref{cha:htt}) Solving Multi-step Queries with Large Language Models.}
With the popularization of large language models (LLMs), reasoning in natural languages has gradually drawn the attention of the community. Chain-of-Thought (CoT) prompting~\cite{wei2022chain} showed that we can teach LLMs to solve multi-step queries using a small set of in-context examples with intermediate steps. However, CoT relies on implicit knowledge stored in the parameters of LLMs, of which errors may exacerbate in multi-step reasoning. We rectified this issue with Hypotheses-to-Theories (HtT) prompting~\cite{zhu2023large} that learns explicit knowledge as textual rules and generalizes to queries longer than the training ones. Because the rules are expressed and learned as text, HtT opened up a transparent and interpretable learning paradigm for LLMs. The rules discovered by LLMs align well with human knowledge, and naturally transfer to different models and to different forms of the same problem.

\smallskip\noindent\textbf{(Chapter~\ref{cha:torchdrug}) A Library for Structured Data and Applications.}
Modern machine learning heavily relies on batch processing of tensors on GPUs to accelerate the computation, which structured data does not conform to. A common compromise is to pad structured data into grid data that can be represented by tensors, leading to much unnecessary computation and cognitive load. We addressed this issue by developing a library~\cite{zhu2022torchdrug} that treats structured data as first-class citizens and provides intuitive yet efficient GPU implementation for both graph and domain-specific operations. This infrastructure enables us to efficiently build solutions for various applications, such as knowledge graph reasoning, molecular property prediction and many protein representation learning tasks, all released as an open-source software TorchDrug. TorchDrug has brought many researchers and developers into the fields of knowledge graph reasoning and drug discovery.

\smallskip\noindent\textbf{(Chapter~\ref{cha:graphvite}) A System for Training Embeddings on Large Graphs.}
While embedding methods are the \emph{de facto} model for knowledge graph reasoning, they often cannot scale to knowledge graphs with more than 1 million entities, due to the large size of parameter matrices. Moreover, embedding methods do not benefit from techniques such as mini-batch SGD or data parallel, since they involve much more memory access per computation than neural networks. To scale up embedding methods, we developed GraphVite~\cite{zhu2019graphvite} to leverage the unique advantages of CPUs and GPUs for different stages in embedding training. GraphVite was the first system to train node embeddings of a billion-scale graph with only 4 P100 GPUs, and accelerate training on million-scale graphs by 51 times. It supports 10 different embedding methods, covering homogeneous graphs, knowledge graphs and high-dimensional visualization.

\section{Other Works}

While this thesis mainly focuses on representation learning techniques for reasoning tasks, some of our works do not directly fall into this range. GraphAny~\cite{zhao2024graphany} studied generalization across feature and label spaces in the node classification task. BioKGC~\cite{hu2024path} applied NBFNet to predict new interactions in biomedical networks. We have collaborated in a position paper discussing multi-hop reasoning and neural graph databases~\cite{ren2023neural}, and two cross-modal reasoning projects, including KEPLER~\cite{wang2021kepler} for enhancing language models with knowledge graph supervision, and GraphText~\cite{zhao2023graphtext} for training-free graph reasoning with LLMs. We have also developed a few works in drug discovery~\cite{shi2020graphaf, xu2022peer} based on our TorchDrug library, and large-scale multi-task datasets for molecular representation learning~\cite{beaini2024towards}. These works are not included as part of this thesis.

\section{Reading Guide}

This thesis is organized in an order from fundamental techniques to application solutions for reasoning and structured data. Readers interested in reasoning domains are encouraged to follow the original order of this thesis. Additionally, this thesis may be interesting to those working on topics related to the techniques we developed. For readers who want to focus on specific machine learning topics, we recommend the following chapters

\begin{itemize}[label=$\bullet$]
    \item Graph neural networks: Chapter~\ref{cha:nbfnet}, \ref{cha:ultra} and \ref{cha:gnn-qe}.
    \item Inductive generalization: Chapter~\ref{cha:nbfnet}, \ref{cha:ultra} and \ref{cha:gnn-qe}.
    \item Compositional generalization: Chapter~\ref{cha:gnn-qe} and \ref{cha:htt}.
    \item Zero-shot learning: Chapter~\ref{cha:ultra} and \ref{cha:gnn-qe}.
    \item Large language model reasoning: Chapter~\ref{cha:htt}.
    \item Batching irregular structure: Chapter~\ref{cha:nbfnet}, \ref{cha:gnn-qe} and \ref{cha:torchdrug}.
    \item Scalability: Chapter~\ref{cha:nbfnet} and \ref{cha:graphvite}.
\end{itemize}
\chapter{Background}
\label{cha:background}

In this chapter, we introduce the background knowledge and discuss related work of this thesis. We provide definitions for knowledge graph reasoning, inductive generalization and compositional generalization, as well as discuss challenges for achieving such generalization in representation learning models. We summarize related work in both knowledge graph reasoning and large language model reasoning literature, highlight their drawbacks and refer interested readers to referenced materials.

\section{Preliminary}

\subsection{Knowledge Graph Reasoning}

We adopt knowledge graphs as the major testbed for studying generalization, since they are a common discrete format of knowledge and free of confounding factors such as linguistic variations. A knowledge graph is denoted by $\gG = (\gV, \gE, \gR)$, where $\gV$ and $\gE$ represent the set of entities (nodes) and relations (edges) respectively, and $\gR$ is the set of relation types. Each relation is expressed as a triplet $(h, r, t)$, with $h$ and $t$ being the head and tail entities, and $r$ being the relation type. Due to this formulation, knowledge graphs are often referred to as a collection of triplets, where each triplet is a training or test sample from the perspective of machine learning.

The goal of knowledge graph reasoning is to predict all answer entities in a knowledge graph given a query. In its simplest form, the query contains only an entity and a relation, and the goal is to find either head entities or tail entities that form valid triplets with the query. A query example may be \emph{Who are Turing Award winners?} We denote such queries as $(u, q, ?)$ or $(?, q, u)$. Usually, due to the incomplete nature of knowledge graphs, the answer entities cannot be directly retrieved from the knowledge graph, and we need to reason about such missing triplets with representation learning models.

Going beyond single-hop queries, multi-hop queries aim to solve queries with multiple entities, relations and logical operations, including conjunction ($\land$), disjunction ($\lor$) and negation ($\neg$). For example, multi-hop queries may represent complex questions like \emph{at what universities do Turing Award winners in the field of deep learning work?}, which can be written as $?v \exists u: \textit{Win}(u, \textit{Turing Award}) \land \textit{Field}(u, \textit{Deep Learning}) \land \textit(u, v)$. Multi-hop queries are challenging in the sense that one not only needs to deal with incomplete knowledge graphs, but also models multiple operations and satisfies logical properties.

\begin{figure}[t]
    \centering
    \includegraphics[width=0.8\textwidth]{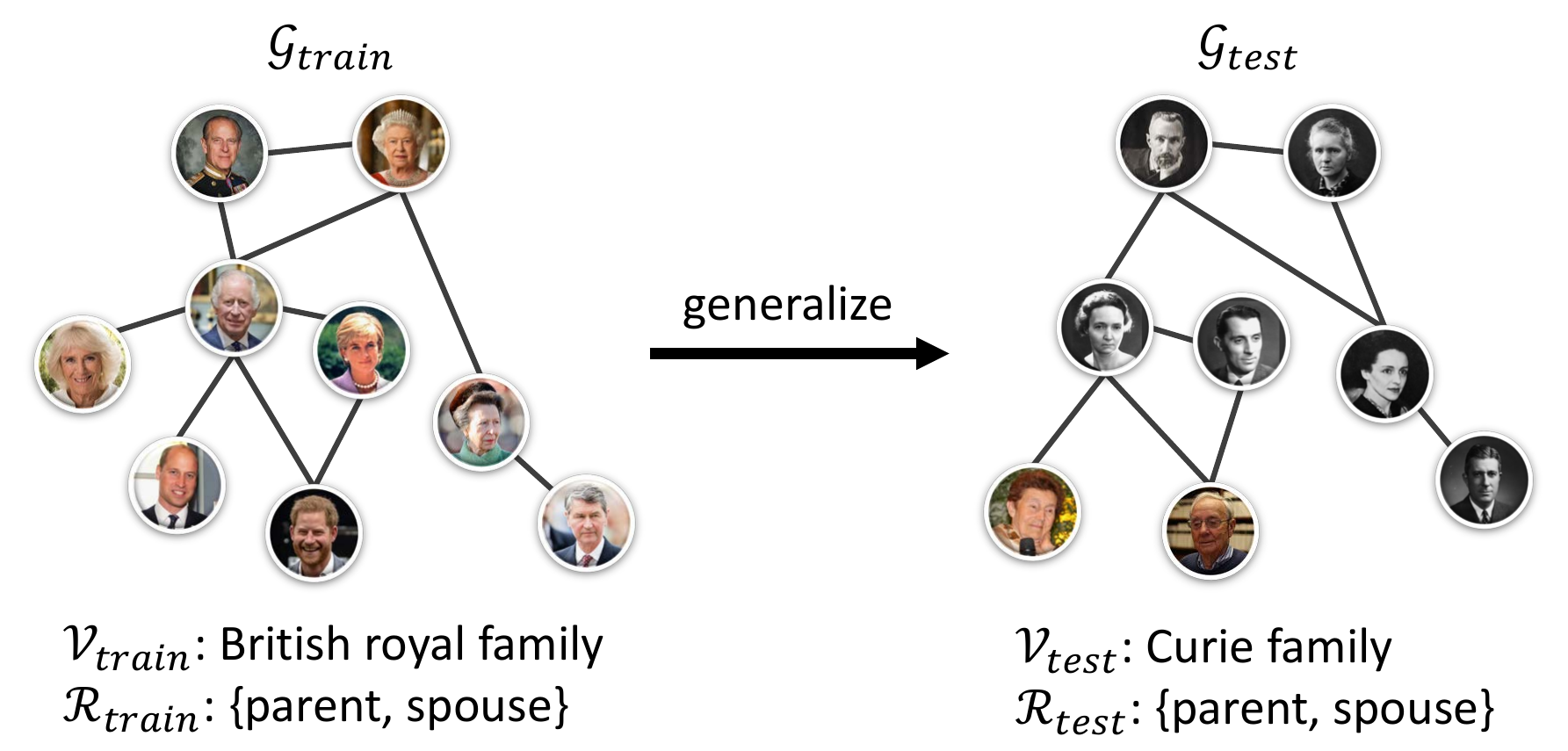}
    \\[0.5em]
    \includegraphics[width=0.8\textwidth]{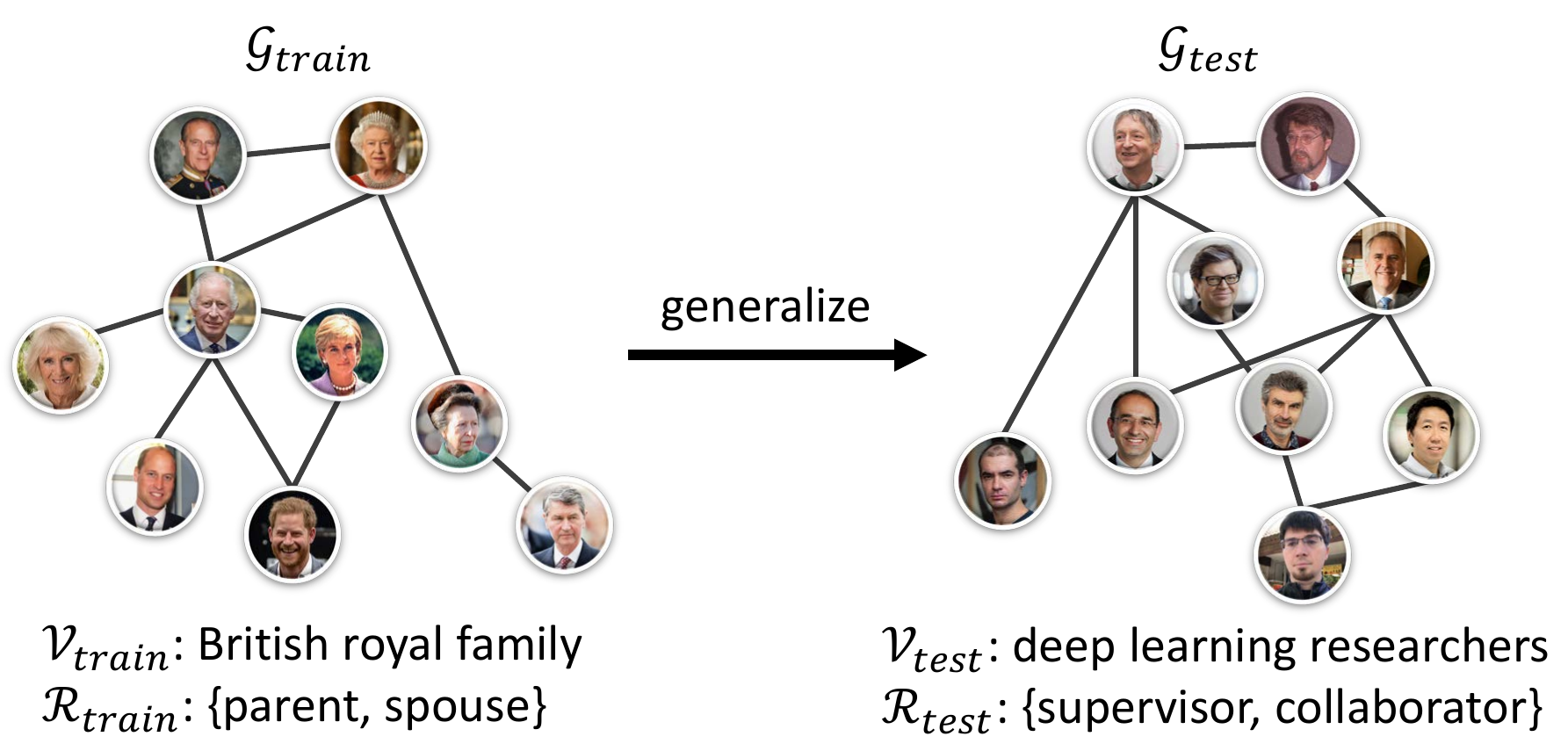}
    \caption[Illustration of inductive generalization on knowledge graphs]{Illustration of inductive generalization on knowledge graphs. The upper example generalizes from the British royal family tree to the Curie family tree, which involves new entities. The lower example generalizes from the British royal family tree to the deep learning researcher tree, which involves both new entities and new relations.}
    \label{fig:inductive}
\end{figure}

\subsection{Inductive Generalization}
\label{sec:inductive}

Traditionally, knowledge graph reasoning methods are evaluated on the \pagebreak knowledge graph they are trained on, with test queries that are not seen during training, which is referred as \emph{transductive setting}. By contrast, inductive setting uses a test knowledge graph $\gG_{test} = (\gV_{test}, \gE_{test}, \gR_{test})$ different from the training one $\gG_{train} = (\gV_{train}, \gE_{train}, \gR_{train})$, and evaluates models with queries on $\gG_{test}$. Typically, this new graph consists of some entities unseen during training, or a completely new vocabulary of entities, but shares the same relation vocabulary with the training graph, i.e.\ $\gR_{train} = \gR_{test}$. The inductive setting requires models to induce principles that generalize to new entities, rather than memorizing certain properties of entities. Figure~\ref{fig:inductive} (upper) shows an example of the inductive setting.

In addition to the inductive setting above, we consider a more challenging setting where $\gG_{test}$ and $\gG_{train}$ have completely different entity and relation vocabularies, termed as inductive entity and relation setting. The rationale behind this setup is that knowledge graphs may share some reasoning patterns in common (e.g.\ symmetric rules, composition rules) despite differences in their entity and relation semantics. Figure~\ref{fig:inductive} (bottom) illustrates the inductive entity and relation setting. Since the test graph can be arbitrarily different from the training one, this setting evaluates the ability of generalizing to any knowledge graph.

\subsection{Compositional Generalization}

Compositional generalization is required for answering multi-step queries, since there are an exponential number of combinations in multi-step queries and most of them cannot be covered in the training set. In other words, models should learn skills for individual steps in the training set, and adaptively re-combine skills for these steps to solve a test query. There are two dimensions of compositional generalization: (1) generalizing to new combinations of steps, which is implicitly covered in the evaluation of multi-step queries; (2) generalizing to longer combinations of steps. We explicitly investigate generalization to longer combinations of steps in Chapter~\ref{cha:htt}. Table~\ref{tab:chapter} summarizes the knowledge structure, query structure and generalization studied in each chapter, along with baselines.
\section{Related Work}

In this section, we discuss literature related to reasoning and generalization across structures, grouped by graph representation learning, multi-hop query answering, and reasoning over natural languages. Graph representation learning covers techniques commonly used for learning representations of elements in a graph structure, such as entities, relations, subgraphs or paths. For multi-hop query answering, we discuss neural and neural-symbolic approaches for solving first-order logic queries on knowledge graphs. For reasoning over natural languages, we summarize the latest techniques in finetuning or prompting LLMs for solving reasoning tasks.

\begin{table}[t]
    \centering
    \caption[List of generalization across structure studied in this thesis]{List of knowledge structure, query structure and generalization across structure studied by baselines and chapters in this thesis.}
    \label{tab:chapter}
    \begin{adjustbox}{max width=\textwidth}
        \begin{tabular}{llll}
            \toprule
            & \bf{Knowledge Structure} & \bf{Query Structure} & \bf{Generalization} \\
            \midrule
            Embeddings & Knowledge graph & Single-hop query & New queries \\
            Chapter~\ref{cha:nbfnet} & Knowledge graph & Single-hop query & New entities and queries \\
            Chapter~\ref{cha:ultra} & Knowledge graph & Single-hop query & New entities, relations and queries \\
            \midrule
            Embeddings & Knowledge graph & Multi-hop query & New queries \\
            \multirow{2}{*}{Chapter~\ref{cha:gnn-qe}} & Knowledge graph & Multi-hop query & New entities and queries \\
            & Knowledge graph & Multi-hop query & New entities, relations and queries \\
            \midrule
            CoT & Natural language (latent) & Natural language (multi-step) & New and longer queries\footnotemark \\
            Chapter~\ref{cha:htt} & Natural language (latent) & Natural language (multi-step) & New and longer queries \\
            \bottomrule
        \end{tabular}
    \end{adjustbox}
\end{table}
\footnotetext{While CoT shows some abilities to generalize to new and longer queries, it often generates incorrect answers when solving long queries.}

\subsection{Graph Representation Learning}

Figure~\ref{fig:paradigm} shows all graph representation learning paradigms developed for knowledge graph reasoning. For a comprehensive guide on graph representation learning, we encourage readers to browse the Graph Representation Learning book~\cite{hamilton2020graph}.

\medskip \noindent \textbf{Embedding Methods.} Embedding methods compute the likelihood of a triplet as a function over its entity and relation embeddings. In early methods, the entity embeddings are represented by vectors, while the relations are represented by matrices. For example, SE~\cite{bordes2011learning} scores a triplet as the distance between two entities projected by the relation matrix. RESCAL~\cite{nickel2011three} scores the triplet as a bilinear model over the entities and the relation, which is generalized to non-linear neural networks by NTN~\cite{socher2013reasoning}. However, these models lack regularization for relations and tend to overfit the datasets~\cite{nickel2015review}.

\begin{figure}[t]
    \centering
    \begin{subfigure}{0.3\textwidth}
        \centering
        \includegraphics[width=0.9\textwidth]{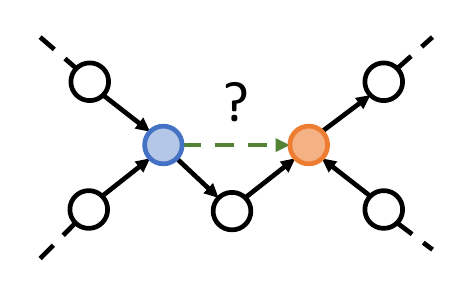}
        \caption{\smaller Knowledge graph reasoning}
        \label{fig:kg_reasoning}
    \end{subfigure}
    \begin{subfigure}{0.3\textwidth}
        \centering
        \includegraphics[width=0.9\textwidth]{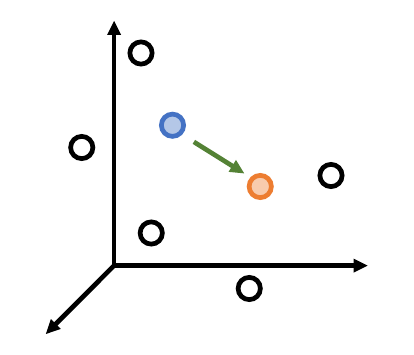}
        \caption{\smaller Embedding methods}
    \end{subfigure} \\
    \begin{subfigure}{0.3\textwidth}
        \centering
        \includegraphics[width=0.9\textwidth]{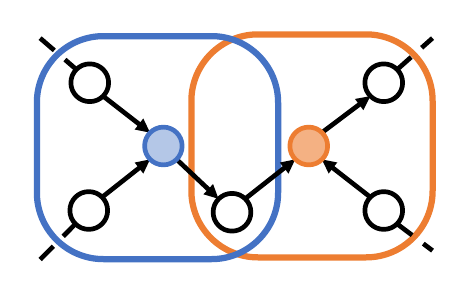}
        \caption{\smaller Node GNN encoders}
        \label{fig:node_gnn}
    \end{subfigure}
    \begin{subfigure}{0.3\textwidth}
        \centering
        \includegraphics[width=0.9\textwidth]{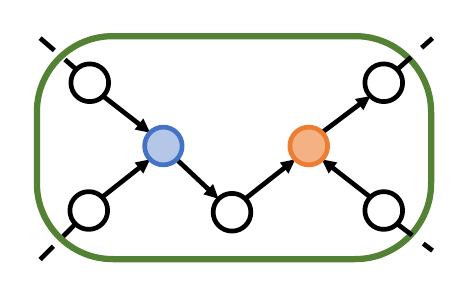}
        \caption{\smaller Subgraph GNN encoders}
        \label{fig:subgraph_gnn}
    \end{subfigure}
    \begin{subfigure}{0.3\textwidth}
        \centering
        \includegraphics[width=0.9\textwidth]{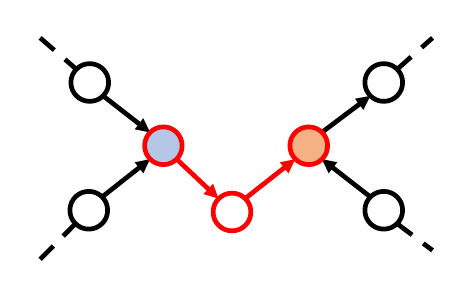}
        \caption{\smaller Path-based methods}
    \end{subfigure}
    \caption[Graph representation learning paradigms for knowledge graph reasoning]{Different graph representation learning paradigms for knowledge graph reasoning. All the methods aim to predict the probability of the \textcolor{mygreen}{green} relation between the \textcolor{myblue}{blue} entity and the \textcolor{myorange}{orange} entity in Figure~\ref{fig:kg_reasoning}.}
    \label{fig:paradigm}
\end{figure}

Later works reduce the number of parameters in such models by defining relations as vector embeddings with the same dimension as entity embeddings. For example, TransE~\cite{bordes2013translating} simplifies the parameterization of SE and interprets relations as a translation in the entity embedding space, and scores triplets based on the distance between translated embeddings and target embeddings. The embeddings are optimized by maximizing the likelihood of observed triplets and minimizing the likelihood of unobserved triplets. Following TransE, a bunch of works improve embedding methods with new score functions that satisfy specific patterns of relations. Here we summarize some prominent works in literature. Table~\ref{tab:score_function} compares the relation patterns that different methods can model.

\smallskip \noindent \textbf{TransE}~\cite{bordes2013translating} represents relations as a translation in the entity embedding space. For a triplet $(h, r, t)$, the entity embedding $\ve_h$ after translation should be close to the entity embedding $\ve_t$ if the triplet is true. Mathematically, the score function can be written as
\begin{equation}
    d_r(h, t) = -||\ve_h + \vr_r - \ve_t||
\end{equation}
where $\vr_r$ is the embedding for relation $r$. TransE is capable of modeling inversion and composition patterns. For instance, consider a pair of inverse relations $r_1$ and $r_2$ (e.g. \emph{husband} and \emph{wife}), we have $d_{r_1}(h, t)$ and $d_{r_2}(t, h)$ hold, which implies $\vr_{r_1} = -\vr_{r_2}$, i.e. inverse relations are modeled as opposite translations in TransE. Similarly, if a relation is equivalent to the composition of two relations (e.g. \emph{uncle} is a composition of \emph{father} and \emph{brother}), its embedding can be represented by the sum of the two translations.

\smallskip \noindent \textbf{DistMult}~\cite{yang2015embedding} is simplified parameterization of bilinear score functions~\cite{nickel2011three}. By restricting the relation matrix to be diagonal, DistMult only requires computation linear to the dimension $d$, achieving the same scalability as TransE. Specifically, DistMult uses the following score function
\begin{equation}
    d_r(h, t) = \ve_h^\top diag(\vr_r) \ve_t = \langle\ve_h, \vr_r, \ve_t\rangle
\end{equation}
From the perspective of relation patterns, DistMult is able to model symmetry patterns (e.g. \emph{friend}), which is a defect of TransE. However, DistMult is always symmetric and cannot model inversion patterns. DistMult can neither model composition patterns due to its product formulation.

\smallskip \noindent \textbf{ComplEx}~\cite{trouillon2016complex} is proposed to solve the limitations of DistMult. \cite{trouillon2016complex} shows that DistMult is equivalent to an eigen decomposition of the adjacency matrix of each relation. Since DistMult learns real embeddings, which can only model real symmetric matrices, ComplEx learns more general complex embeddings to model asymmetric adjacency matrices. The corresponding score function is
\begin{equation}
    d_r(h, t) = \operatorname{Re}(\ve_h^\top diag(\vr_r) \bar{\ve_t}) = \operatorname{Re}(\langle\ve_h, \vr_r, \bar{\ve_t}\rangle)
\end{equation}
Compared to DistMult, ComplEx is capable of modeling symmetry, antisymmetry and inversion relation patterns.

\smallskip \noindent \textbf{SimplE}~\cite{kazemi2018simple} is another work that tackles the symmetric issue of DistMult. It takes the observation that canonical Polyadic (CP) decomposition~\cite{hitchcock1927expression} can handle asymmetric tensor decomposition, but the subject and object embeddings of each entity is learned independently. Therefore, SimplE proposes to jointly learn each relation and its inverse relation. The score function of SimplE is
\begin{equation}
    d_r(h, t) = \langle\ve_h, \vr_r, \ve'_t\rangle + \langle\ve_t, \vr_{r^{-1}}, \ve'_h\rangle
\end{equation}
where $\ve$ and $\ve'$ are two separate embeddings for subjects and objects. If we let $\ve^*_h = [\ve_h, \ve'_h]$  and $\vr^*_r = [\vr_r, \vr_{r^{-1}}]$, SimplE can be rewritten as $\langle\ve^*_h, \vr^*_r, \operatorname{flip}(\ve^*_t)\rangle$ where $\operatorname{flip}(\cdot)$ flips the first and the second half of the embedding. Same as ComplEx, SimplE can handle symmetry, antisymmetry and inversion relation patterns.

\smallskip \noindent \textbf{RotatE}~\cite{sun2019rotate} is built on the motivation that none of the previous methods can model all common relation patterns. Specifically, it proposes to model symmetry, antisymmetry, inversion and composition relation patterns. This is achieved by defining each relation as a rotation in complex space. Mathematically, RotatE uses the following score function
\begin{equation}
    d_r(h, t) = -||\ve_h \odot \vr_r - \ve_t||
\end{equation}
where $\vr_r$ is a vector of unitary complex numbers and $\odot$ is the Hadamard product. $\vr_r$ can be reparameterized by phase vectors to remove the constraint on unitary norm. RotatE handles symmetry patterns by embedding such relations as phase $0$ or $\pi$ in each dimension. Inversion patterns are modeled by conjugate rotations. Because any composition of rotations is still a valid rotation, composition patterns are naturally modeled in RotatE.

\smallskip \noindent \textbf{QuatE}~\cite{zhang2019quaternion} extends RotatE to hypercomplex space, which enjoys two planes of rotations. With Hamilton quaternions, the score function for QuatE is
\begin{equation}
    d_r(h, t) = \ve_h \otimes \frac{\vr_r}{||\vr_r||} \cdot \ve_t
\end{equation}
where $\otimes$ is the Hamilton product. QuatE can model all the relation patterns in RotatE, and additionally supports anticommutativity patterns when composing relations. For example, QuatE can model \emph{father's wife} and \emph{wife's father} as different compositions of \emph{father} and \emph{wife}, which is not possible in previous score functions.

There are some research extending or unifying the search space of score functions in knowledge graph embeddings. For instance, AutoSF~\cite{zhang2020autosf} searches the score function in a tractable subspace of bilinear models. DURA~\cite{zhang2020duality} unifies distance-based models~\cite{bordes2011learning, bordes2013translating, sun2019rotate} and factorization models~\cite{nickel2011three, yang2015embedding, trouillon2016complex, kazemi2018simple, zhang2019quaternion} by viewing distance-based models as factorization models with L2 regularization.

Note all the above methods \textbf{can only be applied to transductive settings}. To extend embedding methods to unseen entities, NodePiece~\cite{galkin2022nodepiece} parameterizes each entity as a function of its incident relations, and optionally a few anchor entities shared by the training and test graphs. It learns embeddings only for these shared elements, and computes the representations of new entities based on the learned embeddings. NodePiece partially solves the challenge of inductive generalization when unseen entities are connected to the training graph. However, for general cases where training and test graphs have disjoint sets of entities, NodePiece does not provide ideal results since many entities cannot be distinguished solely based on their incident relations.

\begin{table}[t]
    \centering
    \caption[Score functions of knowledge graph embeddings]{Different score functions of knowledge graph embeddings and their support patterns. $\langle\cdot\rangle$ denotes the generalized dot product, $\bar{\cdot}$ denotes conjugate for complex vectors, $\odot$ denotes Hadamard product and $\otimes$ denotes Hamilton product.}
    \begin{adjustbox}{max width=\textwidth}
    \begin{tabular}{lcccccc}
        \toprule
        \multirow{2}{*}{\bf{Method}} & \multirow{2}{*}{\bf{Score Function}} & \multirow{2}{*}{\bf{Symmetry}} & \bf{Anti-} & \multirow{2}{*}{\bf{Inversion}} & \multirow{2}{*}{\bf{Composition}} & \bf{Anti-} \\
        & & & \bf{symmetry} & & & \bf{commutativity} \\
        \midrule
        TransE~\cite{bordes2013translating} & $-||\ve_h + \vr_r - \ve_t||$ & \cmark & \cmark & \cmark & \xmark & \xmark \\
        DistMult~\cite{yang2015embedding} & $\langle\ve_h, \vr_r, \ve_t\rangle$ & \cmark & \xmark & \xmark & \xmark & \xmark \\
        ComplEx~\cite{trouillon2016complex} & $\operatorname{Re}(\langle\ve_h, \vr_r, \bar{\ve_t}\rangle)$ & \cmark & \cmark & \cmark & \xmark & \xmark \\
        SimplE~\cite{kazemi2018simple} & $\langle\ve_h, \vr_r, \ve'_t\rangle + \langle\ve_t, \vr_{r^{-1}}, \ve'_h\rangle$ & \cmark & \cmark & \cmark & \xmark & \xmark \\
        RotatE~\cite{sun2019rotate} & $-||\ve_h \odot \vr_r - \ve_t||$ & \cmark & \cmark & \cmark & \cmark & \xmark \\
        QuatE~\cite{zhang2019quaternion} & $\ve_h \otimes \frac{\vr_r}{||\vr_r||} \cdot \ve_t$ & \cmark & \cmark & \cmark & \cmark & \cmark \\
        \bottomrule
    \end{tabular}
    \end{adjustbox}
    \label{tab:score_function}
\end{table}

\medskip \noindent \textbf{Graph Neural Networks.}
Graph neural networks (GNNs)~\cite{scarselli2008graph} are a family of representation learning models that encode topological structures of graphs. Many GNN variants~\cite{kipf2017semi, hamilton2017inductive, velivckovic2018graph, xu2019powerful, wu2019simplifying} have been developed to learn better representations for nodes or graphs. These methods have been adapted to knowledge graphs to learn representations for triplets. Based on the way a triplet representation is defined, we classify existing GNN methods on knowledge graphs into 2 categories, namely node GNN encoders (Figure~\ref{fig:node_gnn}) and subgraph GNN encoders (Figure~\ref{fig:subgraph_gnn}).

\smallskip \noindent \textbf{Node GNN Encoders.}
Node GNN encoders are the most prevalent framework for applying GNNs to knowledge graphs. GAE~\cite{kipf2016variational} and RGCN~\cite{schlichtkrull2018modeling} adopt an auto-encoder formulation, which uses GNNs to encode entity representations, and decodes triplets from entity representations and relation representations with a score function from embedding methods~\cite{bordes2013translating, yang2015embedding, trouillon2016complex, kazemi2018simple, sun2019rotate}. Some methods adopt a variational auto-encoder~\cite{kingma2014auto} formulation to regularize the entity representations with a prior distribution, such as a Gaussian distribution~\cite{kipf2016variational} or a von Mises-Fisher distribution~\cite{davidson2018hyperspherical}. Recent works improve node GNN encoders with advanced GNN architectures for knowledge graphs~\cite{vashishth2020composition, cai2019transgcn}. However, the capacity node GNN encoders is somehow limited, since the two entities in a triplet are encoded independently by GNN. One remedy is to adopt an expressive pooling layer~\cite{kong2022geodesic} over the representations learned by node GNN encoders. Note that node GNN encoders are inductive when each entity has its input features, but are \textbf{not inductive for knowledge graphs without features}.

\smallskip \noindent \textbf{Subgraph GNN Encoders.}
Subgraph GNN encoders~\cite{zhang2018link, teru2020inductive} explicitly encode the subgraph enclosing each query triplet as its representation. Typically, these methods extract a h-hop subgraph around the query entities, label each entity with its distance to the query entities, and learn the representation of the subgraph with a GNN. Subgraph GNN encoders are proved to be more powerful than node GNN encoders~\cite{zhang2021labeling}, and can be naturally applied to the inductive setting~\cite{teru2020inductive}. However, subgraph GNN encoders require to materialize a subgraph for each link, which significantly restricts their scalability for large graphs.

\medskip \noindent \textbf{Path-based Methods.}
Path-based methods have a long history in the literature of reasoning on graphs. Early methods on homogeneous graphs compute the similarity between two nodes based on the weighted count of paths (Katz index~\cite{katz1953new}), random walk probability (personalized PageRank~\cite{page1999pagerank}) or the length of the shortest path (graph distance~\cite{liben2007link}). All these methods define some handcrafted metrics over the full set of paths between two nodes, and can be efficiently solved via some polynomial algorithms (e.g. iterative fixed-point algorithm for PageRank~\cite{page1999pagerank}, Bellman-Ford algorithm for graph distance~\cite{liben2007link}). SimRank~\cite{jeh2002simrank} uses advanced metrics such as the expected meeting distance on homogeneous graphs, which is extended to heterogeneous graphs by PathSim~\cite{sun2011pathsim}.

\begin{figure}[t]
    \centering
    \includegraphics[width=0.55\textwidth]{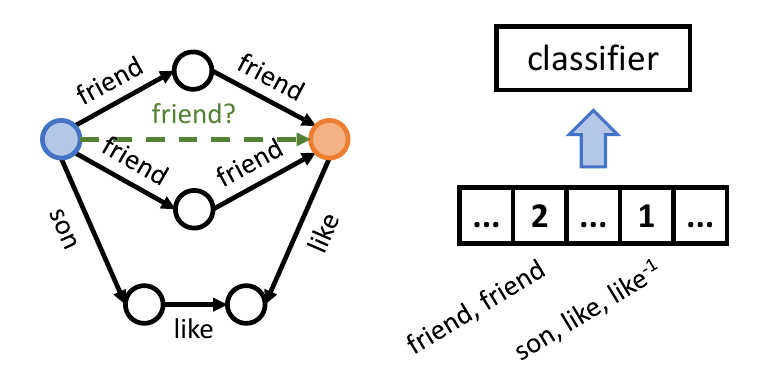}
    \caption[Illustration of Path-Ranking algorithm]{Illustration of Path Ranking algorithm~\cite{lao2010relational}. For a pair of entities, Path Ranking counts different types of relational paths between them to generate a feature vector, which is fed into a classifier to predict the triplet.}
    \label{fig:path_ranking}
\end{figure}

On knowledge graphs, Path Ranking~\cite{lao2010relational} directly uses relational paths between two entities as symbolic features for prediction. Given a query triplet, Path Ranking generates a feature vector based on the number of each type of path. Such a feature vector can be viewed as a handcrafted representation for this pair of entities, and is fed into an SVM~\cite{cortes1995support} to predict the likelihood of the query relation between the entity pair. Figure~\ref{fig:path_ranking} illustrates the algorithm of Path Ranking. Each type of path in Path Ranking can be interpreted as a probabilistic logical rule, weighted by the parameters learned by SVM. Take the case in Figure~\ref{fig:path_ranking} as an example, the logical rules corresponding to the paths are $\textit{friend}(a, b) \land \textit{friend}(b, c) \rightarrow \textit{friend}(a, c)$ and $\textit{son}(a, b) \land \textit{like}(b, c) \land \textit{like}(d, c) \rightarrow \textit{friend}(a, d)$ respectively. However, the number of relational paths is generally exponential w.r.t. the length of path, which restricts the scalability of Path Ranking~\cite{gardner2015efficient}. Moreover, Path Ranking uses handcrafted counting representations for paths, which is likely to be suboptimal for real-world datasets.

Recently, many path-based methods have been developed based on deep neural networks. By connecting them with the Path Ranking algorithm, we can classify these methods into 3 main categories. The first category, logical rules, inherits the rule property of paths in Path Ranking and uses better models to extract logical rules from the data. The second category, path sampler, tries to learn a neural network to sample a sparse set of paths for computing the representation of the query triplet. The last category, path representation learning, enhances Path Ranking by learning the representations for paths with deep models, such as an LSTM~\cite{hochreiter1997long}. Table~\ref{tab:path_based} compares popular path-based methods from the perspective of scalability and representation learning. Some path-based methods~\cite{yang2017differentiable, sadeghian2019drum} model path representations only through relations, and are inductive by construction.

\smallskip \noindent \textbf{Logical Rules.} Logical rule methods learn probabilistic logical rules to weight different paths in the triplet representation. NeuralLP~\cite{yang2017differentiable} uses an LSTM controller to learn the weight for each step in a path, and executes the logical rules on the knowledge graph to get a score for the triplet. Since NeuralLP only uses a scalar score to represent rules and triplets, it can only approximate rule weights with a rank-1 matrix. To address this issue, DRUM~\cite{sadeghian2019drum} uses a vector representation for rules and triplets, which better approximates the true rule weights. Both NeuralLP and DRUM operate on the full set of logical rules, but approximate the rule weights to achieve a polynomial time complexity. On the other hand, RNNLogic~\cite{qu2021rnnlogic} assumes a sparse set of logical rules is useful, and learns the weight for each rule with an LSTM model. For each rule generated, RNNLogic searches all its grounding paths in the knowledge graph to get the representation for the triplet. RNNLogic is more expressive than previous methods in modeling rule weights, but the reasoning algorithm has exponential time complexity and is only feasible when applied to a sparse set of rules.

\begin{table}[t]
    \centering
    \caption[Path-based methods for knowledge graph reasoning]{Path-based methods for knowledge graph reasoning. For simplicity, we group methods and their follow-up variants together when they have the same properties. For a set of paths, full methods are more expressive than sparse ones, but may lack regularization. For complexity, sampled exponential methods are empirically better than (full) exponential methods, and polynomial methods are better than both exponential methods. For path weight and representation, learned ones are more flexible and better than handcrafted ones.}
    \begin{adjustbox}{max width=\textwidth}
    \begin{tabular}{llcccc}
        \toprule
        \bf{Class} & \bf{Method} & \bf{Set of Paths} & \bf{Complexity} & \bf{Path Weight} & \bf{Path Representation} \\
        \midrule
        \multirow{5}{*}{\bf{\shortstack[l]{Traditional\\Methods}}}
        & Katz~\cite{katz1953new}      & Full         & Polynomial & Handcrafted   & Handcrafted           \\
        & Personalized PageRank~\cite{page1999pagerank} & Full         & Polynomial & Handcrafted   & Handcrafted           \\
        & Graph Distance~\cite{liben2007link}        & Full         & Polynomial & Handcrafted   & Handcrafted           \\
        & SimRank~\cite{jeh2002simrank, sun2011pathsim} & Full & Polynomial & Handcrafted & Handcrafted \\
        & Path Ranking~\cite{lao2010relational, gardner2015efficient} & Full         & Exponential & Learned    & Handcrafted           \\
        \midrule
        \multirow{2}{*}{\bf{\shortstack[l]{Logical\\Rules}}}
        & NeuralLP~\cite{yang2017differentiable, sadeghian2019drum}      & Full         & Polynomial & Learned     & Handcrafted           \\
        & RNNLogic~\cite{qu2021rnnlogic}              & Sparse       & Sampled exponential & Learned    & Handcrafted           \\
        \midrule
        \multirow{3}{*}{\bf{\shortstack[l]{Path Sampler}}}
        & DeepPath~\cite{xiong2017deeppath, lin2018multi, shen2018m, hildebrandt2020reasoning} & Sparse & Polynomial & Learned    & Handcrafted           \\
        & MINERVA~\cite{das2018go} & Sparse       & Polynomial & Learned    & Handcrafted           \\
        & DIVA~\cite{chen2018variational} & Sparse & Polynomial & Learned    & Learned             \\
        \midrule
        \multirow{3}{*}{\bf{\shortstack[l]{Path\\Representation\\Learning}}}
        & Path-RNN~\cite{neelakantan2015compositional, das2017chains} & Full         & Exponential & Learned    & Learned             \\
        & PathCon~\cite{wang2021entity} & Full         & Exponential & Learned    & Learned             \\
        & All-Paths~\cite{toutanova2016compositional} & Full         & Polynomial  & Learned    & Learned             \\
        \midrule
        \bf{Ours}
        & NBFNet~\cite{zhu2021neural} & Full         & Polynomial  & Learned    & Learned             \\
        \bottomrule
    \end{tabular}
    \end{adjustbox}
    \label{tab:path_based}
\end{table}

\smallskip \noindent \textbf{Path Samplers.}
Path samplers learn to sample important paths for computing the triplet representation, typically by reinforcement learning. For example, DeepPath~\cite{xiong2017deeppath} and MINERVA~\cite{das2018go} learn an agent to walk on a knowledge graph and collect useful paths. However, these methods suffer from extremely sparse rewards, since most sampled paths cannot even reach the tail entity. Several followup works mitigated such an issue by engineering the reward function~\cite{lin2018multi} and the search strategy~\cite{shen2018m}, or learning separate agents for positive and negative paths. DIVA~\cite{chen2018variational} extends DeepPath to a variational auto-encoder~\cite{kingma2014auto} framework, where the prior and the posterior distributions provide path samples, and the likelihood model learns the reward function. The time complexity of these methods is proportional to the number of sampled paths, which is polynomial given a fix sampling budget.

\smallskip \noindent \textbf{Path Representation Learning.}
The above two categories are focused on extracting important subsets of paths. However, most of the above methods use simple representations, such as counting, to represent the query triplet. Another stream of work learn the representations of paths with deep neural networks. For example, Path-RNN~\cite{neelakantan2015compositional, das2017chains} enumerates all the paths between two nodes, and encodes each of them with an RNN. The triplet representation is then obtained by aggregating the set of path representations. Recently, PathCon~\cite{wang2021entity} jointly learns the entity context and the path representations, and combines them into the triplet representation. These methods learn good representations for paths, but they require exponential time complexity, and are usually limited to very short paths, e.g. $\leq$3 edges. The only exception is All-Paths~\cite{toutanova2016compositional}, which uses a bilinear representation for each path and can be solved via dynamic programming in polynomial time.

\subsection{Multi-hop Query Answering.}

We divide multi-hop query answering methods into two categories, neural methods and neural-symbolic methods, based on how they represent intermediate variables. Figure~\ref{fig:complex_query} illustrate these categories of methods. We refer readers to our survey~\cite{ren2023neural} for a comprehensive discussion on multi-hop query answering.

\smallskip \noindent \textbf{Neural methods.} Neural methods represent operations and intermediate variables in a query with learned embeddings. MPQE~\cite{daza2020message} learns a representation for the query graph using RGCN~\cite{schlichtkrull2018modeling}, and select the closest entity based on the cosine similarity between the query representation and entity embeddings. In \cite{guu2015traversing}, the authors proposed compositional training to train cascades of relation projections for answering path queries. Extending the query types to conjunctive queries ($\land$), GQE~\cite{hamilton2018embedding} learns a geometric intersection operator $\gI$ to model conjunctions
\begin{equation}
    \gI(\{\vq_1, ..., \vq_n\}) = \mW \bigoplus_{i=1}^n \text{MLP}(\vq_i)
    \label{eqn:deepset}
\end{equation}
where $\vq_i$ are the embeddings of partial queries involved in the conjunction, $MLP$ is a multi-layer perceptron and $\bigoplus$ is a symmetric vector function (e.g.\ mean or min over a set of vectors) followed by a learnable transformation matrix $\mW$. Such an operator is known as DeepSets and is invariant to the permutation of its input~\cite{zaheer2017deep}. Following GQE, later works try to inject more geometric inductive bias into logical operators to achieve better performance, such as Query2Box~\cite{ren2020query2box} and BetaE~\cite{ren2020beta}. Query2Box~\cite{ren2020query2box} represents each intermediate variable as a high-dimensional box with a center embedding and an offset embedding, resulting in the following intersection operator
\begin{align}
    &\gI(\{\vq_1, ..., \vq_n\})^{\text{center}} = \sum_i a_i \odot \vq_i^{\text{center}} \qquad a_i = \frac{\exp(\text{MLP}(\vq_i))}{\sum_j \exp(\text{MLP}(\vq_j))} \\
    &\gI(\{\vq_1, ..., \vq_n\})^{\text{offset}} = \min({\vq_1^{\text{offset}}, ..., \vq_n^{\text{offset}}}) \odot \sigma(\text{DeepSet}({\vq_1, ..., \vq_n}))
\end{align}
where $\odot$ is element-wise multiplication,$\sigma(\cdot)$ is the sigmoid function and DeepSet refers to the architecture in Equation~\ref{eqn:deepset}. Query2Box aligns better with the intuition of conjunction, since intersection of boxes is always a box. To solve existential positive first-order (EPFO) queries ($\exists$, $\land$, $\lor$), the authors rewrite them into disjunctive normal form (DNF), i.e.\ disjunction of conjunctive queries, where answers can be obtained by solving each conjunctive branch and merging the predictions. BetaE~\cite{ren2020beta} further models negation operators and extends neural methods to first-order logic (FOL) queries ($\exists$, $\land$, $\lor$, $\neg$) by parameterizing each intermediate variable with a Beta distribution.

Contrary to methods that process multi-hop queries step by step, CQD-CO~\cite{arakelyan2021complex} formulates multi-hop queries as a structure optimization problem and adopts pre-trained knowledge graph embeddings~\cite{trouillon2016complex} to compute scores for each hop. Specifically, CQD-CO maximizes the following objective for embeddings of each variable $\ve^i_j$ in the query
\begin{equation}
    \hspace{-0.8em}\argmax_{\ve^i_j \in \sR^k} \left(\phi_{r_1}(\vh^1, \ve^1_1) \top ... \top \phi_{r_{n_1}}(\ve^1_{n_1-1}, \ve^1_{n_1})\right) \perp ... \perp \left(\phi_{r_1}(\vh^d, \ve^d_1) \top ... \top \phi_{r_{n_d}}(\ve^d_{n_d-1}, \ve^d_{n_d})\right)
    \label{eqn:cqd}
\end{equation}
where $\phi_{r_i}(\cdot, \cdot) \in [0, 1]$ is the score function for entity embeddings based on relation $r_i$, and $\vh^i$ corresponds to the pre-trained embeddings of the constant entities given in the query. $\top$ and $\perp$ are a t-norm and a t-conorm respectively, which are continuous generalization of boolean conjunction and disjunction for variables between $[0, 1]$. The embeddings of variables can be optimized via gradient-based methods, such as Adam~\cite{kingma2015adam}, and the final answer can be obtained by replacing the target embedding with the pre-trained embeddings of all entities to maximize the objective.

There are some other works improving the design of logical operators in neural methods. FuzzQE~\cite{chen2022fuzzy} leverages t-norm fuzzy logic to model FOL queries, which satisfies the axiomatic system of classical logic. Some recent works utilize advanced geometric embeddings to achieve desired properties for operators, e.g. hyperboloid embeddings in HypE~\cite{choudhary2021self} and cone embeddings in ConE~\cite{zhang2021cone}. Generally, all these methods compute embeddings for intermediate variables without aligning them with entities in the knowledge graph, which limits their interpretability. Besides, these methods require entity embeddings and \textbf{do not generalize to new knowledge structures}.

\begin{figure}[!h]
    \centering
    \begin{subfigure}[M]{0.58\textwidth}
        \centering
        \includegraphics[width=0.9\textwidth]{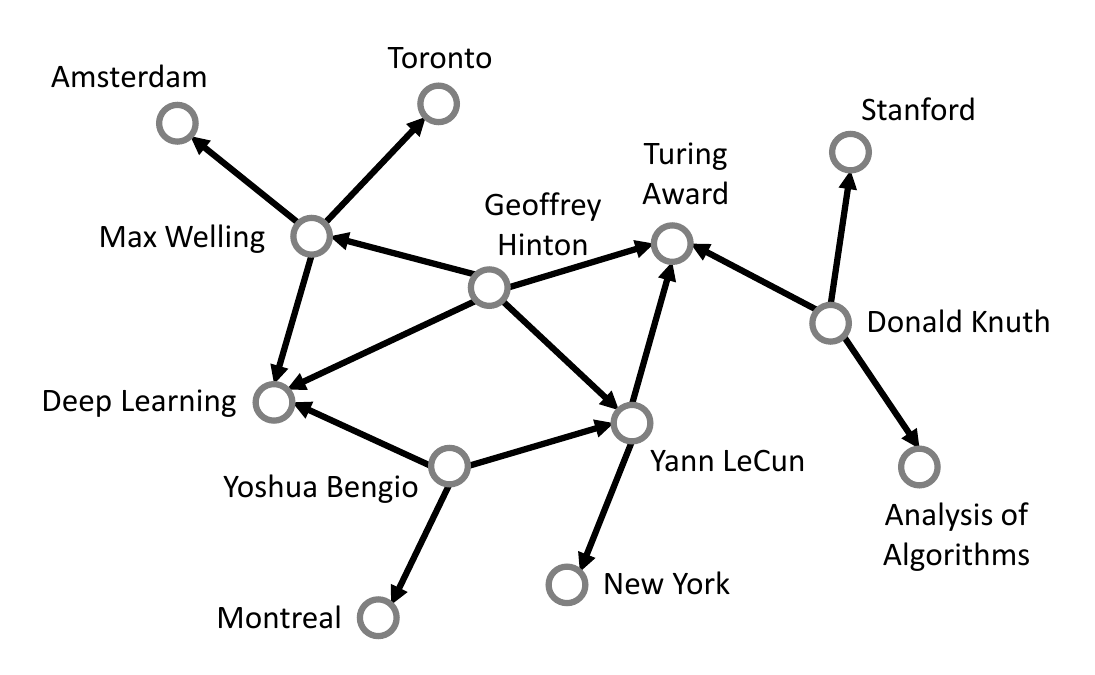}
        \caption{\smaller An incomplete knowledge graph}
    \end{subfigure}
    \begin{subfigure}[M]{0.34\textwidth}
        \centering
        \vspace{-1em}
        \includegraphics[width=0.9\textwidth]{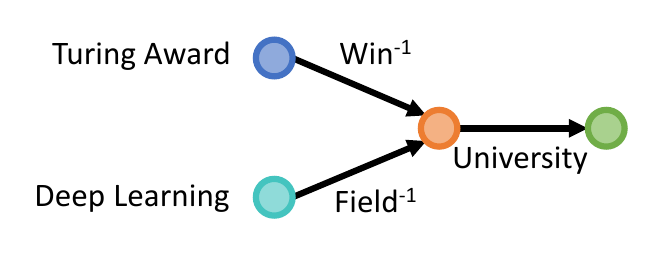}
        \caption{\smaller A multi-hop query}
    \end{subfigure} \\[-4em]
    \hfill
    \begin{subfigure}[b]{0.3\linewidth}
        \centering
        \includegraphics[width=0.9\textwidth]{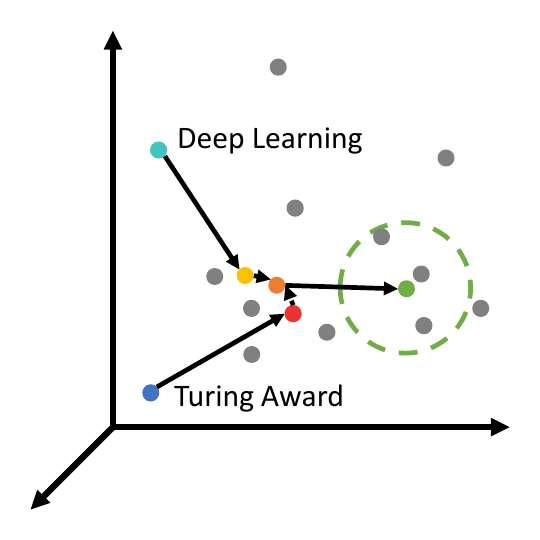}
        \caption{\smaller Neural methods}
        \label{fig:neural_methods}
    \end{subfigure}
    \hspace{0.04\textwidth}
    \begin{minipage}[b]{0.50\textwidth}
        \centering
        \begin{subfigure}{0.79\linewidth}
            \centering
            \includegraphics[width=0.9\textwidth]{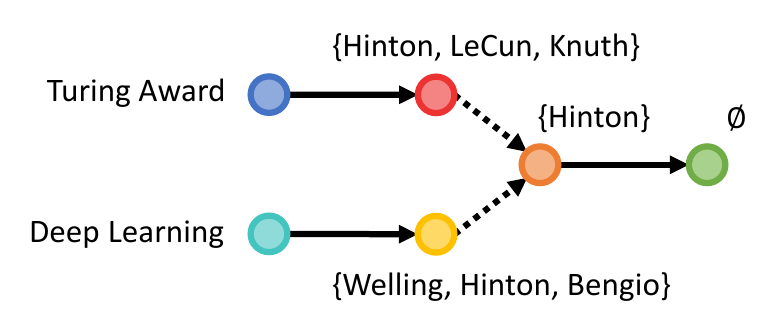}
            \caption{\smaller Symbolic methods}
            \label{fig:symbolic_methods}
        \end{subfigure} \\[1em]
        \begin{subfigure}{\linewidth}
            \centering
            \includegraphics[width=0.9\textwidth]{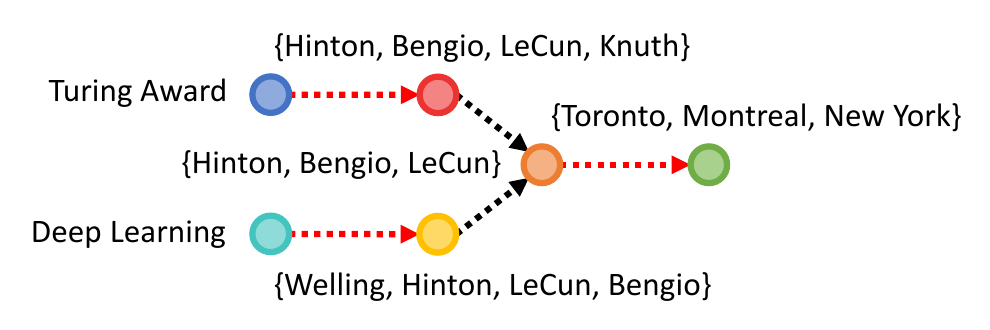}
            \caption{\smaller Neural-symbolic methods}
            \label{fig:symbolic_methods}
        \end{subfigure}
    \end{minipage}
    \caption[Methods for multi-hop query answering on an incomplete knowledge graph]{Different methods for multi-hop query answering on an incomplete knowledge graph. Neural methods can deal with missing links, but are not interpretable. Symbolic methods are interpretable and generalize across query structures, but fail to deal with missing links. Neural-symbolic methods combine the advantages of both worlds.}
    \label{fig:complex_query}
\end{figure}

\smallskip \noindent \textbf{Neural-symbolic methods.}
Neural-symbolic methods adopt embeddings to model each hop in multi-hop queries, and meanwhile take the symbolic constraint of entity assignment into consideration, providing interpretability for intermediate variables. EmQL~\cite{sun2020faithful} simultaneously maintains an embedding and a count-min sketch, i.e.\ a hash compression of a set of entities. To decode an intermediate variable or an answer, EmQL first finds the top-k entities with the highest dot product with the embedding, and then filters these entities using the sketch, which helps it to find answers logically entailed by the knowledge graph.

CQD-Beam~\cite{arakelyan2021complex} extends pre-trained knowledge graph embeddings to infer answers for complex queries based on beam search. Using the objective in Equation~\ref{eqn:cqd}, CQD-Beam searches the top-k assignments from all entities for each variable in the multi-hop query. To eliminates the approximation caused by beam search, QTO~\cite{bai2023answering} computes Equation~\ref{eqn:cqd} via dynamic programming to cover the whole space of variable assignments, and the optimal variable assignments can be extracted by backtracking the search states.

\subsection{Reasoning over Natural Languages}

Solving reasoning problems in natural language can be traced back to the bAbI benchmark~\cite{weston2016towards}, which consists of many multi-step reasoning tasks that evaluate deduction, induction and other reasoning abilities. Early attempts to solve bAbI designed models to read and write a differentiable memory component when processing the input~\cite{weston2015memory, kumar2016ask}. With the rise of Transformer architecture~\cite{vaswani2017attention} and later pretrained language models~\cite{radford2018improving, devlin2019bert, raffel2020exploring}, several works have demonstrated that Transformers can be finetuned on specific datasets to acquire reasoning abilities. \cite{clark2020transformers} finetunes Transformers to answer deductive questions based on input facts and rules. \cite{talmor2020leap} shows that Transformers are able to combine input knowledge with their implicit knowledge in reasoning tasks. \cite{nye2021show} finds that Transformers can perform program execution when finetuned with program traces.

On the other hand, the tremendous scale of LLMs enables emergent reasoning abilities in the form of prompting. In GPT-3~\cite{brown2020language}, the authors find LLMs can answer reasoning questions when primed with a few examples, an ability referred to as in-context learning or few-shot prompting. By finetuning these models on a collection of tasks with instructions, they can directly produce answers to questions in a zero-shot manner~\cite{wei2022finetuned, sanh2022multitask}. However, both few-shot and zero-shot prompting fall short of predicting the right answer for multi-step queries. Chain-of-Thought (CoT)~\cite{wei2022chain} elicits the ability of multi-step reasoning with intermediate steps in few-shot examples. Zero-shot CoT~\cite{kojima2022large} shows that a similar ability can be triggered in zero-shot prompting with a magic instruction \emph{let's think step by step}, eliminating the need of engineering the few-shot examples. Figure~\ref{fig:prompting} shows an illustration of these prompting methods on a grade school math problem.

Many works have been developed to extend CoT with certain inductive biases for reasoning. Least-to-most (LtM) prompting~\cite{zhou2023least} and decomposed prompting~\cite{khot2023decomposed} explicitly decompose multi-step reasoning into sub tasks and solve each sub task in isolation. For tasks that cannot be solved by greedy reasoning, ToT~\cite{yao2023tree} and RAP~\cite{hao2023reasoning} navigate through possible reasoning steps and expand states that are most likely to lead to the correct answer. \cite{wang2024unleashing} employs multiple LLM instances equipped with different personas to solve a task collaboratively. \cite{lightman2023let} shows that learning a verifier to check each step significantly improves the performance of multi-step reasoning. All these works rely on parameteric knowledge stored in the LLM's weights~\cite{petroni2019language}. Some other works consider taking explicit facts and rules as input, and searching possible proof traces through forward chaining~\cite{creswell2023selection} or backward chaining~\cite{kazemi2023lambada}.

\begin{figure}[t]
    \centering
    \begin{subfigure}{0.45\textwidth}
        \centering
        \includegraphics[width=0.95\textwidth]{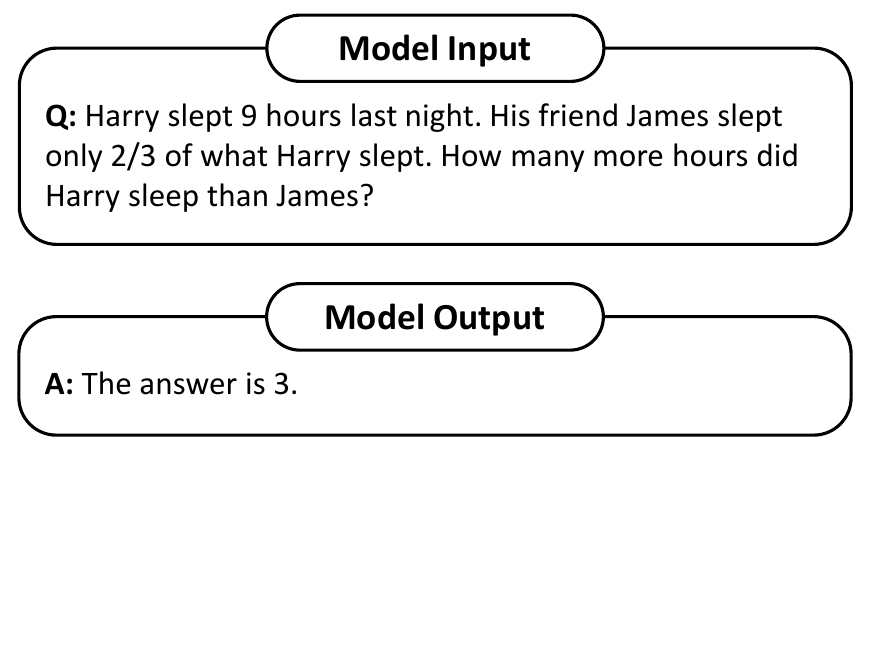} \\
        \caption{\smaller Zero-shot prompting}
        \label{fig:zero-shot}
    \end{subfigure}
    \begin{subfigure}{0.45\textwidth}
        \centering
        \includegraphics[width=0.95\textwidth]{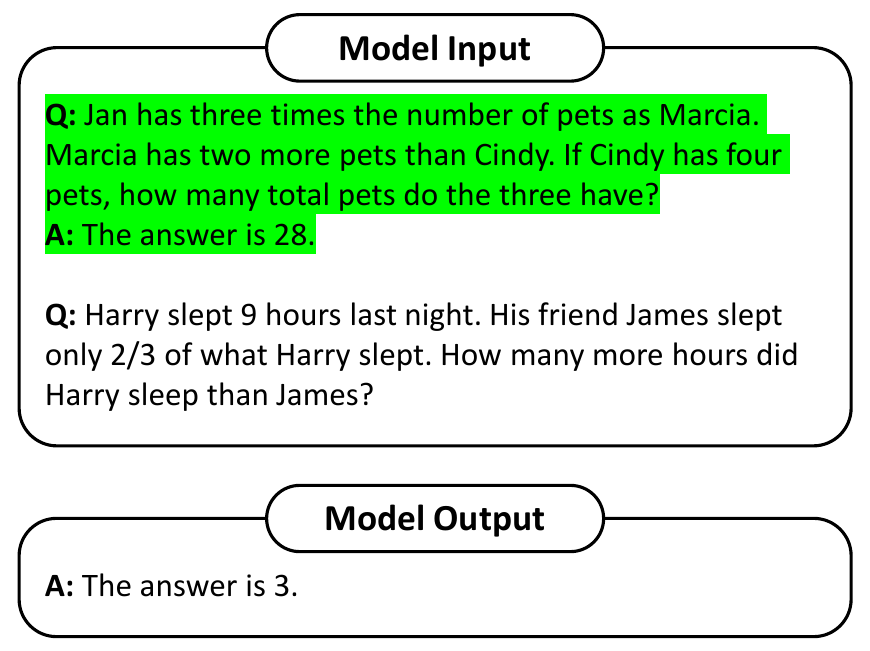} \\
        \caption{\smaller Few-shot prompting}
        \label{fig:few-shot}
    \end{subfigure} \\[0.5em]
    \begin{subfigure}{0.45\textwidth}
        \centering
        \includegraphics[width=0.95\textwidth]{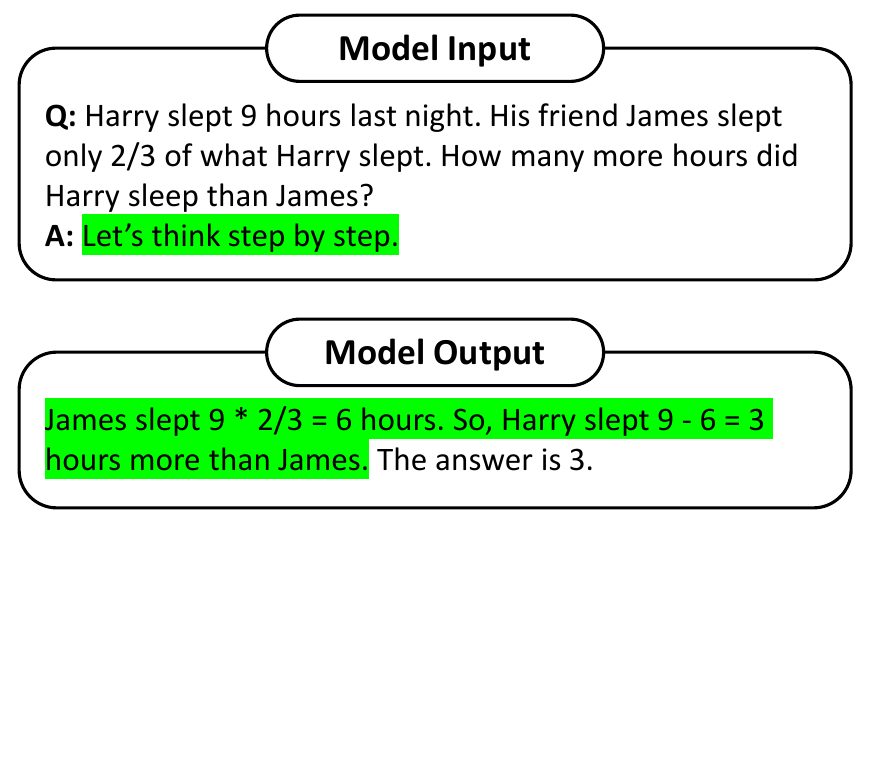}
        \caption{\smaller Zero-shot Chain-of-Thought}
        \label{fig:zero-shot_cot}
    \end{subfigure}
    \begin{subfigure}{0.45\textwidth}
        \centering
        \includegraphics[width=0.95\textwidth]{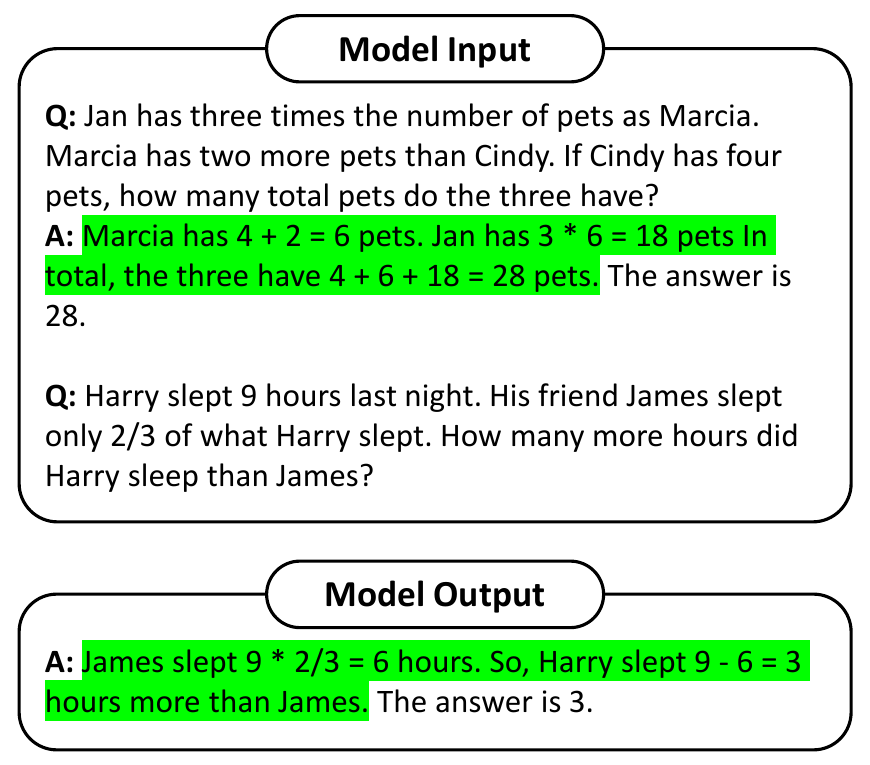}
        \caption{\smaller Few-shot Chain-of-Thought}
        \label{fig:few-shot_cot}
    \end{subfigure}
    \caption[Common prompting methods for solving reasoning tasks]{Common prompting methods for solving reasoning tasks. Major changes in each method are marked in \textcolor{greenmarker}{green}. Base LLMs can follow examples and perform few-shot prompting, while instruction-finetuned models can directly answer questions in a zero-shot format. For multi-step queries, few-shot CoT significantly improves the performance over prompting methods that directly predict the answer. Zero-shot CoT simplifies the engineering efforts of prompting, yet retains most performance of few-shot CoT.}
    \label{fig:prompting}
\end{figure}

\part{Inductive Representations}
\label{part:inductive}
\chapter[Representation Learning for Generalizing to Unseen Entities]{Representation Learning for\linebreak Generalizing to Unseen Entities}
\label{cha:nbfnet}

Knowledge graph reasoning has long been dominated by embedding methods due to their simplicity. However, embedding methods always have to be re-trained whenever the underlying knowledge graph is updated. While there are a few attempts~\cite{zhang2018link, teru2020inductive} on encoding local subgraphs for generalization to new entities, they are largely limited by their scalability. Is there an inductive, strong, yet scalable model that can substitute embedding methods in most scenarios?

In this chapter, we propose such a solution, NBFNet, by combining representation learning with traditional path-based methods and dynamic programming. We further improve the scalability of NBFNet with a learned priority function in each iteration, resulting in A*Net. NBFNet achieved significantly better performance than embedding methods, and A*Net extended such an advantage to million-scale knowledge graphs. As of the year 2024, NBFNet remains a strong baseline in knowledge graph reasoning, and many of its insights, such as path representations and efficient computation via dynamic programming, continue to benefit various reasoning tasks.

\smallskip \emph{This chapter is based on our work published at NeurIPS 2021~\cite{zhu2021neural}\footnote{The code of NBFNet is available at \url{https://github.com/DeepGraphLearning/NBFNet}} and NeurIPS 2023~\cite{zhu2023net}\footnote{The code of A*Net is available at  \url{https://github.com/DeepGraphLearning/AStarNet}}. Xinyu Yuan has contributed significantly to \cite{zhu2023net}.}

\section{Overview}

Predicting the interactions between nodes (a.k.a.\ link prediction) is a fundamental task in the field of graph machine learning. Given the ubiquitous existence of graphs, such a task has many applications, such as recommender system~\cite{koren2009matrix}, knowledge graph completion~\cite{nickel2015review} and drug repurposing~\cite{ioannidis2020few}.

Traditional methods of link prediction usually define different heuristic metrics over the paths between a pair of nodes. For example, Katz index~\cite{katz1953new} is defined as a weighted count of paths between two nodes. Personalized PageRank~\cite{page1999pagerank} measures the similarity of two nodes as the random walk probability from one to the other. Graph distance~\cite{liben2007link} uses the length of the shortest path between two nodes to predict their association.
These methods can be directly applied to new graphs, i.e.\ inductive setting, enjoy good interpretability and scale up to large graphs. However, they are designed based on handcrafted metrics and may not be optimal for link prediction on real-world graphs.

To address these limitations, some link prediction methods adopt graph neural networks (GNNs)~\cite{kipf2016variational, schlichtkrull2018modeling, vashishth2020composition} to automatically extract important features from local neighborhoods for link prediction. Thanks to the high expressiveness of GNNs, these methods have shown state-of-the-art performance. However, these methods can only be applied to predict new links on the training graph, i.e. transductive setting, and lack interpretability. While some recent methods~\cite{zhang2018link, teru2020inductive} extract features from local subgraphs with GNNs and support inductive setting, the scalability of these methods is compromised.

Therefore, we wonder if there exists an approach that enjoys the advantages of both traditional path-based methods and recent approaches based on graph neural networks, i.e.\ \textbf{generalization in the inductive setting}, \textbf{interpretability}, \textbf{high model capacity} and \textbf{scalability}. To this end, we propose a representation learning framework, Neural Bellman-Ford Networks (NBFNet), along with a more efficient variant, A* Networks (A*Net).

\smallskip \noindent \textbf{Neural Bellman-Ford Networks.}
Inspired by traditional path-based methods, our goal is to develop a general and flexible representation learning framework for link prediction based on the paths between two nodes. Specifically, we define the representation of a pair of nodes as the \emph{generalized sum} of all the path representations between them, where each path representation is defined as the \emph{generalized product} of the edge representations in the path. Many link prediction methods, such as Katz index~\cite{katz1953new}, personalized PageRank~\cite{page1999pagerank}, graph distance~\cite{liben2007link}, as well as graph theory algorithms like widest path~\cite{baras2010path} and most reliable path~\cite{baras2010path}, are special instances of this path formulation with different \emph{summation} and \emph{multiplication} operators. Motivated by the polynomial-time algorithm for the shortest path problem~\cite{bellman1958routing}, we show that such a formulation can be efficiently solved via the generalized Bellman-Ford algorithm~\cite{baras2010path} under mild conditions and scale up to large graphs.

The operators in the generalized Bellman-Ford algorithm---\emph{summation} and \emph{multiplication}---are handcrafted, which have limited flexibility. Therefore, we further propose NBFNet, a graph neural network framework that solves the above path formulation with learned operators in the generalized Bellman-Ford algorithm. Specifically, NBFNet parameterizes the generalized Bellman-Ford algorithm with three neural components, namely \textsc{Indicator}, \textsc{Message} and \textsc{Aggregate} functions. The \textsc{Indicator} function initializes a representation on each node, which is taken as the boundary condition of the generalized Bellman-Ford algorithm. The \textsc{Message} and the \textsc{Aggregate} functions learn the \emph{multiplication} and \emph{summation} operators respectively.

We show that the \textsc{Message} function can be defined according to the relational operators in knowledge graph embeddings~\cite{bordes2013translating, yang2015embedding, trouillon2016complex, kazemi2018simple, sun2019rotate}, e.g.\ as a translation in Euclidean space induced by the relational operators of TransE~\cite{bordes2013translating}. The \textsc{Aggregate} function can be defined as learnable set aggregation functions~\cite{zaheer2017deep, xu2019powerful, corso2020principal}. With such parameterization, NBFNet can generalize to the inductive setting, meanwhile achieve one of the lowest time complexity among inductive GNN methods. A comparison of NBFNet and other GNN frameworks for link prediction is showed in Table~\ref{tab:comparison}. With other instantiations of \textsc{Message} and \textsc{Aggregate} functions, our framework can also recover some existing works on learning logic rules~\cite{yang2017differentiable, sadeghian2019drum} for link prediction on knowledge graphs (Table~\ref{tab:semiring_instance}).

Our NBFNet framework can be applied to several link prediction variants, covering not only single-relational graphs (e.g.\ homogeneous graphs) but also multi-relational graphs (e.g.\ knowledge graphs). We empirically evaluate the proposed NBFNet for link prediction on homogeneous graphs and knowledge graphs in both transductive and inductive settings. Experimental results show that the proposed NBFNet outperforms existing state-of-the-art methods by a large margin in all settings, with an average relative performance gain of 18\% on knowledge graph completion (HITS@1) and 22\% on inductive relation prediction (HITS@10). We also show that the proposed NBFNet is indeed interpretable by visualizing the top-k relevant paths for link prediction on knowledge graphs.

\begin{table}[!h]
    \centering
    \caption[Comparison of GNN frameworks for link prediction]{Comparison of GNN frameworks for link prediction. The time complexity refers to the \emph{amortized time} for predicting a single edge or triplet. $|\gV|$ is the number of nodes, $|\gE|$ is the number of edges, and $d$ is the dimension of representations. The wall time is measured on FB15k-237 test set with 40 CPU cores and 4 GPUs.}
    \label{tab:comparison}
    \begin{adjustbox}{max width=\textwidth}
    \begin{tabular}{lccccc}
         \toprule
         \bf{Method} & \bf{Inductive}\footnotemark & \bf{Interpretable} & \bf{Learned Representation}  & \bf{Time Complexity} & \bf{Wall Time} \\
         \midrule
         VGAE~\cite{kipf2016variational} / &  &  & \multirow{2}{*}{\checkmark} & \multirow{2}{*}{$O(d)$} & \multirow{2}{*}{18 secs} \\
         RGCN~\cite{schlichtkrull2018modeling} \\
         NeuralLP~\cite{yang2017differentiable} / & \multirow{2}{*}{\checkmark} & \multirow{2}{*}{\checkmark} &  & \multirow{2}{*}{$O\left(\frac{|\gE|d}{|\gV|} + d^2\right)$} & \multirow{2}{*}{2.1 mins} \\
         DRUM~\cite{sadeghian2019drum} \\
         SEAL~\cite{zhang2018link} / & \multirow{2}{*}{\checkmark} & & \multirow{2}{*}{\checkmark} & \multirow{2}{*}{$O(|\gE|d^2)$} & \multirow{2}{*}{$\approx$1 month} \\
         GraIL~\cite{teru2020inductive} \\
         \midrule
         NBFNet & \checkmark & \checkmark & \checkmark & $O\left(\frac{|\gE|d}{|\gV|} + d^2\right)$ & 4.0 mins \\
         \bottomrule
    \end{tabular}
    \end{adjustbox}
\end{table}
\footnotetext{We consider the inductive setting where a model can generalize to entirely new graphs without node features.}

\begin{wrapfigure}{r}{0.44\textwidth}
    \centering
    \vspace{-0.7em}
    \includegraphics[width=0.44\textwidth]{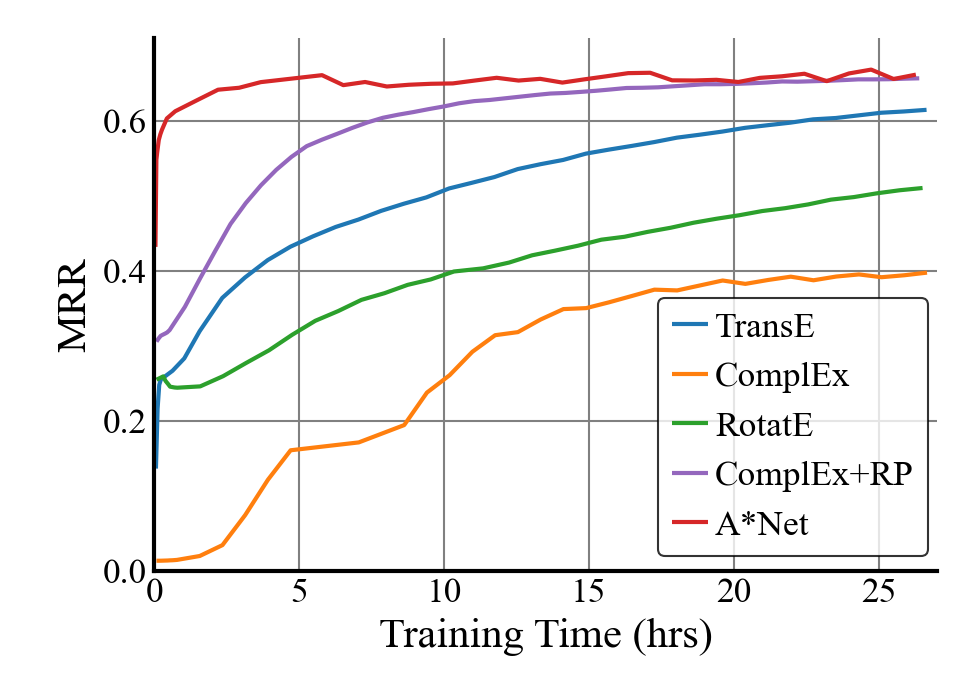}
    \caption[Results w.r.t. training time of A*Net on ogbl-wikikg2]{Validation MRR w.r.t.\ training time on ogbl-wikikg2 (1 A100 GPU). A*Net achieves state-of-the-art performance and the fastest convergence.}
    \label{fig:ogbl-wikikg2}
    \vspace{-0.5em}
\end{wrapfigure}
\smallskip \noindent \textbf{A* Networks.}
We propose A*Net to tackle the scalability issue of path-based methods. Our key idea is to search for important paths rather than use all possible paths for reasoning, thereby reducing time and memory in training and inference. Inspired by the A* algorithm~\cite{hart1968formal} for shortest path problems, given a head entity $u$ and a query relation $q$, we compute a priority score for each entity to guide the search towards more important paths. At each iteration, we select $K$ nodes and $L$ edges according to their priority, and use message passing to update nodes in their neighborhood. Due to the complex semantics of knowledge graphs, it is hard to use a handcrafted priority function like the A* algorithm without a significant performance drop (Table~\ref{tab:handcraft}). Instead, we design a neural priority function based on the node representations at the current iteration, which can be trained end-to-end by the objective function of the reasoning task without any additional supervision.

We evaluate our method on 4 transductive and 2 inductive knowledge graph datasets. Experiments show that A*Net achieves competitive performance against state-of-the-art path-based methods on FB15k-237, WN18RR and YAGO3-10, even with only 10\% of nodes and 10\% edges at each iteration (Section~\ref{sec:result}). To verify the scalability of our method, we also evaluate A*Net on ogbl-wikikg2, a million-scale knowledge graph that is 2 magnitudes larger than datasets solved by previous path-based methods. Surprisingly, with only 0.2\% nodes and 0.2\% edges, our method outperforms existing embedding methods and establishes new state-of-the-art results (Section~\ref{sec:result}) as the first non-embedding method on ogbl-wikikg2. By adjusting the ratios of selected nodes and edges, one can trade off between performance and efficiency (Section~\ref{sec:ablation_astarnet}). A*Net also converges significantly faster than embedding methods (Figure~\ref{fig:ogbl-wikikg2}), which makes it a promising model for deployment on large-scale knowledge graphs. Additionally, A*Net offers interpretability that embeddings do not possess. Visualization shows that A*Net captures important paths for reasoning (Section~\ref{sec:vis_astarnet}).

\section{Method: NBFNet}

In this section, we first define a path formulation for link prediction. Our path formulation generalizes several traditional methods, and can be efficiently solved by the generalized Bellman-Ford algorithm. Then we propose Neural Bellman-Ford Networks to learn the path formulation with neural functions.

\subsection{Path Formulation for Link Prediction}
\label{sec:path_formulation}

We consider the link prediction problem on both knowledge graphs and homogeneous graphs. \pagebreak[0] A knowledge graph is denoted by $\gG=(\gV, \gE, \gR)$, where $\gV$ and $\gE$ represent the set of entities (nodes) and relations (edges) respectively, and $\gR$ is the set of relation types. We use $\gN(u)$ to denote the set of nodes connected to $u$, and $\gE(u)$ to denote the set of edges ending with node $u$. A homogeneous graph $\gG=(\gV, \gE)$ can be viewed as a special case of knowledge graphs, with only one relation type for all edges. Throughout this paper, we use \textbf{bold} terms, $\vw_q(e)$ or $\vh_q(u, v)$, to denote vector representations, and \textit{italic} terms, $w_e$ or $w_{uv}$, to denote scalars like the weight of edge $(u, v)$ in homogeneous graphs or triplet $(u, r, v)$ in knowledge graphs. Without loss of generality, we derive our method based on knowledge graphs, while our method can also be applied to homogeneous graphs.

\smallskip \noindent \textbf{Path Formulation.} Link prediction is aimed at predicting the existence of a query relation $q$ between a head entity $u$ and a tail entity $v$. From a representation learning perspective, this requires to learn a pair representation $\vh_q(u, v)$, which captures the local subgraph structure between $u$ and $v$ w.r.t.\ the query relation $q$. In traditional methods, such a local structure is encoded by counting different types of random walks from $u$ to $v$~\cite{lao2010relational, gardner2015efficient}. Inspired by this construction, we formulate the pair representation as a \emph{generalized sum} of path representations between $u$ and $v$ with a commutative \emph{summation} operator $\oplus$. Each path representation $\vh_q(P)$ is defined as a \emph{generalized product} of the edge representations in the path with the \emph{multiplication} operator $\otimes$.
\begin{align}
    \restatableeq{\link}{&\vh_q(u, v) = \vh_q(P_1) \oplus \vh_q(P_2) \oplus ... \oplus \vh_q(P_{|\gP_{uv}|}) \vert_{P_i \in \gP_{uv}} \triangleq \bigoplus_{P \in \gP_{uv}} \vh_q(P)}{eqn:link} \\
    \restatableeq{\path}{&\vh_q(P = (e_1, e_2, ..., e_{|P|})) = \vw_q(e_1) \otimes \vw_q(e_2) \otimes ... \otimes \vw_q(e_{|P|}) \triangleq \bigotimes_{i=1}^{|P|} \vw_q(e_i)}{eqn:path}
\end{align}
where $\gP_{uv}$ denotes the set of paths from $u$ to $v$ and $\vw_q(e_i)$ is the representation of edge $e_i$. Note the \emph{multiplication} operator $\otimes$ is not required to be commutative (e.g.\ matrix multiplication), therefore we define $\bigotimes$ to compute the product following the exact order. Intuitively, the path formulation can be interpreted as a depth-first-search (DFS) algorithm, where one searches all possible paths from $u$ to $v$, computes their representations (Equation~\ref{eqn:path}) and aggregates the results (Equation~\ref{eqn:link}). Such a formulation is capable of modeling several traditional link prediction methods, as well as graph theory algorithms. Formally, Theorem~\ref{thm:katz_index}-\ref{thm:reliable_path} state the corresponding path formulations for 3 link prediction methods and 2 graph theory algorithms respectively. See Section~\ref{app:path_formulation} for proofs.
\begin{restatable}{theorem}{katz}
    Katz index is a path formulation with $\oplus = +$, $\otimes = \times$ and $\vw_q(e) = \beta w_e$.
    \label{thm:katz_index}
\end{restatable}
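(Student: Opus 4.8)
\section*{Proof proposal for Theorem~\ref{thm:katz_index}}

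The plan is to unfold both sides of the claimed identity and match them term by term, using the classical fact that entries of powers of the (weighted) adjacency matrix enumerate weighted walks. First I would fix notation for the Katz index: for a graph with (weighted) adjacency matrix $\mA$ and decay parameter $\beta$ small enough that the series converges (i.e.\ $\beta < 1/\lambda_{\max}(\mA)$), the Katz index between $u$ and $v$ is
\[
  \mathrm{Katz}(u, v) = \sum_{\ell = 1}^{\infty} \beta^{\ell}\, (\mA^{\ell})_{uv}.
\]
Here $\gP_{uv}$ in Equation~\ref{eqn:link} is read as the set of all \emph{walks} from $u$ to $v$ (edge sequences, repetitions allowed), so that $\gP_{uv}$ splits as a disjoint union $\bigsqcup_{\ell \ge 1} \gP_{uv}^{(\ell)}$ indexed by walk length.

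Next I would substitute the stated choices $\oplus = +$, $\otimes = \times$, and $\vw_q(e) = \beta w_e$ into the path representation of Equation~\ref{eqn:path}: for a walk $P = (e_1, \dots, e_{|P|})$ one gets
\[
  \vh_q(P) = \prod_{i = 1}^{|P|} \beta\, w_{e_i} = \beta^{|P|} \prod_{i = 1}^{|P|} w_{e_i}.
\]
Plugging this into Equation~\ref{eqn:link} and grouping walks by length yields
\[
  \bigoplus_{P \in \gP_{uv}} \vh_q(P) = \sum_{\ell = 1}^{\infty} \beta^{\ell} \sum_{P \in \gP_{uv}^{(\ell)}} \; \prod_{i = 1}^{\ell} w_{e_i}.
\]

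To finish I would invoke the standard identity $(\mA^{\ell})_{uv} = \sum_{P \in \gP_{uv}^{(\ell)}} \prod_{i=1}^{\ell} w_{e_i}$, proved by a one-line induction on $\ell$: the base case $\ell = 1$ is the definition of $\mA$, and the inductive step expands $\mA^{\ell} = \mA^{\ell - 1}\mA$ and splits each length-$\ell$ walk at its penultimate node. Substituting this into the previous display gives exactly $\sum_{\ell \ge 1} \beta^{\ell} (\mA^{\ell})_{uv} = \mathrm{Katz}(u,v)$, which is the claim. For an unweighted homogeneous graph one additionally sets $w_e \equiv 1$, so the inner sum collapses to the walk count $|\gP_{uv}^{(\ell)}|$ and we recover the most familiar form of the index.

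The argument is essentially a routine unfolding of definitions plus the adjacency-power identity; the only point that deserves care — rather than the ``hard part'' — is making explicit that $\gP_{uv}$ must be interpreted as walks (not simple paths) and that $\beta$ lies below the spectral radius threshold so that both the Katz series and the generalized sum in Equation~\ref{eqn:link} are well defined. Once those conventions are pinned down, the equality holds term by term.
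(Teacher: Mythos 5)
Your proposal is correct and follows essentially the same route as the paper: unfold $\mathrm{Katz}(u,v)=\sum_{t\ge 1}\beta^t(\mA^t)_{uv}$ via the adjacency-power identity $(\mA^t)_{uv}=\sum_{P\in\gP_{uv}:|P|=t}\prod_{e\in P}w_e$, group walks by length, and absorb $\beta^{|P|}$ into the per-edge factor $\beta w_e$ to match Equations~\ref{eqn:link}--\ref{eqn:path}. Your added remarks on interpreting $\gP_{uv}$ as walks and on the convergence threshold $\beta<1/\lambda_{\max}(\mA)$ are sensible clarifications that the paper leaves implicit, but they do not change the argument.
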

\vspace{-0.8em}
\begin{restatable}{theorem}{pagerank}
    Personalized PageRank is a path formulation with $\oplus = +$, $\otimes = \times$ and $\vw_q(e) = \alpha w_{uv} / \sum\nolimits_{v'\in\gN(u)}w_{uv'}$.
    \label{thm:pagerank}
\end{restatable}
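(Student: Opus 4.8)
The plan is to unfold Personalized PageRank into its walk-series (Neumann series) expansion and then match it term by term with the path formulation of Equations~\ref{eqn:link} and~\ref{eqn:path}. Fix the source entity $u$ and let $\mM$ be the row-stochastic random-walk matrix with entries $\mM_{ab} = w_{ab}/\sum_{b'\in\gN(a)}w_{ab'}$; write $\alpha\in(0,1)$ for the damping factor. Personalized PageRank with restart at $u$ is the vector $\bm{\pi}$ solving the fixed-point equation $\bm{\pi} = (1-\alpha)\,\ve_u + \alpha\,\bm{\pi}\mM$ (row-vector convention). First I would solve this in closed form: since $\mM$ is stochastic, its spectral radius is at most $1$, so $\mI - \alpha\mM$ is invertible and $\bm{\pi} = (1-\alpha)\,\ve_u(\mI-\alpha\mM)^{-1} = (1-\alpha)\sum_{t\ge 0}\alpha^t\,\ve_u\mM^t$, hence $\bm{\pi}(v) = (1-\alpha)\sum_{t\ge 0}\alpha^t(\mM^t)_{uv}$. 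As with Katz index in Theorem~\ref{thm:katz_index}, the overall constant $1-\alpha$ does not affect the ranking of candidate tails, so it may be dropped.

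Next I would expand each matrix power combinatorially: $(\mM^t)_{uv}$ equals the sum over all length-$t$ walks $P=(e_1,\dots,e_t)$ from $u$ to $v$ of the product $\prod_{i=1}^{t}\mM_{e_i}$ of transition probabilities along the walk. Plugging in the prescribed edge representation, which for an edge $e$ oriented from $a$ to $b$ reads $\vw_q(e) = \alpha\,w_{ab}/\sum_{b'\in\gN(a)}w_{ab'} = \alpha\,\mM_{e}$, and taking $\oplus=+$, $\otimes=\times$, the path representation of Equation~\ref{eqn:path} becomes $\vh_q(P) = \bigotimes_{i=1}^{|P|}\vw_q(e_i) = \alpha^{|P|}\prod_{i=1}^{|P|}\mM_{e_i}$. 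Grouping the paths in $\gP_{uv}$ by their length $t$ and summing as in Equation~\ref{eqn:link} gives $\bigoplus_{P\in\gP_{uv}}\vh_q(P) = \sum_{t\ge 0}\alpha^t\sum_{|P|=t}\prod_i\mM_{e_i} = \sum_{t\ge 0}\alpha^t(\mM^t)_{uv} = \bm{\pi}(v)/(1-\alpha)$, which is the claim. The $t=0$ term corresponds to the trivial (empty) path, present only when $v=u$, and contributes the multiplicative identity $1$; I would record this as the boundary condition of the path sum so that it is well defined.

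The main obstacle is bookkeeping rather than analysis: I must (i) commit to one convention for Personalized PageRank --- teleport to the single source $u$ versus a general preference vector, inclusion of the $(1-\alpha)$ prefactor, and forward versus backward propagation --- and check that the normalization in $\vw_q(e)$, namely $\sum_{b'\in\gN(a)}w_{ab'}$ taken at the \emph{tail} $a$ of the edge as one walks from $u$ toward $v$, is consistent with it, so that the product along a path telescopes exactly to the walk probability; and (ii) justify interchanging the infinite path sum with the grouping by length, which is harmless because every term is nonnegative and $\sum_{t\ge 0}\alpha^t(\mM^t)_{uv}\le\sum_{t\ge 0}\alpha^t<\infty$, so the series converges absolutely. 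Once these are pinned down, the identity follows immediately from the Neumann series and the walk-counting interpretation of matrix powers, exactly paralleling the argument for Katz index.
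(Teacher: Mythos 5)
Your proof is correct and follows the same route as the paper: expand personalized PageRank via its Neumann/walk-series form, rewrite each matrix power $(\mM^t)_{uv}$ as a sum over length-$t$ walks, drop the rank-preserving $(1-\alpha)$ prefactor, and match against the path formulation with $\oplus=+$, $\otimes=\times$, and $\vw_q(e)=\alpha\mM_e$. The paper quotes the closed-form series directly (starting at $t=1$) rather than deriving it from the fixed-point equation and does not discuss the $t=0$ boundary term, but the combinatorial unfolding of matrix powers into walk sums and the operator matching are identical to yours.
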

\vspace{-0.8em}
\begin{restatable}{theorem}{distance}
    Graph distance is a path formulation with $\oplus = \min$, $\otimes = +$ and $\vw_q(e) = w_e$.
    \label{thm:proximity}
\end{restatable}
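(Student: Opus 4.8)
The plan is to prove the claim purely by unfolding the two definitions (Equation~\ref{eqn:path} and Equation~\ref{eqn:link}) under the stated instantiation and observing that the result is literally the textbook definition of graph distance; no induction on path length or appeal to an iterative algorithm is needed. Before starting I would fix the conventions implicit in Theorems~\ref{thm:katz_index}--\ref{thm:proximity}: $\gP_{uv}$ is the set of $u$--$v$ paths (walks), the edge weights $w_e$ are nonnegative (with $w_e = 1$ in the unweighted homogeneous case), and an empty $\bigoplus$ or $\bigotimes$ evaluates to the identity of the respective operator.

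First I would handle Equation~\ref{eqn:path}: substituting $\otimes = +$ and $\vw_q(e) = w_e$ gives, for a path $P = (e_1,\dots,e_{|P|})$, that $\vh_q(P) = \sum_{i=1}^{|P|} w_{e_i}$, i.e.\ the total length of $P$ (and simply $|P|$ when all $w_e = 1$). Because $+$ is associative and commutative, the ``exact order'' caveat attached to $\bigotimes$ in the path formulation is vacuous, so nothing needs checking there. Then I would handle Equation~\ref{eqn:link}: substituting $\oplus = \min$ gives $\vh_q(u,v) = \min_{P \in \gP_{uv}} \vh_q(P) = \min_{P \in \gP_{uv}} \sum_{i=1}^{|P|} w_{e_i}$, which is exactly the length of a shortest $u$--$v$ path, namely the graph distance $d(u,v)$. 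I would also note in passing that $\min$ is a legitimate commutative \emph{summation} operator (associative, commutative, idempotent) with identity $+\infty$, so the formalism of Equation~\ref{eqn:link} applies and the degenerate case $\gP_{uv} = \emptyset$ correctly yields $d(u,v) = +\infty$.

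The only step deserving a sentence of justification — hence the closest thing to an obstacle — is that the minimum over $\gP_{uv}$ is attained and equals the usual shortest-path distance even though $\gP_{uv}$ may contain non-simple walks. Here I would invoke nonnegativity of the weights: inserting a cycle into a walk can only keep its length the same or increase it, so the infimum over all walks coincides with the minimum over the finitely many simple paths and is therefore achieved. This is exactly the mild condition under which the generalized Bellman--Ford treatment used later in the chapter is valid, so stating it up front keeps the three theorems uniform. With that remark in place, the identity $\vh_q(u,v) = d(u,v)$ follows immediately, completing the proof.
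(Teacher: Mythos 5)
Your proposal is correct and takes essentially the same approach as the paper's proof, which likewise just unfolds Equations~\ref{eqn:path} and~\ref{eqn:link} under $\oplus=\min$, $\otimes=+$, $\vw_q(e)=w_e$ to obtain $\text{GD}(u,v)=\min_{P\in\gP_{uv}}\sum_{e\in P}w_e$. Your extra remarks on the walk-versus-simple-path issue under nonnegative weights and on $\min$'s identity element $+\infty$ are sound and more careful than the paper, but they do not constitute a different argument.
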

\vspace{-2.1em}
\begin{restatable}{theorem}{widest}
    Widest path is a path formulation with $\oplus = \max$, $\otimes = \min$ and $\vw_q(e) = w_e$.
    \label{thm:widest_path}
\end{restatable}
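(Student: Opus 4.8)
The plan is to simply unfold the generic path formulation (Equations~\ref{eqn:link}--\ref{eqn:path}) under the stated instantiation and recognize the result as the classical widest-path (a.k.a.\ maximum-capacity / bottleneck-path) value. Recall that the widest path value between $u$ and $v$ is defined as $\max_{P \in \gP_{uv}} \min_{e \in P} w_e$, i.e.\ the largest achievable bottleneck capacity over all $u$--$v$ paths. The whole argument is a semantic identification, so there is essentially no hard analytic content; the only care needed is in the boundary conventions.

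First I would substitute $\otimes = \min$ and $\vw_q(e) = w_e$ into Equation~\ref{eqn:path}, obtaining $\vh_q(P) = \bigotimes_{i=1}^{|P|} w_{e_i} = \min_{i=1}^{|P|} w_{e_i}$, the minimum (bottleneck) edge weight along $P$. Since $\min$ is associative and commutative, the ``follow-the-exact-order'' caveat attached to $\bigotimes$ in the general formulation is vacuous here, so no attention to edge ordering is required. Next I would substitute $\oplus = \max$ into Equation~\ref{eqn:link}, which yields $\vh_q(u, v) = \bigoplus_{P \in \gP_{uv}} \vh_q(P) = \max_{P \in \gP_{uv}} \min_{e \in P} w_e$. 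This is exactly the widest path value by definition, which proves the claim. (As a side remark one may note that $(\max,\min)$ is an idempotent semiring, which is what makes the generalized Bellman-Ford machinery invoked elsewhere applicable, but this is not needed for the statement itself.)

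Two small points I would address for completeness. (i) Whether $\gP_{uv}$ is taken to range over simple paths or over all walks is immaterial: appending further edges to a walk can only keep or decrease its bottleneck $\min_e w_e$, so the outer $\max$ is always attained on a simple path, and the formulation agrees with either convention. (ii) The degenerate case $\gP_{uv} = \varnothing$ gives $\vh_q(u,v) = \max \varnothing = -\infty$, consistent with the convention that an unreachable pair has zero capacity (the additive identity $-\infty$ of the $(\max,\min)$ semiring); I would just flag this explicitly. The ``main obstacle'' is thus purely presentational --- phrasing the widest-path definition so that it literally matches the instantiated formula, and disposing of these boundary conventions cleanly --- rather than anything genuinely technical.
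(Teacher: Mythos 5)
Your proposal is correct and follows essentially the same approach as the paper: both proofs simply unfold the generic path formulation under the given instantiation and identify the resulting expression $\max_{P \in \gP_{uv}} \min_{e \in P} w_e$ with the definition of the widest path. The extra remarks you add on commutativity of $\min$, simple paths versus walks, and the $\gP_{uv} = \varnothing$ boundary case are sound but not needed for the paper's (equally terse) proof.
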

\vspace{-2.1em}
\begin{restatable}{theorem}{reliable}
    Most reliable path is a path formulation with $\oplus = \max$, $\otimes = \times$ and $\vw_q(e) = w_e$.
    \label{thm:reliable_path}
\end{restatable}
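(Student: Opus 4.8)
The plan is to do nothing more than unfold the definitions and match the two sides. Substituting $\oplus = \max$, $\otimes = \times$ and $\vw_q(e) = w_e$ into Equations~\ref{eqn:path} and~\ref{eqn:link}, the path representation becomes $\vh_q(P) = \bigotimes_{i=1}^{|P|} w_{e_i} = \prod_{e \in P} w_e$ --- the ordered product $\bigotimes$ collapses to an ordinary product because scalar multiplication is commutative --- and the pair representation becomes $\vh_q(u,v) = \bigoplus_{P \in \gP_{uv}} \vh_q(P) = \max_{P \in \gP_{uv}} \prod_{e \in P} w_e$. On the other side, in the most reliable path problem each $w_e \in [0,1]$ is read as the (independent) probability that edge $e$ is reliable, so the reliability of a path $P$ is $\prod_{e \in P} w_e$ and the reliability of the best $u$--$v$ connection is the maximum of this quantity over $\gP_{uv}$, which is precisely the right-hand side above. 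So the two expressions are literally the same formula, and the theorem follows once well-definedness is checked.

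The one point needing care is that $\gP_{uv}$ can be infinite when $\gG$ has cycles, so I must argue the $\max$ is actually attained. The key observation is that every $w_e \in [0,1]$, hence traversing any extra edge --- in particular closing a cycle --- multiplies the running product by a factor $\le 1$ and cannot increase it; therefore $\sup_{P \in \gP_{uv}} \prod_{e \in P} w_e$ is already achieved on the finite set of \emph{simple} $u$--$v$ paths, so the supremum is a maximum and is finite. If $u$ and $v$ lie in different components then $\gP_{uv} = \varnothing$ and both sides equal the additive identity $0$ of the $(\max, \times)$ semiring, i.e.\ ``no reliable connection,'' which is again consistent. This is also exactly the mild condition under which the generalized Bellman--Ford algorithm converges for this choice of operators, so the same argument is reused there.

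I would also remark that $(\R_{\ge 0}, \max, \times)$ is a commutative semiring: $\max$ and $\times$ are associative and commutative, and $a \times \max(b,c) = \max(ab, ac)$ for $a \ge 0$, which is what makes Equations~\ref{eqn:link}--\ref{eqn:path} unambiguous (no bracketing or reordering ambiguity). The only --- and only mild --- obstacle is thus the finiteness/attainment step above; everything else is definitional bookkeeping, and the proofs of Theorems~\ref{thm:katz_index}--\ref{thm:widest_path} are entirely analogous with their respective semirings (e.g.\ $(\R_{\ge 0} \cup \{\infty\}, \min, +)$ for graph distance, $(\R_{\ge 0} \cup \{\infty\}, \max, \min)$ for widest path).
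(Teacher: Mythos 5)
Your proposal is correct and takes essentially the same route as the paper: substitute $\oplus=\max$, $\otimes=\times$, $\vw_q(e)=w_e$ into Equations~\ref{eqn:link}--\ref{eqn:path} and observe that the resulting expression $\max_{P\in\gP_{uv}}\prod_{e\in P}w_e$ is, by definition, the most reliable path value. The extra care you take about attainment when $\gP_{uv}$ is infinite (cycles can only shrink the product, so the sup is realized on the finite set of simple paths) and the empty-path-set edge case is legitimate additional rigor that the paper omits in this one-line proof, though the paper does implicitly acknowledge the simple-path reduction later when discussing convergence of the generalized Bellman--Ford iteration; and your remark that $(\R_{\ge 0},\max,\times)$ is a semiring is what the paper defers to Theorem~\ref{th:bellman-ford}.
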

\smallskip \noindent \textbf{Generalized Bellman-Ford Algorithm.}
While the above formulation is able to model important heuristics for link prediction, it is computationally expensive since the number of paths grows exponentially with the path length.
Previous works~\cite{neelakantan2015compositional, das2017chains, wang2021relational} that directly computes the exponential number of paths can only afford a maximal path length of 3.
A more scalable solution is to use the generalized Bellman-Ford algorithm~\cite{baras2010path}. Specifically, assuming the operators $\langle\oplus, \otimes\rangle$ satisfy a semiring system~\cite{hebisch1998semirings} with \emph{summation identity} $\ozero_q$ and \emph{multiplication identity} $\oone_q$, we have the following algorithm.
\begin{align}
    \vspace{-0.2em}
    \restatableeq{\boundary}{\vh_q^{(0)}(u, v) &\leftarrow \mathbbm{1}_q(u = v)}{eqn:boundary_nbfnet} \\
    \restatableeq{\iteration}{\vh_q^{(t)}(u, v) &\leftarrow \left(\bigoplus_{(x, r, v)\in\gE(v)} \vh_q^{(t-1)}(u, x) \otimes \vw_q(x, r, v)\right) \oplus \vh_q^{(0)}(u, v)}{eqn:iteration}
    \vspace{-0.2em}
\end{align}
where $\mathbbm{1}_q(u = v)$ is the \textit{indicator} function that outputs $\oone_q$ if $u = v$ and $\ozero_q$ otherwise. $\vw_q(x, r, v)$ is the representation for edge $e = (x, r, v)$ and $r$ is the relation type of the edge. Equation~\ref{eqn:boundary_nbfnet} is known as the boundary condition, while Equation~\ref{eqn:iteration} is known as the Bellman-Ford iteration. The high-level idea of the generalized Bellman-Ford algorithm is to \textbf{compute the pair representation $\vh_q(u, v)$ for a given entity $u$, a given query relation $q$ and all $v \in \gV$ in parallel}, and reduce the total computation by the distributive property of \emph{multiplication} over \emph{summation}. Since $u$ and $q$ are fixed in the generalized Bellman-Ford algorithm, we may abbreviate $\vh_q^{(t)}(u, v)$ as $\vh^{(t)}_v$ when the context is clear. When $\oplus = min$ and $\otimes = +$, it recovers the original Bellman-Ford algorithm for the shortest path problem~\cite{bellman1958routing}. See Section~\ref{app:bellman_ford} for preliminaries and the proof of the above algorithm.

\begin{restatable}{theorem}{bellman}
    Katz index, personalized PageRank, graph distance, widest path and most reliable path can be solved via the generalized Bellman-Ford algorithm.
    \label{th:bellman-ford}
\end{restatable}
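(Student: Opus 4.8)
The plan is to reduce the statement to the general correctness result for the generalized Bellman-Ford algorithm (Equations~\ref{eqn:boundary_nbfnet}--\ref{eqn:iteration}), whose proof is deferred to Section~\ref{app:bellman_ford}: that result guarantees the iteration computes the path formulation of Equations~\ref{eqn:link}--\ref{eqn:path} exactly whenever the pair $\langle\oplus,\otimes\rangle$, together with identities $\ozero_q$ and $\oone_q$, forms a semiring. By Theorems~\ref{thm:katz_index}--\ref{thm:reliable_path}, each of the five methods \emph{is} such a path formulation, for the operator choices $\langle+,\times\rangle$ (Katz index, personalized PageRank), $\langle\min,+\rangle$ (graph distance), $\langle\max,\min\rangle$ (widest path) and $\langle\max,\times\rangle$ (most reliable path). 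Hence it suffices to verify, case by case, that each of these operator pairs is a semiring, and then invoke the general theorem.

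First I would dispatch $\langle+,\times\rangle$ on $\sR$ (on $\sR_{\geq 0}$ for personalized PageRank, where the normalized weights $w_{uv}/\sum_{v'}w_{uv'}$ are nonnegative): this is the prototypical commutative semiring with $\ozero_q=0$, $\oone_q=1$, so only the field axioms and the absorption $0\times a=0$ need to be recalled. For the tropical semiring $\langle\min,+\rangle$ I would take $\ozero_q=+\infty$, $\oone_q=0$, and check commutativity and associativity of $\min$, associativity of $+$, the distributive law $a+\min(b,c)=\min(a+b,a+c)$, and absorption $a+\infty=\infty$. For $\langle\max,\min\rangle$ I would take $\ozero_q$ to be the bottom of the weight range ($-\infty$, or $0$ when weights are nonnegative) and $\oone_q=+\infty$, with the one nontrivial point being the distributive law $\min(a,\max(b,c))=\max(\min(a,b),\min(a,c))$. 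For $\langle\max,\times\rangle$ on $\sR_{\geq 0}$ I would take $\ozero_q=0$, $\oone_q=1$; here the only non-routine identity is $a\times\max(b,c)=\max(ab,ac)$, which holds because multiplication by a nonnegative scalar is monotone. In all four cases associativity and commutativity of $\oplus$ and associativity of $\otimes$ are immediate, so each verification is short.

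The one genuine subtlety — and the point I expect to be the main obstacle — is that $\gP_{uv}$ is infinite as soon as the graph contains a cycle, so the generalized sum $\bigoplus_{P\in\gP_{uv}}$ in Equation~\ref{eqn:link} is an infinite fold, and the reduction to a finite fixed-point computation must be justified. For the idempotent semirings $\langle\min,+\rangle$, $\langle\max,\min\rangle$ and $\langle\max,\times\rangle$ this is clean: the fold is an infimum/supremum attained by a cycle-free path (traversing a cycle never improves the objective under the appropriate nonnegativity assumption), so the Bellman-Ford iteration stabilizes after at most $|\gV|-1$ steps. For $\langle+,\times\rangle$ the sum over all paths is the Neumann series of the weighted adjacency operator, which converges iff its spectral radius is below $1$; this is exactly what the attenuation factors — $\beta$ in Katz index, $\alpha<1$ in personalized PageRank — are chosen to ensure, and under that condition the fixed point of Equation~\ref{eqn:iteration} equals the limit of the series. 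I would state these convergence conditions explicitly, note that under them the semiring structure makes the distributive rearrangement used by the algorithm valid, and then conclude by the general theorem.
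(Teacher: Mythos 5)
Your proposal matches the paper's proof: the paper likewise reduces the theorem to the general correctness result of the generalized Bellman-Ford algorithm (stated as Lemma~\ref{lem:induction}, which holds whenever $\langle\oplus,\otimes\rangle$ forms a semiring) and then verifies, case by case, that $\langle+,\times\rangle$, $\langle\min,+\rangle$, $\langle\max,\min\rangle$, and $\langle\max,\times\rangle$ with the same identity elements you chose satisfy the semiring axioms. The one place you go beyond the paper is the discussion of convergence of the infinite fold $\bigoplus_{P\in\gP_{uv}}$: you correctly separate the idempotent semirings (where simple paths dominate and at most $|\gV|-1$ iterations suffice) from the $\langle+,\times\rangle$ case (where the Neumann series converges only under the attenuation factors $\beta$ and $\alpha<1$), whereas the paper addresses this only informally after Lemma~\ref{lem:induction} by remarking that at most $|\gV|$ iterations are needed for simple paths and that long paths contribute negligibly in practice; your treatment is the more rigorous of the two but is an elaboration of, not a departure from, the paper's route.
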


\begin{table}[!h]
    \centering
    \caption[Comparison of operators in NBFNet and other methods]{Comparison of operators in NBFNet and other methods from the view of path formulation.}
    \label{tab:semiring_instance}
    \begin{adjustbox}{max width=\textwidth}
        \begin{tabular}{llcccc}
            \toprule
            \multirow{2}{*}{\bf{Class}} & \multirow{2}{*}{\bf{Method}} & \bf{\textsc{Message}} & \bf{\textsc{Aggregate}} & \bf{\textsc{Indicator}} & \bf{Edge Representation} \\
            & & $\vw_q(e_i) \otimes \vw_q(e_j)$ & $\vh_q(P_i) \oplus \vh_q(P_j)$ & $\ozero_q$, $\oone_q$ & $\vw_q(e)$ \\
            \midrule
            \multirow{3}{*}{\bf{\shortstack[l]{Traditional\\Link\\Prediction}}}
            & Katz Index~\cite{katz1953new} & $\vw_q(e_i) \times \vw_q(e_j)$ & $\vh_q(P_i) + \vh_q(P_j)$ & $0, 1$ & $\beta w_e$ \\
            & Personalized PageRank~\cite{page1999pagerank} & $\vw_q(e_i) \times \vw_q(e_j)$ & $\vh_q(P_i) + \vh_q(P_j)$ & $0, 1$ & $\alpha w_{uv} / \sum_{v'\in\gN(u)}w_{uv'}$ \\
            & Graph Distance~\cite{liben2007link} & $\vw_q(e_i) + \vw_q(e_j)$ & $\min(\vh_q(P_i), \vh_q(P_j))$ & $+\infty, 0$ & $w_e$ \\
            \midrule
            \multirow{2}{*}{\bf{\shortstack[l]{Graph Theory\\Algorithms}}}
            & Widest Path~\cite{baras2010path} & $\min(\vw_q(e_i), \vw_q(e_j))$ & $\max(\vh_q(P_i), \vh_q(P_j))$ & $-\infty, +\infty$ & $w_e$ \\
            & Most Reliable Path~\cite{baras2010path} & $\vw_q(e_i) \times \vw_q(e_j)$ & $\max(\vh_q(P_i), \vh_q(P_j))$ & $0, 1$ & $w_e$ \\
            \midrule
            \multirow{2}{*}{\bf{Logic Rules}}
            & NeuralLP~\cite{yang2017differentiable} / & \multirow{2}{*}{$\vw_q(e_i) \times \vw_q(e_j)$} & \multirow{2}{*}{$\vh_q(P_i) + \vh_q(P_j)$} & \multirow{2}{*}{0, 1} & Weights learned \\
            & DRUM~\cite{sadeghian2019drum} & & & & by LSTM~\cite{hochreiter1997long} \\
            \midrule
            & \multirow{3}{*}{NBFNet} & Relational operators of & \multirow{3}{*}{\shortstack[c]{Learned set\\aggregators~\cite{corso2020principal}}} & \multirow{3}{*}{\shortstack[c]{Learned indicator\\functions}} & \multirow{3}{*}{\shortstack[c]{Learned relation\\embeddings}} \\
            & & knowledge graph \\
            & & embeddings~\cite{bordes2013translating, yang2015embedding, sun2019rotate} \\
            \bottomrule
        \end{tabular}
    \end{adjustbox}
\end{table}

\subsection{Neural Bellman-Ford Networks}
\label{sec:nbfn}

\begin{wrapfigure}{R}{0.52\textwidth}
\begin{minipage}{0.52\textwidth}
    \vspace{-2.4em}
    \begin{algorithm}[H]
        \footnotesize
        \captionsetup{font=footnotesize}\caption{Neural Bellman-Ford Networks}
        \textbf{Input:} source node $u$, query relation $q$, \#layers $T$ \\
        \textbf{Output:} pair representations $\vh_q(u, v)$ for all $v \in \gV$
        \begin{algorithmic}[1]
            \For{$v \in \gV$} \Comment{Boundary condition}
                \State{$\vh_v^{(0)} \gets \Call{Indicator}{u, v, q}$}
            \EndFor
            \For{$t \gets 1$ to $T$} \Comment{Bellman-Ford iteration}
                \For{$v \in \gV$}
                    \State{$\gM^{(t)}_v \gets \left\{\vh_v^{(0)}\right\}$} \Comment{Message augmentation}
                    \For{$(x, r, v) \in \gE(v)$}
                        \State{$\vm_{(x, r, v)}^{(t)} \gets$ \Call{Message$^{(t)}$}{$\vh_x^{(t-1)}, \vw_q(x, r, v)$}}
                        \State{$\gM^{(t)}_v \gets \gM^{(t)}_v \cup \left\{\vm_{(x, r, v)}^{(t)}\right\}$}
                    \EndFor
                    \State{$\vh_v^{(t)} \gets$ \Call{Aggregate$^{(t)}$}{$\gM^{(t)}_v$}}
                \EndFor
            \EndFor
            \State{\Return{$\vh_v^{(T)}$ as $\vh_q(u, v)$ for all $v \in \gV$}}
        \end{algorithmic}
        \label{alg:framework}
    \end{algorithm}
    \vspace{-2em}
\end{minipage}
\end{wrapfigure}

While the generalized Bellman-Ford algorithm can solve many classical methods (Theorem~\ref{th:bellman-ford}), these methods instantiate the path formulation with handcrafted operators (Table~\ref{tab:semiring_instance}), and may not be optimal for link prediction. To improve the capacity of path formulation, we propose a general framework, Neural Bellman-Ford Networks (NBFNet), to learn the operators in the pair representations.

\smallskip \noindent \textbf{Neural Parameterization.} We relax the semiring assumption and parameterize the generalized Bellman-Ford algorithm (Equation~\ref{eqn:boundary_nbfnet} and \ref{eqn:iteration}) with 3 neural functions, namely \textsc{Indicator}, \textsc{Message} and \textsc{Aggregate} functions. The \textsc{Indicator} function replaces the \emph{indicator} function $\mathbbm{1}_q(u = v)$. The \textsc{Message} function replaces the binary \emph{multiplication} operator $\otimes$. The \textsc{Aggregate} function is a permutation invariant function over sets that replaces the n-ary \emph{summation} operator $\bigoplus$. Note that one may alternatively define \textsc{Aggregate} as the commutative binary operator $\oplus$ and apply it to a sequence of messages. However, this will make the parameterization more complicated.

Now consider the generalized Bellman-Ford algorithm for a given entity $u$ and relation $q$. In this context, we abbreviate $\vh_q^{(t)}(u, v)$ as $\vh^{(t)}_v$, i.e.\ a representation on entity $v$ in the $t$-th iteration. It should be stressed that $\vh^{(t)}_v$ is still a pair representation, rather than a node representation. By substituting the neural functions into Equation~\ref{eqn:boundary_nbfnet} and \ref{eqn:iteration}, we get our Neural Bellman-Ford Networks.
\begin{align}
    \vh^{(0)}_v &\leftarrow \textsc{Indicator}(u, v, q) \\
    \vh^{(t)}_v &\leftarrow \textsc{Aggregate}\left(\left\{\textsc{Message}\left(\vh^{(t-1)}_x, \vw_q(x, r, v)\right) \middle\vert (x, r, v) \in \gE(v)\right\} \cup \left\{\vh^{(0)}_v\right\}\right)
\end{align}
NBFNet can be interpreted as a novel GNN framework for learning pair representations. Compared to common GNN frameworks~\cite{kipf2016variational, schlichtkrull2018modeling} that compute the pair representation as two independent node representations $\vh_q(u)$ and $\vh_q(v)$, NBFNet initializes a representation on the source node $u$, and readouts the pair representation on the target node $v$. Intuitively, our framework can be viewed as a source-specific message passing process, where every node learns a representation conditioned on the source node. The pseudo code of NBFNet is outlined in Algorithm~\ref{alg:framework}.

\smallskip \noindent \textbf{Design Space.} Now we discuss some principled designs for \textsc{Message}, \textsc{Aggregate} and \textsc{Indicator} functions by drawing insights from traditional methods. Note the potential design space for NBFNet is way larger than what is presented here, as one can always borrow \textsc{Message} and \textsc{Aggregate} from the arsenal of message-passing GNNs~\cite{hamilton2017inductive, gilmer2017neural, velivckovic2018graph, xu2019powerful}.

For the \textsc{Message} function, traditional methods instantiate it as natural summation, natural multiplication or min over scalars. Therefore, we may use the vectorized version of summation or multiplication. Intuitively, summation of $\vh^{(t-1)}_x$ and $\vw_q(x, r, v)$ can be interpreted as a translation of $\vh^{(t-1)}_x$ by $\vw_q(x, r, v)$ in the pair representation space, while multiplication corresponds to scaling. Such transformations correspond to the relational operators~\cite{hamilton2018embedding, ren2020query2box} in knowledge graph embeddings~\cite{bordes2013translating, yang2015embedding, trouillon2016complex, kazemi2018simple, sun2019rotate}. For example, translation and scaling are the relational operators used in TransE~\cite{bordes2013translating} and DistMult~\cite{yang2015embedding} respectively. We also consider the rotation operator in RotatE~\cite{sun2019rotate}.

The \textsc{Aggregate} function is instantiated as natural summation, max or min in traditional methods, which are reminiscent of set aggregation functions~\cite{zaheer2017deep, xu2019powerful, corso2020principal} used in GNNs. Therefore, we specify the \textsc{Aggregate} function to be sum, mean, or max, followed by a linear transformation and a non-linear activation. We also consider the principal neighborhood aggregation (PNA) proposed in a recent work~\cite{corso2020principal}, which jointly learns the types and scales of the aggregation function.

The \textsc{Indicator} function is aimed at providing a non-trivial representation for the source node $u$ as the boundary condition. Therefore, we learn a query embedding $\vq$ for $\oone_q$ and define \textsc{Indicator} function as $\mathbbm{1}(u = v) * \vq$. Note it is also possible to additionally learn an embedding for $\ozero_q$. However, we find a single query embedding works better in practice.

The edge representations are instantiated as transition probabilities or length in traditional methods. We notice that an edge may have different contribution in answering different query relations. Therefore, we parameterize the edge representations as a linear function over the query relation, i.e.\ $\vw_q(x, r, v) = \mW_r \vq + \vb_r$. For homogeneous graphs or knowledge graphs with very few relations, we simplify the parameterization to $\vw_q(x, r, v) = \vb_r$ to prevent overfitting. Note that one may also parameterize $\vw_q(x, r, v)$ with learnable entity embeddings $\vx$ and $\vv$, but such a parameterization cannot solve the inductive setting. Similar to NeuralLP~\cite{yang2017differentiable} \& DRUM~\cite{sadeghian2019drum}, we use different edge representations for different iterations, which is able to distinguish noncommutative edges in paths, e.g.\ \emph{father's wife} and \emph{wife's father}.

\smallskip \noindent \textbf{Link Prediction.} We now show how to apply the learned pair representations $\vh_q(u, v)$ to the link prediction problem. We predict the conditional likelihood of the tail entity $v$ as $p(v | u, q) = \sigma(f(\vh_q(u, v)))$, where $\sigma(\cdot)$ is the sigmoid function and $f(\cdot)$ is a feed-forward neural network. The conditional likelihood of the head entity $u$ can be predicted by $p(u|v, q^{-1}) = \sigma(f(\vh_{q^{-1}}(v, u)))$ with the same model. Following previous works~\cite{bordes2013translating, sun2019rotate}, we minimize the negative log-likelihood of positive and negative triplets (Equation~\ref{eqn:kg_loss}). The negative samples are generated according to Partial Completeness Assumption (PCA)~\cite{galarraga2013amie}, which corrupts one of the entities in a positive triplet to create a negative sample. For undirected graphs, we symmetrize the representations and define $p_q(u, v) = \sigma(f(\vh_q(u, v) + \vh_q(v, u)))$. Equation~\ref{eqn:homo_loss} shows the loss for homogeneous graphs.
\begin{align}
    &\gL_{KG} = -\log p(u, q, v) - \sum_{i=1}^n \frac{1}{n}\log (1 - p(u_i', q, v_i'))
    \label{eqn:kg_loss} \\
    &\gL_{homo} = -\log p(u, v) -
    \sum_{i=1}^n \frac{1}{n}\log (1 - p(u_i', v_i')),
    \label{eqn:homo_loss}
\end{align}
where $n$ is the number of negative samples per positive sample and $(u_i',q,v_i')$ and $(u_i',v_i')$ are the $i$-th negative samples for knowledge graphs and homogeneous graphs, respectively.

\smallskip \noindent \textbf{Time Complexity.} One advantage of NBFNet is that it has a relatively low time complexity during inference\footnote{Although the same analysis can be applied to training on a fixed number of samples, we note it is less instructive since one can trade-off samples for performance, and the trade-off varies from method to method.}.
Consider a scenario where a model is required to infer the conditional likelihood of all possible triplets $p(v | u, q)$. We group triplets with the same condition $u, q$ together, where each group contains $|\gV|$ triplets. For each group, we only need to execute Algorithm~\ref{alg:framework} once to get their predictions. Since a small constant number of iterations $T$ is enough for NBFNet to converge (Table~\ref{tab:num_layer}), Algorithm~\ref{alg:framework} has a time complexity of $O(|\gE|d + |\gV|d^2)$, where $d$ is the dimension of representations. Therefore, the amortized time complexity for a single triplet is $O\left(\frac{|\gE|d}{|\gV|} + d^2\right)$.
\section{Experiments of NBFNet}

\subsection{Experiment Setup}
\label{sec:exp_setup}

We evaluate NBFNet in three settings, knowledge graph completion, homogeneous graph link prediction and inductive relation prediction. The former two are transductive settings, while the last is an inductive setting. For knowledge graphs, we use FB15k-237~\cite{toutanova2015observed} and WN18RR~\cite{dettmers2018convolutional}. We use the standard transductive splits~\cite{toutanova2015observed, dettmers2018convolutional} and inductive splits~\cite{teru2020inductive} of these datasets. For homogeneous graphs, we use Cora, Citeseer and PubMed~\cite{sen2008collective}. Following previous works~\cite{kipf2016variational, davidson2018hyperspherical}, we split the edges into train/valid/test with a ratio of 85:5:10. Statistics of datasets can be found in Section~\ref{app:dataset_nbfnet}. Additional experiments of NBFNet on OGB~\cite{hu2020ogb} datasets can be found in Section~\ref{app:ogb}.

\smallskip \noindent \textbf{Implementation Details.}
Our implementation generally follows the open source codebases of knowledge graph completion\footnote{\url{https://github.com/DeepGraphLearning/KnowledgeGraphEmbedding}. MIT license.\label{fn:kg_url}} and homogeneous graph link prediction\footnote{\url{https://github.com/tkipf/gae}. MIT license.\label{fn:homo_url}}. For knowledge graphs, we follow \cite{yang2017differentiable, sadeghian2019drum} and augment each triplet \edge{u, q, v} with a flipped triplet \edge{v, q$^{-1}$, u}. For homogeneous graphs, we follow \cite{kipf2017semi, kipf2016variational} and augment each node $u$ with a self loop \edge{u, u}. We instantiate NBFNet with 6 layers, each with 32 hidden units. The feed-forward network $f(\cdot)$ is set to a 2-layer MLP with 64 hidden units. ReLU is used as the activation function for all hidden layers. We drop out edges that directly connect query node pairs during training to encourage the model to capture longer paths and prevent overfitting. Our model is trained on 4 Tesla V100 GPUs for 20 epochs. We select the models based on their performance on the validation set.

\smallskip \noindent \textbf{Evaluation.} We follow the filtered ranking protocol~\cite{bordes2013translating} for knowledge graph completion. For a test triplet \edge{u, q, v}, we rank it against all negative triplets \edge{u, q, v'} or \edge{u', q, v} that do not appear in the knowledge graph. We report mean rank (MR), mean reciprocal rank (MRR) and HITS at N (H@N) for knowledge graph completion. For inductive relation prediction, we follow~\cite{teru2020inductive} and draw 50 negative triplets for each positive triplet and use the above filtered ranking. We report HITS@10 for inductive relation prediction. For homogeneous graph link prediction, we follow~\cite{kipf2016variational} and compare the positive edges against the same number of negative edges. We report area under the receiver operating characteristic curve (AUROC) and average precision (AP) for homogeneous graphs.

\smallskip \noindent \textbf{Baselines.} We compare NBFNet against path-based methods, embedding methods, and GNNs. These include 11 baselines for knowledge graph completion, 10 baselines for homogeneous graph link prediction and 4 baselines for inductive relation prediction. Note the inductive setting only includes path-based methods and GNNs, since existing embedding methods cannot handle this setting.

\subsection{Main Results}

Table~\ref{tab:kg_result} summarizes the results on knowledge graph completion. NBFNet significantly outperforms existing methods on all metrics and both datasets. NBFNet achieves an average relative gain of 21\% in HITS@1 compared to the best path-based method, DRUM~\cite{sadeghian2019drum}, on two datasets. Since DRUM is a special instance of NBFNet with natural summation and multiplication operators, this indicates the importance of learning \textsc{Message} and \textsc{Aggregate} functions in NBFNet. NBFNet also outperforms the best embedding method, LowFER~\cite{amin2020lowfer}, with an average relative performance gain of 18\% in HITS@1 on two datasets. Meanwhile, NBFNet requires much less parameters than embedding methods. NBFNet only uses 3M parameters on FB15k-237, while TransE needs 30M parameters.

\begin{table}[!h]
    \centering
    \caption[Knowledge graph completion results]{Knowledge graph completion results. Results of NeuraLP and DRUM are taken from \cite{sadeghian2019drum}. Results of RotatE, HAKE and LowFER are taken from their original papers~\cite{sun2019rotate, zhang2020learning, amin2020lowfer}. Results of the other embedding methods are taken from \cite{sun2019rotate}. Since GraIL has scalability issues in this setting, we evaluate it with 50 and 100 negative triplets for FB15k-237 and WN18RR respectively and report MR based on an unbiased estimation.}
    \label{tab:kg_result}
    \begin{adjustbox}{max width=\textwidth}
        \begin{tabular}{llcccccccccc}
            \toprule
            \multirow{2}{*}{\bf{Class}} & \multirow{2}{*}{\bf{Method}}
            & \multicolumn{5}{c}{\bf{FB15k-237}} & \multicolumn{5}{c}{\bf{WN18RR}} \\
            & & \bf{MR} & \bf{MRR} & \bf{H@1} & \bf{H@3} & \bf{H@10} & \bf{MR} & \bf{MRR} & \bf{H@1} & \bf{H@3} & \bf{H@10} \\
            \midrule
            \multirow{3}{*}{\bf{Path-based}}
            & Path Ranking~\cite{lao2010relational} & 3521 & 0.174 & 0.119 & 0.186 & 0.285 & 22438 & 0.324 & 0.276 & 0.360 & 0.406 \\
            & NeuralLP~\cite{yang2017differentiable} & - & 0.240 & - & - & 0.362 & - & 0.435 & 0.371 & 0.434 & 0.566 \\
            & DRUM~\cite{sadeghian2019drum} & - & 0.343 & 0.255 & 0.378 & 0.516 & - & 0.486 & 0.425 & 0.513 & 0.586 \\
            \midrule
            \multirow{6}{*}{\bf{Embeddings}}
            & TransE~\cite{bordes2013translating} & 357 & 0.294 & - & - & 0.465 & 3384 & 0.226 & - & - & 0.501 \\
            & DistMult~\cite{yang2015embedding} & 254 & 0.241 & 0.155 & 0.263 & 0.419 & 5110 & 0.43 & 0.39 & 0.44 & 0.49 \\
            & ComplEx~\cite{trouillon2016complex} & 339 & 0.247 & 0.158 & 0.275 & 0.428 & 5261 & 0.44 & 0.41 & 0.46 & 0.51 \\
            & RotatE~\cite{sun2019rotate} & 177 & 0.338 & 0.241 & 0.375 & 0.533 & 3340 & 0.476 & 0.428 & 0.492 & 0.571 \\
            & HAKE~\cite{zhang2020learning} & - & 0.346 & 0.250 & 0.381 & 0.542 & - & 0.497 & 0.452 & 0.516 & 0.582 \\
            & LowFER~\cite{amin2020lowfer} & - & 0.359 & 0.266 & 0.396 & 0.544 & - & 0.465 & 0.434 & 0.479 & 0.526 \\
            \midrule
            \multirow{3}{*}{\bf{GNNs}}
            & RGCN~\cite{schlichtkrull2018modeling} & 221 & 0.273 & 0.182 & 0.303 & 0.456 & 2719 & 0.402 & 0.345 & 0.437 & 0.494 \\
            & GraIL~\cite{teru2020inductive} & 2053 & - & - & - & - & 2539 & - & - & - & - \\
            & NBFNet & \bf{114} & \bf{0.415} & \bf{0.321} & \bf{0.454} & \bf{0.599} & \bf{636} & \bf{0.551} & \bf{0.497} & \bf{0.573} & \bf{0.666} \\
            \bottomrule
        \end{tabular}
    \end{adjustbox}
\end{table}

\begin{table}[!h]
    \centering
    \caption[Homogeneous graph link prediction results]{Homogeneous graph link prediction results. Results of VGAE and S-VGAE are taken from their original papers~\cite{kipf2016variational, davidson2018hyperspherical}.}
    \label{tab:homo_result}
    \begin{adjustbox}{max width=\textwidth}
        \begin{tabular}{llcccccc}
            \toprule
            \multirow{2}{*}{\bf{Class}} & \multirow{2}{*}{\bf{Method}}
                    & \multicolumn{2}{c}{\bf{Cora}} & \multicolumn{2}{c}{\bf{Citeseer}} & \multicolumn{2}{c}{\bf{PubMed}} \\
                &   & \bf{AUROC} & \bf{AP} & \bf{AUROC} & \bf{AP} & \bf{AUROC} & \bf{AP} \\
            \midrule
            \multirow{3}{*}{\bf{Path-based}}
            & Katz Index~\cite{katz1953new} & 0.834 & 0.889 & 0.768 & 0.810 & 0.757 & 0.856 \\
            & Personalized PageRank~\cite{page1999pagerank} & 0.845 & 0.899 & 0.762 & 0.814 & 0.763 & 0.860 \\
            & SimRank~\cite{jeh2002simrank} & 0.838 & 0.888 & 0.755 & 0.805 & 0.743 & 0.829 \\
            \midrule
            \multirow{3}{*}{\bf{Embeddings}}
            & DeepWalk~\cite{perozzi2014deepwalk} & 0.831 & 0.850 & 0.805 & 0.836 & 0.844 & 0.841 \\
            & LINE~\cite{tang2015line} & 0.844 & 0.876 & 0.791 & 0.826 & 0.849 & 0.888 \\
            & node2vec~\cite{grover2016node2vec} & 0.872 & 0.879 & 0.838 & 0.868 & 0.891 & 0.914 \\
            \midrule
            \multirow{5}{*}{\bf{GNNs}}
            & VGAE~\cite{kipf2016variational} & 0.914 & 0.926 & 0.908 & 0.920 & 0.944 & 0.947 \\
            & S-VGAE~\cite{davidson2018hyperspherical} & 0.941 & 0.941 & \bf{0.947} & \bf{0.952} & 0.960 & 0.960 \\
            & SEAL~\cite{zhang2018link} & 0.933 & 0.942 & 0.905 & 0.924 & 0.978 & 0.979 \\
            & TLC-GNN~\cite{yan2021link} & 0.934 & 0.931 & 0.909 & 0.916 & 0.970 & 0.968 \\
            & NBFNet             & \bf{0.956} & \bf{0.962} & 0.923 & 0.936 & \bf{0.983} & \bf{0.982} \\
            \bottomrule
        \end{tabular}
    \end{adjustbox}
\end{table}

\begin{table}[!h]
    \centering
    \caption[Inductive relation prediction results]{Inductive relation prediction results (HITS@10). V1-v4 corresponds to the 4 standard versions of inductive splits. Results of compared methods are taken from \cite{teru2020inductive}.}
    \label{tab:inductive_result}
    \footnotesize
    \begin{tabular}{llcccccccc}
        \toprule
        \multirow{2}{*}{\bf{Class}} & \multirow{2}{*}{\bf{Method}} & \multicolumn{4}{c}{\bf{FB15k-237}} & \multicolumn{4}{c}{\bf{WN18RR}} \\
        & & \bf{v1} & \bf{v2} & \bf{v3} & \bf{v4} & \bf{v1} & \bf{v2} & \bf{v3} & \bf{v4} \\
        \midrule
        \multirow{3}{*}{\bf{Path-based}}
        & NeuralLP~\cite{gilmer2017neural} & 0.529 & 0.589 & 0.529 & 0.559 & 0.744 & 0.689 & 0.462 & 0.671 \\
        & DRUM~\cite{sadeghian2019drum} & 0.529 & 0.587 & 0.529 & 0.559 & 0.744 & 0.689 & 0.462 & 0.671 \\
        & RuleN~\cite{meilicke2018fine} & 0.498 & 0.778 & 0.877 & 0.856 & 0.809 & 0.782 & 0.534 & 0.716 \\
        \midrule
        \multirow{2}{*}{\bf{GNNs}}
        & GraIL~\cite{teru2020inductive} & 0.642 & 0.818 & 0.828 & 0.893 & 0.825 & 0.787 & 0.584 & 0.734 \\
        & NBFNet & \bf{0.834} & \bf{0.949} & \bf{0.951} & \bf{0.960} & \bf{0.948} & \bf{0.905} & \bf{0.893} & \bf{0.890} \\
        \bottomrule
    \end{tabular}
\end{table}

Table~\ref{tab:homo_result} shows the results on homogeneous graph link prediction. NBFNet gets the best results on Cora and PubMed, meanwhile achieves competitive results on CiteSeer. Note CiteSeer is extremely sparse (Section~\ref{app:dataset_nbfnet}), which makes it hard to learn good representations with NBFNet. One thing to note here is that unlike other GNN methods, NBFNet does not use the node features provided by the datasets but is still able to outperform most other methods. We leave how to effectively combine node features and structural representations for link prediction as our future work.

Table~\ref{tab:inductive_result} summarizes the results on inductive relation prediction. On all inductive splits of two datasets, NBFNet achieves the best result. NBFNet outperforms the previous best method, GraIL~\cite{teru2020inductive}, with an average relative performance gain of 22\% in HITS@10. Note that GraIL explicitly encodes the local subgraph surrounding each node pair and has a high time complexity (Table~\ref{tab:comparison}). Usually, GraIL can at most encode a 2-hop subgraph, while our NBFNet can efficiently explore longer paths. 

\begin{table}
    \caption[Ablation studies of NBFNet on FB15k-237]{Ablation studies of NBFNet on FB15k-237. All the entries are the MRR metric.}
    \label{tab:ablation_nbfnet}
    \footnotesize
    \begin{subtable}[t]{0.52\textwidth}
        \centering
        \caption{Different \textsc{Message} and \textsc{Aggregate} functions.\label{tab:msg_agg}}
        \vspace{-0.2em}
        \begin{adjustbox}{max width=\textwidth}
            \begin{tabular}{lcccc}
                \toprule
                \multirow{2}{*}{\bf{\textsc{Message}}} & \multicolumn{4}{c}{\bf{\textsc{Aggregate}}} \\
                                                  & Sum & Mean & Max & PNA~\cite{corso2020principal} \\
                \midrule
                TransE~\cite{bordes2013translating} & 0.297 & 0.310 & 0.377 & 0.383 \\
                DistMult~\cite{yang2017differentiable} & 0.388 & 0.384 & 0.374 & \bf{0.415} \\
                RotatE~\cite{sun2019rotate}   & 0.392 & 0.376 & 0.385 & \bf{0.414} \\
                \bottomrule
            \end{tabular}
        \end{adjustbox}
    \end{subtable}
    \hspace{0.3em}
    \begin{subtable}[t]{0.46\textwidth}
        \centering
        \caption{Different number of layers.\label{tab:num_layer}}
        \vspace{-0.2em}
        \begin{adjustbox}{max width=\textwidth}
            \begin{tabular}{lcccc}
                \toprule
                \multirow{2}{*}{\bf{Method}} & \multicolumn{4}{c}{\bf{\#Layers ($T$)}} \\
                & 2 & 4 & 6 & 8 \\
                \midrule
                NBFNet & 0.345 & 0.409 & \bf{0.415} & \bf{0.416} \\
                \bottomrule
            \end{tabular}
        \end{adjustbox}
    \end{subtable}
    \\[1em]
    \begin{subtable}[t]{\textwidth}
        \centering
        \caption{Performance per relation category. The two scores are rankings over heads and tails respectively.\label{tab:rel_category}}
        \vspace{-0.2em}
        \begin{adjustbox}{max width=0.8\textwidth}
            \begin{tabular}{lcccc}
                \toprule
                \multirow{2}{*}{\bf{Method}} & \multicolumn{4}{c}{\bf{Relation Category}} \\
                & \bf{1-to-1} & \bf{1-to-N} & \bf{N-to-1} & \bf{N-to-N} \\
                \midrule
                TransE~\cite{bordes2013translating} & 0.498/0.488 & 0.455/0.071 & 0.079/0.744 & 0.224/0.330 \\
                RotatE~\cite{sun2011pathsim} & 0.487/0.484 & 0.467/0.070 & 0.081/0.747 & 0.234/0.338 \\
                NBFNet & \bf{0.578}/\bf{0.600} & \bf{0.499}/\bf{0.122} & \bf{0.165}/\bf{0.790} & \bf{0.348}/\bf{0.456} \\
                \bottomrule
            \end{tabular}
        \end{adjustbox}
    \end{subtable}
\end{table}

\subsection{Ablation Studies}

\smallskip \noindent \textbf{\textsc{Message} \& \textsc{Aggregate} Functions.} Table~\ref{tab:msg_agg} shows the results of different \textsc{Message} and \textsc{Aggregate} functions. Generally, NBFNet benefits from advanced embedding methods (DistMult, RotatE > TransE) and aggregation functions (PNA > sum, mean, max). Among simple \textsc{Aggregate} functions (sum, mean, max), combinations of \textsc{Message} and \textsc{Aggregate} functions (TransE \& max, DistMult \& sum) that satisfy the semiring assumption\footnote{Here semiring is discussed under the assumption of linear activation functions. Rigorously, no combination satisfies a semiring if we consider non-linearity in the model.} of the generalized Bellman-Ford algorithm, achieve locally optimal performance. PNA significantly improves over simple counterparts, which highlights the importance of learning more powerful \textsc{Aggregate} functions.

\smallskip \noindent \textbf{Number of GNN Layers.} Table~\ref{tab:num_layer} compares the results of NBFNet with different number of layers. Although it has been reported that GNNs with deep layers often result in significant performance drop~\cite{li2018deeper, zhao2019pairnorm}, we observe NBFNet does not have this issue. The performance increases monotonically with more layers, hitting a saturation after 6 layers. We conjecture the reason is that longer paths have negligible contribution, and paths not longer than 6 are enough for link prediction.

\smallskip \noindent \textbf{Performance by Relation Category.} We break down the performance of NBFNet by the categories of query relations: one-to-one, one-to-many, many-to-one and many-to-many\footnote{The categories are defined same as \cite{wang2014knowledge}. We compute the average number of tails per head and the average number of heads per tail. The category is \textit{one} if the average number is smaller than 1.5 and \textit{many} otherwise.}. Table~\ref{tab:rel_category} shows the prediction results for each category. It is observed that NBFNet not only improves on easy one-to-one cases, but also on hard cases where there are multiple true answers for the query.

\subsection{Path Interpretations of Predictions}

One advantage of NBFNet is that we can interpret its predictions through paths, which may be important for users to understand and debug the model. Intuitively, the interpretations should contain paths that contribute most to the prediction $p(u, q, v)$. Following local interpretation methods~\cite{baehrens2010explain, zeiler2014visualizing}, we approximate the local landscape of NBFNet with a linear model over the set of all paths, i.e.\ 1st-order Taylor polynomial. We define the importance of a path as its weight in the linear model, which can be computed by the partial derivative of the prediction w.r.t.\ the path. Formally, the top-k path interpretations for $p(u, q, v)$ are defined as
\begin{equation}
    P_1, P_2, ..., P_k = \topk_{P \in \gP_{uv}} \frac{\partial{p(u, q, v)}}{\partial{P}}
\end{equation}
Note this formulation generalizes the definition of logical rules~\cite{yang2017differentiable, sadeghian2019drum} to non-linear models. While directly computing the importance of all paths is intractable, we approximate them with edge importance. Specifically, the importance of each path is approximated by the sum of the importance of edges in that path, where edge importance is obtained via auto differentiation. Then the top-k path interpretations are equivalent to the top-k longest paths on the edge importance graph, which can be solved by a Bellman-Ford-style beam search. Better approximation is left as a future work.

Table~\ref{tab:vis_nbfnet} visualizes path interpretations from FB15k-237 test set. While users may have different insights towards the visualization, here is our understanding. 1) In the first example, NBFNet learns soft logical entailment, such as $\emph{impersonate}^{-1} \land \emph{nationality} \implies \emph{nationality}$ and $\emph{ethnicity}^{-1} \land \emph{distribution} \implies \emph{nationality}$. 2) In second example, NBFNet performs analogical reasoning by leveraging the fact that \emph{Florence} is similar to \emph{Rome}. 3) In the last example, NBFNet extracts longer paths, since there is no obvious connection between \emph{Pearl Harbor (film)} and \emph{Japanese language}.
\begin{table}[!h]
    \centering
    \caption[Path interpretations of NBFNet predictions on FB15k-237]{Path interpretations of predictions on FB15k-237 test set. For each query triplet, we visualize the top-2 path interpretations and their weights. Inverse relations are denoted with a superscript $^{-1}$.}
    \label{tab:vis_nbfnet}
    \begin{adjustbox}{max width=\textwidth}
        \footnotesize
        \begin{tabular}{ll}
            \toprule
            \bf{Query} & \edge{u, q, v}: \edge{O. Hardy, nationality, U.S.} \\
            \midrule
            0.243 & \edge{O. Hardy, impersonate$^{-1}$, R. Little} $\land$ \edge{R. Little, nationality, U.S.} \\
            0.224 & \edge{O. Hardy, ethnicity$^{-1}$, Scottish American} $\land$ \edge{Scottish American, distribution, U.S.} \\
            \midrule
            \bf{Query} & \edge{u, q, v}: \edge{Florence, vacationer, D.C. Henrie} \\
            \midrule
            0.251 & \edge{Florence, contain$^{-1}$, Italy} $\land$ \edge{Italy, capital, Rome} $\land$ \edge{Rome, vacationer, D.C. Henrie} \\
            0.183 & \edge{Florence, place live$^{-1}$, G.F. Handel} $\land$ \edge{G.F. Handel, place live, Rome} $\land$ \edge{Rome, vacationer, D.C. Henrie} \\
            \midrule
            \bf{Query} & \edge{u, q, v}: \edge{Pearl Harbor (film), language, Japanese} \\
            \midrule
            0.211 & \edge{Pearl Harbor (film), film actor, C.-H. Tagawa} $\land$ \edge{C.-H. Tagawa, nationality, Japan} \\
            & $\land$ \edge{Japan, country of origin, Yu-Gi-Oh!} $\land$ \edge{Yu-Gi-Oh!, language, Japanese} \\
            0.208 & \edge{Pearl Harbor (film), film actor, C.-H. Tagawa} $\land$ \edge{C.-H. Tagawa, nationality, Japan} \\
            & $\land$ \edge{Japan, official language, Japanese} \\
            \bottomrule
        \end{tabular}
    \end{adjustbox}
\end{table}

\subsection{Results on Large Graphs}
\label{app:ogb}

To show the effectiveness of NBFNet on large graphs, we additionally evaluate our method on two knowledge graph datasets from OGB~\cite{hu2020ogb}, ogbl-biokg and WikiKG90M. Ogbl-biokg is a biomedical knowledge graph containing 93,773 entities, 51 relations and 5,088,434 triplets. WikiKG90M is an extremely large knowledge graph used in OGB large-scale challenge~\cite{hu2021ogb}, with 87,143,637 entities, 1,315 relations and 504,220,369 triplets. We follow the standard evaluation protocol of OGB link property prediction, and compute the mean reciprocal rank (MRR) of the true entity against 1,000 negative entities given in the test set.

\smallskip \noindent \textbf{Bidirectional BFS Sampling.}
In order to fit WikiKG90M into NBFNet, we use a bidirectional breath-first-search (BFS) algorithm to sample a local subgraph for each query. Given a query, we generate a $k$-hop neighborhood for each of the head entity and the candidate tail entities, based on a BFS search. The union of all generated neighborhoods is then collected as the sampled graph. With this sampling algorithm, any path within a length of $2k$ between the head entity and any tail candidate is guaranteed to present in the sampled graph. See Figure~\ref{fig:bfs_sampling} for illustration. While a standard single BFS algorithm computing the $2k$-hop neighborhood of the head entity has the same guarantee, a bidirectional BFS algorithm significantly reduces the number of nodes and edges in the sampled graph.

\begin{figure}[!h]
    \centering
    \begin{subfigure}{0.32\textwidth}
        \centering
        \includegraphics[width=\textwidth]{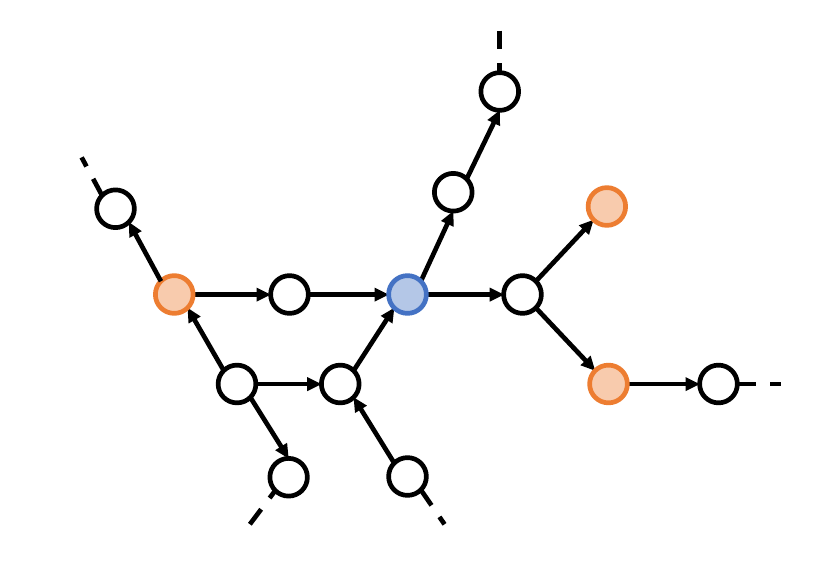}
        \caption{Original graph}
    \end{subfigure}
    \begin{subfigure}{0.32\textwidth}
        \centering
        \includegraphics[width=\textwidth]{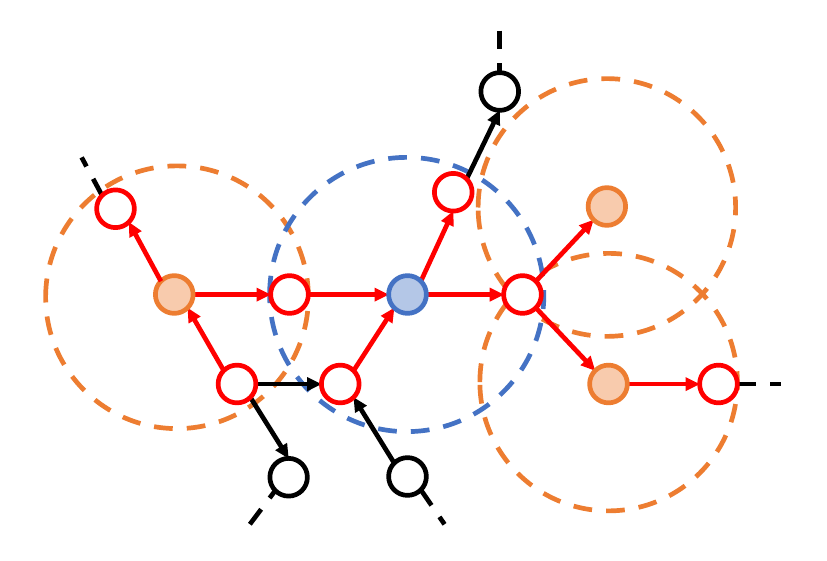}
        \caption{Bidirectional BFS}
    \end{subfigure}
    \begin{subfigure}{0.32\textwidth}
        \centering
        \includegraphics[width=\textwidth]{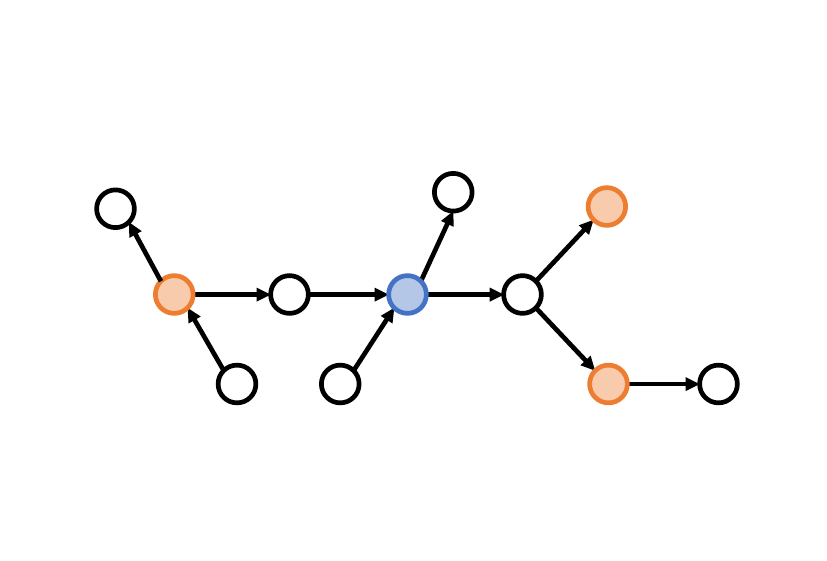}
        \caption{Sampled graph}
    \end{subfigure}
    \caption[Illustration of bidirectional BFS sampling]{Illustration of bidirectional BFS sampling. For a \textcolor{myblue}{head entity} and multiple \textcolor{myorange}{tail candidates}, we use BFS to sample a $k$-hop neighborhood around each entity. The neighborhood is denoted by dashed circles. The nodes and edges visited by the BFS algorithm are extracted to generate the sampled graph. Best viewed in color.}
    \label{fig:bfs_sampling}
\end{figure}

\begin{table}[!h]
    \centering
    \caption[Knowledge graph completion results on ogbl-biokg]{Knowledge graph completion results on ogbl-biokg. Results of compared methods are taken from the OGB leaderboard.}
    \label{tab:ogbl-biokg}
    \footnotesize
    \begin{tabular}{llccc}
        \toprule
        \bf{Class} & \bf{Method} & \bf{Test MRR} & \bf{Validation MRR} & \bf{\#Params}\\
        \midrule
        \multirow{6}{*}{\bf{Embeddings}}
        & TransE~\cite{bordes2013translating} & 0.7452 & 0.7456 & 187,648,000 \\
        & DistMult~\cite{yang2015embedding} & 0.8043 & 0.8055 & 187,648,000 \\
        & ComplEx~\cite{trouillon2016complex} & 0.8095 & 0.8105 & 187,648,000 \\
        & RotatE~\cite{sun2019rotate} & 0.7989 & 0.7997 & 187,597,000 \\
        & AutoSF~\cite{zhang2020autosf} & 0.8309 & \bf{0.8317} & 93,824,000 \\
        & PairRE~\cite{chao2021pairre} & 0.8164 & 0.8172 & 187,750,000 \\
        \midrule
        \bf{GNNs}
        & NBFNet & \bf{0.8317} & \bf{0.8318} & 734,209\\
        \bottomrule
    \end{tabular}
\end{table}

We additionally downsample the neighbors when expanding the neighbors of an entity, to tackle entities with large degrees. For each entity visited during the BFS algorithm, we downsample its outgoing neighbors and incoming neighbors to $m$ entities respectively.

\smallskip \noindent \textbf{Results on ogbl-biokg.}
Ogbl-biokg is a large biomedical knowledge graph that contains 93,773 entities, 51 relations and 5,088,434 triplets. We compare NBFNet with 6 embedding methods on this dataset. Note by the time of this work, only embedding methods are available for such large-scale datasets. Table~\ref{tab:ogbl-biokg} shows the results on ogbl-biokg. NBFNet achieves the best result compared to all methods reported on the official leaderboard\footnotemark with much fewer parameters. Note the previous best model AutoSF is based on architecture search and requires more computation resource than NBFNet for training.

\begin{wraptable}{r}{0.452\textwidth}
    \vspace{-1.6em}
    \centering
    \caption[Knowledge graph completion results on WikiKG90M validation set]{Knowledge graph completion results on WikiKG90M validation set.}
    \label{tab:ogb_lsc}
    \footnotesize
    \begin{tabular}{lc}
        \toprule
        \bf{Model} & \bf{MRR} \\
        \midrule
        NBFNet & 0.924 \\
        NBFNet (6 model ensemble) & 0.930 \\
        \bottomrule
    \end{tabular}
    \vspace{-1.6em}
\end{wraptable}

\smallskip \noindent \textbf{Results on WikiKG90M.}
Table~\ref{tab:ogb_lsc} shows the results of NBFNet on WikiKG90M validation set. Our best single model uses $k=2$ and $m=100$. While the validation set requires to rank the true entity against 1,000 negative entities, in practice it is not mandatory to draw 1,000 negative samples for each positive sample during training. We find that reducing the negative samples from 1,000 to 20 and increasing the batch size from 4 to 64 provides a better result, although it creates a distribution shift between sampled graphs in training and validation. We leave further research of such distribution shift as a future work.
\footnotetext{\url{https://ogb.stanford.edu/docs/leader_linkprop/\#ogbl-biokg}}
\section{Theories and Proofs}

\subsection{Path Formulations for Traditional Methods}
\label{app:path_formulation}

Here we demonstrate our path formulation is capable of modeling traditional link prediction methods like Katz index~\cite{katz1953new}, personalized PageRank~\cite{page1999pagerank} and graph distance~\cite{liben2007link}, as well as graph theory algorithms like widest path~\cite{baras2010path} and most reliable path~\cite{baras2010path}.

Recall the path formulation is defined as
\begin{align}
    \link \\
    \path
\end{align}
which can be written in the following compact form
\begin{equation}
    \vh_q(u,v) = \bigoplus_{P\in\gP_{uv}} \bigotimes_{i=1}^{|P|} \vw_q(e_i)
    \label{eqn:compact}
\end{equation}

\smallskip \noindent \textbf{Katz Index.}
The Katz index for a pair of nodes $u$, $v$ is defined as a weighted count of paths between $u$ and $v$, penalized by an attenuation factor $\beta\in(0,1)$. Formally, it can be written as
\begin{equation}
    \text{Katz}(u, v)
    =\sum_{t=1}^{\infty} \beta^t \ve_u^\top \mA^t \ve_v
\end{equation}
where $\mA$ denotes the adjacency matrix and $\ve_u$, $\ve_v$ denote the one-hot vector for nodes $u$, $v$ respectively. The term $\ve_u^\top \mA^t \ve_v$ counts all paths of length $t$ between $u$, and $v$ and shorter paths are assigned with larger weights.

\katz*
\begin{proof}
We show that $\text{Katz}(u, v)$ can be transformed into a summation over all paths between $u$ and $v$, where each path is represented by a product of damped edge weights in the path.
Mathematically, it can be derived as
\begin{align}
    \text{Katz}(u, v)
    &=\sum_{t=1}^{\infty} \beta^t \sum_{P \in \gP_{uv}:|P|=t}\prod_{e \in P} w_{e}\\
    &=\sum_{P \in \gP_{uv}}\prod_{e \in P} \beta w_{e}
\end{align}
Therefore, the Katz index can be viewed as a path formulation with the \emph{summation} operator $+$, the \emph{multiplication} operator $\times$ and the edge representations $\beta w_e$.
\end{proof}

\smallskip \noindent \textbf{Personalized PageRank.}
The personalized PageRank (PPR) for $u$ computes the stationary distribution over nodes generated by an infinite random walker, where the walker moves to a neighbor node with probability $\alpha$ and returns to the source node $u$ with probability $1-\alpha$ at each step. The probability of a node $v$ from a source node $u$ has the following closed-form solution~\cite{jeh2003scaling}
\begin{equation}
    \text{PPR}(u, v)
    =(1-\alpha)\sum_{t=1}^{\infty} \alpha^t \ve_u^\top (\mD^{-1}\mA)^t \ve_v
\end{equation}
where $\mD$ is the degree matrix and $\mD^{-1}\mA$ is the (random walk) normalized adjacency matrix.
Note that $\ve_u^\top (\mD^{-1}\mA)^t \ve_v$ computes the probability of $t$-step random walks from $u$ to $v$.

\pagerank*
\begin{proof}
We omit the coefficient $1-\alpha$, since it is always positive and has no effect on the ranking of different node pairs.
Then we have 
\begin{align}
    \text{PPR}(u, v)
    &\propto\sum_{t=1}^{\infty} \alpha^t \sum_{P \in \gP_{uv}:|P|=t}\prod_{(a,b) \in P} \frac{w_{ab}}{\sum_{b'\in\gN(a)}w_{ab'}} \\
    &=\sum_{P \in \gP_{uv}}\prod_{(a,b) \in P} \frac{\alpha w_{ab}}{\sum_{b'\in\gN(a)}w_{ab'}}
\end{align}
where the \emph{summation} operator is $+$, the \emph{multiplication} operator is $\times$ and edge representations are random walk probabilities scaled by $\alpha$.
\end{proof}

\smallskip \noindent \textbf{Graph Distance.}
Graph distance (GD) is defined as the minimum length of all paths between $u$ and $v$.

\distance*
\begin{proof}
Since the length of a path is the sum of edge lengths in the path, we have
\begin{equation}
    \text{GD}(u, v) = \min_{P \in \gP_{uv}} \sum_{e \in P} w_e
    \label{eqn:shortest_path}
\end{equation}
Here the \emph{summation} operator is $\min$, the \emph{multiplication} operator is $+$ and the edge representations are the lengths of edges.
\end{proof}

\smallskip \noindent \textbf{Widest Path.}
The widest path (WP), also known as the maximum capacity path, is aimed at finding a path between two given nodes, such that the path maximizes the minimum edge weight in the path.

\widest*
\begin{proof}
Given two nodes $u$ and $v$, we can write the widest path as
\begin{align}
    \text{WP}(u, v) =\max_{P \in \gP_{uv}} \min_{e \in P} w_e
\end{align}
Here the \emph{summation} operator is $\max$, the \emph{multiplication} operator is $\min$ and the edge representations are plain edge weights.
\end{proof}

\smallskip \noindent \textbf{Most Reliable Path.}
For a graph with non-negative edge probabilities, the most reliable path (MRP) is the path with maximal probability from a start node to an end node. This is also known as Viterbi algorithm~\cite{viterbi1967error} used in the maximum a posterior (MAP) inference of hidden Markov models (HMM).

\reliable*
\begin{proof}
For a start node $u$ and an end node $v$, the probaility of their most reliable path is 
\begin{align}
    \text{MRP}(u, v)
    =\max_{P \in \gP_{uv}} \prod_{e \in P} w_e
\end{align}
Here the \emph{summation} operator is $\max$, the \emph{multiplication} operator is $\times$ and the edge representations are edge probabilities.
\end{proof}

\subsection{Generalized Bellman-Ford Algorithm}
\label{app:bellman_ford}

First, we prove that the path formulation can be efficiently solved by the generalized Bellman-Ford algorithm when the operators $\langle\oplus, \otimes\rangle$ satisfy a semiring.
Then, we show that traditional methods satisfy the semiring assumption and therefore can be solved by the generalized Bellman-Ford algorithm.

\smallskip \noindent \textbf{Preliminaries on Semirings.}
Semirings are algebraic structures with two operators, \emph{summation} $\oplus$ and \emph{multiplication} $\otimes$, that share similar properties with the natural summation and the natural multiplication defined on integers. Specifically, $\oplus$ should be commutative, associative and have an identity element $\ozero$. $\otimes$ should be associative and have an identity element $\oone$. Mathematically, the \emph{summation} $\oplus$ satisfies
\begin{itemize}[label=$\bullet$]
    \setlength{\parskip}{0pt}
    \setlength{\itemsep}{0pt plus 1pt}
    \item \textbf{Commutative Property.} $a \oplus b = b \oplus a$
    \item \textbf{Associative Property.} $(a \oplus b) \oplus c = a \oplus (b \oplus c)$
    \item \textbf{Identity Element.} $a \oplus \ozero = a$
\end{itemize}
The \emph{multiplication} $\otimes$ satisfies
\begin{itemize}[label=$\bullet$]
    \setlength{\parskip}{0pt}
    \setlength{\itemsep}{0pt plus 1pt}
    \item \textbf{Associative Property.} $(a \otimes b) \otimes c = a \otimes (b \otimes c)$
    \item \textbf{Absorption Property.} $a \otimes \ozero = \ozero \otimes a = \ozero$
    \item \textbf{Identity Element.} $a \otimes \oone = \oone \otimes a = a$
\end{itemize}
Additionally, $\otimes$ should be distributive over $\oplus$.
\begin{itemize}[label=$\bullet$]
    \setlength{\parskip}{0pt}
    \setlength{\itemsep}{0pt plus 1pt}
    \item \textbf{Distributive Property (Left).} $a \otimes (b \oplus c) = (a \otimes b) \oplus (a \otimes c)$
    \item \textbf{Distributive Property (Right).} $(b \oplus c) \otimes a = (b \otimes a) \oplus (c \otimes a)$
\end{itemize}
Note semirings differ from natural arithmetic operators in two aspects. First, the \emph{summation} operator $\oplus$ does not need to be invertible, e.g., min or max. Second, the \emph{multiplication} operator $\otimes$ does not need to be commutative nor invertible, e.g., matrix multiplication.

\smallskip \noindent \textbf{Generalized Bellman-Ford Algorithm for Path Formulation.}
Now we prove that the generalized Bellman-Ford algorithm computes the path formulation when the operators $\langle\oplus, \otimes\rangle$ satisfy a semiring. It should be stressed that the generalized Bellman-Ford algorithm for path problems has been proved in~\cite{baras2010path}, and not a contribution of this paper. Here we apply the proof to our proposed path formulation.

The generalized Bellman-Ford algorithm computes the following iterations for all $v \in \gV$
\begin{align}
    \boundary \\
    \iteration
\end{align}
\begin{lemma}\label{lem:induction}
After $t$ Bellman-Ford iterations, the intermediate representation $\vh^{(t)}_q(u,v)$ aggregates all path representations within a length of $t$ edges for all $v$. That is
\begin{equation}
    \vh_q^{(t)}(u,v) = \bigoplus_{P\in\gP_{uv}:|P|\le t} \bigotimes_{i=1}^{|P|} \vw_q(e_i)
\end{equation}
\end{lemma}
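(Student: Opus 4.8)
The plan is to prove the lemma by induction on the number of iterations $t$, tracking at each stage exactly which paths contribute to $\vh_q^{(t)}(u,v)$. Throughout I will lean on the semiring axioms recalled above: commutativity and associativity of $\oplus$ with identity $\ozero_q$, associativity of $\otimes$ with identity $\oone_q$, the absorption law $a \otimes \ozero_q = \ozero_q$, and — the one point that needs attention — the \emph{right} distributive law $(b \oplus c) \otimes a = (b \otimes a) \oplus (c \otimes a)$, since $\otimes$ need not be commutative and in the iteration a freshly traversed edge is always multiplied on the right of an already-accumulated path representation.

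For the base case $t = 0$, the boundary condition gives $\vh_q^{(0)}(u,v) = \mathbbm{1}_q(u = v)$, which is $\oone_q$ when $u = v$ and $\ozero_q$ otherwise. The set of paths from $u$ to $v$ of length at most $0$ is the single empty path when $u = v$ and is empty otherwise; the representation of the empty path is the empty $\otimes$-product $\oone_q$, and an empty $\oplus$-sum is $\ozero_q$. So the claimed identity holds at $t = 0$.

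For the inductive step I would assume the identity at $t-1$ for every target node, substitute it for $\vh_q^{(t-1)}(u,x)$ in the Bellman-Ford iteration defining $\vh_q^{(t)}(u,v)$, then apply right distributivity to pull each edge representation $\vw_q(x,r,v)$ inside the sum over paths reaching $x$, and associativity of $\otimes$ to recognize $\left(\bigotimes_{i=1}^{|P|}\vw_q(e_i)\right)\otimes\vw_q(x,r,v)$ as precisely the representation of the path $P$ extended by the edge $(x,r,v)$. The combinatorial heart of the argument is that deleting its final edge is a bijection between paths from $u$ to $v$ of length $k$ with $1 \le k \le t$ and pairs $\big((x,r,v)\in\gE(v),\ P\in\gP_{ux}\text{ with }|P|\le t-1\big)$; using commutativity and associativity of $\oplus$ to reindex the resulting double sum along this bijection rewrites it as $\bigoplus_{P\in\gP_{uv},\,1\le|P|\le t}\bigotimes_i \vw_q(e_i)$. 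Finally, $\oplus$-ing in the boundary term $\vh_q^{(0)}(u,v)$ adds exactly the length-$0$ path when $u = v$ (and contributes $\ozero_q$, i.e.\ nothing, otherwise), extending the range to $|P|\le t$ and closing the induction.

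I do not expect a genuine obstacle here: the only care points are bookkeeping of the identity elements (the empty product and empty sum, and the absorption law in the degenerate case $\vh_q^{(t-1)}(u,x) = \ozero_q$) and invoking right rather than left distributivity so that the edge order inside $\bigotimes$ is preserved — which matters precisely because $\otimes$ is meant to model noncommutative operations such as matrix multiplication. Everything else is a routine unrolling of the semiring axioms.
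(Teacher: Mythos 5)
Your proof is correct and follows essentially the same approach as the paper's: induction on $t$, with the base case handled by the empty-sum and empty-product conventions, and the inductive step peeling off the last edge, substituting the inductive hypothesis, applying (right) distributivity, and reindexing the resulting double sum by the bijection between paths of length $1 \le k \le t$ and (prefix, final-edge) pairs. Your explicit flag that \emph{right} distributivity is the one being invoked — because the fresh edge is multiplied on the right of an accumulated, possibly noncommutative, product — is a careful reading of what the paper actually uses but does not spell out; it is an observation, not a divergence.
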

\begin{proof}
We prove Lemma~\ref{lem:induction} by induction. For the base case $t=0$, there is a single path of length $0$ from $u$ to itself and no path to other nodes. Due to the product definition of path representations, a path of length $0$ is equal to the \emph{multiplication} identity $\oone_q$. Similarly, a summation of no path is equal to the \emph{summation} identity $\ozero_q$. Therefore, we have $\vh_q^{(0)}(u, v) = \mathbbm{1}_q(u = v) = \bigoplus_{P\in\gP_{uv}:|P|=0} \bigotimes_{i=1}^{|P|} \vw_q(e_i)$.

For the inductive case $t > 0$, we consider the second-to-last node $x$ in each path if the path has a length larger than $0$. To avoid overuse of brackets, we use the convention that $\otimes$ and $\bigotimes$ have a higher priority than $\oplus$ and $\bigoplus$. 
\begin{align}
    \vh^{(t)}_q(u,v) 
    &= \left(\bigoplus_{(x, r, v) \in \gE(v)}\vh_q^{(t-1)}(u,x) \otimes \vw_q(x, r, v)\right) \oplus \vh^{(0)}_q(u,v)\\
    &= \left[\bigoplus_{(x, r, v) \in \gE(v)}\left(\bigoplus_{P\in\gP_{ux}:|P|\le t-1} \bigotimes_{i=1}^{|P|} \vw_q(e_i)\right) \otimes \vw_q(x, r, v)\right] \oplus \vh^{(0)}_q(u,v)\label{eq:substitution}\\
    &=
    \left\{\bigoplus_{(x, r, v) \in \gE(v)}\left[\bigoplus_{P\in\gP_{ux}:|P|\le t-1}\left(\bigotimes_{i=1}^{|P|} \vw_q(e_i)\right) \otimes \vw_q(x, r, v)\right]\right\} \oplus \vh^{(0)}_q(u,v)\label{eq:distributive}\\
    &= \left(\bigoplus_{P\in\gP_{uv}:1\le|P|\le t} \bigotimes_{i=1}^{|P|} \vw_q(e_i)\right) \oplus \left( \bigoplus_{P\in\gP_{uv}:|P|=0} \bigotimes_{i=1}^{|P|} \vw_q(e_i)\right)\label{eq:associative}\\
    &= \bigoplus_{P\in\gP_{uv}:|P|\le t} \bigotimes_{i=1}^{|P|} \vw_q(e_i),
\end{align}
where Equation~\ref{eq:substitution} substitutes the inductive assumption for $\vh_q^{(t-1)}(u, x)$, Equation~\ref{eq:distributive} uses the distributive property of $\otimes$ over $\oplus$.
\end{proof}

By comparing Lemma~\ref{lem:induction} and Equation~\ref{eqn:compact}, we can see the intermediate representation converges to our path formulation $\lim_{t\rightarrow \infty} \vh_q^{(t)}(u, v) = \vh_q(u, v)$. More specifically, at most $|\gV|$ iterations are required if we only consider simple paths, i.e., paths without repeating nodes. In practice, for link prediction we find it only takes a very small number of iterations (e.g., $T = 6$) to converge, since long paths make negligible contribution to the task.

\bellman*
\begin{proof}
Given that the generalized Bellman-Ford algorithm solves the path formulation when $\langle\oplus, \otimes\rangle$ satisfy a semiring, we only need to show that the operators of the path formulations for traditional methods satisfy semiring structures.

Katz index (Theorem~\ref{thm:katz_index}) and personalized PageRank (Theorem~\ref{thm:pagerank}) use the natural summation $+$ and the natural multiplication $\times$, which obviously satisfy a semiring.

Graph distance (Theorem~\ref{thm:proximity}) uses $\min$ for \emph{summation} and $+$ for \emph{multiplication}. The corresponding identities are $\ozero = +\infty$ and $\oone = 0$. It is obvious that $+$ satisfies the associative property and has identity element $0$.
\begin{itemize}[label=$\bullet$]
    \setlength{\parskip}{0pt}
    \setlength{\itemsep}{0pt plus 1pt}
    \item \textbf{Commutative Property.} $\min(a, b) = \min(b, a)$
    \item \textbf{Associative Property.} $\min(\min(a, b), c) = \min(a, \min(b, c))$
    \item \textbf{Identity Element.} $\min(a, +\infty) = a$
    \item \textbf{Absorption Property.} $a + \infty = \infty + a = +\infty$
    \item \textbf{Distributive Property (Left).} $a + \min(b, c) = \min(a + b, a + c)$
    \item \textbf{Distributive Property (Right).} $\min(b, c) + a = \min(b + a, c + a)$
\end{itemize}

Widest path (Theorem~\ref{thm:widest_path}) uses $\max$ for \emph{summation} and $\min$ for \emph{multiplication}. The corresponding identities are $\ozero = -\infty$ and $\oone = +\infty$. We have
\begin{itemize}[label=$\bullet$]
    \setlength{\parskip}{0pt}
    \setlength{\itemsep}{0pt plus 1pt}
    \item \textbf{Commutative Property.} $\max(a, b) = \max(b, a)$
    \item \textbf{Associative Property.} $\max(\max(a, b), c) = \max(a, \max(b, c))$
    \item \textbf{Identity Element.} $\max(a, -\infty) = a$
    \item \textbf{Associative Property.} $\min(\min(a, b), c) = \min(a, \min(b, c))$
    \item \textbf{Absorption Property.} $\min(a, -\infty) = \min(-\infty, a) = -\infty$
    \item \textbf{Identity Element.} $\min(a, +\infty) = \min(+\infty, a) = a$
    \item \textbf{Distributive Property (Left).} $\min(a, \max(b, c)) = \max(\min(a, b), \min(a, c))$
    \item \textbf{Distributive Property (Right).} $\min(\max(b, c), a) = \max(\min(b, a), \min(c, a))$
\end{itemize}
where the distributive property can be proved by enumerating all possible orders of $a$, $b$ and $c$.

Most reliable path (Theorem~\ref{thm:reliable_path}) uses $\max$ for \emph{summation} and $\times$ for \emph{multiplication}.  The corresponding identities are $\ozero = 0$ and $\oone = 1$, since all path representations are probabilities in $[0, 1]$. It is obvious that $\times$ satisfies the associative property, the absorption property and has identity element $0$.
\begin{itemize}[label=$\bullet$]
    \setlength{\parskip}{0pt}
    \setlength{\itemsep}{0pt plus 1pt}
    \item \textbf{Commutative Property.} $\max(a, b) = \max(b, a)$
    \item \textbf{Associative Property.} $\max(\max(a, b), c) = \max(a, \max(b, c))$
    \item \textbf{Identity Element.} $\max(a, 0) = a$
    \item \textbf{Distributive Property (Left).} $a \times \max(b, c) = \max(a \times b, a \times c)$
    \item \textbf{Distributive Property (Right).} $\max(b, c) \times a = \max(b \times a, c \times a)$
\end{itemize}
where the identity element and the distributive property hold for non-negative elements.
\end{proof}

\section{Method: A*Net}
\label{sec:method}

We propose A*Net to scale up path-based methods with the A* algorithm. We show that the A* algorithm can be derived from the observation that only a small set of paths are important for reasoning (Section~\ref{sec:method_A*}). Since it is hard to handcraft a good priority function for knowledge graph reasoning (Table~\ref{tab:handcraft}), we design a neural priority function, and train it end-to-end for reasoning (Section~\ref{sec:method_neural}).

\subsection{Preliminary: A* Algorithm}
\label{sec:A*}
A* algorithm~\cite{hart1968formal} is an extension of the Bellman-Ford algorithm for shortest path problems. Unlike the Bellman-Ford algorithm that propagates through every node uniformly, the A* algorithm prioritizes propagation through nodes with higher priority according to a heuristic function specified by the user. With an appropriate heuristic function, A* algorithm can reduce the search space of paths. Formally, with the notation from Equation~\ref{eqn:path}, the priority function for node $x$ is
\begin{equation}
    s(x) = d(u, x) \otimes g(x, v)
    \label{eqn:priority_func}
\end{equation}
where $d(u, x)$ is the length of current shortest path from $u$ to $x$, and $g(x, v)$ is a heuristic function estimating the cost from $x$ to the target node $v$. For instance, for a grid-world shortest path problem (Figure~\ref{fig:correspondence}(a)), $g(x, v)$ is usually defined as the $L_1$ distance from $x$ to $v$, $\otimes$ is the addition operator, and $s(x)$ is a lower bound for the shortest path length from $u$ to $v$ through $x$. During each iteration, the A* algorithm prioritizes propagation through nodes with smaller $s(x)$.

\begin{figure*}[t]
    \centering
    \includegraphics[width=0.98\textwidth]{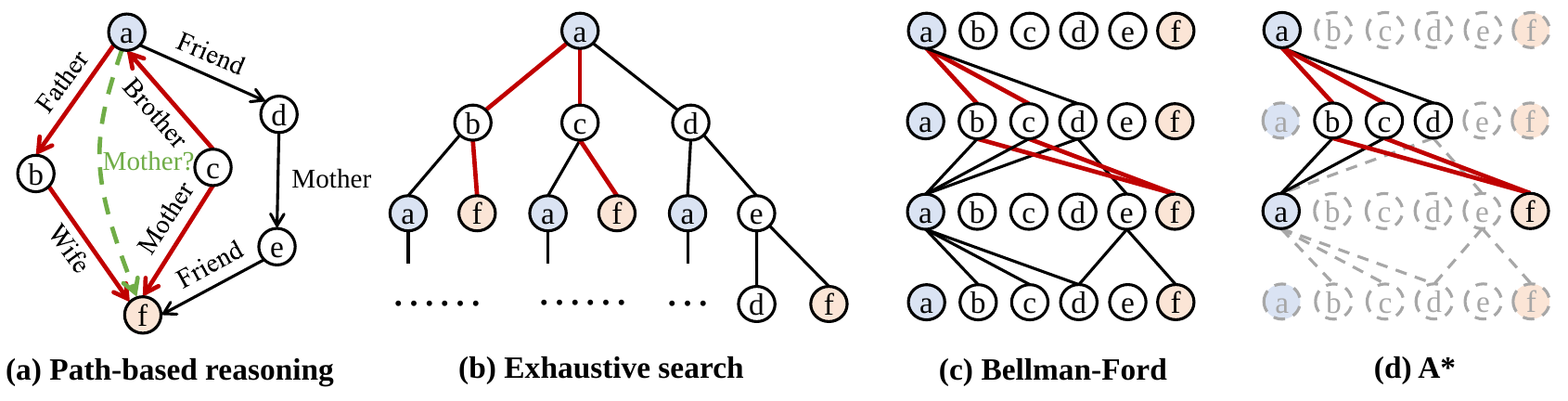}
    \caption[Overview of path-based methods, exhaustive search, NBFNet and A*Net]{\textbf{(a)} Given a query $(a, \textit{Mother}, ?)$, only a few important paths (showed in red) are necessary for reasoning. Note that paths can go in the reverse direction of relations. \textbf{(b)} Exhaustive search algorithm (e.g.\ Path-RNN, PathCon) enumerates all paths in exponential time. \textbf{(c)} Bellman-Ford algorithm (e.g.\ NeuralLP, DRUM, NBFNet, RED-GNN) computes all paths in polynomial time, but needs to propagate through all nodes and edges. \textbf{(d)} A*Net learns a priority function to select a subset of nodes and edges at each iteration, and avoids exploring all nodes and edges.}
    \label{fig:method_comparison}
\end{figure*}

\subsection{Path-based Reasoning with A* Algorithm}
\label{sec:method_A*}

The Bellman-Ford algorithm used to compute path-based methods needs to visit all $|\gV|$ nodes and $|\gE|$ edges in a knowledge graph. However, in real-world knowledge graphs, only a small portion of paths is related to the query. Based on this observation, we introduce the concept of important paths. We then show that the representations of important paths can be iteratively computed with the A* algorithm under mild assumptions.

\smallskip \noindent \textbf{Important Paths for Reasoning.} Given a query relation and a pair of entities, only some of the paths between the entities are important for answering the query. Consider the example in Figure~\ref{fig:method_comparison}(a), the path \emph{a $\xrightarrow{\text{Friend}}$ d $\xrightarrow{\text{Mother}}$ e $\xrightarrow{\text{Friend}}$ f} cannot determine whether \emph{f} is an answer to \emph{Mother(a, ?)} due to the use of the \emph{Friend} relation in the path. On the other hand, kinship paths like \emph{a $\xrightarrow{\text{Father}}$ b $\xrightarrow{\text{Wife}}$ f} or \emph{a $\xleftarrow{\text{Brother}}$ c $\xrightarrow{\text{Mother}}$ f} are able to predict that \emph{Mother(a, f)} is true. Formally, we define $\gP_{u \leadsto v|q} \subseteq \gP_{u \leadsto v}$ to be the set of paths from $u$ to $v$ that is important to the query relation $q$. Mathematically, we have
\begin{equation}
    \vh_q(u, v) = \bigoplus_{P \in \gP_{u \leadsto v}}\vh_q(P) \approx \bigoplus_{P \in \gP_{u \leadsto v|q}}\vh_q(P)
    \label{eqn:important_path}
\end{equation}
In other words, any path $P \in \gP_{u \leadsto v}\setminus\gP_{u \leadsto v|q}$ has negligible contribution to $\vh_q(u, v)$. In real-world knowledge graphs, the number of important paths $|\gP_{u \leadsto v|q}|$ may be several orders of magnitudes smaller than the number of paths $|\gP_{u \leadsto v}|$~\cite{chen2018variational}. If we compute the representation $\vh_q(u, v)$ using only the important paths, we can scale up path-based reasoning to large-scale knowledge graphs.

\begin{wrapfigure}{r}{0.45\textwidth}
    \vspace{-1.3em}
    \centering
    \includegraphics[width=0.45\textwidth]{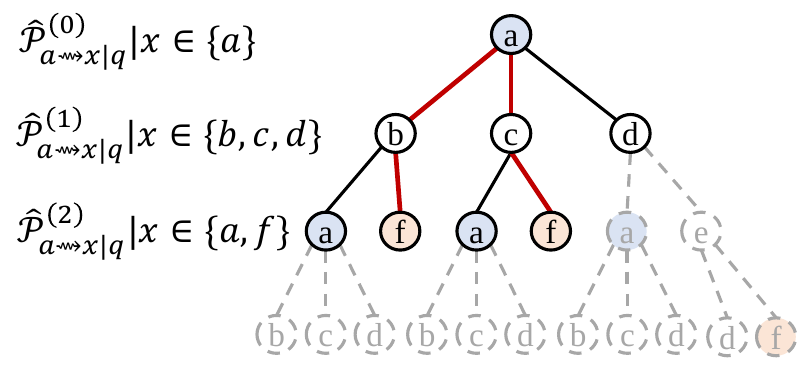}
    \vspace{-0.5em}
    \caption[Iterative computation of important paths]{The colored paths are important paths $\gP_{u \leadsto v|q}$, while the solid paths are the superset $\hat{\gP}_{u \leadsto v|q}$ used in Equation~\ref{eqn:path_selection}.}
    \label{fig:recursive}
    \vspace{-0.8em}
\end{wrapfigure}
\smallskip \noindent \textbf{Iterative Computation of Important Paths.}
Given a query $(u, q, ?)$, we need to discover the set of important paths $\gP_{u \leadsto v|q}$ for all $v \in \gV$. However, it is challenging to extract important paths from $\gP_{u \leadsto v}$, since the size of $\gP_{u \leadsto v}$ is exponentially large. Our solution is to explore the structure of important paths and compute them iteratively. We first show that we can cover important paths with iterative path selection (Equation~\ref{eqn:path_boundary} and \ref{eqn:path_selection}). Then we approximate iterative path selection with iterative node selection (Equation~\ref{eqn:node_selection}).

Notice that paths in $\gP_{u \leadsto v}$ form a tree structure (Figure~\ref{fig:recursive}). On the tree, a path is not important if any prefix of this path is not important for the query. For example, in Figure~\ref{fig:method_comparison}(a), \emph{a $\xrightarrow{\text{Friend}}$ d $\xrightarrow{\text{Mother}}$ e $\xrightarrow{\text{Friend}}$ f} is not important, as its prefix \emph{a $\xrightarrow{\text{Friend}}$ d} is not important for the query \emph{Mother}. Therefore, we assume there exists a path selection function $m_q: 2^\gP \mapsto 2^\gP$ that selects important paths from a set of paths given the query relation $q$. $2^\gP$ is the set of all subsets of $\gP$. With $m_q$, we construct the following set of paths $\hat{\gP}_{u \leadsto v|q}^{(t)}$ iteratively
\begin{align}
    &\hat{\gP}_{u \leadsto v|q}^{(0)} \leftarrow \{(u, \text{self loop}, v)\}~\text{if}~u = v~\text{else}~\varnothing 
    \label{eqn:path_boundary} \\
    &\hat{\gP}_{u \leadsto v|q}^{(t)} \leftarrow \bigcup_{\substack{x \in \gV \\ (x,r,v) \in \gE(v)}} \left\{P + \{(x,r,v)\} \middle| P \in m_q(\hat{\gP}_{u \leadsto x|q}^{(t-1)})\right\}
    \label{eqn:path_selection}
\end{align}
where $P + \{(x,r,v)\}$ concatenates the path $P$ and the edge $(x, r, v)$. The paths $\hat{\gP}_{u \leadsto v|q}^{(t)}$ computed by the above iteration is a superset of the important paths $\gP_{u \leadsto v|q}^{(t)}$ of length $t$ (see Theorem~\ref{thm:superset} in Section~\ref{app:method_A*}). Due to the tree structure of paths, the above iterative path selection still requires exponential time. Hence we further approximate iterative path selection with iterative node selection, by assuming paths with the same length and the same stop node can be merged. The iterative node selection replacing Equation~\ref{eqn:path_selection} is (see Proposition~\ref{prop:node_selection} in Section~\ref{app:method_A*})
\begin{align}
    \hat{\gP}_{u \leadsto v|q}^{(t)} \leftarrow \bigcup_{\substack{x \in n_{uq}^{(t-1)}(\gV) \\ (x,r,v) \in \gE(v)}} \left\{P + \{(x,r,v)\} \middle| P \in \hat{\gP}_{u \leadsto x|q}^{(t-1)}\right\}
    \label{eqn:node_selection}
\end{align}
where $n_{uq}^{(t)}: 2^\gV \mapsto 2^\gV$ selects ending nodes of important paths of length $t$ from a set of nodes.

\smallskip \noindent \textbf{Reasoning with A* Algorithm.}
Equation~\ref{eqn:node_selection} iteratively computes the set of important paths $\hat{\gP}_{u \leadsto v|q}$. In order to perform reasoning, we need to compute the representation $\vh_q(u, v)$ based on the important paths, which can be achieved by an iterative process similar to Equation~\ref{eqn:node_selection} (see Theorem~\ref{thm:A*} in Section~\ref{app:method_A*})
\begin{align}
    \vh_q^{(t)}(u, v) \leftarrow &\vh_q^{(0)}(u, v) \oplus \bigoplus_{\substack{x \in n_{uq}^{(t-1)}(\gV) \\ (x, r, v)\in \gE(v)}} \vh_q^{(t-1)}(u, x) \otimes \vw_q(x, r, v)
    \label{eqn:A*}
\end{align}
Equation~\ref{eqn:A*} is the A* iteration (Figure~\ref{fig:method_comparison}(d)) for path-based reasoning. Note the A* iteration uses the same boundary condition as Equation~\ref{eqn:boundary_nbfnet}. Inspired by the classical A* algorithm, we parameterize $n_{uq}^{(t)}(\gV)$ with a node priority function $s_{uq}^{(t)}: \gV \mapsto [0, 1]$ and select top-$K$ nodes based on their priority. However, there does not exist an oracle for the priority function $s_{uq}^{(t)}(x)$. We will discuss how to learn the priority function $s_{uq}^{(t)}(x)$ in the following sections.

\subsection{Path-based Reasoning with A*Net}
\label{sec:method_neural}

Both the performance and the efficiency of the A* algorithm heavily rely on the heuristic function. While it is straightforward to use $L_1$ distance as the heuristic function for grid-world shortest path problems, it is not clear what a good priority function for knowledge graph reasoning is due to the complex relation semantics in knowledge graphs. Indeed, our experiments suggest that handcrafted priority functions largely hurt the performance of path-based methods (Table~\ref{tab:handcraft}). In this section, we discuss a neural priority function, which can be end-to-end trained by the reasoning task.

\begin{wrapfigure}{r}{0.475\textwidth}
    \centering
    \vspace{-1.3em}
    \includegraphics[width=0.465\textwidth]{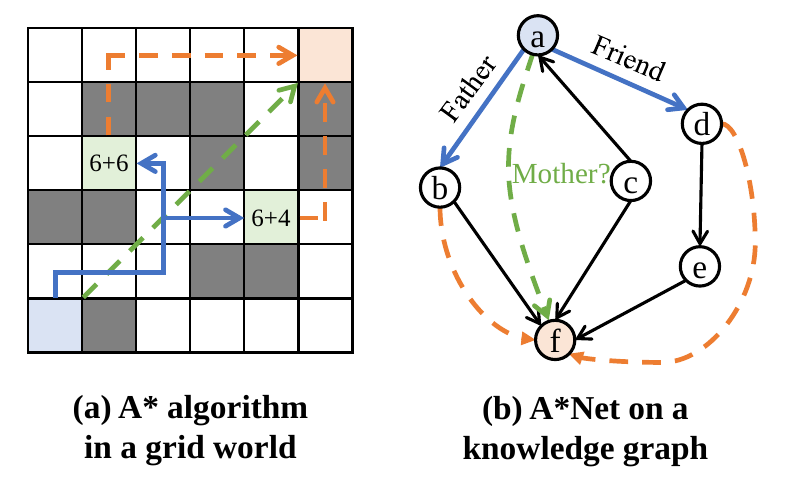}
    \caption[Correspondence between A* algorithm and A*Net]{\textbf{(a)} A* algorithm computes the current distance $d(u, x)$ (blue), estimates the remaining distance $g(x, v)$ (orange), and prioritizes shorter paths. \textbf{(b)} A*Net computes the current representations $\vh_q^{(t)}(u, x)$ (blue), estimates the remaining representations $\vg([\vh_q^{(t)}(u, x), \vq])$ (orange) based on the query $\vq$ (green), and prioritizes paths more relevant to the query.}
    \label{fig:correspondence}
\end{wrapfigure}

\smallskip \noindent \textbf{Neural Priority Function.}
To design the neural priority function $s_{uq}(x)$, we draw inspiration from the priority function in the A* algorithm for shortest path problems (Equation~\ref{eqn:priority_func}). The priority function has two terms $d(u, x)$ and $g(x, v)$, where $d(u, x)$ is the current distance from node $u$ to $x$, and $g(x, v)$ estimates the remaining distance from node $x$ to $v$.

From a representation learning perspective, we need to learn a representation $\vs_{uq}(x)$ to predict the priority score $s_{uq}(x)$ for each node $x$. Inspired by Equation~\ref{eqn:priority_func}, we use the current representation $\vh_q^{(t)}(u, x)$ to represent $d^{(t)}(u, x)$. However, it is challenging to find a representation for $g^{(t)}(x, v)$, since we do not know the answer entity $v$ beforehand. Noticing that in the A* algorithm, the target node $v$ can be expressed by the source node plus a displacement (Figure~\ref{fig:correspondence}(a)), we reparameterize the answer entity $v$ with the head entity $u$ and the query relation $q$ in A*Net. By replacing $g^{(t)}(x, v)$ with another function $g^{(t)}(u, x, q)$, the representation $\vs_{uq}(x)$ is parameterized as
\begin{equation}
    \vs_{uq}^{(t)}(x) = \vh_q^{(t)}(u, x) \otimes \vg([\vh_q^{(t)}(u, x), \vq])
    \label{eqn:priority_repr}
\end{equation}
where $\vg(\cdot)$ is a feed-forward network that outputs a vector representation and $[\cdot, \cdot]$ concatenates two representations. Intuitively, the learned representation $\vq$ captures the semantic of query relation $q$, which serves the goal for answering query $(u, q, ?)$. The function $\vg([\vh_q^{(t)}(u, x), \vq])$ compares the current representation $\vh_q^{(t)}(u, x)$ with the goal $\vq$ to estimate the remaining representation (Figure~\ref{fig:correspondence}(b)). If $\vh_q^{(t)}(u, x)$ is close to $\vq$, the remaining representation will be close to 0, and $x$ is likely to be close to the correct answer. The final priority score is predicted by
\begin{equation}
    s_{uq}^{(t)}(x) = \sigma(f(\vs_{uq}^{(t)}(x)))
    \label{eqn:neural_func}
\end{equation}
where $f(\cdot)$ is a feed-forward network and $\sigma$ is the sigmoid function that maps the output to $[0, 1]$.

\smallskip \noindent \textbf{Learning.} To learn the neural priority function, we incorporate it as a weight for each message in the A* iteration. For simplicity, let $\gX^{(t)} = n_{uq}^{(t-1)}(\gV)$ be the nodes we try to propagate through at $t$-th iteration. We modify Equation~\ref{eqn:A*} to be
\begin{equation}
    \vh_q^{(t)}(u, v) \leftarrow \vh_q^{(0)}(u, v) \oplus \bigoplus_{\substack{x \in \gX^{(t)} \\ (x, r, v)\in \gE(v)}} s_{uq}^{(t-1)}(x)\left(\vh_q^{(t-1)}(u, x) \otimes \vw_q(x, r, v)\right)
    \label{eqn:weight}
\end{equation}
Equation~\ref{eqn:weight} encourages the model to learn larger weights $s_{uq}^{(t)}(x)$ for nodes that are important for reasoning. In practice, as some nodes may have very large degrees, we further select top-$L$ edges from the neighborhood of $n_{uq}^{(t-1)}(\gV)$ according to the priority of node $v$, i.e.\ the tail node of an edge. The intuition here is that if an edge $(x, r, v)$ goes to a node with a higher priority, it is likely that we are propagating towards the answer entities. A pseudo code of A*Net is illustrated in Algorithm~\ref{alg:A*Net}. Note the top-$K$ and top-$L$ functions are not differentiable.

\begin{wrapfigure}{R}{0.54\textwidth}
    \begin{minipage}{0.54\textwidth}
        \vspace{-3.6em}
        \begin{algorithm}[H]
            \footnotesize
            \captionsetup{font=footnotesize}\caption{A*Net}
            \begin{flushleft}
                \textbf{Input:} head entity $u$, query relation $q$, \#iterations $T$ \\
                \textbf{Output:} $p(v|u, q)$ for all $v \in \gV$
            \end{flushleft}
            \begin{algorithmic}[1]
                \For{$v \in \gV$}
                    \State{$\vh_q^{(0)}(u, v) \gets \mathbbm{1}_q(u = v)$}
                \EndFor
                \For{$t \gets 1$ to $T$}
                    \State{$\gX^{(t)} \gets \textrm{TopK}(s_{uq}^{(t-1)}(x) | x \in \gV)$}
                    \State{$\gE^{(t)} \gets \bigcup_{x \in \gX^{(t)}} \gE(x) $}
                    \State{$\gE^{(t)} \gets \textrm{TopL}(s_{uq}^{(t-1)}(v) | (x, r, v) \in \gE^{(t)})$}
                    \State{$\gV^{(t)} \gets \bigcup_{(x, r, v) \in \gE^{(t)}} \{v\}$}
                    \For{$v \in \gV^{(t)}$}
                        \State{Compute $\vh_q^{(t)}(u, v)$ with Equation~\ref{eqn:weight}}
                        \State{Compute priority $s_{uq}^{(t)}(v)$ with Equation~\ref{eqn:priority_repr}, \ref{eqn:neural_func}}
                    \EndFor
                \EndFor
                \LineComment{Share weights between $s_{uq}(v)$ and the predictor}
                \State{\Return $s_{uq}^{(T)}(v)$ as $p(v|u, q)$ for all $v \in \gV$}
            \end{algorithmic}
            \label{alg:A*Net}
        \end{algorithm}
        \vspace{-2.4em}
    \end{minipage}
\end{wrapfigure}
Nevertheless, it is still too challenging to train the neural priority function, since we do not know the ground truth for important paths, and there is no direct supervision for the priority function. Our solution is to share the weights between the priority function and the predictor for the reasoning task. The intuition is that the reasoning task can be viewed as a weak supervision for the priority function. Recall that the goal of $s_{uq}^{(t)}(x)$ is to determine whether there exists an important path from $u$ to $x$ (Equation~\ref{eqn:node_selection}). In the reasoning task, any positive answer entity must be present on at least one important path, while negative answer entities are less likely to be on important paths. Our ablation experiment demonstrates that sharing weights improve the performance of neural priority function (Table~\ref{tab:share_weights}). Following \cite{sun2019rotate}, A*Net is trained to minimize the binary cross entropy loss over triplets
\begin{equation}
    \gL = - \log p(u, q, v) - \sum_{i=1}^{n} \frac{1}{n} \log (1 - p(u_i', q, v_i'))
    \label{eqn:training_loss}
\end{equation}
where $(u, q, v)$ is a positive sample and $\{(u_i', q, v_i')\}_{i=1}^n$ are negative samples. Each negative sample $(u_i, q, v_i)$ is generated by corrupting the head or the tail in a positive sample.

\begin{figure*}[!h]
    \centering
    \includegraphics[width=0.98\textwidth]{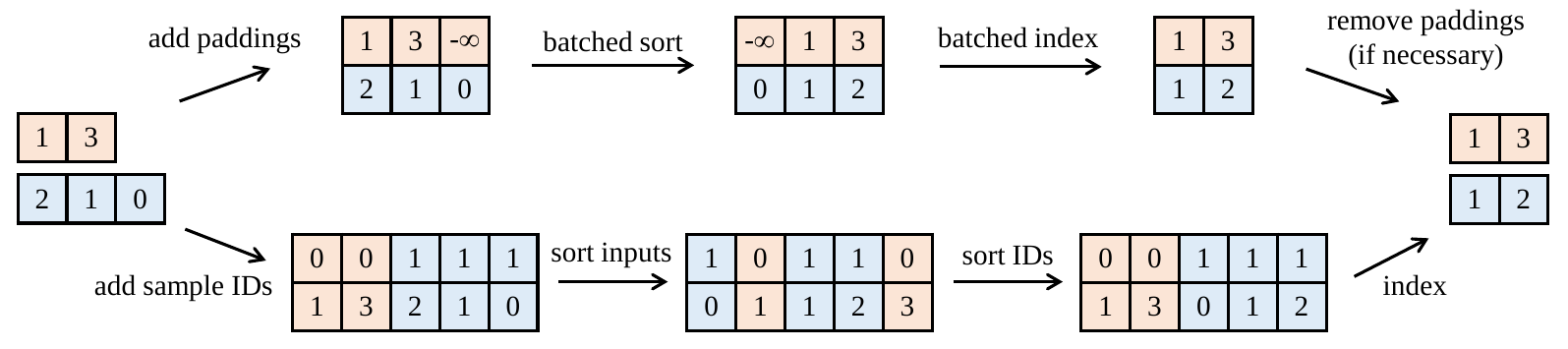}
    \caption[Comparison between padding-based and padding-free topk]{Comparison between padding-based (up) and padding-free (down) \emph{topk} for $K=2$. Padding-based operations first add paddings to create a tensor for batched operations, and then remove the paddings. Padding-free operations pair the inputs with their sample IDs (shown in colors), and then apply single-sample operations over the whole batch.}
    \label{fig:padding_free}
\end{figure*}

\smallskip \noindent \textbf{Efficient Implementation with Padding-Free Operations.}
Modern neural networks heavily rely on batched execution to unleash the parallel capacity of GPUs. While Algorithm~\ref{alg:A*Net} is easy to implement for a single sample $(u, q, ?)$, it is not trivial to batch A*Net for multiple samples. The challenge is that different samples may have very different sizes for nodes $\gV^{(t)}$ and edges $\gE^{(t)}$. A common approach is to pad the set of nodes or edges to a predefined constant, which would severely counteract the acceleration brought by A*Net.

Here we introduce padding-free $topk$ operation to avoid the overhead in batched execution. The key idea is to convert batched execution of different small samples into execution of a single large sample, which can be paralleled by existing operations in deep learning frameworks. Considering the example in Figure~\ref{fig:padding_free}, the batched execution of $topk([[1, 3], [2, 1, 0]])$ can be converted into a multi-key sort problem over $[[0, 1], [0, 3], [1, 2], [1, 1], [1, 0]]$, where the first key is the sample index in the batch and the second key is the original input. The multi-key sort is then implemented by composing stable single-key sort operations in deep learning frameworks. We then apply indexing operations and remove the sample index to get the desired output. Algorithm~\ref{alg:topk} provides the pseudo code for padding-free \emph{topk}.

\begin{algorithm}[!h]
    \footnotesize
    \captionsetup{font=footnotesize}\caption{Padding-free implementation of \emph{topk} in PyTorch}
\begin{flushleft}
    \textbf{Input:} Input values of each sample \mintinline{python}{inputs}, size of each sample \mintinline{python}{sizes}, \mintinline{python}{K} \\
    \textbf{Output:} TopK values of each sample, indices of topk values
\end{flushleft}
\begin{minted}[baselinestretch=1,fontsize=\footnotesize,linenos,numbersep=-6pt]{python}
   # the sample id of each element
   sample_ids = torch.arange(batch_size).repeat_interleave(sizes)
   # multi-key sort of (sample_ids, inputs)
   indices = inputs.argsort()
   indices = sample_ids[indices].argsort(stable=True)
   sorteds = inputs[indices]
   # take top-k values of each sample
   ranges = torch.arange(K).repeat(batch_size)
   ranges = ranges + sizes.cumsum(0).repeat_interleave(K) - K
   return sorteds[ranges], indices[ranges]
\end{minted}
    \label{alg:topk}
    \vspace{-0.3em}
\end{algorithm}
\section{Experiments of A*Net}
\label{sec:experiment}

We evaluate A*Net on standard transductive and inductive knowledge graph datasets, including a million-scale one ogbl-wikikg2. We conduct ablation studies to verify our design choices and visualize the important paths learned by the priority function in A*Net.

\subsection{Experiment Setup}
\label{sec:exp_setup}

\smallskip \noindent \textbf{Datasets \& Evaluation.} We evaluate A*Net on 4 standard knowledge graphs, FB15k-237~\cite{toutanova2015observed}, WN18RR~\cite{dettmers2018convolutional}, YAGO3-10~\cite{mahdisoltani2014yago3} and ogbl-wikikg2~\cite{hu2021ogb}. For the transductive setting, we use the standard splits from their original works~\cite{toutanova2015observed, dettmers2018convolutional}. For the inductive setting, we use the splits provided by \cite{teru2020inductive}, which contains 4 different versions for each dataset. As for evaluation, we use the standard filtered ranking protocol~\cite{bordes2013translating} for knowledge graph reasoning. Each triplet $(u, q, v)$ is ranked against all negative triplets $(u, q, v')$ or $(u', q, v)$ that are not present in the knowledge graph. We measure the performance with mean reciprocal rank (MRR) and HITS at K (H@K). Efficiency is measured by the average number of messages (\#message) per step, wall time per epoch and memory cost. To plot the convergence curves for each model, we dump checkpoints during training with a high frequency, and evaluate the checkpoints later on the validation set. See more details in Section~\ref{app:dataset_nbfnet}.

\smallskip \noindent \textbf{Implementation Details.}
Our work is developed based on the open-source codebase of path-based reasoning with Bellman-Ford algorithm\footnote{\url{https://github.com/DeepGraphLearning/NBFNet}. MIT license. \label{fn:codebase}}. For a fair comparison with existing path-based methods, we follow the implementation of NBFNet~\cite{zhu2021neural} and parameterize $\bigoplus$ with principal neighborhood aggregation (PNA)~\cite{corso2020principal} or sum aggregation, and parameterize $\bigotimes$ with the relation operation from DistMult~\cite{yang2015embedding}, i.e.\ vector multiplication. The indicator function (Equation~\ref{eqn:boundary_nbfnet}) $\mathbbm{1}_q(u = v) = \mathbbm{1}(u = v)\vq$ is parameterized with a query embedding $\vq$ for all datasets except ogbl-wikikg2, where we augment the indicator function with learnable embeddings based on the personalized PageRank~\cite{page1999pagerank} score from $u$ to $v$. The edge representation (Equation~\ref{eqn:weight}) $\vw_q(x, r, v) = \mW_r \vq + \vb_r$ is parameterized as a linear function over the query relation $q$ for all datasets except WN18RR, where we use a simple embedding $\vw_q(x, r, v) = \vr$. We use the same preprocessing steps as in \cite{zhu2021neural}, including augmenting each triplet with a flipped triplet, and dropping out query edges during training.

For the neural priority function, we have two hyperparameters: $K$ for the maximum number of nodes and $L$ for the maximum number of edges. To make hyperparameter tuning easier, we define maximum node ratio $\alpha=K/|\gV|$ and maximum average degree ratio $\beta=L|\gV|/K|\gE|$, and tune the ratios for each dataset. The maximum edge ratio is determined by $\alpha\beta$. The other hyperparameters are kept the same as the values in \cite{zhu2021neural}. We train A*Net with 4 Tesla A100 GPUs (40 GB), and select the best model based on validation performance.

\smallskip \noindent \textbf{Baselines.} We compare A*Net against embedding methods, GNNs and path-based methods. The embedding methods are TransE~\cite{bordes2013translating}, ComplEx~\cite{trouillon2016complex}, RotatE~\cite{sun2019rotate}, HAKE~\cite{zhang2020learning}, RotH~\cite{chami2020low}, PairRE~\cite{chao2021pairre}, ComplEx+Relation Prediction~\cite{chen2021relation} and ConE~\cite{bai2021modeling}. The GNNs are RGCN~\cite{schlichtkrull2018modeling}, CompGCN~\cite{vashishth2020composition} and GraIL~\cite{teru2020inductive}. The path-based methods are MINERVA~\cite{das2018go}, Multi-Hop~\cite{lin2018multi}, CURL~\cite{zhang2022learning}, NeuralLP~\cite{yang2017differentiable}, DRUM~\cite{sadeghian2019drum}, NBFNet~\cite{zhu2021neural} and RED-GNN~\cite{zhang2022knowledge}. Note that path-finding methods~\cite{das2018go, lin2018multi, zhang2022learning} that use reinforcement learning and assume sparse answers can only be evaluated on tail prediction. Training time of all baselines are measured based on their official open-source implementations, except that we use a more recent implementation\footnote{\url{https://github.com/DeepGraphLearning/KnowledgeGraphEmbedding}. MIT license.} of TransE and ComplEx.

\begin{table}[t]
    \centering
    \begin{minipage}[b]{0.63\textwidth}
        \centering
        \footnotesize
        \caption[Performance on transductive knowledge graph reasoning]{Performance on transductive knowledge graph reasoning. Results of embedding methods are from \cite{bai2021modeling}. Results of other baseline methods are from \cite{zhu2021neural}.}
        \vspace{-0.2em}
        \begin{adjustbox}{max width=\textwidth}
            \begin{tabular}{lcccccccc}
                \toprule
                \multirow{2}{*}{\bf{Method}} & \multicolumn{4}{c}{\bf{FB15k-237}} & \multicolumn{4}{c}{\bf{WN18RR}} \\
                & MRR & H@1 & H@3 & H@10 & MRR & H@1 & H@3 & H@10 \\
                \midrule
                TransE & 0.294 & - & - & 0.465 & 0.226 & - & 0.403 & 0.532 \\
                RotatE & 0.338 & 0.241 & 0.375 & 0.533 & 0.476 & 0.428 & 0.492 & 0.571 \\
                HAKE & 0.341 & 0.243 & 0.378 & 0.535 & 0.496 & 0.451 & 0.513 & 0.582 \\
                RotH & 0.344 & 0.246 & 0.380 & 0.535 & 0.495 & 0.449 & 0.514 & 0.586 \\
                ComplEx+RP & 0.388 & 0.298 & 0.425 & 0.568 & 0.488 & 0.443 & 0.505 & 0.578 \\
                ConE & 0.345 & 0.247 & 0.381 & 0.540 & 0.496 & 0.453 & 0.515 & 0.579 \\
                \midrule
                RGCN & 0.273 & 0.182 & 0.303 & 0.456 & 0.402 & 0.345 & 0.437 & 0.494 \\
                CompGCN & 0.355 & 0.264 & 0.390 & 0.535 & 0.479 & 0.443 & 0.494 & 0.546 \\
                \midrule
                NeuralLP & 0.240 & - & - & 0.362 & 0.435 & 0.371 & 0.434 & 0.566 \\
                DRUM & 0.343 & 0.255 &0.378 & 0.516 & 0.486 & 0.425 & 0.513 & 0.586 \\
                NBFNet & \bf{0.415} & \bf{0.321} & \bf{0.454} & \bf{0.599} & \bf{0.551} & \bf{0.497} & \bf{0.573} & \bf{0.666} \\
                RED-GNN & 0.374 & 0.283 & - & 0.558 & 0.533 & 0.485 & - & 0.624 \\
                A*Net & \bf{0.411} & \bf{0.321} & \bf{0.453} & 0.586 & \bf{0.549} & \bf{0.495} & \bf{0.573} & \bf{0.659} \\
                \bottomrule
            \end{tabular}
        \end{adjustbox}
        \label{tab:transductive}
        \caption[Tail prediction performance on transductive datasets]{Tail prediction performance on transductive datasets. Results of compared methods are from \cite{lin2018multi, zhang2022learning}.}
        \vspace{-0.2em}
        \begin{adjustbox}{max width=\textwidth}
            \begin{tabular}{lcccccccc}
                \toprule
                \multirow{2}{*}{\bf{Method}} & \multicolumn{4}{c}{\bf{FB15k-237}} & \multicolumn{4}{c}{\bf{WN18RR}} \\
                & MRR & H@1 & H@3 & H@10 & MRR & H@1 & H@3 & H@10 \\
                \midrule
                MINERVA & 0.293 & 0.217 & 0.329 & 0.456 & 0.448 & 0.413 & 0.456 & 0.513 \\
                Multi-Hop & 0.393 & 0.329 & - & 0.544 & 0.472 & 0.437 & - & 0.542 \\
                CURL & 0.306 & 0.224 & 0.341 & 0.470 & 0.460 & 0.429 & 0.471 & 0.523 \\
                NBFNet & \bf{0.509} & \bf{0.411} & \bf{0.562} & \bf{0.697} & \bf{0.557} & \bf{0.503} & \bf{0.579} & \bf{0.669} \\
                A*Net & \bf{0.505} & \bf{0.410} & \bf{0.556} & 0.687 & \bf{0.557} & \bf{0.504} & \bf{0.580} & \bf{0.666} \\
                \bottomrule
            \end{tabular}
        \end{adjustbox}
        \label{tab:tail_prediction}
        \caption[Efficiency on transductive datasets]{Efficiency on transductive datasets.}
        \vspace{-0.2em}
        \begin{adjustbox}{max width=\textwidth}
            \begin{tabular}{lcccccc}
                \toprule
                \multirow{2}{*}{\bf{Method}} &  \multicolumn{3}{c}{\bf{FB15k-237}} & \multicolumn{3}{c}{\bf{WN18RR}} \\
                & \#message & time & memory & \#message & time & memory \\
                \midrule
                NBFNet & 544,230 & 16.8 min & 19.1 GiB & 173,670 & 9.42 min &  26.4 GiB \\
                A*Net & 38,610 & 8.07 min & 11.1 GiB & 4,049 & 1.39 min &  5.04 GiB \\
                \midrule
                Improvement & 14.1$\times$ & 2.1$\times$ & 1.7$\times$ & 42.9$\times$ & 6.8$\times$ & 5.2$\times$ \\
                \bottomrule
            \end{tabular}
        \end{adjustbox}
        \label{tab:transductive_efficiency}
    \end{minipage}
    \hspace{0.5em}
    \begin{minipage}[b]{0.34\textwidth}
        \centering
        \includegraphics[width=\textwidth]{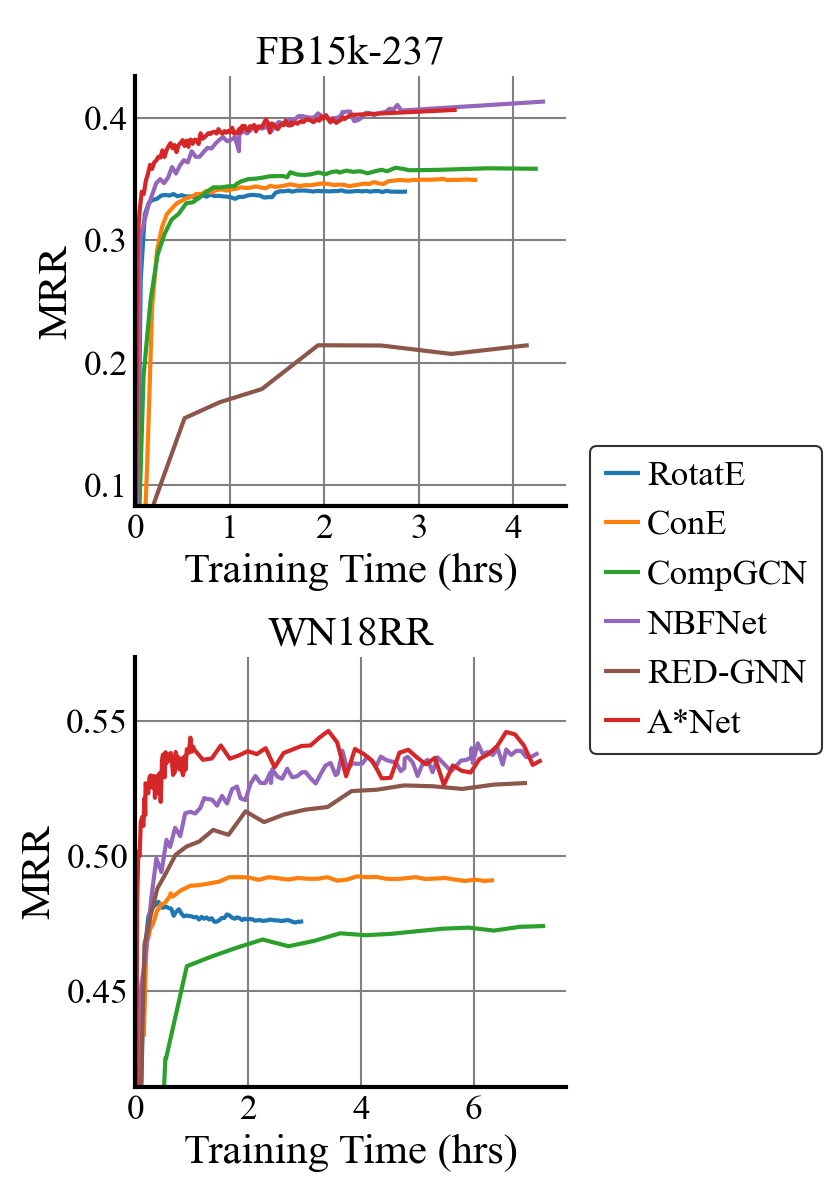}
        \captionof{figure}[Results w.r.t.\ training time]{Validation MRR w.r.t.\ training time (1 A100 GPU).}
        \label{fig:convergence}
        \vspace{0.2em}
        \captionof{table}[Performance on ogbl-wikikg2]{Performance on ogbl-wikikg2 (MRR). Results of compared methods are from \cite{chao2021pairre, chen2021relation}. NBFNet runs out of memory (OOM) on this dataset.}
        \footnotesize
        \begin{adjustbox}{max width=\textwidth}
            \begin{tabular}{lccc}
                \toprule
                \multirow{2}{*}{\bf{Method}} & \multicolumn{3}{c}{\bf{ogbl-wikikg2}} \\
                & Test & Valid & \#Params \\
                \midrule
                TransE & 0.4256 & 0.4272 & 1,251 M \\
                ComplEx & 0.4027 & 0.3759 & 1,251 M \\
                RotatE & 0.4332 & 0.4353 & 1,251 M \\
                PairRE & 0.5208 & 0.5423 & 500 M \\
                ComplEx+RP & 0.6392 & 0.6561 & 250 M \\
                \midrule
                NBFNet & OOM & OOM & OOM \\
                A*Net & \bf{0.6767} & \bf{0.6851} & \bf{6.83 M} \\
                \bottomrule
            \end{tabular}
        \end{adjustbox}
        \label{tab:large-scale}
    \end{minipage}
\end{table}

\subsection{Main Results}
\label{sec:result}

Table~\ref{tab:transductive} shows that A*Net outperforms all embedding methods and GNNs, and is on par with NBFNet on transductive knowledge graph reasoning. We also observe a similar trend of A*Net and NBFNet over path-finding methods on tail prediction (Table~\ref{tab:tail_prediction}). Since path-finding methods select only one path with reinforcement learning, such results imply the advantage of aggregating multiple paths in A*Net. A*Net also converges faster than all the other methods (Figure~\ref{fig:convergence}). Notably, unlike NBFNet that propagates through all nodes and edges, A*Net only propagates through 10\% nodes and 10\% edges on both datasets, which suggests that most nodes and edges are not important for path-based reasoning. Table~\ref{tab:transductive_efficiency} shows that A*Net reduces the number of messages by 14.1$\times$ and 42.9$\times$ compared to NBFNet on two datasets respectively. We observe a similar trend for A*Net on YAGO3-10, as shown in Table~\ref{tab:yago}. Note that the reduction in time and memory is less than the reduction in the number of messages, since A*Net operates on subgraphs with dynamic sizes and is harder to parallel than NBFNet on GPUs. We leave better parallel implementation as future work.

Table~\ref{tab:large-scale} shows the performance on ogbl-wikikg2, which has 2.5 million entities and 16 million triplets. While NBFNet faces out-of-memory (OOM) problem even for a batch size of 1, A*Net can perform reasoning by propagating through 0.2\% nodes and 0.2\% edges at each step. Surprisingly, even with such sparse propagation, A*Net outperforms embedding methods and achieves a new state-of-the-art result. Moreover, the validation curve in Figure~\ref{fig:ogbl-wikikg2} shows that A*Net converges significantly faster than embedding methods. Since A*Net only learns parameters for relations but not entities, it only uses 6.83 million parameters, which is 36.6$\times$ less than the best embedding method ComplEx+RP.

Table~\ref{tab:inductive} shows the performance on inductive knowledge graph reasoning. Note that embedding methods cannot deal with the inductive setting. A*Net is on par with NBFNet and significantly outperforms all the other methods. Additionally, as shown in Table~\ref{tab:inductive_efficiency}, A*Net reduces the number of messages by $3.1\times$ and $54.5\times$ on average for two datasets respectively.

\begin{table}[t]
    \centering
    \caption[Performance and efficiency on YAGO3-10]{Performance and efficiency on YAGO3-10, with $\alpha=10\%$ and $\beta=100\%$. Results of compared methods are from \cite{sun2019rotate}.}
    \vspace{-0.2em}
    \begin{subtable}[t]{0.5\textwidth}
        \centering
        \footnotesize
        \caption{Performance results.}
        \begin{tabular}{lcccc}
            \toprule
            \multirow{2}{*}{\bf{Method}} & \multicolumn{4}{c}{\bf{YAGO3-10}} \\
            & MRR & H@1 & H@3 & H@10 \\
            \midrule
            DistMult & 0.34 & 0.24 & 0.38 & 0.54 \\
            ComplEx & 0.36 & 0.26 & 0.40 & 0.55 \\
            RotatE & 0.495 & 0.402 & 0.550 & 0.670 \\
            \midrule
            NBFNet & \bf{0.563} & \bf{0.480} & \bf{0.612} & \bf{0.708} \\
            A*Net & \bf{0.556} & 0.470 & \bf{0.611} & \bf{0.707} \\
            \bottomrule
        \end{tabular}
    \end{subtable}
    \hspace{0.3em}
    \begin{subtable}[t]{0.47\textwidth}
        \centering
        \footnotesize
        \caption{Efficiency results.}
        \begin{adjustbox}{max width=\textwidth}
            \begin{tabular}{lccc}
                \toprule
                \multirow{2}{*}{\bf{Method}} &  \multicolumn{3}{c}{\bf{YAGO3-10}} \\
                & \#message & time & memory \\
                \midrule
                NBFNet & 2,158,080 & 51.3 min & 26.1 GiB \\
                A*Net & 134,793 & 20.8 min & 13.1 GiB \\
                \midrule
                Improvement & 16.0$\times$ & 2.5$\times$ & 2.0$\times$ \\
                \bottomrule
            \end{tabular}
        \end{adjustbox}
    \end{subtable}
    \label{tab:yago}
\end{table}

\begin{table}[t]
    \vspace{0.3em}
    \begin{minipage}[t]{0.62\textwidth}
        \centering
        \caption[Performance on inductive knowledge graph reasoning]{Performance on inductive knowledge graph reasoning (MRR). V1-v4 are 4 standard inductive splits. Results of compared methods are taken from \cite{zhang2022knowledge}. $\alpha=50\%$ and $\beta=100\%$ for FB15k237. $\alpha=5\%$ and $\beta=100\%$ for WN18RR.}
        \vspace{-0.2em}
        \footnotesize
        \begin{adjustbox}{max width=\textwidth}
            \begin{tabular}{lcccccccc}
                \toprule
                \multirow{2}{*}{\bf{Method}} & \multicolumn{4}{c}{\bf{FB15k-237}} & \multicolumn{4}{c}{\bf{WN18RR}} \\
                & v1 & v2 & v3 & v4 & v1 & v2 & v3 & v4 \\
                \midrule
                GraIL & 0.279 & 0.276 & 0.251 & 0.227 & 0.627 & 0.625 & 0.323 & 0.553 \\
                \midrule
                NeuralLP & 0.325 & 0.389 & 0.400 & 0.396 & 0.649 & 0.635 & 0.361 & 0.628 \\
                DRUM & 0.333 & 0.395 & 0.402 & 0.410 & 0.666 & 0.646 & 0.380 & 0.627 \\
                NBFNet & 0.422 & \bf{0.514} & \bf{0.476} & 0.453 & \bf{0.741} & \bf{0.704} & \bf{0.452} & 0.641 \\
                RED-GNN & 0.369 & 0.469 & 0.445 & 0.442 & 0.701 & 0.690 & 0.427 & 0.651 \\
                A*Net & \bf{0.457} & \bf{0.510} & \bf{0.476} & \bf{0.466} & 0.727 & \bf{0.704} & 0.441 & \bf{0.661} \\
                \bottomrule
            \end{tabular}
        \end{adjustbox}
        \label{tab:inductive}
    \end{minipage}
    \hspace{0.5em}
    \begin{minipage}[t]{0.34\textwidth}
        \vspace{-3.1em}
        \centering
        \caption[Ablation studies of A*Net on transductive FB15k-237]{Ablation studies of A*Net on transductive FB15k-237.}
        \vspace{-0.2em}
        \begin{subtable}{\textwidth}
            \centering
            \caption{Choices of priority function. \label{tab:handcraft}}
            \vspace{-0.2em}
            \begin{adjustbox}{max width=\textwidth}
                \begin{tabular}{lcccc}
                    \toprule
                    \bf{Priority} & \multicolumn{4}{c}{\bf{FB15k-237}} \\
                    \bf{Function} & MRR & H@1 & H@3 & H@10 \\
                    \midrule
                    PPR & 0.266 & 0.212 & 0.296 & 0.371 \\ 
                    Degree & 0.347 & 0.268 & 0.383 & 0.501  \\
                    Random & 0.378 & 0.288 & 0.413 & 0.556 \\
                    Neural & \bf{0.411} & \bf{0.321} & \bf{0.453} & \bf{0.586} \\ 
                    \bottomrule
                \end{tabular}
            \end{adjustbox}
            \label{tab:priority_function}
        \end{subtable}
        \begin{subtable}{\textwidth}
            \centering
            \caption{W/ or w/o sharing weights. \label{tab:share_weights}}
            \vspace{-0.2em}
            \begin{adjustbox}{max width=\textwidth}
                \begin{tabular}{lcccc}
                    \toprule
                    \bf{Sharing} & \multicolumn{4}{c}{\bf{FB15k-237}} \\
                    \bf{Weights} & MRR & H@1 & H@3 & H@10 \\
                    \midrule
                    No  & 0.374 & 0.282 & 0.413 & 0.557 \\
                    Yes & \bf{0.411} & \bf{0.321} & \bf{0.453} & \bf{0.586} \\ 
                    \bottomrule
                \end{tabular}
            \end{adjustbox}
            \label{tab:share_weights}
        \end{subtable}
    \end{minipage}
\end{table}

\begin{table}[!h]
    \centering
    \caption[Efficiency on inductive datasets]{Efficiency on inductive datasets. V1-v4 refer to the 4 standard splits.}
    \vspace{-0.2em}
    \footnotesize
    \begin{adjustbox}{max width=\textwidth}
        \begin{tabular}{lcccccccccccc}
            \toprule
            \multirow{2}{*}{\bf{Method}} 
            & \multicolumn{3}{c}{\bf{v1}} & \multicolumn{3}{c}{\bf{v2}} & \multicolumn{3}{c}{\bf{v3}} & \multicolumn{3}{c}{\bf{v4}} \\
            & \#msg. & time & memory & \#msg. & time & memory & \#msg. & time & memory & \#msg. & time & memory \\
            \midrule
            \multicolumn{13}{c}{\bf{FB15k-237}}\\
            \midrule
            NBFNet & 8,490 & 4.50 s & 2.79 GiB & 19,478 & 11.3 s & 4.49 GiB & 35,972 & 27.2 s & 6.28 GiB & 54,406 & 50.1 s & 7.99 GiB \\
            A*Net  & 2,644 & 3.40 s & 0.97 GiB & 6,316 & 8.90 s & 1.60 GiB & 12,153 & 18.9 s & 2.31 GiB & 18,501 & 33.7 s & 3.05 GiB \\
            \midrule
            Improvement & 3.2$\times$ & 1.3$\times$ & 2.9$\times$ & 3.1$\times$ & 1.3 $\times$ & 2.8$\times$ & 3.0$\times$ & 1.4$\times$ & 2.7$\times$ & 2.9$\times$ & 1.5$\times$ & 2.6$\times$ \\
            \midrule[0.08em]
            \multicolumn{13}{c}{\bf{WN18RR}} \\
            \midrule
            NBFNet & 10,820 & 8.80 s & 1.79 GiB & 30,524 & 30.9 s & 4.48 GiB & 51,802 & 78.6 s & 7.75 GiB & 7,940 & 13.6 s & 2.49 GiB \\ 
            A*Net & 210 & 2.85 s & 0.11 GiB & 478 & 8.65 s & 0.26 GiB & 704 & 13.2 s &  0.41 GiB & 279 & 4.20 s & 0.14 GiB \\
            \midrule
            Improvement & 51.8$\times$ & 3.1$\times$ & 16.3$\times$ & 63.9$\times$ & 3.6$\times$ & 17.2$\times$ & 73.6$\times$ & 6.0$\times$ & 18.9$\times$ & 28.5$\times$ & 3.2$\times$ & 17.8$\times$ \\
            \bottomrule
        \end{tabular}
    \end{adjustbox}
    \label{tab:inductive_efficiency}
\end{table}

\subsection{Ablation Studies}
\label{sec:ablation_astarnet}

\smallskip \noindent \textbf{Priority Function.} To verify the effectiveness of neural priority function, we compare it against three handcrafted priority functions: personalized PageRank (PPR), Degree and Random. PPR selects nodes with higher PPR scores w.r.t.\ the query head entity $u$. Degree selects nodes with larger degrees, while Random selects nodes uniformly. Table~\ref{tab:priority_function} shows that the neural priority function outperforms all three handcrafted priority functions, suggesting the necessity of learning a neural priority function.

\smallskip \noindent \textbf{Sharing Weights.} As discussed in Section~\ref{sec:method_neural}, we share the weights between the neural priority function and the reasoning predictor to help train the neural priority function. Table~\ref{tab:share_weights} compares A*Net trained with and without sharing weights. It can be observed that sharing weights is essential to training a good neural priority function in A*Net.

\begin{wrapfigure}{R}{0.53\textwidth}
    \vspace{-1.3em}
    \centering
    \hspace{-0.8em}
    \includegraphics[width=0.264\textwidth]{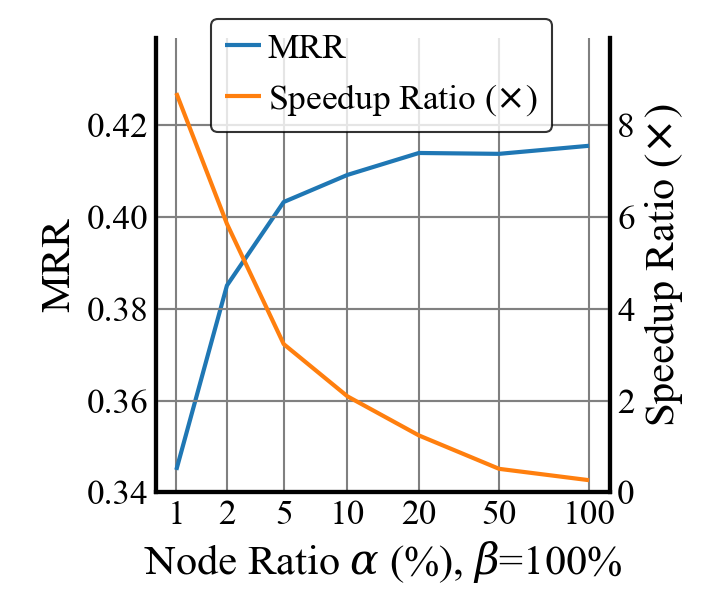}
    \hspace{0.1em}
    \includegraphics[width=0.253\textwidth]{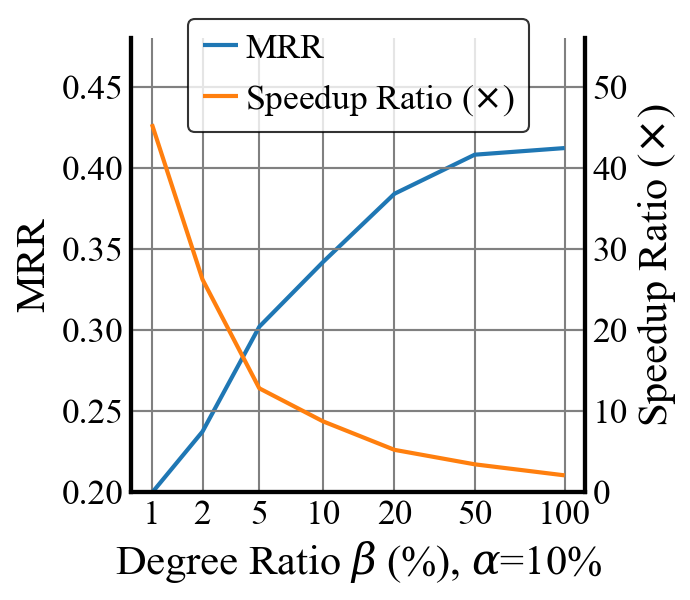}
    \caption[Performance and efficiency trade-off w.r.t.\ node ratio and degree ratio]{Performance and efficiency trade-off w.r.t.\ node ratio $\alpha$ and degree ratio $\beta$. Speedup ratio is relative to NBFNet.}
    \label{fig:trade_off}
    \vspace{-0.5em}
\end{wrapfigure}

\smallskip \noindent \textbf{Trade-off between Performance and Efficiency.} While A*Net matches the performance of NBFNet in less training time, one may further trade off performance and efficiency in A*Net by adjusting the ratios $\alpha$ and $\beta$. Figure~\ref{fig:trade_off} plots curves of performance and speedup ratio w.r.t.\ different $\alpha$ and $\beta$. If we can accept a performance similar to embedding methods (e.g.\ ConE~\cite{bai2021modeling}), we can set either $\alpha$ to 1\% or $\beta$ to 10\%, resulting in 8.7$\times$ speedup w.r.t.\ NBFNet.

\subsection{Visualization of Learned Important Paths}
\label{sec:vis_astarnet}

\begin{wrapfigure}{r}{0.5\textwidth}
    \vspace{-2.3em}
    \centering
    \includegraphics[width=0.5\textwidth]{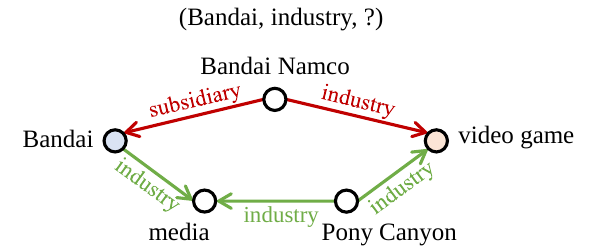}\vspace{-0.5em}
    \caption[Visualization of important paths learned by the neural priority function]{Visualization of important paths learned by the neural priority function.}
    \label{fig:visualization}
\end{wrapfigure}

We can extract the important paths from the neural priority function in A*Net for interpretation. For a given query $(u, q, ?)$ and a predicted entity $v$, we can use the node priority $s_{uq}^{(t)}(x)$ at each step to estimate the importance of a path. Empirically, the importance of a path $s_q(P)$ is estimated by
\begin{equation}
    s_q(P) = \frac{1}{|P|}\sum_{t=1, P^{(t)}=(x, r, y)}^{|P|} \frac{s_{uq}^{(t-1)}(x)}{S_{uq}^{(t-1)}}
\end{equation}
where $S_{uq}^{(t-1)} = \max_{x \in \gV^{(t-1)}} s_{uq}^{(t-1)}(x)$ is a normalizer to normalize the priority score for each step $t$. To extract the important paths with large $s_q(P)$, we perform beam search over the priority function $s_{uq}^{(t-1)}(x)$ of each step. Figure~\ref{fig:visualization} shows the important paths learned by A*Net for a test sample in FB15k-237. Given the query \emph{(Bandai, industry, ?)}, we can see both paths \emph{Bandai $\xleftarrow{\text{subsidiary}}$ Bandai Namco $\xrightarrow{\text{industry}}$ video game} and \emph{Bandai $\xrightarrow{\text{industry}}$ media$ \xleftarrow{\text{industry}}$ Pony Canyon $\xrightarrow{\text{industry}}$ video game} are consistent with human cognition.
\section{Theories and Proofs}
\label{app:method_A*}

Here we prove the correctness of path-based reasoning with A* algorithm.

\subsection{Iterative Path Selection for Computing Important Paths}
First, we prove that $\hat{\gP}_{u \leadsto v|q}^{(t)}$ computed by Equation~\ref{eqn:path_boundary} and \ref{eqn:path_selection} equals to the set of important paths and paths that are different from important paths in the last hop.

\begin{theorem}
\label{thm:superset}
If $m_q(\gP): 2^\gP \mapsto 2^\gP$ can select all important paths from a set of paths $\gP$, the set of paths $\hat{\gP}_{u \leadsto v|q}^{(t)}$ computed by Equation~\ref{eqn:path_boundary} and \ref{eqn:path_selection} equals to the set of important paths and paths that are different from important paths in the last hop of length $t$.
\begin{align}
    &\hat{\gP}_{u \leadsto v|q}^{(0)} \leftarrow \{(u, \text{self loop}, v)\}~\text{if}~u = v~\text{else}~\varnothing \tag{\ref{eqn:path_boundary}} \\
    &\hat{\gP}_{u \leadsto v|q}^{(t)} \leftarrow \bigcup_{\substack{x \in \gV \\ (x,r,v) \in \gE(v)}} \left\{P + \{(x,r,v)\} \middle| P \in m_q(\hat{\gP}_{u \leadsto x|q}^{(t-1)})\right\} \tag{\ref{eqn:path_selection}}
\end{align}
\end{theorem}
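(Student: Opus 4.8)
The plan is to prove the identity by induction on the iteration index $t$, reading off from Equations~\ref{eqn:path_boundary}--\ref{eqn:path_selection} that a length-$t$ path from $u$ to $v$ lies in $\hat{\gP}^{(t)}_{u \leadsto v|q}$ exactly when its length-$(t-1)$ prefix was selected by $m_q$ at the previous iteration. First I would make the informal hypotheses precise. The text stipulates that a path is unimportant whenever some prefix of it is unimportant; equivalently, every prefix of an important path is important (prefix-closure of the family $\gP_{\cdot \leadsto \cdot|q}$), with the degenerate convention that the trivial length-$0$ path $(u,\text{self loop},u)$ is treated as important. I would also commit to reading ``$m_q$ selects all important paths from a set of paths'' as: $m_q$ is an \emph{exact} filter, $m_q(S) = \{P \in S : P \text{ important for } q\}$ (anything weaker would make the right-hand side of the theorem fail to be an equality). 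Under these conventions the claim reads: $\hat{\gP}^{(t)}_{u \leadsto v|q}$ equals the set of all length-$t$ paths from $u$ to $v$ whose length-$(t-1)$ prefix is an important path --- which is precisely the union of the important length-$t$ paths with those length-$t$ paths agreeing with some important path on all but the final edge.

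For the base case $t=0$, both sides equal $\{(u,\text{self loop},v)\}$ if $u=v$ and $\varnothing$ otherwise, directly from Equation~\ref{eqn:path_boundary}. For the inductive step, assume for all $x \in \gV$ that $\hat{\gP}^{(t-1)}_{u \leadsto x|q}$ is exactly the set of length-$(t-1)$ paths from $u$ to $x$ whose length-$(t-2)$ prefix is important. The key sublemma is that $\hat{\gP}^{(t-1)}_{u \leadsto x|q}$ contains \emph{every} important length-$(t-1)$ path from $u$ to $x$: by prefix-closure such a path has an important length-$(t-2)$ prefix, hence lies in $\hat{\gP}^{(t-1)}_{u \leadsto x|q}$ by the induction hypothesis. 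Combined with the exact-filter property, this gives $m_q(\hat{\gP}^{(t-1)}_{u \leadsto x|q}) = \gP^{(t-1)}_{u \leadsto x|q}$, the set of important length-$(t-1)$ paths from $u$ to $x$. Substituting into Equation~\ref{eqn:path_selection} yields $\hat{\gP}^{(t)}_{u \leadsto v|q} = \bigcup_{(x,r,v)\in\gE(v)} \{P + \{(x,r,v)\} : P \in \gP^{(t-1)}_{u \leadsto x|q}\}$, which is exactly the set of length-$t$ paths from $u$ to $v$ with important length-$(t-1)$ prefix, closing the induction.

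The main obstacle is not the induction but pinning down the hypotheses tightly enough: one must fix ``important'' as a concrete query-dependent subfamily of paths, commit to prefix-closure, and insist that $m_q$ returns exactly (not merely at least) the important members of its argument --- the sublemma that $\hat{\gP}^{(t-1)}$ already captures all important paths of its length is precisely where prefix-closure does the work, and without it the recursion in Equation~\ref{eqn:path_selection} could silently drop important paths. A secondary bookkeeping point is matching the theorem's phrasing ``paths that are different from important paths in the last hop'' to the condition ``length-$(t-1)$ prefix is important but the whole path is not''; I would add a one-line remark that the union described in the statement is, verbatim, $\{P : |P| = t,\ \mathrm{prefix}_{t-1}(P) \in \gP_{u \leadsto x|q}\}$, so the two descriptions coincide.
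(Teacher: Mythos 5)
Your proof is correct and follows essentially the same approach as the paper's own: both rest on prefix-closure of the important-path family, reading $m_q$ as an exact filter, and induction over $t$, with the driving observation being that $\hat{\gP}^{(t-1)}_{u \leadsto x|q}$ already captures every important length-$(t-1)$ path so that $m_q(\hat{\gP}^{(t-1)}_{u \leadsto x|q})$ collapses to the full set of important length-$(t-1)$ paths from $u$ to $x$. The paper packages this as a double-inclusion argument against an auxiliary set $\gQ^{(t)}$ (the forward inclusion needing no induction, the backward one carried by induction), while you obtain the same identity by direct substitution into Equation~\ref{eqn:path_selection}; the difference is purely organizational.
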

\begin{proof}
We use $\gQ_{u \leadsto v|q}^{(t)}$ to denote the set of important paths and paths that are different from important paths in the last hop of length $t$. For paths of length $0$, we define them to be important as they should be the prefix of some important paths. Therefore, $\gQ_{u \leadsto v|q}^{(0)} = \{(u, \text{self loop}, v)\}$ if $u = v$ else $\varnothing$. We use $P_{:-1}$ to denote the prefix of path $P$ without the last hop. The goal is to prove $\hat{\gP}_{u \leadsto v|q}^{(t)} = \gQ_{u \leadsto v|q}^{(t)}$. 

First, we prove $\hat{\gP}_{u \leadsto v|q}^{(t)} \subseteq \gQ_{u \leadsto v|q}^{(t)}$. It is obvious that $\hat{\gP}_{u \leadsto v|q}^{(0)} \subseteq \gQ_{u \leadsto v|q}^{(0)}$. In the case of $t > 0$, $\forall P \in \hat{\gP}_{u \leadsto v|q}^{(t)}$, we have $P_{:-1} \in m_q(\hat{\gP}_{u \leadsto v|q}^{(t-1)})$ according to Equation~\ref{eqn:path_selection}. Therefore, $P \in \gQ_{u \leadsto v|q}^{(t)}$.

Second, we prove $\gQ_{u \leadsto v|q}^{(t)} \subseteq \hat{\gP}_{u \leadsto v|q}^{(t)}$ by induction. For the base case $t = 0$, it is obvious that $\gQ_{u \leadsto v|q}^{(0)} \subseteq \hat{\gP}_{u \leadsto v|q}^{(0)}$. For the inductive case $t > 0$, $\forall Q \in \gQ_{u \leadsto v|q}^{(t)}$, $Q_{:-1}$ is an important path of length $t-1$ according to the definition of $\gQ_{u \leadsto v|q}^{(t)}$. $Q_{:-1} \in m_q(\gQ_{u \leadsto v|q}^{(t-1)}) \subseteq \gQ_{u \leadsto v|q}^{(t-1)}$ according to the definition of $m_q(\cdot)$ and $\gQ_{u \leadsto v|q}^{(t-1)}$. Based on the inductive assumption, we get $Q_{:-1} \in \hat{\gP}_{u \leadsto v|q}^{(t-1)}$. Therefore, $Q \in \hat{\gP}_{u \leadsto v|q}^{(t)}$ according to Equation~\ref{eqn:path_selection}.
\end{proof}

As a corollary of Theorem~\ref{thm:superset}, $\hat{\gP}_{u \leadsto v|q}$ is a slightly larger superset of the important paths $P_{u \leadsto v|q}$.

\begin{corollary}
If the end nodes of important paths are uniformly distributed in the knowledge graph, the expected size of $\hat{\gP}_{u \leadsto v|q}^{(t)}$ is $\left|\gP_{u \leadsto v|q}^{(t)}\right| + \frac{|\gE|}{|\gV|}\left|\gP_{u \leadsto v|q}^{(t-1)}\right|$.
\end{corollary}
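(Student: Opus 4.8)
The plan is to split $\hat{\gP}_{u \leadsto v|q}^{(t)}$ into its important part and a remainder, count the remainder combinatorially, and then average it under the stated uniformity hypothesis. By Theorem~\ref{thm:superset}, $\hat{\gP}_{u \leadsto v|q}^{(t)}$ is exactly the disjoint union of the important paths $\gP_{u \leadsto v|q}^{(t)}$ and the set of length-$t$ paths that agree with an important path on their first $t-1$ hops but take a different (non-important) last hop. Consequently $|\hat{\gP}_{u \leadsto v|q}^{(t)}| = |\gP_{u \leadsto v|q}^{(t)}| + |\hat{\gP}_{u \leadsto v|q}^{(t)} \setminus \gP_{u \leadsto v|q}^{(t)}|$, so it suffices to show that the expectation of the second term is $\tfrac{|\gE|}{|\gV|}\,|\gP_{u \leadsto v|q}^{(t-1)}|$.

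First I would describe the remainder through the prefix-plus-last-edge decomposition of Equation~\ref{eqn:path_selection}: a path in $\hat{\gP}_{u \leadsto v|q}^{(t)} \setminus \gP_{u \leadsto v|q}^{(t)}$ is uniquely written as $P' + \{(x,r,v)\}$ with $P' \in \gP_{u \leadsto x|q}^{(t-1)}$ an important $(t-1)$-path ending at an in-neighbor $x$ of $v$, and this map is injective. Thus the remainder is counted by summing, over the important $(t-1)$-paths that reach some in-neighbor of $v$, the number of edges out of their end node that extend them to a non-important path. Because any fixed prefix $P'$ admits only a negligible number of important one-hop extensions (bounded by the number of relation types between two entities, which is tiny compared with a node's degree), this per-prefix count equals the out-degree of $\mathrm{end}(P')$ up to lower-order terms.

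Next I would use the hypothesis that the end nodes of important paths are uniformly spread over $\gV$: in expectation each out-degree appearing in the sum is the average out-degree $\tfrac{1}{|\gV|}\sum_{w \in \gV} d_{\mathrm{out}}(w) = \tfrac{|\gE|}{|\gV|}$, and the same uniformity identifies the number of important $(t-1)$-paths contributing to $v$ with the population count $|\gP_{u \leadsto v|q}^{(t-1)}|$. Multiplying these two factors gives $\E\big[\,|\hat{\gP}_{u \leadsto v|q}^{(t)} \setminus \gP_{u \leadsto v|q}^{(t)}|\,\big] = \tfrac{|\gE|}{|\gV|}\,|\gP_{u \leadsto v|q}^{(t-1)}|$, and restoring the important part yields the stated expected size.

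The point that needs the most care is the overlap between important $t$-paths and extensions of important $(t-1)$-paths: since every important $t$-path is itself a one-hop extension of an important $(t-1)$-path, the ``average degree times number of prefixes'' count a priori double-counts them, and one must see that exactly the $|\gP_{u \leadsto v|q}^{(t)}|$ term we separated off accounts for this overlap while the remaining extensions are, to leading order, all non-important --- i.e.\ the identity is a first-order estimate in which the important out-edges of any prefix are treated as vanishing. A minor but real bookkeeping issue is fixing the edge-orientation and self-loop conventions so that the average degree comes out as $|\gE|/|\gV|$ rather than, say, $2|\gE|/|\gV|$.
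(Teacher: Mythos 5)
Your proof follows the same route as the paper's: decompose $\hat{\gP}_{u \leadsto v|q}^{(t)}$ via Theorem~\ref{thm:superset} into the important $t$-paths plus one-hop extensions of important $(t-1)$-paths, use the out-degree of the endpoint to count extensions, and average that degree to $|\gE|/|\gV|$ under the uniformity hypothesis. You are in fact slightly more careful than the paper, whose one-line argument states each important $(t-1)$-path ``generates $d$ paths of the second type'' without acknowledging that some of those $d$ extensions are themselves important $t$-paths; you correctly flag that the stated expectation is a first-order count in which such overlap is treated as negligible, which is the honest reading of what the paper is doing.
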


\begin{proof}
Theorem~\ref{thm:superset} indicates that $\hat{\gP}_{u \leadsto v|q}^{(t)}$ contains two types of paths: important paths and paths that are different from important paths in the last hop of length $t$. The number of the first type is $\left|\gP_{u \leadsto v|q}^{(t)}\right|$. Each of the second type corresponds to an important path of length $t-1$. From an inverse perspective, each important path of length $t-1$ generates $d$ paths of the second type for $\hat{\gP}_{u \leadsto v|q}^{(t)}$, where $d$ is the degree of the end node in the path. If the end nodes are uniformly distributed in the knowledge graph, we have $\E\left[\hat{\gP}_{u \leadsto v|q}^{(t)}\right] = \left|\gP_{u \leadsto v|q}^{(t)}\right| + \frac{|\gE|}{|\gV|}\left|\gP_{u \leadsto v|q}^{(t-1)}\right|$. For real-world knowledge graphs, $\frac{|\gE|}{|\gV|}|$ is usually a small constant (e.g., $\leq 50$), and $\left|\hat{\gP}_{u \leadsto v|q}^{(t)}\right|$ is slightly larger than $\left|\gP_{u \leadsto v|q}^{(t)}\right|$ in terms of complexity.
\end{proof}

\subsection{From Iterative Path Selection to Iterative Node Selection}
Second, we demonstrate that Equation~\ref{eqn:path_selection} can be solved by Equation~\ref{eqn:node_selection} if paths with the same length and the same stop node can be merged.

\begin{proposition}
\label{prop:node_selection}
If $m_q(\gP)$ selects paths only based on the length $t$, the start node $u$ and the end node $x$ of each path, by replacing $m_q(\gP)$ with $n_{uq}^{(t)}(\gV)$, $\hat{\gP}_{u \leadsto v|q}^{(t)}$ can be computed as follows
\begin{align}
    \hat{\gP}_{u \leadsto v|q}^{(t)} \leftarrow \bigcup_{\substack{x \in n_{uq}^{(t-1)}(\gV) \\ (x,r,v) \in \gE(v)}} \left\{P + \{(x,r,v)\} \middle| P \in \hat{\gP}_{u \leadsto x|q}^{(t-1)}\right\} \tag{\ref{eqn:node_selection}}
\end{align}
\end{proposition}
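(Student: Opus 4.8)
The plan is to exploit the fact that, by construction, every path in $\hat{\gP}_{u \leadsto x|q}^{(t-1)}$ shares the same length, the same start node, and the same end node, so that a selector $m_q$ depending only on these three quantities must act on $\hat{\gP}_{u \leadsto x|q}^{(t-1)}$ in an all-or-nothing fashion; the replacement of $m_q$ by a node selector $n_{uq}^{(t-1)}$ is then nothing more than a reindexing of the union in Equation~\ref{eqn:path_selection}.

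First I would establish, by induction on $t$, that every element of $\hat{\gP}_{u \leadsto v|q}^{(t)}$ is a path of length exactly $t$ that starts at $u$ and ends at $v$. The base case $t=0$ is immediate from Equation~\ref{eqn:path_boundary}: the set is either empty or the single trivial length-$0$ path, which occurs precisely when $u=v$. For the inductive step, Equation~\ref{eqn:path_selection} forms each element of $\hat{\gP}_{u \leadsto v|q}^{(t)}$ by appending one edge $(x,r,v)$ to some $P \in m_q(\hat{\gP}_{u \leadsto x|q}^{(t-1)}) \subseteq \hat{\gP}_{u \leadsto x|q}^{(t-1)}$; by the inductive hypothesis $P$ has length $t-1$, starts at $u$, and ends at $x$, hence $P + \{(x,r,v)\}$ has length $t$, starts at $u$, and ends at $v$.

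Next I would write the hypothesis on $m_q$ as a predicate $\mu_q(\ell, a, b) \in \{0,1\}$ with $m_q(\gP) = \{P \in \gP : \mu_q(|P|, \operatorname{start}(P), \operatorname{end}(P)) = 1\}$, where $\operatorname{start}(P)$ and $\operatorname{end}(P)$ denote the first and last node of $P$. Combined with the homogeneity just proved, for every $x \in \gV$ we obtain
\[
m_q\!\left(\hat{\gP}_{u \leadsto x|q}^{(t-1)}\right) =
\begin{cases}
\hat{\gP}_{u \leadsto x|q}^{(t-1)} & \text{if } \mu_q(t-1, u, x) = 1,\\
\varnothing & \text{otherwise.}
\end{cases}
\]
Defining $n_{uq}^{(t-1)}(\gV) \triangleq \{x \in \gV : \mu_q(t-1, u, x) = 1\}$ — a set of nodes depending only on $t-1$, the fixed source $u$, and the candidate end node — and substituting this identity into Equation~\ref{eqn:path_selection}, the terms with $x \notin n_{uq}^{(t-1)}(\gV)$ contribute $\varnothing$ to the union and drop out, while the remaining terms reproduce exactly the right-hand side of Equation~\ref{eqn:node_selection}.

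I do not anticipate a real obstacle: once the homogeneity lemma of the second paragraph is in place, the proposition is a one-line substitution. The only step that deserves care is that bookkeeping lemma itself — checking that the boundary condition genuinely contributes a length-$0$ path and that Equation~\ref{eqn:path_selection} appends precisely one edge per iteration — because if paths of different lengths could coexist in $\hat{\gP}_{u \leadsto x|q}^{(t-1)}$, then $\mu_q$ could split the set and the collapse to a node selector would fail. I would also remark, for consistency with Equations~\ref{eqn:node_selection} and \ref{eqn:A*} and Algorithm~\ref{alg:A*Net}, that the $n_{uq}^{(t-1)}$ defined above is a well-defined map $2^{\gV} \mapsto 2^{\gV}$, here evaluated at $\gV$.
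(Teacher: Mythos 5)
Your proof is correct and supplies exactly the details the paper leaves implicit — the paper's own proof is just the one-line remark that the proposition is obvious. The homogeneity lemma you establish by induction (every path in $\hat{\gP}_{u \leadsto x|q}^{(t-1)}$ has length $t-1$, start $u$, and end $x$) is the load-bearing observation that forces $m_q$ to act all-or-nothing on each such set, and from there the replacement by a node selector $n_{uq}^{(t-1)}$ is, as you say, a direct substitution into Equation~\ref{eqn:path_selection}.
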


This proposition is obvious. As a result of Proposition~\ref{prop:node_selection}, we merge paths by their length and stop nodes, which turns the exponential tree search to a polynomial dynamic programming algorithm.

\subsection{Reasoning with A* Algorithm}
Finally, we prove that the A* iteration (Equation~\ref{eqn:A*}) covers all important paths for reasoning (Equation~\ref{eqn:important_path}).

\begin{theorem}
\label{thm:A*}
If $n_{uq}^{(t)}(\gV): 2^\gV \mapsto 2^\gV$ can determine whether paths from $u$ to $x$ are important or not, and $\langle\oplus, \otimes\rangle$ forms a semiring~\cite{hebisch1998semirings}, the representation $\vh_q(u, v)$ for path-based reasoning can be computed by
\begin{align}
    \vh_q^{(t)}(u, v) \leftarrow \vh_q^{(0)}(u, v) \oplus \bigoplus_{\substack{x \in n_{uq}^{(t-1)}(\gV) \\ (x, r, v)\in \gE(v)}} \vh_q^{(t-1)}(u, x) \otimes \vw_q(x, r, v) \tag{\ref{eqn:A*}}
\end{align}
\end{theorem}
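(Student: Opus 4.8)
The plan is to prove the statement by induction on the iteration count $t$, using the correctness of the unpruned generalized Bellman-Ford recursion (Lemma~\ref{lem:induction}) as a template and inserting the node-selection step $n_{uq}^{(t-1)}$ as a pruning of the aggregation in Equation~\ref{eqn:iteration}. The invariant I would carry is that after $t$ A* iterations the representation is the $\oplus$-aggregate of $\vh_q(P)$ over the paths that survive the pruning up to step $t$, and that this surviving family is exactly the selected path set $\hat{\gP}_{u\leadsto v|q}^{(t)}$ built by the iterative node selection of Equation~\ref{eqn:node_selection}. Concretely, the claim is
\begin{equation}
    \vh_q^{(t)}(u,v) \;=\; \bigoplus_{P \in \bigcup_{s\le t}\hat{\gP}_{u\leadsto v|q}^{(s)}} \vh_q(P),
\end{equation}
where the union is over lengths $s=0,\dots,t$; since $\hat{\gP}_{u\leadsto v|q}^{(s)}$ collects paths of a fixed length $s$, the union is disjoint, so nothing is double-counted even though $\oplus$ need not be idempotent — the zero-length path enters $\vh_q^{(t)}$ only through the explicit boundary term.

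For $t=0$ this is immediate from the boundary condition Equation~\ref{eqn:boundary_nbfnet}: $\vh_q^{(0)}(u,v)=\mathbbm{1}_q(u=v)$ equals $\oone_q$ (the empty-product representation of the zero-length path) when $u=v$ and $\ozero_q$ otherwise, matching $\hat{\gP}_{u\leadsto v|q}^{(0)}$. For the inductive step I would substitute the hypothesis for $\vh_q^{(t-1)}(u,x)$ into the A* iteration Equation~\ref{eqn:A*}, distribute each edge factor $\vw_q(x,r,v)$ across the $\oplus$-sum using right-distributivity of the semiring, and regroup using associativity and commutativity of $\oplus$ — the same chain of manipulations as Equations~\ref{eq:substitution}--\ref{eq:associative}. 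The only new ingredient relative to Lemma~\ref{lem:induction} is that the outer sum now ranges over $x \in n_{uq}^{(t-1)}(\gV)$; Proposition~\ref{prop:node_selection} then identifies the one-hop extensions of the surviving length-$(t-1)$ paths through exactly these nodes with $\hat{\gP}_{u\leadsto v|q}^{(t)}$ of Equation~\ref{eqn:node_selection}, and the re-added boundary term $\vh_q^{(0)}(u,v)$ supplies the zero-length summand, closing the induction.

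To read off that the iteration computes the reasoning representation, I would run it for $T$ steps with $T$ at least the length of the longest important path. Theorem~\ref{thm:superset} then guarantees that $\bigcup_{s\le T}\hat{\gP}_{u\leadsto v|q}^{(s)}$ contains every path in $\gP_{u\leadsto v|q}$, together with some extra paths that agree with an important path except in their last hop; these extras are by construction unimportant, hence contribute negligibly, so the assumption Equation~\ref{eqn:important_path} gives $\vh_q^{(T)}(u,v) \approx \bigoplus_{P\in\gP_{u\leadsto v|q}}\vh_q(P)=\vh_q(u,v)$. The semiring algebra in the inductive step is routine and can be lifted verbatim from Lemma~\ref{lem:induction}; I expect the main obstacle to be stating the induction invariant precisely enough that the per-iteration application of $n_{uq}^{(t-1)}$ lines up with the fixed-length recursion of Equation~\ref{eqn:node_selection} — in particular handling how zero-length (self-loop) boundary paths and short important paths are kept alive across iterations rather than being pruned by a later $n_{uq}^{(t)}$ — and in flagging that the closing step is an approximation governed by the modelling assumption Equation~\ref{eqn:important_path}, not an exact identity. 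The honest reading of the theorem is thus that the A* iteration exactly aggregates $\hat{\gP}_{u\leadsto v|q}$, which in turn approximates $\vh_q(u,v)$.
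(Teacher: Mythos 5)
Your proposal matches the paper's proof essentially line for line: you prove by induction the invariant that $\vh_q^{(t)}(u,v)=\bigoplus_{P\in\hat{\gP}_{u\leadsto v|q}^{(\leq t)}}\vh_q(P)$ (the paper's Lemma~\ref{lem:A*}), closing the inductive step with the same distributive/associative semiring manipulations used in Lemma~\ref{lem:induction} plus Proposition~\ref{prop:node_selection}, and then take the limit and invoke Equation~\ref{eqn:important_path} to get $\vh_q^{(t)}(u,v)\approx\vh_q(u,v)$. Your explicit remarks — that the union over lengths is disjoint so no double-counting occurs under a non-idempotent $\oplus$, and that the final step is an approximation governed by the modeling assumption rather than an exact identity — are correct readings of details the paper leaves implicit, so the argument is sound and not a different route.
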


\begin{proof}
In order to prove Theorem~\ref{thm:A*}, we first prove a lemma for the analytic form of $\vh_q^{(t)}(u, v)$, and then show that $\lim_{t \rightarrow \infty}{\vh_q^{(t)}(u, v)}$ converges to the goal of path-based reasoning.

\begin{lemma}
\label{lem:A*}
Under the same condition as Theorem~\ref{thm:A*}, the intermediate representation $\vh_q^{(t)}(u, v)$ computed by Equation~\ref{eqn:boundary_nbfnet} and \ref{eqn:A*} aggregates all important paths within a length of $t$ edges, i.e.
\begin{equation}
    \vh_q^{(t)}(u, v) = \bigoplus_{P \in \hat{\gP}_{u \leadsto v|q}^{(\leq t)}}\bigotimes_{i=1}^{|P|}\vw_q(e_i)
\end{equation}
where $\hat{\gP}_{u \leadsto v|q}^{(\leq t)} = \bigcup_{k = 0}^{t} \hat{\gP}_{u \leadsto v|q}^{(k)}$.
\end{lemma}

\begin{proof}
We prove Lemma~\ref{lem:A*} by induction. Let $\ozero_q$ and $\oone_q$ denote the identity elements of $\oplus$ and $\otimes$ respectively. We have $\mathbbm{1}_q(u = v) = \oone_q$ if $u = v$ else $\ozero_q$. Note paths of length $0$ only contain self loops, and we define them as important paths, since they should be prefix of some important paths.

For the base case $t = 0$, we have $\vh_q^{(0)}(u, u) = \oone_q = \bigoplus_{P \in \gP_{u \leadsto u|q}: |P| \leq 0}\bigotimes_{i=1}^{|P|}\vw_q(e_i)$ since the only path from $u$ to $u$ is the self loop, which has the representation $\oone_q$. For $u \neq v$, we have $\vh_q^{(0)}(u, v) = \ozero_q = \bigoplus_{P \in \gP_{u \leadsto v|q}: |P| \leq 0}\bigotimes_{i=1}^{|P|}\vw_q(e_i)$ since there is no important path from $u$ to $v$ within length $0$.

For the inductive case $t > 0$, we have
\begingroup
\allowdisplaybreaks
\begin{align}
    \vh^{(t)}_q(u,v) 
    &= \vh^{(0)}_q(u,v) \oplus \bigoplus_{\substack{x \in n_{uq}^{(t-1)}(\gV) \\ (x, r, v)\in \gE(v)}}\vh_q^{(t-1)}(u,x) \otimes \vw_q(x, r, v)\\
    &= \vh^{(0)}_q(u,v) \oplus \bigoplus_{\substack{x \in n_{uq}^{(t-1)}(\gV) \\ (x, r, v)\in \gE(v)}}\left(\bigoplus_{P\in\hat{\gP}_{u \leadsto v|q}^{(\leq t-1)}} \bigotimes_{i=1}^{|P|} \vw_q(e_i)\right) \otimes \vw_q(x, r, v) \label{eqn:induction}\\
    &=
    \vh^{(0)}_q(u,v) \oplus \bigoplus_{\substack{x \in n_{uq}^{(t-1)}(\gV) \\ (x, r, v)\in \gE(v)}}\left[\bigoplus_{P\in\hat{\gP}_{u \leadsto v|q}^{(\leq t-1)}}\left(\bigotimes_{i=1}^{|P|} \vw_q(e_i)\right) \otimes \vw_q(x, r, v)\right] \label{eqn:distributive}\\
    &= \left( \bigoplus_{P\in\hat{\gP}_{u \leadsto v|q}^{(0)}} \bigotimes_{i=1}^{|P|} \vw_q(e_i)\right) \oplus \left(\bigoplus_{P\in\hat{\gP}_{u \leadsto v|q}^{(\leq t)}\setminus\hat{\gP}_{u \leadsto v|q}^{(0)}} \bigotimes_{i=1}^{|P|} \vw_q(e_i)\right) \label{eqn:associative}\\
    &= \bigoplus_{P\in\hat{\gP}_{u \leadsto v|q}^{(\leq t)}} \bigotimes_{i=1}^{|P|} \vw_q(e_i),
\end{align}
\endgroup
where Equation~\ref{eqn:induction} uses the inductive assumption, Equation~\ref{eqn:distributive} relies on the distributive property of $\otimes$ over $\oplus$, and Equation~\ref{eqn:associative} uses Proposition~\ref{prop:node_selection}. In the above equations, $\bigotimes$ and $\otimes$ are always applied before $\bigoplus$ and $\oplus$.
\end{proof}

Since $\gP_{u \leadsto v|q}^{(t)} \subseteq \hat{\gP}_{u \leadsto v|q}^{(t)}$, we have $\gP_{u \leadsto v|q} \subseteq \hat{\gP}_{u \leadsto v|q} \subseteq \gP_{u \leadsto v}$. Based on Lemma~\ref{lem:A*} and Equation~\ref{eqn:important_path}, it is obvious to see that
\begin{equation}
    \lim_{t \rightarrow \infty}{\vh_q^{(t)}(u, v)} = \bigoplus_{P \in \hat{\gP}_{u \leadsto v|q}}\vh_q(P) \approx \bigoplus_{P \in \gP_{u \leadsto v}}\vh_q(P) = \vh_q(u, v)
\end{equation}
Therefore, Theorem~\ref{thm:A*} holds.
\end{proof}

\section{Dataset Statistics}
\label{app:dataset_nbfnet}

Dataset statistics of two transductive settings, i.e.\ knowledge graph completion and homogeneous graph link prediction, are summarized in Table~\ref{tab:kg_statistics} and \ref{tab:homo_statistics}. Dataset statistics of inductive relation prediction is summarized in Table~\ref{tab:inductive_statistics}.

We use the standard transductive splits~\cite{toutanova2015observed, dettmers2018convolutional, mahdisoltani2014yago3} and inductive splits~\cite{teru2020inductive} for knowledge graphs. For homogeneous graphs, we follow previous works~\cite{kipf2016variational, davidson2018hyperspherical} and randomly split the edges into train/validation/test sets with a ratio of 85:5:10. All the homogeneous graphs used in this paper are undirected. Note that for inductive relation prediction, the original paper~\cite{teru2020inductive} actually uses a \emph{transductive valid set} that shares the same set of fact triplets as the training set for hyperparameter tuning. The \emph{inductive test set} contains entities, query triplets and fact triplets that never appear in the training set. The same data split is adopted in this paper for a fair comparison.

\begin{table}[!h]
    \centering
    \caption[Dataset statistics for knowledge graph completion]{Dataset statistics for knowledge graph completion.}
    \label{tab:kg_statistics}
    \footnotesize
    \begin{tabular}{lccccc}
        \toprule
        \multirow{2}{*}{\bf{Dataset}} & \multirow{2}{*}{\bf{\#Entity}} & \multirow{2}{*}{\bf{\#Relation}} & \multicolumn{3}{c}{\bf{\#Triplet}} \\
        & & & \bf{\#Train} & \bf{\#Validation} & \bf{\#Test} \\
        \midrule
        FB15k-237~\cite{toutanova2015observed} & 14,541 & 237 & 272,115 & 17,535 & 20,466 \\
        WN18RR~\cite{dettmers2018convolutional} & 40,943 & 11 & 86,835 & 3,034 & 3,134 \\
        YAGO3-10~\cite{mahdisoltani2014yago3} & 123,182 & 37 & 1,079,040 & 5000 & 5000 \\
        ogbl-biokg~\cite{hu2020ogb} & 93,773 & 51 & 4,762,678 & 162,886 & 162,870 \\
        ogbl-wikikg2~\cite{hu2020ogb} & 2,500,604 & 535 & 16,109,182 & 429,456 & 598,543 \\
        WikiKG90M~\cite{hu2021ogb} & 87,143,637 & 1,315 & 504,220,369 & 1,700,584 & 1,359,303 \\
        \bottomrule
    \end{tabular}
\end{table}

\begin{table}[!h]
    \centering
    \caption[Dataset statistics for homogeneous link prediction]{Dataset statistics for homogeneous link prediction.}
    \label{tab:homo_statistics}
    \footnotesize
    \begin{tabular}{lcccc}
        \toprule
        \multirow{2}{*}{\bf{Dataset}} & \multirow{2}{*}{\bf{\#Node}} & \multicolumn{3}{c}{\bf{\#Edge}} \\
        & & \bf{\#Train} & \bf{\#Validation} & \bf{\#Test} \\
        \midrule
        Cora~\cite{sen2008collective} & 2,708 & 4,614 & 271 & 544 \\
        CiteSeer~\cite{sen2008collective} & 3,327 & 4,022 & 236 & 474 \\
        PubMed~\cite{sen2008collective} & 19,717 & 37,687 & 2,216 & 4,435 \\
        \bottomrule
    \end{tabular}
\end{table}

\begin{table}[!h]
    \centering
    \caption[Dataset statistics for inductive relation prediction]{Dataset statistics for inductive relation prediction. Queries refer to the triplets that are used as training or test labels, while facts are the triplets used as training or test inputs. In the training sets, all queries are also provided as facts.}
    \label{tab:inductive_statistics}
    \begin{adjustbox}{max width=\textwidth}
        \begin{tabular}{llcccccccccc}
            \toprule
            \multirow{2}{*}{\bf{Dataset}} & & \multirow{2}{*}{\bf{\#Relation}} & \multicolumn{3}{c}{\bf{Train}} & \multicolumn{3}{c}{\bf{Validation}} & \multicolumn{3}{c}{\bf{Test}} \\
            & & & \bf{\#Entity} & \bf{\#Query} & \bf{\#Fact} & \bf{\#Entity} & \bf{\#Query} & \bf{\#Fact} & \bf{\#Entity} & \bf{\#Query} & \bf{\#Fact} \\
            \midrule
            \multirow{4}{*}{FB15k-237~\cite{teru2020inductive}}
            & v1 & 180 & 1,594 & 4,245 & 4,245 & 1,594 & 489 & 4,245 & 1,093 & 205 & 1,993\\
            & v2 & 200 & 2,608 & 9,739 & 9,739 & 2,608 & 1,166 & 9,739 & 1,660 & 478 & 4,145 \\
            & v3 & 215 & 3,668 & 17,986 & 17,986 & 3,668 & 2,194 & 17,986 & 2,501 & 865 & 7,406 \\
            & v4 & 219 & 4,707 & 27,203 & 27,203 & 4,707 & 3,352 & 27,203 & 3,051 & 1,424 & 11,714 \\
            \multirow{4}{*}{WN18RR~\cite{teru2020inductive}}
            & v1 & 9 & 2,746 & 5,410 & 5,410 & 2,746 & 630 & 5,410 & 922 & 188 & 1,618 \\
            & v2 & 10 & 6,954 & 15,262 & 15,262 & 6,954 & 1,838 & 15,262 & 2,757 & 441 & 4,011\\
            & v3 & 11 & 12,078 & 25,901 & 25,901 & 12,078 & 3,097 & 25,901 & 5,084 & 605 & 6,327\\
            & v4 & 9 & 3,861 & 7,940 & 7,940 & 3,861 & 934 & 7,940 & 7,084 & 1,429 & 12,334 \\
            \bottomrule
        \end{tabular}
    \end{adjustbox}
\end{table}
\section{Limitations and Future Work}
\label{sec:discussion}

There are a few limitations for NBFNet. First, the assumption of the generalized Bellman-Ford algorithm requires the operators $\langle\oplus, \otimes\rangle$ to satisfy a semiring. Due to the non-linear activation functions in neural networks, this assumption does not hold for NBFNet, and we do not have a theoretical guarantee on the loss incurred by this relaxation. Second, NBFNet is only verified on simple edge prediction, while there are other link prediction variants, e.g.\ complex logical queries with conjunctions ($\land$) and disjunctions ($\lor$)~\cite{hamilton2018embedding, ren2020query2box}. In the future, we would like to how NBFNet approximates the path formulation, as well as apply NBFNet to other link prediction settings.

One limitation for A*Net is that we focus on algorithm design rather than system design. As a result, the improvement in time and memory cost is much less than the improvement in the number of messages (Table~\ref{tab:transductive_efficiency}). In the future, we will co-design the algorithm and the system to further improve the efficiency.
\chapter[Representation Learning for Generalizing to Any Knowledge Graph]{Representation Learning for\linebreak Generalizing to Any Knowledge Graph}
\label{cha:ultra}

Generalization across knowledge structures plays a key role in the era of foundation models. The success of NBFNet has shown the potential of learning entity representations as a function of relations in a fixed vocabulary, which is just one step away from an ideal foundation model, since we only need to break the constraint on fixed relation vocabularies. If a model can generalize to both new entities and relations, it should be able to perform inference on any knowledge graph.

This chapter presents Ultra, a model that leverages invariances in knowledge graphs---interactions between relations---to learn representations for entities and relations. Ultra adopts two NBFNet instances, one for learning relations as a function of relation interactions, and another for learning entities as a function of relations. Ultra demonstrates strong zero-shot generalization performance on a large number of knowledge graphs of various domains and sizes, leading to the first foundation model for knowledge graph reasoning.

\smallskip \emph{This chapter is based on our work published at ICLR 2024~\cite{galkin2024towards}\footnote{The code is available at \url{https://github.com/DeepGraphLearning/ULTRA}}. Mikhail Galkin led the project and I contributed the core idea and model design in this project.}

\section{Overview}

Modern machine learning applications increasingly rely on the \emph{pre-training} and \emph{fine-tuning} paradigm. 
In this paradigm, a backbone model often trained on large datasets in a self-supervised fashion is commonly known as a \emph{foundation model}~\cite{bommasani2021opportunities}. After pre-training, foundation models can be fine-tuned on smaller downstream tasks. In order to transfer to a broad set of downstream tasks, foundation models leverage certain \emph{invariances} pertaining to a domain of interest, e.g.\ large language models like BERT~\cite{devlin2019bert}, GPT-4~\cite{achiam2023gpt}, Llama-2~\cite{touvron2023llama} operate on a fixed vocabulary of tokens; vision models operate on raw pixels~\cite{he2016deep, radford2021learning} or image patches~\cite{dosovitskiy2021image}; chemistry models~\cite{ying2021transformers, zheng2024predicting} learn a vocabulary of atoms from the periodic table.

Representation learning on knowledge graphs, however, has not yet witnessed the benefits of transfer learning despite a wide range of downstream applications such as precision medicine~\cite{chandak2023building}, materials science~\cite{venugopal2022matkg, statt2023materials}, virtual assistants~\cite{ilyas2022saga}, or product graphs in e-commerce~\cite{dong2018challenges}. The key problem is that different knowledge graphs typically have different entity and relation vocabularies. Classic \emph{transductive} embedding models~\cite{ali2021bringing} learn entity and relation embeddings tailored for each specific vocabulary and cannot generalize even to new nodes within the same graph. More recent efforts towards generalization across the vocabularies are known as \emph{inductive} learning methods~\cite{chen2023generalizing}. Most of the inductive methods~\cite{teru2020inductive, zhu2021neural, galkin2022nodepiece, zhang2022knowledge} generalize to new entities at inference time but require a fixed relation vocabulary to learn entity representations as a function of the relations. Such inductive methods still cannot transfer to knowledge graphs with a different set of relations, e.g.\ training on Freebase and inference on Wikidata.

The main research goal of this work is \emph{finding the invariances transferable across graphs with arbitrary entity and relation vocabularies}. Leveraging and learning such invariances would enable the \emph{pre-train and fine-tune} paradigm of foundation models for knowledge graph reasoning where a single model trained on one graph (or several graphs) with one set of relations would be able to \emph{zero-shot} transfer to any new, unseen graph with a completely different set of relations and relational patterns. 
Our approach to the problem is based on two key observations: (1) even if relations vary across the datasets, the interactions between the relations may be similar and transferable; (2) initial relation representations may be conditioned on this interaction bypassing the need for any input features. To this end, we propose \textsc{Ultra}, a method for \underline{u}nified, \underline{l}earnable, and \underline{tra}nsferable knowledge graph representations that leverages the invariance of the \emph{relational structure} and employs relative relation representations on top of this structure for parameterizing any unseen relation. Given any multi-relational graph, \textsc{Ultra} first constructs a graph of relations (where each node is a relation from the original graph) capturing their interactions.
Applying a graph neural network (GNN) with a \emph{labeling trick}~\cite{zhang2021labeling} over the graph of relations, \textsc{Ultra} obtains a unique \emph{relative} representation of each relation. The relation representations can then be used by any inductive learning method for downstream applications like knowledge graph completion. Since the method does not learn any graph-specific entity or relation embeddings nor requires any input entity or relation features, \textsc{Ultra} enables \emph{zero-shot} generalization to any other knowledge graph of any size and any relational vocabulary.

Experimentally, we show that \textsc{Ultra} paired with the NBFNet~\cite{zhu2021neural} link predictor pre-trained on three knowledge graphs (FB15k-237, WN18RR, and CoDEx-M derived from Freebase, WordNet, and Wikidata, respectively) generalizes to 50+ different knowledge graphs with sizes of 1,000--120,000 nodes and 5K--1M edges. \textsc{Ultra} demonstrates promising transfer learning capabilities where the zero-shot inference performance on those unseen graphs might exceed strong supervised baselines by up to $300\%$. The subsequent short fine-tuning of \textsc{Ultra} often boosts the performance even more.

\begin{figure}[!h]
    \centering
    \includegraphics[width=\textwidth]{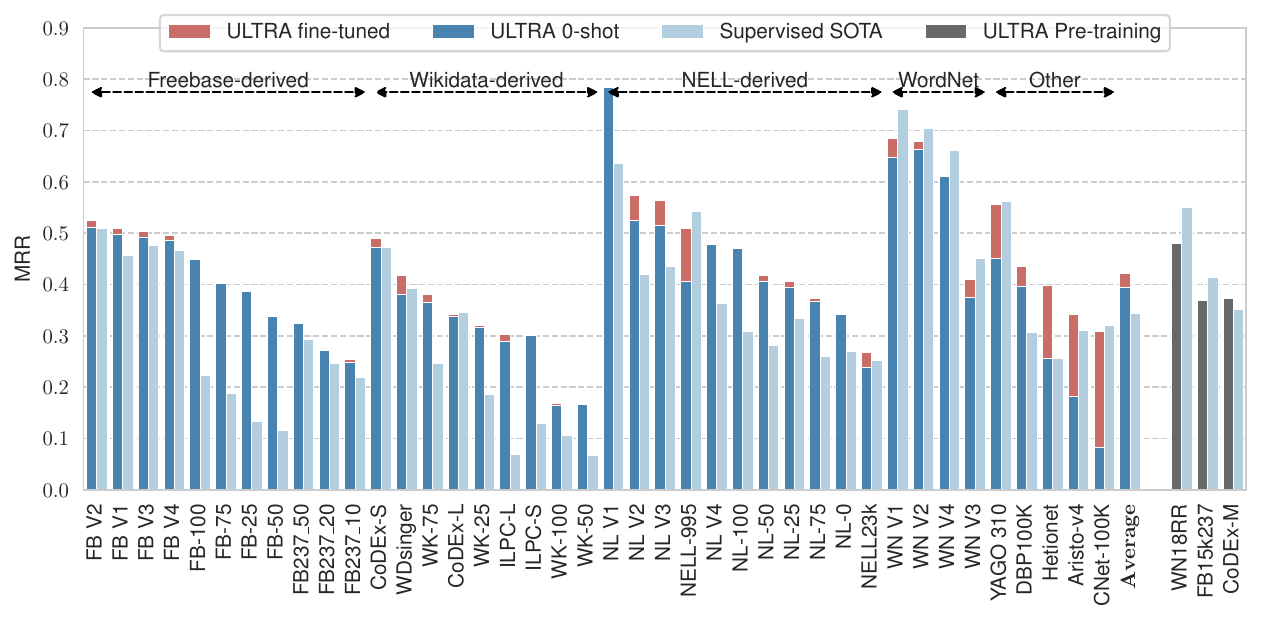}
    \caption[Zero-shot and fine-tuned performance of Ultra on 43 datasets]{Zero-shot and fine-tuned MRR (higher is better) of \textsc{Ultra} pre-trained on three graphs (FB15k-237, WN18RR, CoDEx-Medium). On average, zero-shot performance is better than best reported baselines trained on specific graphs (0.395 vs 0.344). More results in Figure~\ref{fig:mtdea} and Table~\ref{tab:main1}.}
    \label{fig:main_result}
\end{figure}
\section{Method}
\label{sec:method}

\begin{figure}[t]
    \centering
    \includegraphics[width=\textwidth]{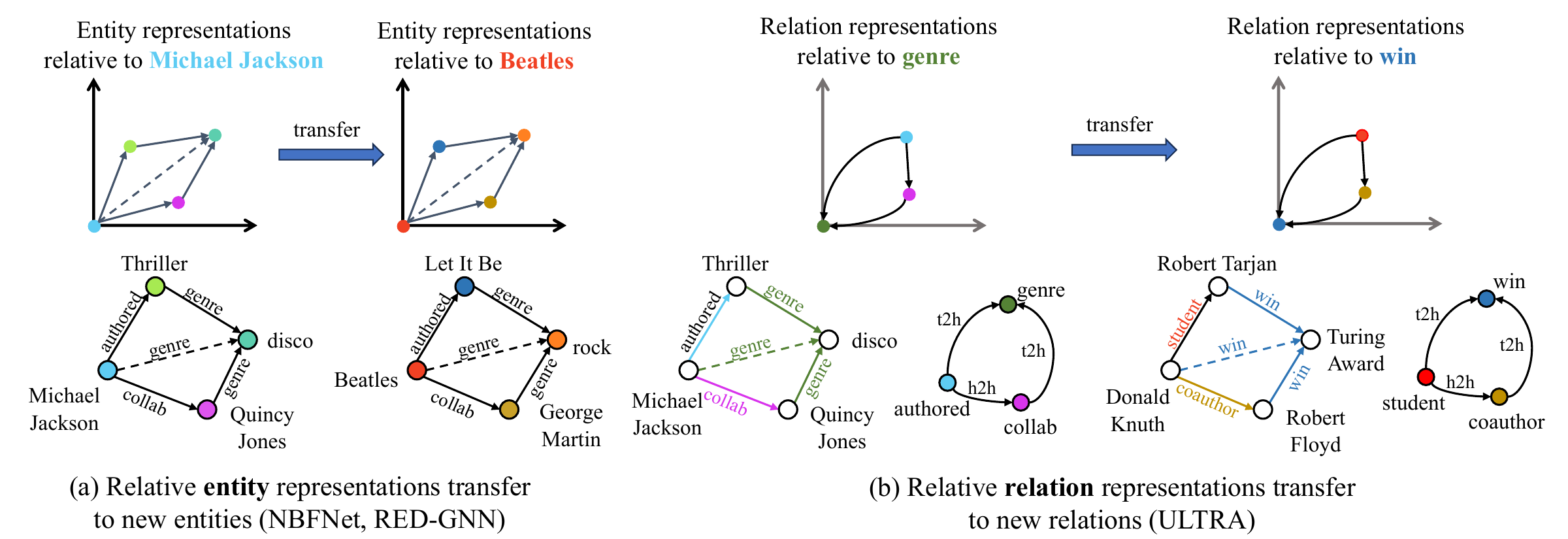}
    \caption[Relative representations generalize to new entities and relations]{(a) relative \textbf{entity} representations used in inductive models generalize to new entities; (b) relative \textbf{relation} representations based on a graph of relations generalize to both new relations and entities. The graph of relations captures four fundamental interactions (\emph{t2h}, \emph{h2h}, \emph{h2t}, \emph{h2h}) independent from any graph-specific relation vocabulary and whose representations can be learned.}
    \label{fig:approach}
\end{figure}

The key challenge of inductive inference with different entity and relation vocabularies is finding \emph{transferable invariances} that would produce entity and relation representations conditioned on the new graph (as learning entity and relation embedding matrices from the training graph is useless and not transferable). Most inductive GNN methods that transfer to new entities~\cite{zhu2021neural, zhang2022knowledge} learn \textbf{relative entity representations} conditioned on the graph structure as shown in Figure~\ref{fig:approach}(a). For example, given $a,b,c$ are variable entities and $a$ as a root node labeled with $\textsc{Indicator}()$, a structure $a \xrightarrow{\textit{authored}} b \xrightarrow{\textit{genre}} c \wedge a \xrightarrow{\textit{collab}} d \xrightarrow{\textit{genre}} c$ might imply existence of the edge $a \xrightarrow{\textit{genre}} c$. Learning such a structure on a training set with entities $\textit{Michael Jackson}  \xrightarrow{\textit{authored}} \textit{Thriller} \xrightarrow{\textit{genre}} \textit{disco}$ seamlessly transfers to new entities $\textit{Beatles}  \xrightarrow{\textit{authored}} \textit{Let It Be} \xrightarrow{\textit{genre}} \textit{rock}$ at inference time without learning entity embeddings thanks to the same relational structure and \emph{relative} entity representations. As training and inference relations are the same $\rtrain = \rinf$, such approaches learn relation embedding matrices and use \textbf{relations as invariants}.

In \textsc{Ultra}, we generalize knowledge graph reasoning to both new entities and relations (where $\rtrain \neq \rinf $) by leveraging a \emph{graph of relations}, i.e.\ a graph where each node corresponds to a distinct relation type\footnote{We also add inverse relations as nodes to the relation graph.} in the original graph. While relations at inference time are different, their interactions remain the same and are captured by the graph of relations. For example, Figure~\ref{fig:approach}(b), a \emph{tail} node of the \emph{authored} relation is also a \emph{head} node of the \emph{genre} relation. Hence, \emph{authored} and \emph{genre} nodes are connected by the \emph{tail-to-head} edge in the relation graph. Similarly, \emph{authored} and \emph{collab} share the same \emph{head} node in the entity graph and thus are connected with the \emph{head-to-head} edge in the relation graph. Overall, we distinguish \textbf{four} such core, \emph{fundamental} relation-to-relation interactions\footnote{Other strategies for capturing relation-to-relation interactions might exist beside those four types and we leave their exploration for future work.}: \emph{tail-to-head (t2h)}, \emph{head-to-head (h2h)}, \emph{head-to-tail (h2t)}, and \emph{tail-to-tail (t2t)}. Albeit relations in the inference graph in Figure~\ref{fig:approach}(b) are different, their graph of relations and relation interactions resemble that of the training graph. Hence, we could leverage the \textbf{invariance of the relational structure} and four fundamental relations to obtain relational representations of the unseen inference graph. As a typical knowledge graph reasoning task $(h, q, ?)$ is conditioned on a query relation $q$, it is possible to build representations of all relations \emph{relative} to the query $q$ by using a labeling trick on top of the graph of relations. Such \textbf{relative relation representations} do not need any input features and naturally generalize to any multi-relational graph.

Practically (Figure~\ref{fig:method_ultra}), given a query $(h, q, ?)$ over a graph $\graph$, \textsc{Ultra} employs a three-step algorithm that we describe in the following subsections. (1) Lift the original graph $\graph$ to the graph of relations $\grel$ -- Section~\ref{subsec:rel_graph}; (2) Obtain relative relation representations $\mR_q|(q, \grel)$ conditioned on the query relation $q$ in the relation graph $\grel$ -- Section~\ref{subsec:rel_representations}; (3) Using the relation representations $\mR_q$ as starting relation features, run inductive link prediction on the original graph $\graph$ -- Section~\ref{subsec:inductive_lp}.

\begin{figure}[t]
    \centering
    \includegraphics[width=\textwidth]{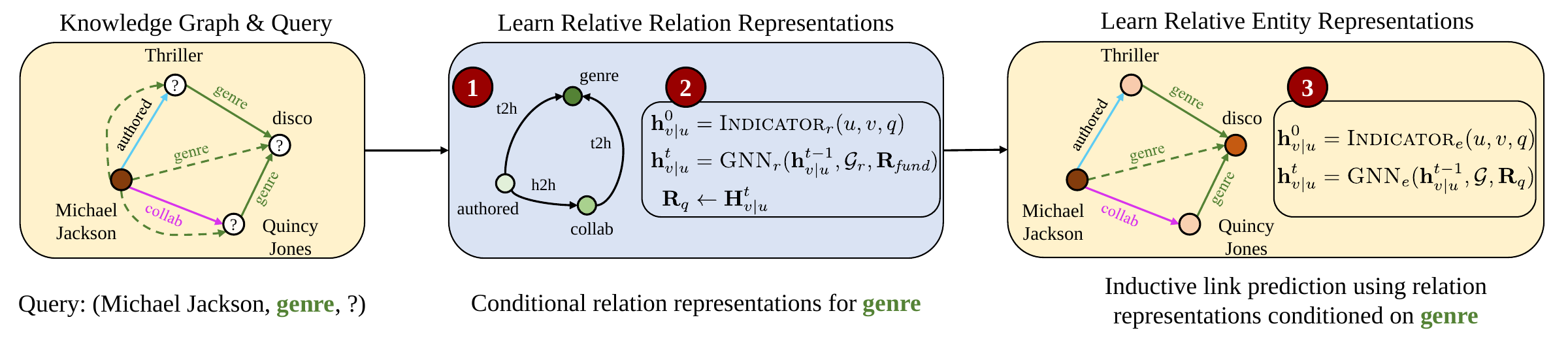}
    \caption[Overview of Ultra]{Given a query $(h,q,?)$ on graph $\gG$, \textsc{Ultra} (1) builds a graph of relations $\grel$ with four interactions $\rfund$ (Section~\ref{subsec:rel_graph}); (2) builds relation representations $\mR_q$ conditioned on the query relation $q$ and $\grel$ (Section~\ref{subsec:rel_representations}); (3) runs any inductive link predictor on $\gG$ using representations $\mR_q$ (Section~\ref{subsec:inductive_lp}). }
    \label{fig:method_ultra}
\end{figure}

\subsection{Relation Graph Construction}
\label{subsec:rel_graph}

Given a graph $\graph = (\gV, \gR, \gE)$, we first apply the lifting function $\grel = \textsc{Lift}(\graph)$ to build a graph of relations $\grel = (\gR, \rfund, \gE_r)$ where each node is a distinct relation type\footnote{$2|\rels|$ nodes after adding inverse relations to the original graph.} in $\graph$.
Edges $\gE_r  \in (\rels \times \rfund \times \rels)$ in the relation graph $\grel$ denote interactions between relations in the original graph $\graph$, and we distinguish four such fundamental relation interactions $\gR_{\textit{fund}}$:  \emph{tail-to-head (t2h)} edges, \emph{head-to-head (h2h)} edges, \emph{head-to-tail (h2t)} edges, and \emph{tail-to-tail (t2t)} edges. The full adjacency tensor of the relation graph is $\mA_r \in \sR^{|\rels| \times |\rels| \times 4}$.

\smallskip \noindent \textbf{Efficient Implementation via Sparse Matrix Multiplication.}
The graph of relations $\grel$ can be efficiently computed from the original graph $\gG$ with sparse matrix multiplications (spmm). Given the original graph $\gG$ with $|\gV|$ nodes and $|\gR|$ relation types, its adjacency matrix is $\mA \in \sR^{|\gV| \times |\gR| \times |\gV|}$.
For clarity, $\mA$ can be rewritten with \emph{heads} $\gH$ and \emph{tails} $\gT$ as $\mA \in \sR^{|\gH| \times |\gR| \times |\gT|}$. 
From $\mA$ we first build two sparse matrices $\mE_h \in \sR^{|\gH| \times |\gR|}$ and $\mE_t \in \sR^{|\gT| \times |\gR|}$ that capture the head-relation and tail-relation pairs, respectively. Computing interactions between relations is then equivalent to spmm operations between relevant adjacencies
\begin{align}
    \mA_{h2h} &= \text{spmm}(\mE_h^T, \mE_h) \in \sR^{|\gR| \times |\gR|} \\
    \mA_{t2t} &= \text{spmm}(\mE_t^T, \mE_t) \in \sR^{|\gR| \times |\gR|} \\
    \mA_{h2t} &= \text{spmm}(\mE_h^T, \mE_t) \in \sR^{|\gR| \times |\gR|} \\
    \mA_{t2h} &= \text{spmm}(\mE_t^T, \mE_h) \in \sR^{|\gR| \times |\gR|}  \\
    \mA_r &= [\mA_{h2h}, \mA_{t2t}, \mA_{h2t}, \mA_{t2h}] \in \sR^{|\gR| \times |\gR| \times 4}
\end{align}
where the final adjacency tensor $\mA_r$ is obtained by stacking adajcencies from four fundamental interactions.

\subsection{Conditional Relation Representations}
\label{subsec:rel_representations}

Given a query $(h, q, ?)$ and a relation graph $\grel$, we then obtain $d$-dimensional node representations $\mR_q \in \sR^{|\gR| \times d}$ of $\grel$ (corresponding to all edge types $\rels$ in the original graph $\graph$) conditioned on the query relation $q$. Practically, we implement conditioning by applying a labeling trick to initialize the node $q$ in $\grel$ through the $\textsc{Indicator}_r$ function and employ a  message passing GNN over $\grel$:
\begin{align}
    \vh^0_{v|q}  &= \textsc{Indicator}_r(v, q) = \mathbbm{1}_{v=q} * \1^d, \quad v \in \mathcal{G}_r\\
    \vh^{t+1}_{v|q} &= \textsc{Update} \Big( \vh^{t}_{v|q}, \textsc{Aggregate} \big( \textsc{Message}(\vh^t_{w|q}, \vr) | w \in \gN_r(v), r \in \rfund \big) \Big)
\end{align}
The indicator function is implemented as $\textsc{Indicator}_r(v,q) = \mathbbm{1}_{v=q} * \1^d$ that simply puts a vector of ones on a node $v$ corresponding to the query relation $q$, and zeros otherwise. Following \cite{huang2024theory}, we found that all-ones labeling with $\1^d$ generalizes better to unseen graphs of various sizes than a learnable vector. The GNN architecture (denoted as $\text{GNN}_r$ as it operates on the relation graph $\grel$) follows NBFNet~\cite{zhu2021neural} with a non-parametric DistMult~\cite{yang2015embedding} message function and sum aggregation. The only learnable parameters in each layer are embeddings of four fundamental interactions $\vrfund \in \sR^{4 \times d}$, a linear layer for the \textsc{Update} function, and an optional layer normalization. Note that our inductive setup assumes no given input entity or relation features, so our parameterization strategy can be used to obtain relational representations of \emph{any} multi-relational graph.

To sum up, each unique relation $q \in \gR$ in the query has its own matrix of conditional relation representations $\mR_q \in \sR^{|\rels| \times d}$ used by the entity-level reasoner for downstream applications.

\subsection{Entity-level Link Prediction}
\label{subsec:inductive_lp}

Given a query $(h, q, ?)$ over a graph $\graph$ and conditional relation representations $\mR_q$ from the previous step, it is now possible to adapt any off-the-shelf inductive link predictor that only needs relational features~\cite{zhu2021neural, zhang2022knowledge, zhu2023net, zhang2023adaprop} to balance between performance and scalability. We modify another instance of NBFNet ($\text{GNN}_e$ as it operates on the entity level) to account for separate relation representations per query:
\begin{align}
    \vh^0_{v|u}  &= \textsc{Indicator}_e(u, v, q) = \mathbbm{1}_{u=v} * \mR_q[q], \quad v \in \mathcal{G} \\
    \vh^{t+1}_{v|u} &= \textsc{Update} \Big( \vh^{t}_{v|u}, \textsc{Aggregate} \big( \textsc{Message}(\vh^t_{w|u}, g^{t+1}(\vr)) | w \in \gN_r(v), r \in \rels \big) \Big)
\end{align}
That is, we first initialize the head node $h$ with the query vector $q$ from $\mR_q$ whereas other nodes are initialized with zeros. Each $t$-th GNN layer applies a non-linear function $g^t(\cdot)$ to transform original relation representations to layer-specific relation representations as $\mR^{t} = g^t(\mR_q)$ from which the edge features are taken for the \textsc{Message} function. $g(\cdot)$ is implemented as a 2-layer MLP with ReLU. Similar to $\text{GNN}_r$ in Section~\ref{subsec:rel_representations}, we use sum aggregation and a linear layer for the \textsc{Update} function. After message passing, the final MLP $s: \sR^d \rightarrow \sR^1$ maps the node states to logits $p(h, q, v)$ denoting the score of a node $v$ to be a tail of the initial query $(h, q, ?)$.

\smallskip \noindent \textbf{Training.}
\textsc{Ultra} can be trained on any multi-relational graph or mixture of graphs thanks to the inductive and conditional relational representations. Following the standard practices in the literature~\cite{sun2019rotate, zhu2021neural}, \textsc{Ultra} is trained by minimizing the binary cross entropy loss over positive and negative triplets
\begin{align}
    \gL = - \log p(u, q, v) - \sum_{i=1}^{n} \frac{1}{n} \log (1 - p(u_i', q, v_i'))
    \label{eqn:training_loss}
\end{align}
where $(u,q,v)$ is a positive triple in the graph and $\{(u_i', q, v_i')\}^n_{i=1}$ are negative samples obtained by corrupting either the head $u$ or tail $v$ of the positive sample.
\section{Experiments}
\label{sec:experiments}

To evaluate the qualities of \textsc{Ultra} as a foundation model for knowledge graph reasoning, we explore the following questions: (1) Is pre-trained \textsc{Ultra} able to inductively generalize to unseen knowledge graphs in the zero-shot manner? (2) Are there any benefits from fine-tuning \textsc{Ultra} on a specific dataset?  (3) How does a single pre-trained \textsc{Ultra} model compare to models trained from scratch on each target dataset? (4) Do more graphs in the pre-training mix correspond to better performance?

\subsection{Setup and Datasets}
\label{subsec:datasets}

\smallskip \noindent \textbf{Datasets.}
We conduct a broad evaluation on 57 different knowledge graphs with reported, non-saturated results on the knowledge graph completion task. The datasets can be categorized into three groups:
\begin{itemize}[label=$\bullet$, leftmargin=*]
    \item \emph{Transductive} datasets (16 graphs) with the fixed set of entities and relations at training and inference time $(\gtrain = \ginf)$: FB15k-237~\cite{toutanova2015observed}, WN18RR~\cite{dettmers2018convolutional}, YAGO3-10~\cite{mahdisoltani2014yago3}, NELL-995~\cite{xiong2017deeppath}, CoDEx (Small, Medium, and Large)~\cite{safavi2020codex}, WDsinger, NELL23k, FB15k-237(10), FB15k-237(20), FB15k-237(50)~\cite{lv2020dynamic}, AristoV4~\cite{chen2021relation}, DBpedia100k~\cite{ding2018improving}, ConceptNet100k~\cite{malaviya2020commonsense}, Hetionet~\cite{himmelstein2017systematic}.
    \item \emph{Inductive entity} ($e$) datasets (18 graphs) with new entities at inference time but with the fixed set of relations $(\etrain \neq \einf, \rtrain = \rinf)$: 12 datasets from GraIL~\cite{teru2020inductive}, 4 graphs from INDIGO~\cite{liu2021indigo, hamaguchi2017knowledge}, and 2 ILPC 2022 datasets (Small and Large)~\cite{galkin2022open}. 
    \item \emph{Inductive entity and relation} ($e,r$) datasets (23 graphs) where both entities and relations at inference are new $(\etrain \neq \einf, \rtrain \neq \rinf)$: 13 graphs from \textsc{InGram}~\cite{lee2023ingram} and 10 graphs from MTDEA~\cite{zhou2023ood}.
\end{itemize}
In practice, however, a pre-trained \textsc{Ultra} operates in the \emph{inductive $(e,r)$} mode on all datasets (apart from those in the training mixture) as their sets of entities, relations, and relational structures are different from the training set. The dataset sizes vary from 1k to 120k entities and 1k-2M edges in the inference graph. We provide more details on the datasets in Section~\ref{app:dataset_ultra}.

\smallskip \noindent \textbf{Pretraining and Fine-tuning.}
\textsc{Ultra} is pre-trained on the mixture of 3 standard knowledge graphs (WN18RR, CoDEx-Medium, FB15k-237) to capture the variety of possible relational structures and sparsities in respective relational graphs $\grel$. \textsc{Ultra} is relatively small (177k parameters in total, with 60k parameters in $\text{GNN}_r$ and 117k parameters in $\text{GNN}_e$) and is trained for 200,000 steps with batch size of 64 with AdamW optimizer on 2 A100 (40 GB) GPUs. All fine-tuning experiments were done on a single RTX 3090 GPU.

\smallskip \noindent \textbf{Evaluation Protocol.}
We report Mean Reciprocal Rank (MRR) and Hits@10 (H@10) as the main performance metrics evaluated against the full entity set of the inference graph. For each triple, we report the results of predicting both heads and tails. Only in three datasets from \cite{lv2020dynamic} we report tail-only metrics similar to the baselines. In the zero-shot inference scenario, we run a pre-trained model on the inference graph and test set of triples. In the fine-tuning case, we further train the model on the training split of each dataset retaining the checkpoint of the best validation set MRR. We run zero-shot inference experiments once as the results are deterministic, and report an average of 5 runs for each fine-tuning run on each dataset. 

\smallskip \noindent \textbf{Baselines.}
On each graph, we compare \textsc{Ultra} against the reported state-of-the-art model (we list SOTA for all 57 graphs in Section~\ref{app:dataset_ultra}). To date, all of the reported SOTA models are trained end-to-end specifically on each target dataset. Due to the computational complexity of baselines, the only existing results on 4 MTDEA datasets~\cite{zhou2023ood} and 4 INDIGO datasets~\cite{liu2021indigo} report Hits@10 against 50 randomly chosen negatives. We compare \textsc{Ultra} against those baselines using this \emph{Hits@10 (50 negs)} metric as well as report the full performance on the whole entity sets.

\begin{figure}[t]
    \centering
    \includegraphics[width=\textwidth]{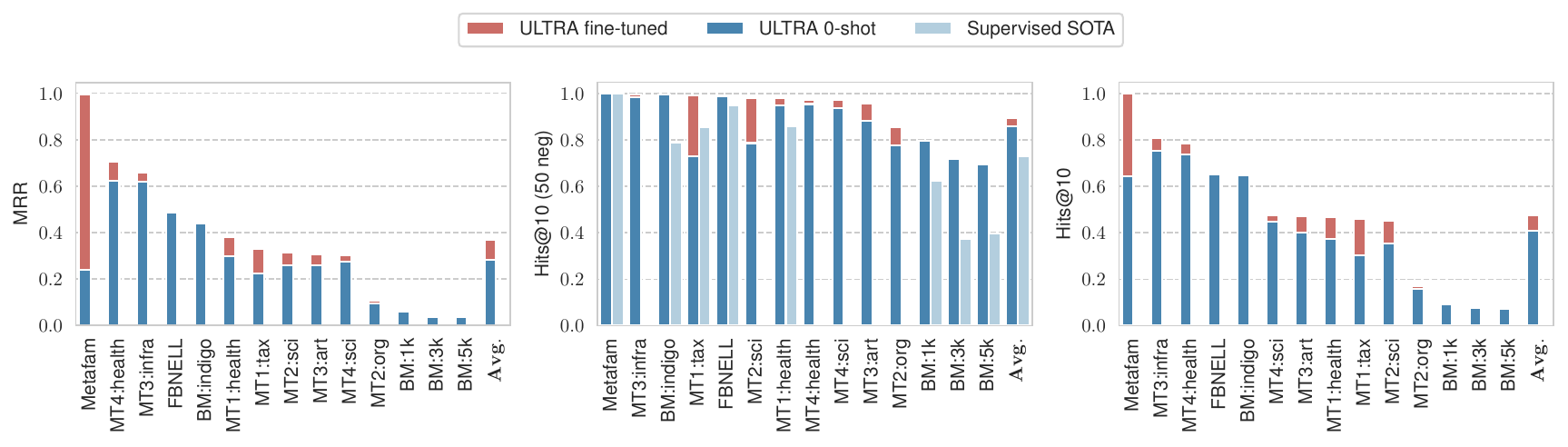}
    \caption[Performance of Ultra on 14 MTDEA and INDIGO datasets]{\textsc{Ultra} performance on 14 inductive datasets from MTDEA~\cite{zhou2023ood} and INDIGO~\cite{liu2021indigo} for 8 of which only an approximate metric \emph{Hits@10 (50 negs)} is available (center). We also report full MRR (left) and Hits@10 (right) computed on the entire entity sets demonstrating that Hits@10 (50 negs) overestimates the real performance.}
    \label{fig:mtdea}
\end{figure}

\subsection{Main Results}
\label{subsec:main_res}

The main experiment reports how \textsc{Ultra} pre-trained on 3 graphs inductively generalizes to 54 other graphs both in the zero-shot (0-shot) and fine-tuned cases. Figure~\ref{fig:main_result} compares \textsc{Ultra} with supervised SOTA baselines on 43 graphs that report MRR on the full entity set. Figure~\ref{fig:mtdea} presents the comparison on the rest 14 graphs including 8 graphs for which the baselines report \emph{Hits@10 (50 negs)}. The aggregated results on 51 graphs with available baseline results are presented in Table~\ref{tab:main1} and the complete evaluation on 57 graphs grouped into three families according to Section~\ref{subsec:datasets} is in Table~\ref{tab:main2}.

\begin{table}[t]
    \centering
    \caption[Zero-shot and fine-tuned performance of Ultra on 51 datasets]{Zero-shot and fine-tuned performance of \textsc{Ultra} compared to the published supervised SOTA on 51 datasets (as in Figure~\ref{fig:main_result} and Figure~\ref{fig:mtdea}). The zero-shot \textsc{Ultra} outperforms supervised baselines on average and on inductive datasets. Fine-tuning improves the performance even further. We report pre-training performance to the fine-tuned version.}
    \begin{adjustbox}{width=\textwidth}
        \begin{tabular}{lcccccc||cc||cc}\toprule
            \multirow{3}{*}{\bf{Model}} & \multicolumn{2}{c}{\bf{Inductive} $(e) + (e, r)$} & \multicolumn{2}{c}{\bf{Transductive} $e$} & \multicolumn{2}{c}{\bf{Total Avg}} & \multicolumn{2}{c}{\bf{Pretraining}} & \multicolumn{1}{c}{\bf{Inductive $(e) + (e, r)$}} \\  
            & \multicolumn{2}{c}{(27 graphs)} & \multicolumn{2}{c}{(13 graphs)} & \multicolumn{2}{c}{(40 graphs)} & \multicolumn{2}{c}{(3 graphs)} & \multicolumn{1}{c}{(8 graphs)} \\ \cmidrule(l){2-3} \cmidrule(l){4-5} \cmidrule(l){6-7} \cmidrule(l){8-9} \cmidrule(l){10-10}
            & \bf{MRR} & \bf{H@10} & \bf{MRR} & \bf{H@10} & \bf{MRR} & \bf{H@10} & \bf{MRR} & \bf{H@10} & \multicolumn{1}{c}{\bf{Hits@10 (50 negs)}} \\ 
            \midrule
            Supervised SOTA & 0.342 & 0.482 &0.348 & 0.494 &0.344 & 0.486 & \bf{0.439} & \bf{0.585} & 0.731 \\
            \textsc{Ultra} 0-shot & 0.435 &0.603 &0.312 &0.458 &0.395 &0.556 & - & - & 0.859 \\
            \textsc{Ultra} fine-tuned & \bf{0.443} & \bf{0.615} & \bf{0.379} & \bf{0.543} & \bf{0.422} & \bf{0.592}  & 0.407 & 0.568  & \bf{0.896} \\
            \bottomrule
        \end{tabular}
    \end{adjustbox}
    \label{tab:main1}
\end{table}

\begin{table}[t]
    \centering
    \caption[Performance of Ultra on 57 graphs grouped by the dataset category]{Zero-shot and fine-tuned \textsc{Ultra} results on the complete set of 57 graphs grouped by the dataset category. Fine-tuning especially helps on larger transductive datasets and boosts the total average MRR by 10\%. Additionally, we report as \emph{(train e2e)} the average performance of  dataset-specific \textsc{Ultra} models trained from scratch on each graph.}
    \begin{adjustbox}{width=\textwidth}
        \begin{tabular}{lcccccccc||ccc}
            \toprule
            \multirow{3}{*}{\bf{Model}} & \multicolumn{2}{c}{\bf{Inductive} $e, r$} & \multicolumn{2}{c}{\bf{Inductive} $e$} & \multicolumn{2}{c}{\bf{Transductive}} & \multicolumn{2}{c}{\bf{Total Avg}} & \multicolumn{2}{c}{\bf{Pretraining}} \\  
            & \multicolumn{2}{c}{(23 graphs)} & \multicolumn{2}{c}{(18 graphs)} & \multicolumn{2}{c}{(13 graphs)} & \multicolumn{2}{c}{(54 graphs)} & \multicolumn{2}{c}{(3 graphs)} \\ \cmidrule(l){2-3} \cmidrule(l){4-5} \cmidrule(l){6-7} \cmidrule(l){8-9} \cmidrule(l){10-11}
            & \bf{MRR} & \bf{H@10} & \bf{MRR} & \bf{H@10} & \bf{MRR} & \bf{H@10} & \bf{MRR} & \bf{H@10} & \multicolumn{1}{c}{\bf{MRR}} & \bf{H@10} \\ 
            \midrule
            \textsc{Ultra} (train e2e) &0.392 &0.552 &0.402 &0.559 &0.384 &0.545 &0.393 &0.552 & 0.403 & 0.562 \\ \midrule
            \textsc{Ultra} 0-shot &0.345 &0.513 &0.431 &0.566 &0.312 &0.458 &0.366 &0.518 & - & - \\
            \textsc{Ultra} fine-tuned & 0.397 & 0.556 & 0.442 & 0.582 &0.379 & 0.543 & 0.408 & 0.562 & 0.407 & 0.568 \\
            \bottomrule
        \end{tabular}
    \end{adjustbox}
\label{tab:main2}
\end{table}

On average, \textsc{Ultra} outperforms the baselines even in the 0-shot inference scenario both in MRR and Hits@10. The largest gains are achieved on smaller inductive graphs, e.g.\ on FB-25 and FB-50 0-shot \textsc{Ultra} yields almost $3\times$ better performance (291\% and 289\%, respectively). During pre-training, \textsc{Ultra} does not reach the baseline performance (0.407 vs 0.439 average MRR) and we link that with the lower 0-shot inference results on larger transductive graphs. However, fine-tuning \textsc{Ultra} effectively bridges this gap and surpasses the baselines. We hypothesize that in larger transductive graphs fine-tuning helps to adapt to different graph sizes (training graphs have 15-40k nodes while larger inference ones grow up to 123k nodes).

Following the sample efficiency and fast convergence of NBFNet~\cite{zhu2021neural}, we find that 1000-2000 steps are enough for fine-tuning \textsc{Ultra}. In some cases, fine-tuning brings marginal improvements or marginal negative effects. Averaged across 54 graphs (Table~\ref{tab:main2}), fine-tuned \textsc{Ultra} brings further 10\% relative improvement over the zero-shot version.

\subsection{Ablation Studies}
\label{subsec:ablation}

\begin{figure}[t]
    \centering
    \includegraphics[width=\textwidth]{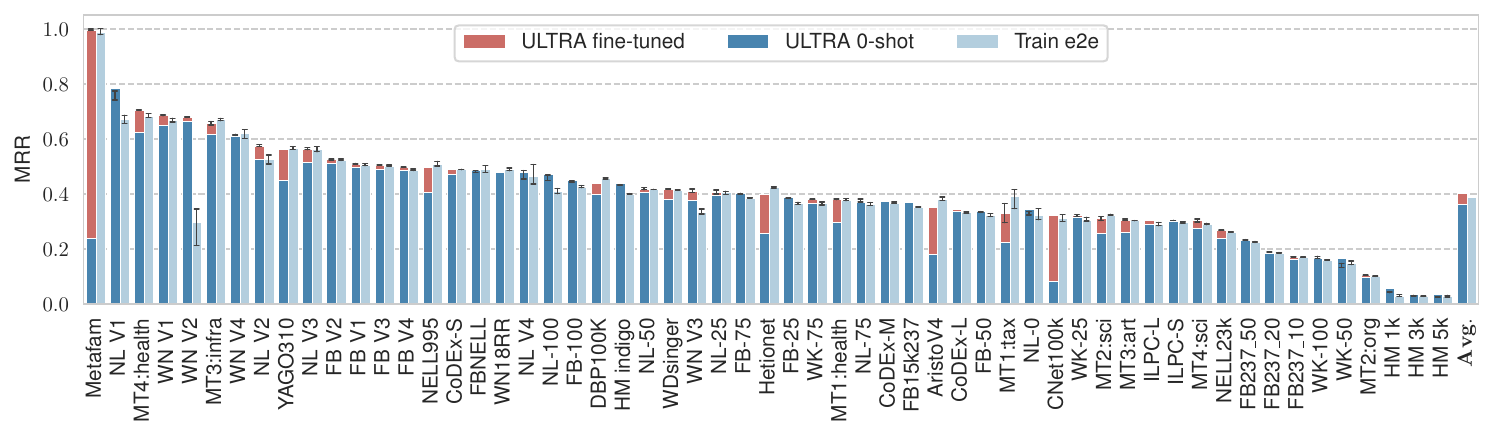}
    \caption[Comparison of Ultra against training a model from scratch]{Comparison of zero-shot and fine-tuned \textsc{Ultra} per-dataset performance against training a model from scratch on each dataset \emph{(Train e2e)}. Zero-shot performance of a single pre-trained model is on par with training from scratch while fine-tuning yields overall best results. }
    \label{fig:scratch}
\end{figure}

We performed several experiments to better understand the pre-training quality of \textsc{Ultra} and measure the impact of conditional relation representations on the performance.

\smallskip \noindent \textbf{Positive transfer from pre-training.}
We first study how a single pre-trained \textsc{Ultra} model compares to training instances of the same model separately on each graph end-to-end. For that, for each of 57 graphs, we train 3 \textsc{Ultra} instances of the same configuration and different random seeds until convergence and report the averaged results in Table~\ref{tab:main2} with per-dataset comparison in Figure~\ref{fig:scratch}. We find that, on average, a single pre-trained \textsc{Ultra} model in the zero-shot regime performs almost on par with the trained separate models, lags behind those on larger transductive graphs and exhibits better performance on inductive datasets. Fine-tuning a pre-trained \textsc{Ultra} shows overall the best performance and requires significantly less computational resources than training a model from scratch on every target graph.

\begin{wrapfigure}{r}{0.367\textwidth}
    \centering
    \vspace{-1.5em}
    \includegraphics[width=0.33\textwidth]{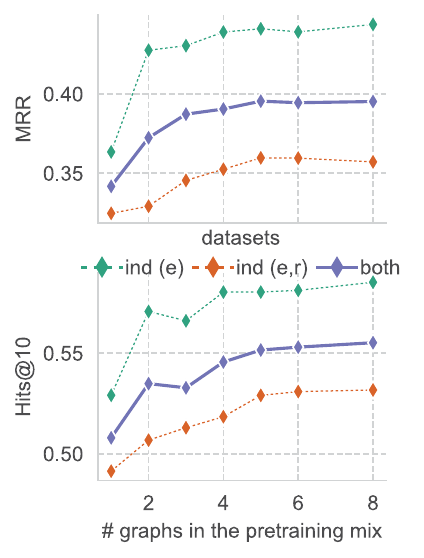}
    \caption[Zero-shot performance w.r.t.\ \# graphs in pre-training]{Averaged 0-shot performance on inductive datasets w.r.t.\ \# graphs in pre-training.}
    \label{fig:abl_num_graphs}
    \vspace{-1em}
\end{wrapfigure}

\smallskip \noindent \textbf{Number of graphs in the pre-training mix.}
We then study how inductive inference performance depends on the training mixture. While the main \textsc{Ultra} model was trained on the mixture of three graphs, here we train more models varying the amount of knowledge graphs in the training set from a single FB15k-237 to a combination of 8 transductive knowledge graphs. For the fair comparison, we evaluate pre-trained models in the zero-shot regime only on inductive datasets (41 graphs overall). The results are presented in Figure~\ref{fig:abl_num_graphs} where we observe the saturation of performance having more than three graphs in the mixture. We hypothesize that getting higher inference performance is tied up with model capacity, scale, and optimization. We leave that study along with more principled approached to selecting a pre-training mix for future work.

\smallskip \noindent \textbf{Conditional vs unconditional relation graph encoding.}
To measure the impact of the graph of relations and conditional relation representations, we pre-train three more models on the same mixture of three graphs varying several components: (1) we exclude four fundamental relation interactions (\emph{h2h}, \emph{h2t}, \emph{t2h}, \emph{t2t}) from the relation graph making it homogeneous and single-relational; (2) a homogeneous relation graph with an \emph{unconditional} GNN encoder following the R-GATv2 architecture from the previous SOTA approach, InGram~\cite{lee2023ingram}. The unconditional GNN needs input node features and we probed two strategies: Glorot initialization used in \cite{lee2023ingram} and initializing all nodes with a vector of ones $\1^d$. 

The results are presented in Table~\ref{tab:ablations} and indicate that ablated models struggle to reach the same pre-training performance and exhibit poor zero-shot generalization performance across all groups of graphs, e.g.\ up to 48\% relative MRR drop (0.192 vs 0.366) on the model with a homogeneous relation graph and randomly initialized node states with the unconditional R-GATv2 encoder. We therefore posit that conditional representations (both on relation and entity levels) are crucial for transferable representations for link prediction tasks that often require pairwise representations to break neighborhood symmetries. 

\begin{table}[t]
    \centering
    \caption[Ablation studies of Ultra]{Ablation studies: pre-training and zero-shot inference results of the main \textsc{Ultra}, \textsc{Ultra} without edge types in the relation graph (no etypes), \textsc{Ultra} without edge types and with InGram-like~\cite{lee2023ingram} unconditional GNN over relation graph where nodes are initialized with all ones (ones) or with Glorot initialization (random). Averaged results over 3 categories of datasets.}
    \label{tab:ablations}
    \begin{adjustbox}{width=\textwidth}
        \begin{tabular}{lcccccccc||ccc}
            \toprule
            \multirow{3}{*}{\bf{Model}} & \multicolumn{2}{c}{\bf{Inductive} $e, r$} & \multicolumn{2}{c}{\bf{Inductive} $e$} & \multicolumn{2}{c}{\bf{Transductive}} & \multicolumn{2}{c}{\bf{Total Avg}} & \multicolumn{2}{c}{\bf{Pretraining}} \\  
            & \multicolumn{2}{c}{(23 graphs)} & \multicolumn{2}{c}{(18 graphs)} & \multicolumn{2}{c}{(13 graphs)} & \multicolumn{2}{c}{(54 graphs)} & \multicolumn{2}{c}{(3 graphs)} \\ \cmidrule(l){2-3} \cmidrule(l){4-5} \cmidrule(l){6-7} \cmidrule(l){8-9} \cmidrule(l){10-11}
            & \bf{MRR} & \bf{H@10} & \bf{MRR} & \bf{H@10} & \bf{MRR} & \bf{H@10} & \bf{MRR} & \bf{H@10} & \multicolumn{1}{c}{\bf{MRR}} & \bf{H@10} \\ 
            \midrule
            \textsc{Ultra} &0.345 &0.513 &0.431 &0.566 &0.312 &0.458 &0.366 &0.518 &0.407 &0.568 \\
            - no etypes in rel. graph &0.292 &0.466 &0.389 &0.539 &0.258 &0.409 &0.316 &0.477 &0.357 &0.517 \\
            \begin{tabular}[c]{@{}l@{}} - no etypes, \\ \: - uncond. GNN (ones)\end{tabular}
            &0.187 &0.328 &0.262 &0.430 &0.135 &0.257 &0.199 &0.345 &0.263 &0.424 \\
            \begin{tabular}[c]{@{}l@{}} - no etypes, \\ \: - uncond. GNN (random)\end{tabular}
            &0.177 &0.309 &0.250 &0.417 &0.138 &0.255 &0.192 &0.332 &0.266 &0.433 \\
            \bottomrule
        \end{tabular}
    \end{adjustbox}
\end{table}
\section{Dataset Statistics}
\label{app:dataset_ultra}

We conduct evaluation on 57 openly available KGs of various sizes and three groups, i.e.\ tranductive, inductive with new entities, and inductive with both new entities and relations at inference time. The statistics for 16 transductive datasets are presented in Table~\ref{tab:app_datasets_transd}, 18 inductive entity datasets in Table~\ref{tab:app_datasets_inde}, and 23 inductive entity and relation datasets in Table~\ref{tab:app_datasets_indr_ultra}. For each dataset, we also list a currently published state-of-the-art model that, at the moment, are all trained specifically on each target graph. Performance of those SOTA models is aggregated as \emph{Supervised SOTA} in the results reported in the tables and figures. We omit smaller datasets (Kinships, UMLS, Countries, Family) with saturated performance as non-representative.

For the inductive datasets HM 1k, HM 3k, and HM 5k used in \cite{hamaguchi2017knowledge} and \cite{liu2021indigo}, we report the performance of predicting both heads and tails (noted as \emph{b-1K}, \emph{b-3K}, \emph{b-5K} in \cite{liu2021indigo}) and compare against the respective baselines. Some inductive datasets (MT2, MT3, MT4) from MTDEA~\cite{zhou2023ood} do not have reported entity-only KG completion performance. For Hetionet, we used the splits available in TorchDrug~\cite{zhu2022torchdrug} and compare with the baseline RotatE reported by TorchDrug.

\begin{table}[!h]
    \centering
    \caption[Transductive datasets (16) used in the experiments]{Transductive datasets (16) used in the experiments. Train, Valid, Test denote triples in the respective set. Task denotes the prediction task: \emph{h/t} is predicting both heads and tails, \emph{tails} is only predicting tails. SOTA points to the best reported result.}
    \label{tab:app_datasets_transd}
    \begin{adjustbox}{width=\textwidth}
        \begin{tabular}{lccccccccc}\toprule
            \bf{Dataset} & \bf{Entities} & \bf{Relations} & \bf{Train} & \bf{Valid} & \bf{Test} & \bf{Task} & \bf{SOTA} \\\midrule
            CoDEx Small~\cite{safavi2020codex} &2034 &42 &32888 &1827 &1828 & h/t & ComplEx RP~\cite{chen2021relation}\\
            WDsinger~\cite{lv2020dynamic} &10282 &135 &16142 &2163 &2203 & h/t & LR-GCN~\cite{he2023exploring} \\
            FB15k-237\_10~\cite{lv2020dynamic} &11512 &237 &27211 &15624 &18150 & tails & DacKGR~\cite{lv2020dynamic}\\
            FB15k-237\_20~\cite{lv2020dynamic} &13166 &237 &54423 &16963 &19776 & tails & DacKGR~\cite{lv2020dynamic}\\
            FB15k-237\_50~\cite{lv2020dynamic} &14149 &237 &136057 &17449 &20324 & tails & DacKGR~\cite{lv2020dynamic}\\
            FB15k-237~\cite{toutanova2015observed} &14541 &237 &272115 &17535 &20466 & h/t & NBFNet~\cite{zhu2021neural}\\
            CoDEx Medium~\cite{safavi2020codex} &17050 &51 &185584 &10310 &10311 & h/t & ComplEx RP~\cite{chen2021relation} \\
            NELL23k~\cite{lv2020dynamic} &22925 &200 &25445 &4961 &4952 & h/t & LR-GCN~\cite{he2023exploring} \\
            WN18RR~\cite{dettmers2018convolutional} &40943 &11 &86835 &3034 &3134 & h/t & NBFNet~\cite{zhu2021neural}\\
            AristoV4~\cite{chen2021relation} &44949 &1605 &242567 &20000 &20000 & h/t & ComplEx RP~\cite{chen2021relation} \\
            Hetionet~\cite{himmelstein2017systematic} &45158 &24 &2025177 &112510 &112510 & h/t & RotatE~\cite{sun2019rotate} \\
            NELL995~\cite{xiong2017deeppath} &74536 &200 &149678 &543 &2818 & h/t & RED-GNN~\cite{zhang2022knowledge}\\
            CoDEx Large~\cite{safavi2020codex} &77951 &69 &551193 &30622 &30622 & h/t & ComplEx RP~\cite{chen2021relation}\\
            ConceptNet100k~\cite{malaviya2020commonsense} &78334 &34 &100000 &1200 &1200 & h/t & BiQUE~\cite{guo2021bique} \\
            DBpedia100k~\cite{ding2018improving} &99604 &470 &597572 &50000 &50000 & h/t & ComplEx-NNE+AER~\cite{ding2018improving} \\
            YAGO310~\cite{mahdisoltani2014yago3} &123182 &37 &1079040 &5000 &5000 & h/t &  NBFNet~\cite{zhu2021neural}\\
            \bottomrule
        \end{tabular}
    \end{adjustbox}
\end{table}

\pagebreak

\begin{table}[!h]
    \caption[Inductive entity $(e)$ datasets (18) used in the experiments]{Inductive entity $(e)$ datasets (18) used in the experiments. Triples denote the number of edges of the graph given at training, validation, or test. Valid and Test denote triples to be predicted in the validation and test sets in the respective validation and test graph.}
    \label{tab:app_datasets_inde}
    \begin{adjustbox}{width=\textwidth}
        \begin{tabular}{lccccccccccc}\toprule
            \multirow{2}{*}{\bf{Dataset}} &\multirow{2}{*}{\bf{Relations}} &\multicolumn{2}{c}{\bf{Training Graph}} &\multicolumn{3}{c}{\bf{Validation Graph}} &\multicolumn{3}{c}{\bf{Test Graph}} & \multirow{2}{*}{\bf{SOTA}} \\ \cmidrule(l){3-4} \cmidrule(l){5-7} \cmidrule(l){8-10}
            & & \bf{Entities} & \bf{Triples} & \bf{Entities} & \bf{Triples} & \bf{Valid}  & \bf{Entities} & \bf{Triples} & \bf{Test}  \\\midrule
            FB v1~\cite{teru2020inductive} &180 &1594 &4245 &1594 &4245 &489 &1093 &1993 &411 & A*Net~\cite{zhu2023net} \\
            FB v2~\cite{teru2020inductive} &200 &2608 &9739 &2608 &9739 &1166 &1660 &4145 &947 & NBFNet~\cite{zhu2021neural} \\
            FB v3~\cite{teru2020inductive} &215 &3668 &17986 &3668 &17986 &2194 &2501 &7406 &1731 & NBFNet~\cite{zhu2021neural} \\
            FB v4~\cite{teru2020inductive} &219 &4707 &27203 &4707 &27203 &3352 &3051 &11714 &2840 & A*Net~\cite{zhu2023net} \\
            WN v1~\cite{teru2020inductive} &9 &2746 &5410 &2746 &5410 &630 &922 &1618 &373 & NBFNet~\cite{zhu2021neural}\\
            WN v2~\cite{teru2020inductive} &10 &6954 &15262 &6954 &15262 &1838 &2757 &4011 &852 & NBFNet~\cite{zhu2021neural} \\
            WN v3~\cite{teru2020inductive} &11 &12078 &25901 &12078 &25901 &3097 &5084 &6327 &1143 & NBFNet~\cite{zhu2021neural}\\
            WN v4~\cite{teru2020inductive} &9 &3861 &7940 &3861 &7940 &934 &7084 &12334 &2823 & A*Net~\cite{zhu2023net} \\
            NELL v1~\cite{teru2020inductive} &14 &3103 &4687 &3103 &4687 &414 &225 &833 &201 & RED-GNN~\cite{zhang2022knowledge}\\
            NELL v2~\cite{teru2020inductive} &88 &2564 &8219 &2564 &8219 &922 &2086 &4586 &935 & RED-GNN~\cite{zhang2022knowledge}\\
            NELL v3~\cite{teru2020inductive} &142 &4647 &16393 &4647 &16393 &1851 &3566 &8048 &1620 & RED-GNN~\cite{zhang2022knowledge}\\
            NELL v4~\cite{teru2020inductive} &76 &2092 &7546 &2092 &7546 &876 &2795 &7073 &1447 & RED-GNN~\cite{zhang2022knowledge}\\
            ILPC Small~\cite{galkin2022open} &48 &10230 &78616 &6653 &20960 &2908 &6653 &20960 &2902 & NodePiece~\cite{galkin2022open}\\
            ILPC Large~\cite{galkin2022open} &65 &46626 &202446 &29246 &77044 &10179 &29246 &77044 &10184 & NodePiece~\cite{galkin2022open} \\
            HM 1k~\cite{hamaguchi2017knowledge} &11 &36237 &93364 &36311 &93364 &1771 &9899 &18638 &476& R-GCN~\cite{liu2021indigo} \\
            HM 3k~\cite{hamaguchi2017knowledge} &11 &32118 &71097 &32250 &71097 &1201 &19218 &38285 &1349& Indigo~\cite{liu2021indigo} \\
            HM 5k~\cite{hamaguchi2017knowledge} &11 &28601 &57601 &28744 &57601 &900 &23792 &48425 &2124& Indigo~\cite{liu2021indigo} \\
            IndigoBM~\cite{liu2021indigo} &229 &12721 &121601 &12797 &121601 &14121 &14775 &250195 &14904& GraIL~\cite{liu2021indigo} \\
            \bottomrule
        \end{tabular}
    \end{adjustbox}
\end{table}

\pagebreak

\begin{table}[!h]
    \caption[Inductive entity and relation $(e,r)$ datasets (23) used in the experiments]{Inductive entity and relation $(e,r)$ datasets (23) used in the experiments. Triples denote the number of edges of the graph given at training, validation, or test. Valid and Test denote triples to be predicted in the validation and test sets in the respective validation and test graph.}
    \label{tab:app_datasets_indr_ultra}
    \begin{adjustbox}{width=\textwidth}
        \begin{tabular}{lccccccccccccc}\toprule
            \multirow{2}{*}{\bf{Dataset}} &\multicolumn{3}{c}{\bf{Training Graph}} &\multicolumn{4}{c}{\bf{Validation Graph}} &\multicolumn{4}{c}{\bf{Test Graph}} & \multirow{2}{*}{\bf{SOTA}} \\ \cmidrule(l){2-4} \cmidrule(l){5-8} \cmidrule(l){9-12}
            & \bf{Entities} & \bf{Relations} & \bf{Triples} & \bf{Entities} & \bf{Relations} & \bf{Triples} & \bf{Valid} & \bf{Entities} & \bf{Relations} & \bf{Triples} & \bf{Test} \\\midrule
            FB-25~\cite{lee2023ingram} &5190 &163 &91571 &4097 &216 &17147 &5716 &4097 &216 &17147 &5716 & InGram~\cite{lee2023ingram} \\
            FB-50~\cite{lee2023ingram} &5190 &153 &85375 &4445 &205 &11636 &3879 &4445 &205 &11636 &3879 &InGram~\cite{lee2023ingram} \\
            FB-75~\cite{lee2023ingram} &4659 &134 &62809 &2792 &186 &9316 &3106 &2792 &186 &9316 &3106 &InGram~\cite{lee2023ingram} \\
            FB-100~\cite{lee2023ingram} &4659 &134 &62809 &2624 &77 &6987 &2329 &2624 &77 &6987 &2329 &InGram~\cite{lee2023ingram} \\
            WK-25~\cite{lee2023ingram} &12659 &47 &41873 &3228 &74 &3391 &1130 &3228 &74 &3391 &1131 &InGram~\cite{lee2023ingram} \\
            WK-50~\cite{lee2023ingram} &12022 &72 &82481 &9328 &93 &9672 &3224 &9328 &93 &9672 &3225 &InGram~\cite{lee2023ingram} \\
            WK-75~\cite{lee2023ingram} &6853 &52 &28741 &2722 &65 &3430 &1143 &2722 &65 &3430 &1144 &InGram~\cite{lee2023ingram} \\
            WK-100~\cite{lee2023ingram} &9784 &67 &49875 &12136 &37 &13487 &4496 &12136 &37 &13487 &4496 &InGram~\cite{lee2023ingram} \\
            NL-0~\cite{lee2023ingram} &1814 &134 &7796 &2026 &112 &2287 &763 &2026 &112 &2287 &763 &InGram~\cite{lee2023ingram} \\
            NL-25~\cite{lee2023ingram} &4396 &106 &17578 &2146 &120 &2230 &743 &2146 &120 &2230 &744 &InGram~\cite{lee2023ingram} \\
            NL-50~\cite{lee2023ingram} &4396 &106 &17578 &2335 &119 &2576 &859 &2335 &119 &2576 &859 &InGram~\cite{lee2023ingram} \\
            NL-75~\cite{lee2023ingram} &2607 &96 &11058 &1578 &116 &1818 &606 &1578 &116 &1818 &607 &InGram~\cite{lee2023ingram} \\
            NL-100~\cite{lee2023ingram} &1258 &55 &7832 &1709 &53 &2378 &793 &1709 &53 &2378 &793  &InGram~\cite{lee2023ingram}\\
            \midrule
            Metafam~\cite{zhou2023ood} &1316 &28 &13821 &1316 &28 &13821 &590 &656 &28 &7257 &184 & NBFNet~\cite{zhou2023ood}\\
            FBNELL~\cite{zhou2023ood} &4636 &100 &10275 &4636 &100 &10275 &1055 &4752 &183 &10685 &597 & NBFNet~\cite{zhou2023ood} \\
            Wiki MT1 tax~\cite{zhou2023ood} &10000 &10 &17178 &10000 &10 &17178 &1908 &10000 &9 &16526 &1834 & NBFNet~\cite{zhou2023ood} \\
            Wiki MT1 health~\cite{zhou2023ood} &10000 &7 &14371 &10000 &7 &14371 &1596 &10000 &7 &14110 &1566 & NBFNet~\cite{zhou2023ood} \\
            Wiki MT2 org~\cite{zhou2023ood} &10000 &10 &23233 &10000 &10 &23233 &2581 &10000 &11 &21976 &2441 & N/A \\
            Wiki MT2 sci~\cite{zhou2023ood} &10000 &16 &16471 &10000 &16 &16471 &1830 &10000 &16 &14852 &1650 & N/A \\
            Wiki MT3 art~\cite{zhou2023ood} &10000 &45 &27262 &10000 &45 &27262 &3026 &10000 &45 &28023 &3113 & N/A \\
            Wiki MT3 infra~\cite{zhou2023ood} &10000 &24 &21990 &10000 &24 &21990 &2443 &10000 &27 &21646 &2405 & N/A \\
            Wiki MT4 sci~\cite{zhou2023ood} &10000 &42 &12576 &10000 &42 &12576 &1397 &10000 &42 &12516 &1388 & N/A \\
            Wiki MT4 health~\cite{zhou2023ood} &10000 &21 &15539 &10000 &21 &15539 &1725 &10000 &20 &15337 &1703 & N/A \\
            \bottomrule
        \end{tabular}
    \end{adjustbox}
\end{table}
\section{Limitations and Future Work}

Albeit \textsc{Ultra} demonstrates promising capabilities as a foundation model for knowledge graph reasoning in the zero-shot and fine-tuning regimes, there are several limitations and open questions. First, pre-training on more graphs does not often correspond to better inference performance. We hypothesize the reason might be in the overall small model size (177k parameters) and limited model capacity, i.e.\ with increasing the diversity of training data the model size should increase as well. On the other hand, our preliminary experiments did not show significant improvements of scaling the parameter count beyond 200k. We hypothesize it might be an issue of input normalization and model optimization. We plan to address those open questions in the future work.

\part{Multi-step Queries}
\label{part:multi-step}
\chapter[Solving Multi-hop Queries on Knowledge Graphs]{Solving Multi-hop Queries on\linebreak Knowledge Graphs}
\label{cha:gnn-qe}

Many reasoning applications require to deal with queries that inherently contain multiple steps. Often, such queries on knowledge graphs are handled by query embedding methods that model logical operations in an embedding space, which are not compatible with our strong NBFNet architecture. Hence, we are interested in a new framework for answering multi-hop queries with NBFNet-like models, which can also benefit from the inductive generalization abilities we developed in Part~\ref{part:inductive}.

In this chapter, we introduce GNN-QE, a framework that decomposes multi-hop queries into basic operations and parameterizes each operation with a graph neural network (GNN) or fuzzy logic operations. GNN-QE not only achieves significantly better performance than query embedding methods, but also requires less training samples and provides interpretability for intermediate variables. Like NBFNet, GNN-QE is inductive and can be applied to knowledge graphs with unseen entities. It can be further integrated with pretrained Ultra checkpoints to perform zero-shot query answering on any knowledge graph.

\smallskip \emph{This chapter is based on our work published at ICML 2022~\cite{zhu2022neural}\footnote{The code of GNN-QE is available at \url{https://github.com/DeepGraphLearning/GNN-QE}} and an arXiv paper~\cite{galkin2024zero}\footnote{The code of UltraQuery is available at \url{https://github.com/DeepGraphLearning/ULTRA}}. Mikhail Galkin led the project~\cite{galkin2024zero} and I contributed the model design. Some experiment results are from our work published at NeurIPS 2022~\cite{galkin2022inductive}.}

\section{Overview}

Knowledge graphs encapsulate knowledge about the world in a collection of relational edges between entities, and are widely adopted by many domains~\cite{miller1998wordnet, vrandevcic2014wikidata, himmelstein2017systematic, szklarczyk2019string}. Reasoning on knowledge graphs has attracted much attention in artificial intelligence, since it can be used to infer new knowledge or answer queries based on existing knowledge. One particular reasoning task we are interested in is answering complex First-Order Logic (FOL) queries on knowledge graphs, which involves logic operations like existential quantifier ($\exists$), conjunction ($\land$), disjunction ($\lor$) and negation ($\neg$). For example, the question ``\emph{Which universities do the Turing Award winners of deep learning work in?}'' can be represented as a FOL query, as showed in Figure~\ref{fig:method_gnn-qe}.

Traditionally, the problem of reasoning is handled by symbolic approaches, such as logic programming~\cite{lloyd2012foundations}, fuzzy logic~\cite{klir1995fuzzy} or probabilistic reasoning~\cite{pearl2014probabilistic}. In the same vein, several algorithms~\cite{dalvi2007efficient, schmidt2010foundations, zou2011gstore} have been developed for searching the answers to complex queries on graph databases. These methods traverse a graph and extract all possible assignments for intermediate variables, which provides good interpretation for each step. Besides, symbolic methods are guaranteed to produce the correct answer if all facts are given~\cite{stuart2016artificial}. However, many real-world knowledge graphs are known to be incomplete~\cite{nickel2015review}, which limits the usage of symbolic methods on knowledge graphs.

Recently, neural methods, such as embedding methods~\cite{bordes2013translating, trouillon2016complex, sun2019rotate} and graph neural networks (GNNs)~\cite{schlichtkrull2018modeling, vashishth2020composition, teru2020inductive, zhu2021neural}, have achieved significant progress in knowledge graph completion. Based on the success of these neural methods, many works have been proposed to solve FOL queries on incomplete graphs by learning an embedding for each FOL query~\cite{hamilton2018embedding, ren2020query2box, ren2020beta, chen2022fuzzy, zhang2021cone}. Typically, these methods 
translate the logic operations into neural logic operators in the embedding space. Nevertheless, it is hard to interpret what set of entities an intermediate embedding encodes, leaving the reasoning process unknown to users. The only interpretable method is CQD-Beam~\cite{arakelyan2021complex}, which applies beam search to a pretrained embedding model in the entity space. However, the complexity of exhaustive search prevents CQD-Beam from being trained directly on complex queries.

\smallskip \noindent \textbf{GNN-QE.} In this paper, we marry the advantages from both neural and symbolic approaches, and propose Graph Neural Network Query Executor (GNN-QE), a neural-symbolic method for answering FOL queries on incomplete knowledge graphs. Following symbolic methods that output a set of assignments for each intermediate variable, we decompose a complex FOL query into an expression over fuzzy sets (i.e.\ a continuous relaxation of sets), which attains interpretability for intermediate variables. Each basic operation in the expression is either a relation projection or a logic operation (e.g.\ conjunction, disjunction and negation). We design the relation projection to be a GNN that predicts the fuzzy set of tail entities given a fuzzy set of head entities and a relation. The logic operations are transformed to the product fuzzy logic operations over fuzzy sets, which satisfy logic laws and enable differentiation of logic operations. We also propose traversal dropout to regularize the model, and batch expression execution to speed up training and inference.

We evaluate our method on 3 transductive datasets for FOL queries. Experiments show that GNN-QE achieves new state-of-the-art performance on all datasets, with an average relative gain of 22.3\% on existential positive first-order (EPFO) queries and 95.1\% on negation queries (Section~\ref{sec:main_experiment}). By disentangling the contribution of knowledge graph completion and complex query framework, we find that GNN-QE achieves one of the best generalization performances from knowledge graph completion to EPFO queries among different methods. Additionally, the symbolic formulation of our method enables us to predict the number of answers without explicit supervision (Section~\ref{sec:cardinality}), and visualize intermediate variables (Section~\ref{sec:vis_gnn-qe}). The visualization provided by GNN-QE may help us better understand the reasoning process taken by the model, leading to more interpretable multi-hop reasoning.

To understand the inductive generalization capacity of GNN-QE, we construct a novel suite of 9 inductive datasets with different ratios of the inference graph size to the training graph. Compared to NodePiece-QE, a direct combination of inductive representations learned by NodePiece~\cite{galkin2022nodepiece} and CQD-Beam decoder~\cite{arakelyan2021complex}, we find that GNN-QE outperforms such a baseline by a large margin across all size ratios, though there is a performance decay for both models when the ratio grows larger. By probing the performance of the training queries, we observe that GNN-QE generalizes well to easy answers when we switch from the training graph to a larger inference graph, showing the superior \emph{faithfullness}~\cite{sun2020faithful} of GNN-QE. Additionally, GNN-QE consistently ranks easy answers higher than hard answers as expected.

\smallskip \noindent \textbf{UltraQuery.} With recent advancements in inductive knowledge graph reasoning~\cite{galkin2024towards, gao2023double}, we can further extend the framework of GNN-QE to answer complex queries over arbitrary knowledge graphs with new entities and relations at inference time. By parameterizing the relation projection with an \textsc{Ultra} model that does not memorize graph-specific entity nor relation embeddings, we develop \textsc{UltraQuery}, a complex query model with both inductive relation projection and inductive logical operations, thereby enabling zero-shot generalization to any knowledge graph. We curate a novel suite of 11 inductive query answering datasets where graphs and queries at inference time have new entity and relation vocabularies. Averaged across a total number of 22 transductive and inductive datasets, a single \textsc{UltraQuery} model outperforms the best reported baselines trained separately for each dataset by a relative gain of 50\% in MRR on both EPFO queries and negation queries.

\section{Method: GNN-QE}

\begin{figure*}
    \centering
    \includegraphics[width=\textwidth]{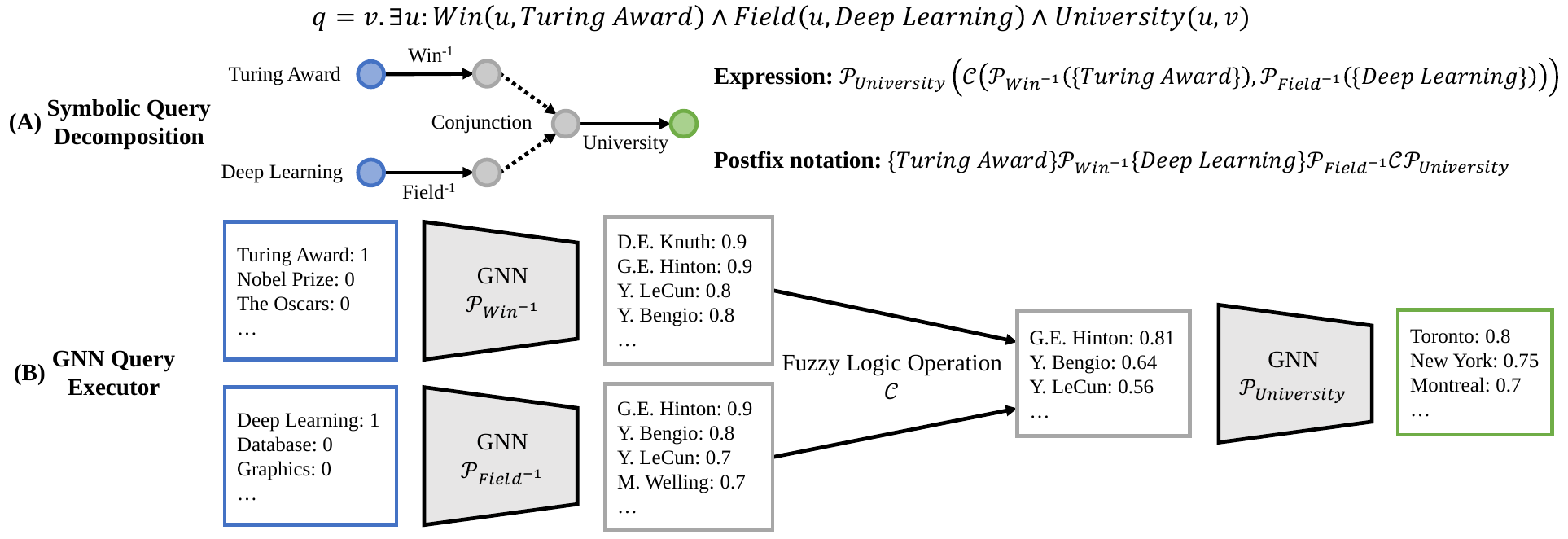}
    \caption[Overview of GNN-QE]{Overview of GNN-QE. \textbf{(A)} GNN-QE decomposes a FOL query into an expression of relation projections ($\gP$) and logic operations ($\gC, \gD, \gN$). We convert the query into an expression execution problem, where we use the postfix notation to efficiently batch multiple expressions. \textbf{(B)} The expression is executed with relation projection learned by GNNs and fuzzy logic operations. All the input, intermediate and output variables are fuzzy sets of entities. Best viewed in color.}
    \label{fig:method_gnn-qe}
\end{figure*}

Here we present our model, Graph Neural Network Query Executor (GNN-QE). The high-level idea of GNN-QE is to first decompose a FOL query into an expression of 4 basic operations (relation projection, conjunction, disjunction and negation) over fuzzy sets, then parameterize the relation projection with a GNN adapted from knowledge graph completion, and instantiate the logic operations with product fuzzy logic operations. Besides, we introduce traversal dropout to prevent the GNN from converging to a trivial solution, and batched expression execution for speeding up training and inference.

\subsection{Symbolic Query Decomposition}

Given a FOL query, the first step is to convert it into an expression of basic operations, so that we can retrieve answers by executing the expression. Previous works define basic operations as either relation projections and logic operations over \emph{embeddings}~\cite{ren2020query2box, ren2020beta, chen2022fuzzy, zhang2021cone}, or a score function over triplets~\cite{arakelyan2021complex}. To achieve better interpretability for intermediate variables, we explicitly define 4 basic operations over \emph{fuzzy sets of entities} as follows

\begin{itemize}[label=$\bullet$, leftmargin=*]
    \item \textbf{Relation Projection}: $\gP_q(\vx)$ computes the fuzzy set of \emph{tail} entities that are reachable by the input fuzzy set of \emph{head} entities through relation $q$. $\gP_{q^{-1}}(\vx)$ computes the fuzzy set of \emph{head} entities that can reach the input fuzzy set of \emph{tail} entities through relation $q$.
    \item \textbf{Conjunction}: $\gC(\vx, \vy)$ computes the logical conjunction for each element in $\vx$ and $\vy$.
    \item \textbf{Disjunction}: $\gD(\vx, \vy)$ computes the logical disjunction for each element in $\vx$ and $\vy$.
    \item \textbf{Negation}: $\gN(\vx)$ computes the logical negation for each element in $\vx$.
\end{itemize}

where $\vx, \vy \in [0,1]^\gV$ are two vector representations of fuzzy sets. We then decompose a FOL query into an expression of the above operations. For the example in Figure~\ref{fig:method_gnn-qe}, the corresponding expression is
\begin{equation}
    \label{eqn:decomposition}
    \gP_\textit{University}\left(\gC\left(\gP_{\textit{Win}^{-1}}(\{\textit{Turing Award}\}), \gP_{\textit{Field}^{-1}}(\{\textit{Deep Learning}\})\right)\right)
\end{equation}
where \{\textit{Turing Award}\} and \{\textit{Deep Learning}\} denote singleton sets of \emph{Turing Award} and \emph{Deep Learning}, respectively.

\subsection{Neural Relation Projection}

In order to solve complex queries on incomplete knowledge graphs, we learn a neural model to perform the relation projection $\vy = \gP_q(\vx)$. Specifically, the neural relation projection model should predict the fuzzy set of tail entities $\vy$ given the fuzzy set of head entities $\vx$ and a relation $q$ in the presence of missing links. This is in contrast to the common GNNs~\cite{schlichtkrull2018modeling, vashishth2020composition} and embedding methods~\cite{bordes2013translating, sun2019rotate} for knowledge graph completion, which operate on individual entities $x$ and $y$. While it is possible to apply such GNNs or embedding methods for relation projection, it takes at least $O(|\gV|^2d)$ time to compute them for every $x \in \vx$ and $y \in \vy$, which is not scalable.

Recently, \cite{zhu2021neural} introduced a new GNN framework for knowledge graph completion, which can predict the set of tail entities $\vy$ given an entity $x$ and a relation $q$ in $O(|\gV|d^2 + |\gE|d)$ time. Inspired by such a framework, we propose a scalable GNN solution for relation projection.

\textbf{Graph Neural Networks.}
Our goal is to design a GNN model that predicts a fuzzy set of tail entities given a fuzzy set of head entities and a relation. A special case of the input is a singleton set, where we need to model the probability $p_q(y|x)$ for every $y \in \vy$. Such a problem can be solved by GNNs in a single-source fashion~\cite{you2021identity, zhu2021neural}. For example, the recent work NBFNet~\cite{zhu2021neural} derives a GNN framework based on the generalized Bellman-Ford algorithm for single-source problems on graphs. Given a head entity $u$ and a projection relation $q$, we use the following iteration to compute a representation $\vh_v$ for each entity $v \in \gV$ w.r.t.\ the source entity $u$:
\begin{align}
    &\vh^{(0)}_v \leftarrow \textsc{Indicator}(u, v, q) \label{eqn:indicator} \\
    &\vh^{(t)}_v \leftarrow \textsc{Aggregate}(\{\textsc{Message}(\vh^{(t-1)}_z, (z, r, v)) | (z, r, v)\in\gE(v)\}) \label{eqn:bellman}
\end{align}
where the \textsc{Indicator} function initializes a relation embedding $\vq$ on entity $v$ if $v$ equals to $u$ and a zero embedding otherwise, and $\gE(v)$ is the set of edges going into $v$. The \textsc{Message} and \textsc{Aggregate} functions can be instantiated with any neural function from popular GNNs. To apply the above framework to a fuzzy set $\vx$ of head entities, we propose to replace Equation~\ref{eqn:indicator} with the following initialization
\begin{equation}
    \vh^{(0)}_v \leftarrow x_v \vq
    \label{eqn:fuzzy_set}
\end{equation}
where $x_v$ is the probability of entity $v$ in $\vx$. Intuitively, this GNN model initializes an embedding $\vq$ for the projection relation $q$ on all entities, where the scale of the initialization on an entity depends on its probability in the fuzzy set. The original \textsc{Indicator} function can be viewed as a special case of Equation~\ref{eqn:fuzzy_set}, with the fuzzy set being a singleton set.

For the \textsc{Aggregate} and the \textsc{Message} functions, we follow the design in NBFNet~\cite{zhu2021neural} and parameterize the \textsc{Message} function as
\begin{align}
    \textsc{Message}(\vh^{(t-1)}_z, (z, r, v)) = \vh^{(t-1)}_z \odot (\mW_r \vq + \vb_r)
    \label{eqn:message}
\end{align}
where $\mW^{(t)}_r$ and $\vb^{(t)}_r$ are the weight matrix and bias vector for relation $r$ in the $t$-th iteration respectively, and $\odot$ is the element-wise multiplication operator. The \textsc{Aggregate} function is parameterized as the principal neighborhood aggregation (PNA)~\cite{corso2020principal}. Our GNN has the same time complexity as NBFNet, and therefore takes $O(|\gV|d^2 + |\gE|d)$ time for each message passing iteration. Note it is possible to parameterize the framework with other GNN models, such as RGCN~\cite{schlichtkrull2018modeling} or CompGCN~\cite{vashishth2020composition}. See Section~\ref{sec:ablation} for experiments with different GNN models.

To apply the GNN framework for relation projection, we propagate the representations with Equation~\ref{eqn:bellman} for $T$ layers. Then we take the representations in the last layer, and pass them into a multi-layer perceptron (MLP) $f$ followed by a sigmoid function $\sigma$ to predict the fuzzy set of tail entities.
\begin{align}
    \gP_q(\vx) = \sigma(f(\vh^{(T)}))
    \label{eqn:predict}
\end{align}

\subsection{Fuzzy Logic Operations}

The logic operations (i.e.\ $\gC(\vx, \vy)$, $\gD(\vx, \vy)$, $\gN(\vx)$) glue multiple relation projection results and generate the input fuzzy set for the next relation projection. Ideally, they should satisfy certain logic laws, such as commutativity, associativity and non-contradiction. Most previous works~\cite{hamilton2018embedding, ren2020query2box, ren2020beta, zhang2021cone} propose dedicated geometric operations to learn these logic operations in the embedding space. Nevertheless, these neural operators are not guaranteed to satisfy most logic laws, which may introduce additional error when they are chained together. 

Here we model the conjunction, disjunction and negation with product fuzzy logic operations. Given two fuzzy sets $\vx, \vy \in [0,1]^\gV$, the operations are defined as follows
\begin{align}
    \gC(\vx, \vy) &= \vx \odot \vy \\
    \gD(\vx, \vy) &= \vx + \vy - \vx \odot \vy \\
    \gN(\vx) &= \bm{1} - \vx
\end{align}
where $\odot$ is the element-wise multiplication and $\bm{1}$ is a vector of all ones (i.e.\ the universe). Compared to geometric operations in previous works, such fuzzy logic operations satisfy many logic laws, e.g.\ De Morgan's laws $\gN(\gC(\vx, \vy)) = \gD(\gN(\vx), \gN(\vy))$, $\gN(\gD(\vx, \vy)) = \gC(\gN(\vx), \gN(\vy))$. Note FuzzQE~\cite{chen2022fuzzy} also adopts fuzzy logic operations and satisfies logic laws. However, FuzzQE applies fuzzy logic operations to \emph{embeddings}. By contrast, our GNN-QE applies fuzzy logic operations to \emph{fuzzy sets of entities}, which provides better interpretability (See Section~\ref{sec:vis_gnn-qe}).

\subsection{Learning}
Following previous works~\cite{ren2020query2box, ren2020beta, zhang2021cone}, we train our model to minimize the binary cross entropy loss.
\begin{equation}
    \gL = -\frac{1}{|\gA_Q|}\sum_{a \in \gA_Q}\log p(a|Q) - \frac{1}{|\gV\backslash\gA_Q|}\sum_{a' \in \gV\setminus\gA_Q}\log (1 - p(a'|Q))
    \label{eqn:loss}
\end{equation}
where $\gA_Q$ is the set of answers to the complex query $Q$ and $p(a|Q)$ is the probability of entity $a$ in the final output fuzzy set. Since GNN-QE always outputs the probability for all entities (Equation~\ref{eqn:predict}), we do not perform negative sampling and compute the loss with all negative answers.

\smallskip \noindent \textbf{Traversal Dropout.} One challenge in training GNN-QE is to let the model generalize to incomplete knowledge graphs at test time. This is because all the training queries are generated by assuming the training graph is complete~\cite{ren2020beta}. In other words, all the training queries can be perfectly solved by a simple relation traversal model on the training graph, without modeling any missing link. GNN models can easily discover this mode, which does not generalize to incomplete knowledge graphs at test time.

To solve this issue, we introduce traversal dropout to create an incomplete knowledge graph at training time. Specifically, we first run a relation traversal model to extract all the edges corresponding to the query. We then randomly mask out the traversed edges in each relation projection with probability $p$. Intuitively, the probability $p$ trades off between a simple relation traversal model and a full reasoning model. If $p$ is small, the GNN model may converge to a trivial relation traversal model, otherwise it is forced to encode non-trivial reasoning features. Since some of the edges in the test queries may be present in the knowledge graph, it is not always optimal to use a large $p$ to discourage a relation traversal model. In practice, we treat $p$ as a hyperparameter, and tune it based on the performance on the validation set. See Section~\ref{sec:ablation} for experiments with different values of $p$.

\begin{figure*}[!h]
    \centering
    \includegraphics[width=\textwidth]{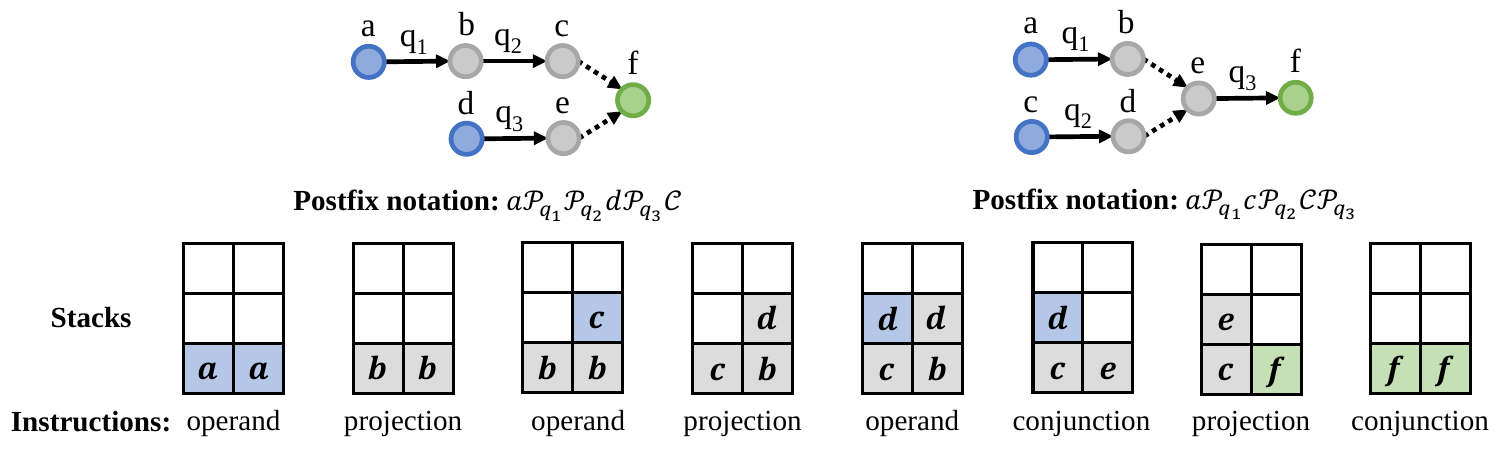}
    \caption[Illustration of batched expression execution over a batch of two queries]{Illustration of batched expression execution over a batch of two queries.}
    \label{fig:batch_execution}
\end{figure*}

\smallskip \noindent \textbf{Batched Expression Execution\footnote{Expression execution is formally known as expression evaluation in computer science. In this paper, we use the term ``expression execution'' to avoid ambiguity in machine learning contexts.}.}
Modern machine learning relies on batch processing on GPUs to accelerate the computation of neural (or even symbolic) models. However, it is challenging to batch the expressions of FOL queries, since different query structures require different recursive computation steps. Previous works~\cite{hamilton2018embedding, ren2020query2box, ren2020beta} divide a batch based on the query structure of each sample, and only batch the computation of samples that have the same structure. Nevertheless, such an implementation needs to enumerate every query structure, and is not scalable when the vocabulary of query structures grows large.

To solve this issue, we need to find a way to execute the expressions without recursion. This can be achieved by converting the expressions into postfix notation. The postfix notation, a.k.a.\ reverse Polish notation~\cite{lukasiewicz1951aristotle}, writes operators \emph{after} their operands in an expression. For example, the postfix expression of Equation~\ref{eqn:decomposition} is
\begin{align}
    \{\textit{Turing Award}\}\gP_{\textit{Win}^{-1}}\{\textit{Deep Learning}\}\gP_{\textit{Field}^{-1}}\gC\gP_\textit{University}
\end{align}
The advantage of postfix expressions is that they are unambiguous without parentheses, and therefore can be executed easily without recursion. To execute a postfix expression, we allocate a stack and scan the expression from left to right. When we encounter an operand, we push it into the stack. When we encounter an operator, we pop the corresponding number of operands from the stack, apply the operation and push the result into the stack. Such an algorithm can be easily batched for the same operator even in samples of different query types. Figure~\ref{fig:batch_execution} shows batched expression execution applied to two different query structures.

The overall time complexity of our batched execution is $O(t(|\gV|d^2 + |\gE|d))$, where $t$ is the maximal number of projections in a single query in the batch. Compared to existing implementation~\cite{hamilton2018embedding, ren2020query2box, ren2020beta} that scales \emph{linearly} w.r.t.\ the number of query types, batched expression execution scales \emph{independently} w.r.t.\ the number of query types, and can be applied to arbitrary large number of query types without scalability issues.
\section{Transductive Experiments of GNN-QE}

In this section, we evaluate GNN-QE by answering FOL queries on 3 standard datasets. Our experiments demonstrate that: (1) GNN-QE outperforms existing methods on both EPFO queries and queries with negation. (2) GNN-QE can predict the number of answers out-of-the-box without any explicit supervision. (3) We can visualize the intermediate variables of GNN-QE and interpret its reasoning process.

\subsection{Experiment Setup}

We evaluate our method on FB15k~\cite{bordes2013translating}, FB15k-237~\cite{toutanova2015observed} and NELL995~\cite{xiong2017deeppath} knowledge graphs. To make a fair comparison with baselines, we use the standard train, validation and test FOL queries generated by the BetaE paper~\cite{ren2020beta}, which consist of 9 EPFO query types and 5 query types with negation. We follow previous works~\cite{ren2020beta, chen2022fuzzy, zhang2021cone} and train our model with 10 query types (\emph{1p/2p/3p/2i/3i/2in/3in/inp/pni/pin}). The model is evaluated on 10 training query types, plus 4 query types (\emph{ip/pi/2u/up}) that have never been seen during training. A full list of query types and their statistics is provided in Section~\ref{app:dataset_gnn-qe}.

\smallskip \noindent \textbf{Evaluation Protocol.}
Following the evaluation protocol in \cite{ren2020query2box}, we separate the answers to each query into two sets: easy answers and hard answers. For test (validation) queries, easy answers are the entities that can be reached on the validation (train) graph via a symbolic relation traverse model. Hard answers are those that can only be reached with predicted links. In other words, the model must perform reasoning to get the hard answers. We compute the ranking of each hard answer against all non-answer entities. The performance is measured by mean reciprocal rank (MRR) and HITS at K (H@K) metrics.

\smallskip \noindent \textbf{Implementation Details.}
Our work is implemented based on the open-source codebase of GNNs for knowledge graph completion\footnote{\url{https://github.com/DeepGraphLearning/NBFNet}}. Following \cite{zhu2021neural}, we augment each triplet with a flipped one of its inverse relation, so that the GNN can propagate information in both directions. The neural relation projection model is set to a 4-layer GNN model. We train the model with the self-adversarial negative sampling~\cite{sun2019rotate}. Note we only instantiate 1 GNN model and share it across all neural relation projections in the query. For query types that contain multiple relation projections in a chain (\emph{2p/3p/inp/pni/pin}), we observe very noisy gradients for the relation projections early in the chain. Therefore, we zero out the gradients of those relation projections, and only update the GNN with gradients from the last relation projections close to the loss. Our model is trained with Adam optimizer~\cite{kingma2015adam} on 4 Tesla V100 GPUs.

\smallskip \noindent \textbf{Baselines.} We compare GNN-QE against both embedding methods and neural-symbolic methods. The embedding methods include GQE~\cite{hamilton2018embedding}, Q2B~\cite{ren2020query2box}, BetaE~\cite{ren2020beta}, FuzzQE~\cite{chen2022fuzzy} and ConE~\cite{zhang2021cone}. The neural-symbolic methods include CQD-CO~\cite{arakelyan2021complex} and CQD-Beam~\cite{arakelyan2021complex}. For CQD-CO and CQD-Beam, we obtain their performance using the codebase\footnotemark provided by the original authors.

\begin{table*}[!h]
    \centering
    \caption[Test results on answering FOL queries]{Test MRR results (\%) on answering FOL queries. avg$_p$ is the average MRR on EPFO queries ($\land$, $\lor$). avg$_n$ is the average MRR on queries with negation. Results of GQE and Q2B are taken from \cite{ren2020beta}. Results of BetaE, FuzzQE and ConE are taken from their original papers~\cite{ren2020beta, chen2022fuzzy, zhang2021cone}.}
    \begin{adjustbox}{width=\textwidth}
    \begin{tabular}{lcccccccccccccccc}
        \toprule
        \bf{Model} & \bf{avg$_p$} & \bf{avg$_n$} & \bf{1p} & \bf{2p} & \bf{3p} & \bf{2i} & \bf{3i} & \bf{pi} & \bf{ip} & \bf{2u} & \bf{up} & \bf{2in} & \bf{3in} & \bf{inp} & \bf{pin} & \bf{pni} \\
        \midrule
        \multicolumn{17}{c}{FB15k} \\
        \midrule
        GQE & 28.0 & - & 54.6 & 15.3 & 10.8 & 39.7 & 51.4 & 27.6 & 19.1 & 22.1 & 11.6 & - & - & - & - & - \\
        Q2B & 38.0 & - & 68.0 & 21.0 & 14.2 & 55.1 & 66.5 & 39.4 & 26.1 & 35.1 & 16.7 & - & - & - & - & - \\
        BetaE & 41.6 & 11.8 & 65.1 & 25.7 & 24.7 & 55.8 & 66.5 & 43.9 & 28.1 & 40.1 & 25.2 & 14.3 & 14.7 & 11.5 & 6.5 & 12.4 \\
        CQD-CO & 46.9 & - & \bf{89.2} & 25.3 & 13.4 & 74.4 & 78.3 & 44.1 & 33.2 & 41.8 & 21.9 & - & - & - & - & - \\
        CQD-Beam & 58.2 & - & \bf{89.2} & 54.3 & 28.6 & 74.4 & 78.3 & 58.2 & 67.7 & 42.4 & 30.9 & - & - & - & - & - \\
        ConE & 49.8 & 14.8 & 73.3 & 33.8 & 29.2 & 64.4 & 73.7 & 50.9 & 35.7 & 55.7 & 31.4 & 17.9 & 18.7 & 12.5 & 9.8 & 15.1 \\
        \midrule
        GNN-QE & \bf{72.8} & \bf{38.6} & 88.5 & \bf{69.3} & \bf{58.7} & \bf{79.7} & \bf{83.5} & \bf{69.9} & \bf{70.4} & \bf{74.1} & \bf{61.0} & \bf{44.7} & \bf{41.7} & \bf{42.0} & \bf{30.1} & \bf{34.3} \\
        \midrule[0.08em]
        \multicolumn{17}{c}{FB15k-237} \\
        \midrule
        GQE & 16.3 & - & 35.0 & 7.2 & 5.3 & 23.3 & 34.6 & 16.5 & 10.7 & 8.2 & 5.7 & - & - & - & - & - \\
        Q2B & 20.1 & - & 40.6 & 9.4 & 6.8 & 29.5 & 42.3 & 21.2 & 12.6 & 11.3 & 7.6 & - & - & - & - & - \\
        BetaE & 20.9 & 5.5 & 39.0 & 10.9 & 10.0 & 28.8 & 42.5 & 22.4 & 12.6 & 12.4 & 9.7 & 5.1 & 7.9 & 7.4 & 3.5 & 3.4 \\
        CQD-CO & 21.8 & - & \bf{46.7} & 9.5 & 6.3 & 31.2 & 40.6 & 23.6 & 16.0 & 14.5 & 8.2 & - & - & - & - & - \\
        CQD-Beam & 22.3 & - & \bf{46.7} & 11.6 & 8.0 & 31.2 & 40.6 & 21.2 & 18.7 & 14.6 & 8.4 & - & - & - & - & - \\
        FuzzQE & 24.0 & 7.8 & 42.8 & 12.9 & 10.3 & 33.3 & 46.9 & 26.9 & 17.8 & 14.6 & 10.3 & 8.5 & 11.6 & 7.8 & 5.2 & 5.8 \\
        ConE & 23.4 & 5.9 & 41.8 & 12.8 & 11.0 & 32.6 & 47.3 & 25.5 & 14.0 & 14.5 & 10.8 & 5.4 & 8.6 & 7.8 & 4.0 & 3.6 \\
        \midrule
        GNN-QE & \bf{26.8} & \bf{10.2} & 42.8 & \bf{14.7} & \bf{11.8} & \bf{38.3} & \bf{54.1} & \bf{31.1} & \bf{18.9} & \bf{16.2} & \bf{13.4} & \bf{10.0} & \bf{16.8} & \bf{9.3} & \bf{7.2} & \bf{7.8} \\
        \midrule[0.08em]
        \multicolumn{17}{c}{NELL995} \\
        \midrule
        GQE & 18.6 & - & 32.8 & 11.9 & 9.6 & 27.5 & 35.2 & 18.4 & 14.4 & 8.5 & 8.8 & - & - & - & - & - \\
        Q2B & 22.9 & - & 42.2 & 14.0 & 11.2 & 33.3 & 44.5 & 22.4 & 16.8 & 11.3 & 10.3 & - & - & - & - & - \\
        BetaE & 24.6 & 5.9 & 53.0 & 13.0 & 11.4 & 37.6 & 47.5 & 24.1 & 14.3 & 12.2 & 8.5 & 5.1 & 7.8 & 10.0 & 3.1 & 3.5 \\
        CQD-CO & \bf{28.8} & - & \bf{60.4} & 17.8 & 12.7 & 39.3 & 46.6 & 30.1 & 22.0 & 17.3 & \bf{13.2} & - & - & - & - & - \\
        CQD-Beam & 28.6 & - & \bf{60.4} & \bf{20.6} & 11.6 & 39.3 & 46.6 & 25.4 & \bf{23.9} & \bf{17.5} & 12.2 & - & - & - & - & - \\
        FuzzQE & 27.0 & 7.8 & 47.4 & 17.2 & 14.6 & 39.5 & 49.2 & 26.2 & 20.6 & 15.3 & 12.6 & 7.8 & 9.8 & 11.1 & 4.9 & 5.5 \\
        ConE & 27.2 & 6.4 & 53.1 & 16.1 & 13.9 & 40.0 & 50.8 & 26.3 & 17.5 & 15.3 & 11.3 & 5.7 & 8.1 & 10.8 & 3.5 & 3.9 \\
        \midrule
        GNN-QE & \bf{28.9} & \bf{9.7} & 53.3 & 18.9 & \bf{14.9} & \bf{42.4} & \bf{52.5} & \bf{30.8} & 18.9 & 15.9 & 12.6 & \bf{9.9} & \bf{14.6} & \bf{11.4} & \bf{6.3} & \bf{6.3} \\
        \bottomrule
    \end{tabular}
    \end{adjustbox}
    \label{tab:transductive_query}
\end{table*}

\begin{figure*}[!h]
    \centering
    \includegraphics[width=0.74\textwidth]{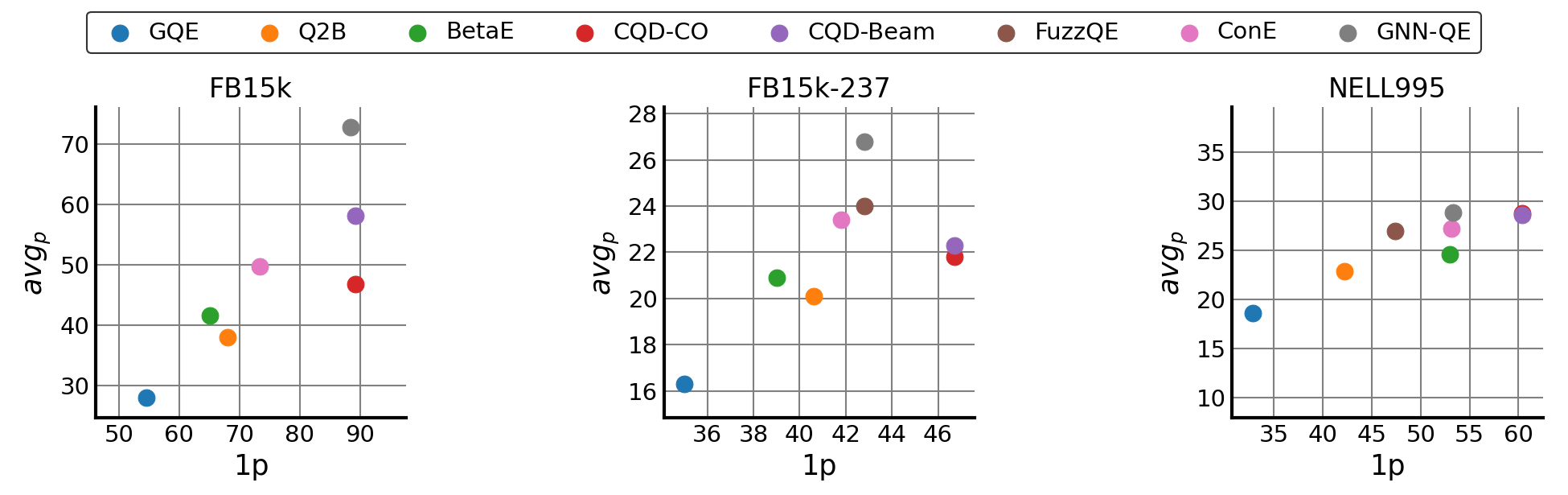}
    \caption[Results on EPFO queries w.r.t. results on knowledge graph completion]{MRR results on EPFO queries w.r.t.\ MRR results on knowledge graph completion (1p queries). Methods on the top left boundary of each plot generalize better from knowledge graph completion to EPFO queries. Best viewed in color.}
    \label{fig:generalization}
\end{figure*}
\footnotetext{\url{https://github.com/pminervini/KGReasoning}}

\subsection{Complex Query Answering}
\label{sec:main_experiment}

Table~\ref{tab:transductive_query} shows the MRR results of different models for answering FOL queries. GQE, Q2B, CQD-CO and CQD-Beam do not support queries with negation, so the corresponding entries are empty. We observe that GNN-QE achieves the best result for both EPFO queries and queries with negation on all 3 datasets. Notably, GNN-QE achieves an average relative gain of 22.3\% in avg$_p$ and 95.1\% in avg$_n$ compared to previous best model ConE. We attribute this gain to the advantage of fuzzy sets over geometric embeddings. Fuzzy sets can easily model intermediate variables with many possible assignments, while it is hard to embed a large number of entities in a low-dimensional vector. Such an advantage is especially useful for negation operations, since the output of a negation operation usually contains nearly $|\gV|$ entities.

Intuitively, the performance of complex query models should benefit from better knowledge graph completion performance, i.e.\ \emph{1p} queries. Here we disentangle the contribution of knowledge graph completion and complex query framework in answering EPFO queries. Figure~\ref{fig:generalization} plots the performance of EPFO queries w.r.t.\ the performance of knowledge graph completion on all datasets. Methods on the top-left corner of each plot show a better generalization from knowledge graph completion to EPFO queries, which implies their complex query frameworks are better. These include GQE, BetaE, FuzzQE, ConE and GNN-QE. By contrast, CQD-CO and CQD-Beam generalize worse than other methods, because they rely on a pretrained embedding model and cannot be trained for complex queries.

\subsection{Answer Set Cardinality Prediction}
\label{sec:cardinality}

\begin{table*}[t]
    \centering
    \caption[Results on answer set cardinality prediction]{MAPE (\%) of the number of answers predicted by GNN-QE. $avg$ is the average on all query types.}
    \label{tab:error}
    \footnotesize
    \begin{tabular}{lccccccccccccccc}
        \toprule
        \bf{Dataset} & \bf{avg} & \bf{1p} & \bf{2p} & \bf{3p} & \bf{2i} & \bf{3i} & \bf{pi} & \bf{ip} & \bf{2u} & \bf{up} & \bf{2in} & \bf{3in} & \bf{inp} & \bf{pin} & \bf{pni} \\
        \midrule
        FB15k & 37.1 & 34.4 & 29.7 & 34.7 & 39.1 & 57.3 & 47.8 & 34.6 & 13.5 & 26.5 & 31.4 & 50.3 & 50.3 & 39.4 & 29.8 \\
        FB15k-237 & 38.9 & 40.9 & 23.6 & 27.4 & 34.8 & 53.4 & 39.9 & 60.0 & 27.8 & 20.3 & 40.3 & 52.6 & 49.6 & 44.8 & 29.0 \\
        NELL995 & 44.0 & 61.9 & 38.2 & 47.1 & 56.6 & 72.3 & 49.5 & 45.8 & 19.9& 36.2& 30.0 & 47.0 & 42.3 & 39.8 & 29.4 \\
        \bottomrule
    \end{tabular}
\end{table*}

\begin{table*}[t]
    \centering
    \caption[Rank correlation between the model prediction and the number of answers]{Spearman's rank correlation between the model prediction and the number of ground truth answers on FB15k-237. $avg$ is the average correlation on all 12 query types in the table. Results of baselines are taken from \cite{zhang2021cone}.}
    \begin{adjustbox}{max width=\textwidth}
        \footnotesize
        \begin{tabular}{lccccccccccccc}
            \toprule
            \bf{Model} & \bf{avg} & \bf{1p} & \bf{2p} & \bf{3p} & \bf{2i} & \bf{3i} & \bf{pi} & \bf{ip} & \bf{2in} & \bf{3in} & \bf{inp} & \bf{pin} & \bf{pni} \\
            \midrule
            Q2B & - & 0.184 & 0.226 & 0.269 & 0.347 & 0.436 & 0.361 & 0.199 & - & - & - & - & - \\
            BetaE & 0.540 & 0.396 & 0.503 & 0.569 & 0.598 & 0.516 & 0.540 & 0.439 & 0.685 & 0.579 & 0.511 & 0.468 & 0.671 \\
            ConE & 0.738 & 0.70 & 0.71 & 0.74 & 0.82 & 0.72 & 0.70 & 0.62 & 0.90 & 0.83 & 0.66 & 0.57 & 0.88 \\
            \midrule
            GNN-QE & \bf{0.940} & \bf{0.948} & \bf{0.951} & \bf{0.895} & \bf{0.992} & \bf{0.970} & \bf{0.911} & \bf{0.937} & \bf{0.981} & \bf{0.968} & \bf{0.864} & \bf{0.880} & \bf{0.987} \\
            \bottomrule
        \end{tabular}
    \end{adjustbox}
    \label{tab:correlation}
\end{table*}

One advantage of GNN-QE is that it can predict the cardinality of the answer set (i.e.\ the number of answers) without explicit supervision. Specifically, the cardinality of a fuzzy set is computed as the sum of entity probabilities exceeding a certain threshold. We use 0.5 for the threshold as it is a natural choice for our binary classification loss (Equation~\ref{eqn:loss}). Table~\ref{tab:error} shows the mean absolute percentage error (MAPE) between our model prediction and the ground truth. Note none of existing methods can predict the number of answers without explicit supervision. \cite{ren2020beta} and \cite{zhang2021cone} observe that the uncertainty of Q2B, BetaE and ConE are positively correlated with the number of answers. We follow their setting and report the Spearman's rank correlation between our model prediction and the ground truth. As shown in Table~\ref{tab:correlation}, GNN-QE outperforms existing methods by a large margin on all query types.

\subsection{Intermediate Variables Visualization}
\label{sec:vis_gnn-qe}

\begin{table*}[t]
    \centering
    \caption[Visualization of a 3p query from FB15k-237]{Visualization of a \emph{3p} query from FB15k-237 test set.}
    \begin{minipage}{0.17\textwidth}
        \centering
        \includegraphics[width=\textwidth]{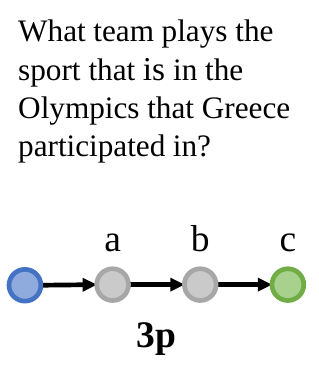}
    \end{minipage}
    \hspace{0.01\textwidth}
    \begin{minipage}{0.8\textwidth}
        \begin{adjustbox}{max width=\textwidth}
            \scriptsize
            \begin{tabular}{lccccc}
                \toprule
                \bf{Query} & \multicolumn{5}{l}{$q = ?c : \exists a, b: \text{ParticipateCountry}(a, \text{Greece}) \land \text{OlympicSports}(a, b) \land \text{TeamSports}(c, b)$} \\
                \midrule
                \multirow{2}{*}{\bf{Variable}} & \multicolumn{3}{c}{\bf{Top Predictions ($\geq 0.1$)}} & \bf{Random} & \bf{Filtered} \\
                & \bf{Easy} & \multicolumn{2}{c}{\bf{Hard}} & \bf{Ground Truth} & \bf{Ranking} \\
                \midrule
                a & \stack{\easy{1936 Summer Olympics} \\ \easy{1980 Winter Olympics} \\ \easy{2002 Winter Olympics}} & \stack{\tp{2010 Winter Olympics} \\ \fp{2012 Summer Olympics} \\ \fp{1920 Summer Olympics}} & \stack{\fp{1988 Summer Olympics} \\ \fp{1928 Summer Olympics} \\ \fp{1992 Summer Olympics}} & \stack{2010 Winter Olympics \\ (hard)} & 1 \\
                \midrule
                b & \stack{\easy{soccer} \\ \easy{track and field} \\ \easy{water polo}} & \stack{\tp{luge} \\ \tp{ice hockey} \\ \tp{short track speed skating}} & \stack{\fp{tennis} \\ - \\ - } & \stack{ice hockey \\ (hard)} & 1 \\
                \midrule
                c & \stack{\easy{Sacramento Kings} \\ \easy{Utah Jazz} \\ \easy{Seattle SuperSonics}} & \stack{\tp{Algeria soccer team} \\ \tp{Cincinnati Reds} \\ \tp{Washington Nationals}} & \stack{\tp{Chile soccer team} \\ \tp{Cardiff City} \\ \tp{Blackburn Rovers}} & \stack{Florida Panthers \\ (hard)} & 433 \\
                \bottomrule
            \end{tabular}
        \end{adjustbox}
    \end{minipage}
    \label{tab:vis_gnn-qe}
\end{table*}

Another advantage of GNN-QE is that we can interpret its reasoning process by investigating the intermediate variables. As the intermediate fuzzy sets may contain hundreds of entities, we consider two kinds of visualization to qualitatively analyze the precision and the recall of our model. The first one examines the entities with the top probabilities in each fuzzy set, and checks if they are an easy entity (i.e.\ those can be traversed on the training graph), a hard entity (i.e.\ those require reasoning) or a false positive one. For each fuzzy set, we visualize the top-3 easy entities and top-6 hard entities that have a minimum probability of 0.1. The second one draws a random ground truth assignment for each variable, such that the assignments form a valid grounding of the query and lead to a hard answer. We report the filtered ranking for each entity in the grounding.

Table~\ref{tab:vis_gnn-qe} shows the visualization of GNN-QE on a \emph{3p} query from FB15k-237 test set. Among the top hard entities, GNN-QE correctly predicts most of the intermediate entities, which indicates our method has a good precision for this sample. For the random ground truth assignments, GNN-QE recalls the first two hops (2010 Winter Olympics \& ice hockey) perfectly, but fails for the last hop. Such analysis would be beneficial to identify the steps where error occurs. 

\subsection{Ablation Studies}
\label{sec:ablation}

To provide a more comprehensive understanding of GNN-QE, we conduct three ablation studies on FB15k-237.

\smallskip \noindent \textbf{Traversal Dropout Probability $p$.} Figure~\ref{fig:dropout} shows the average MRR on EPFO queries of train and validation sets w.r.t.\ different probability $p$. The model can achieve a perfect training MRR of 1 when $p = 0$, which suggests that the model is able to learn the behavior of a relation traversal model. However, a relation traversal model cannot solve queries on incomplete graphs, which is revealed by its low performance on the validation set. With a non-zero probability $p$, traversal dropout makes the training problem more difficult, and enforces the model to learn a reasoning model that predicts the dropped link from its surrounding graph structure. However, it is not optimal to learn a fully reasoning model with $p = 1$, since it cannot perform relation traversal and some links in the validation queries can be perfectly solved by a relation traversal model.

\smallskip \noindent \textbf{Performance w.r.t.\ Number of Training Samples.}
Figure~\ref{fig:few_sample} plots the MRR curves of different query types in GNN-QE and BetaE under different number of training samples. It is observed that the performance of GNN-QE is not only better than BetaE, but also less sensitive to the number of training samples. Even with 1\% training samples (i.e.\ only 8,233 training queries for FB15k-237), GNN-QE achieves a comparative $avg_p$ and better $avg_n$ compared with BetaE trained with the full dataset. We conjecture the reason is that BetaE needs to learn a separate embedding for each entity, while our neural-symbolic method only learns relation embeddings (Equation~\ref{eqn:message}) for relation projection, which requires less samples to converge.

\begin{figure*}[t]
    \centering
    \begin{minipage}{0.49\textwidth}
        \centering
        \includegraphics[width=0.663\textwidth]{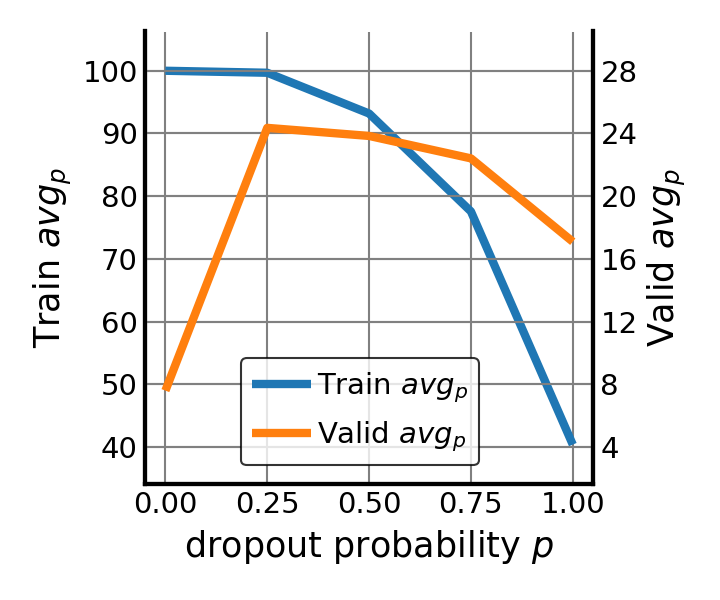}
        \caption[Results on EPFO queries w.r.t. traversal dropout probability]{Average MRR on EPFO queries (\%) of train / validation sets w.r.t. traversal dropout probability $p$. The best validation performance is achieved at $p = 0.25$.}
        \label{fig:dropout}
    \end{minipage}
    \hfill
    \begin{minipage}{0.48\textwidth}
        \centering
        \includegraphics[width=0.845\textwidth]{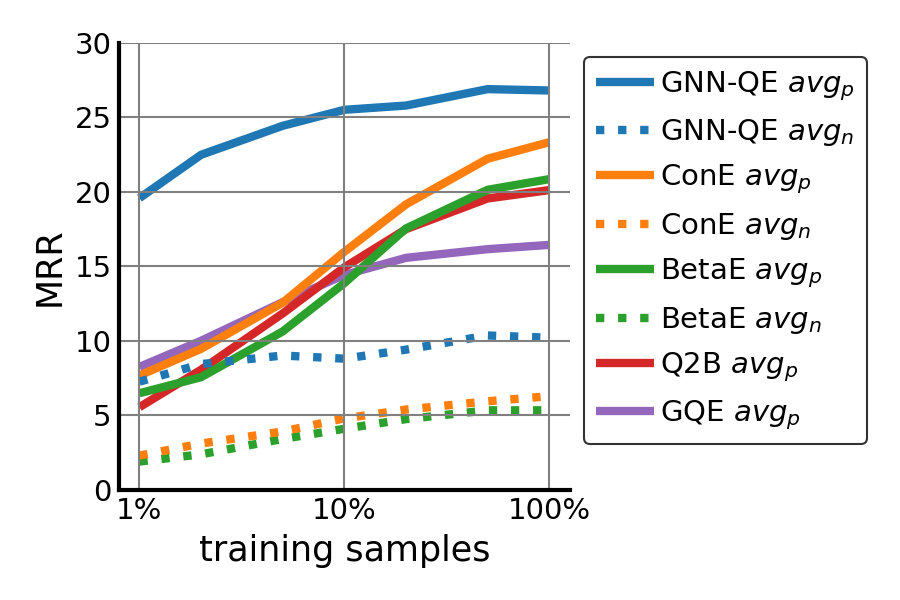}
        \caption[Test results w.r.t. the number of training samples]{Test MRR results w.r.t. number of training samples. GNN-QE is not only better than embeddings, but also less sensitive to the number of training samples.}
        \label{fig:few_sample}
    \end{minipage}
\end{figure*}

\begin{wraptable}{r}{0.38\textwidth}
    \vspace{-1.5em}
    \centering
    \caption[Test results w.r.t. GNN models]{Test MRR results (\%) w.r.t.\ GNN models. GNN-QE benefits from better GNN models.}
    \footnotesize
    \begin{tabular}{lcc}
        \toprule
        \bf{Model} & \bf{avg$_p$} & \bf{avg$_n$} \\
        \midrule
        BetaE   & 20.9 & 5.5 \\
        \midrule
        GNN-QE (RGCN)    & 20.9 & 7.3 \\
        GNN-QE (CompGCN) & 22.5 & 7.3 \\
        GNN-QE (NBFNet)  & \bf{26.8} & \bf{10.2} \\
        \bottomrule
    \end{tabular}
    \label{tab:gnn}
\end{wraptable}

\smallskip \noindent \textbf{GNN Parameterization.}
Table~\ref{tab:gnn} shows the MRR results of GNN-QE w.r.t.\ different GNN parameterizations. We consider three parameterizations for the \textsc{Message} and \textsc{Aggregate} functions in Equation~\ref{eqn:bellman}, namely RGCN~\cite{schlichtkrull2018modeling}, CompGCN~\cite{vashishth2020composition} and NBFNet~\cite{zhu2021neural}. It is observed that all three parameterizations outperform BetaE with significant improvement on $avg_n$, which suggests the advantages of fuzzy sets in modeling negation queries. Besides, GNN-QE benefits from stronger GNN models (NBFNet $>$ CompGCN $>$ RGCN). The performance of GNN-QE might be further improved with better GNN models.

\section{Inductive Experiments of GNN-QE}

In this section, we further evaluate GNN-QE on a suite of inductive datasets we construct for complex queries. We show that GNN-QE is able to: (1) answer complex logical queries over new unseen entities at inference time; (2) predict new correct answers for known \emph{training} queries when executed over larger inference graphs; (3) consistently ranks easy answers (i.e.\ those only requiring traversing existing edges) higher than hard answers (i.e.\ those requiring link prediction) in its prediction.

\subsection{Setup and Datasets}
\label{sec:dataset}

\smallskip \noindent \textbf{Datasets.}
Due to the absence of inductive logical query benchmarks, we create a novel suite of datasets based on FB15k-237~\cite{toutanova2015observed} (open license) and following the query generation process of BetaE~\cite{ren2020beta}. Given a source graph with $\gE$ entities, we sample $|\gE_{\textit{train}}| = r \cdot |\gE|, r \in [0.1, 0.9]$ nodes to induce a training graph $\gG_{\textit{train}}$. For validation and test graphs, we split the remaining set of entities into two non-overlapping sets each with $\frac{1-r}{2}|\gE|$ nodes. We then merge training and unseen nodes into the inference set of nodes $\gE_{\textit{inf}}$ and induce inference graphs for validation and test from those sets, respectively, i.e.\ $\gE_{\textit{inf}}^{\textit{val}} = \gE_{\textit{train}} \cup \gE_{\textit{val}}$ and $\gE_{\textit{inf}}^{\textit{test}} = \gE_{\textit{train}} \cup \gE_{\textit{test}}$. That is, validation and test inference graphs both extend the training graph but their sets of new entities are disjoint. Finally, we sample and remove 15\% of edges $\gT_{\text{pred}}$ in the inference graphs as missing edges for sampling queries with those missing edges. Overall, we sample 9 such datasets based on different choices of $r$, which result in the ratios of inference graph size to the training graph $\gE_{\textit{inf}} / \gE_{\textit{train}}$ from 106\% to 550\%.

For each dataset, we employ the query sampler from BetaE~\cite{ren2020beta} to extract 14 typical query types \emph{1p/2p/3p/2i/3i/ip/pi/2u/up/2in/3in/inp/pin/pni}. Training queries are sampled from the training graph $\gG_{\textit{train}}$, validation and test queries are sampled from their respective inference graphs $\gG_{\textit{inf}}$ where at least one edge belongs to $\gT_{\text{pred}}$ and has to be predicted at inference time.

As inference graphs extend training graphs, training queries are very likely to have new answers when executed over $\gG_{\textit{inf}}$ with simple graph traversal and without any link prediction. We create an additional set of true answers for all training queries executed over the test inference graph $\gG_{\textit{inf}}^{\textit{test}}$ to measure the entailment capabilities of query answering models. This is designed to be an inference task and extends the \emph{faithfullness} evaluation of \cite{sun2020faithful}. Dataset statistics can be found in Section~\ref{app:dataset_gnn-qe}.

\smallskip \noindent \textbf{Evaluation Protocol.}
Following the literature~\cite{ren2020beta}, query answers are separated into two sets: \emph{easy answers} that only require graph traversal over existing edges, and \emph{hard answers} that require inferring missing links to achieve the answer node. For the main experiment, evaluation involves ranking of \emph{hard} answers against all entities having easy ones filtered out. For evaluating training queries on inference graphs, we only have \emph{easy} answers and rank them against all entities. We report Hits@10 as the main performance metric on different query types.

\smallskip \noindent \textbf{Baselines.}
Since there do not exist inductive methods for answering complex queries, we create a baseline, NodePiece-QE, by combining CQD-Beam~\cite{arakelyan2021complex} with the inductive node representations learned by NodePiece~\cite{galkin2022nodepiece}. Specifically, NodePiece encodes each entity as a function of its incident relations, and is optimized by the objective and score function of ComplEx~\cite{trouillon2016complex}. The encoder can be either a plain MLP (denoted as NodePiece-QE), or a relational GNN~\cite{vashishth2020composition} aggregating the neighborhood of each entity (denoted as NodePiece-QE w/ GNN). We apply the learned encoder to materialize representations of entities in the inference graph $\mE \in \sR^{|\gE_{\textit{inf}}| \times d}$ and send them to CQD-Beam that decodes answers of complex FOL queries based on the representations.

Additionally, we consider an Edge-type Heuristic baseline, which finds all entities $e \in \gE$ that satisfy the relations in the last hop of $\gR_Q$ on the inference graph $\gG_{\textit{inf}}$. Intuitively, this baseline filters out entities that are not consistent with the query according to edge types, which is a necessary condition for the answers when the inference graph is reasonably dense. We use Edge-type Heuristic to show that inductive models learn non-trivial representations for complex FOL queries.

\subsection{Inductive Complex Query Answering}
\label{sec:test_queries}

Among 9 datasets with different ratios of the inference graph size to the training graph, we use the dataset with $\gE_{\textit{inf}} / \gE_{\textit{train}} = 175\%$ as a reference. Table~\ref{tab:inductive_query} summarizes the results on the reference dataset, while Figure~\ref{fig:main_exp} illustrates a bigger picture on all datasets. We observe that Edge-type Heuristic is able to attain 10.1\% for Hits@10, suggesting that some test queries may be easily answered by edge types. GNN-QE significantly outperforms both Edge-type Heuristic and NodePiece-QE by a large margin, and is also capable of answering negation queries that cannot be handled by CQD-Beam in NodePiece-QE.

Figure~\ref{fig:main_exp} shows a decreasing trend in the performance of both NodePiece-QE w/ GNN and GNN-QE as the inference graph grows and has more unseen entities. Both approaches achieve their best results at $\gE_{\textit{inf}} / \gE_{\textit{train}}$ ratio around 130\%, but their performance steadily deteriorate by up to 20 absolute Hits@10 points on EPFO queries and negation queries when the ratio grows to 550\%. We attribute this deterioration to a known generalization issue~\cite{knyazev2019understanding, yehudai2021local} of GNNs when performing inference over a larger graph than the model has seen during training. Recently, a few strategies have been proposed~\cite{buffelli2022sizeshiftreg, zhou2022ood} to alleviate this issue and we see it as a promising avenue for future work.

\begin{figure}[t]
    \centering
    \includegraphics[width=\textwidth]{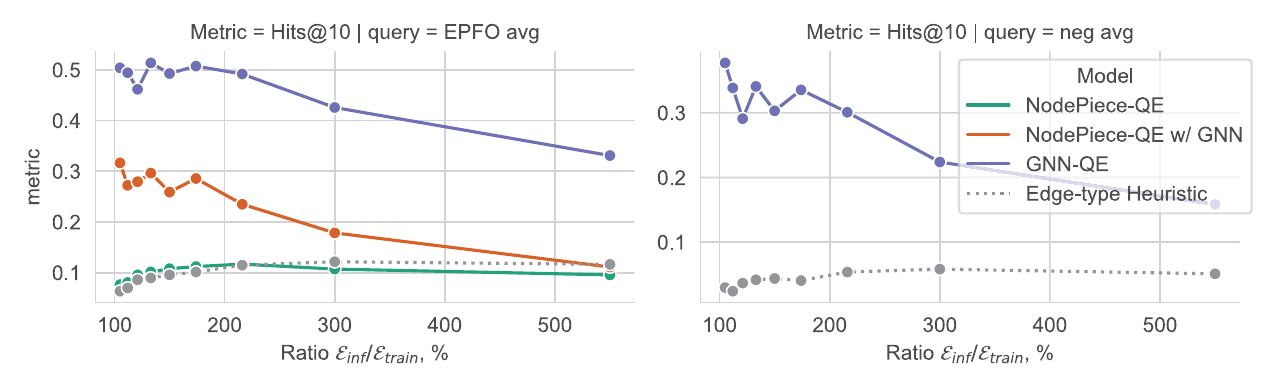}
    \caption[Performance of test queries on inference graphs of different ratios]{Aggregated Hits@10 performance of \textbf{test queries} (involving unseen entities) executed on inference graphs of different ratios compared to training graphs. Note that NodePiece-QE only supports EPFO queries but not negation queries.}
    \label{fig:main_exp}
\end{figure}

\begin{table}[t]
    \centering
    \caption[Test results of answering inductive FOL queries on the reference dataset]{Test Hits@10 results (\%) of answering inductive FOL queries when $\gE_{\textit{inf}} / \gE_{\textit{train}} = 175\%$. avg$_p$ is the average on EPFO queries ($\land$, $\lor$). avg$_n$ is the average on queries with negation.}
    \begin{adjustbox}{width=\textwidth}
    \begin{tabular}{lrrrrrrrrrrrrrrrr}
        \toprule
        \bf{Model} & \bf{avg$_p$} & \bf{avg$_n$} & \bf{1p} & \bf{2p} & \bf{3p} & \bf{2i} & \bf{3i} & \bf{pi} & \bf{ip} & \bf{2u} & \bf{up} & \bf{2in} & \bf{3in} & \bf{inp} & \bf{pin} & \bf{pni} \\
        \midrule
        Edge-type Heuristic & 10.1 & 4.1 & 17.7 & 8.2 & 9.9 & 10.7 & 13.0 & 9.8 & 8.2 & 5.3 & 8.5 & 2.6 & 2.9 & 8.4 & 3.8 & 2.7 \\
        NodePiece-QE & 11.2 & - & 25.5 & 8.2 & 8.4 & 12.4 & 13.9 & 9.9 & 8.7 & 7.0 & 6.8 & - & - & - & - & - \\
        NodePiece-QE w/ GNN & 28.6 & - & 45.9 & 19.2 & 11.5 & 39.9 & 48.8 & 29.4 & 22.6 & 25.3 & 14.6 & - & - & - & - & - \\
        \midrule
        GNN-QE & 50.7 & 33.6 & 65.4 & 36.3 & 31.6 & 73.8 & 84.3 & 56.5 & 41.5 & 39.3 & 28.0 & 33.3 & 46.4 & 29.2 & 24.9 & 34.0 \\
        \bottomrule
    \end{tabular}
    \end{adjustbox}
    \label{tab:inductive_query}
\end{table}

\subsection{Faithfullness on Larger Inference Graphs}
\label{sec:train_queries}

Simulating the incremental addition of new edges in graph databases, we evaluate inductive models with \emph{training} queries on the larger inference graph, with a focus on predicting \emph{easy answers} that require performing only graph traversal without predicting missing links. Particular challenges arising when executing training queries over a larger graph are: (1) the same queries can have more correct answers as more new nodes and edges satisfying the query pattern might have been added; (2) more new entities create a ``distractor'' setting with more false positives. Generally, this setting can be considered as an inductive extension of the \emph{faithfullness} evaluation~\cite{sun2020faithful} that captures how well a neural model can answer original training queries.

Figure~\ref{fig:train_queries} demonstrates the performance of baseline methods and GNN-QE. For reference, we plot the performance for the training queries on the training graph to show the generalization gap. Compared to the baseline methods, GNN-QE fits the training query data almost perfectly, which confirms the finding that NBFNet is able to perform graph traversal like symbolic algorithms~\cite{zhu2021neural}. Additionally, GNN-QE is able to find most new correct answers on inference graphs, with slight decrease in its performance when the inference graph gets larger. We attribute this decrease to the \emph{distractor} factor caused by new entities and the generalization issue mentioned above for larger inference graphs.

\begin{figure}[t]
    \centering
    \includegraphics[width=\textwidth]{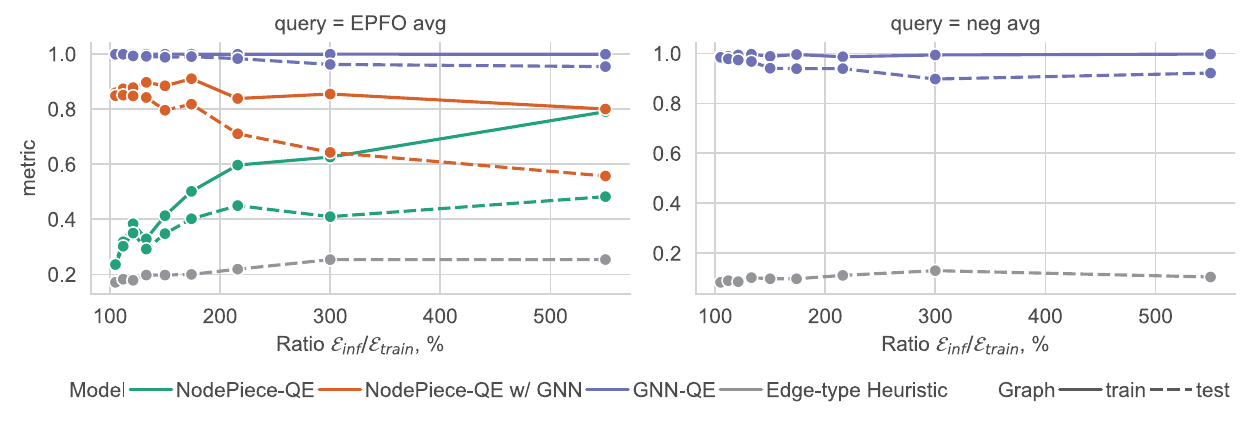}
    \caption[Performance of training queries on the training and inference graphs]{Aggregated Hits@10 performance of \textbf{training queries} on the original training and extended test inference graphs where queries have new correct answers. Note that NodePiece-QE only supports EPFO queries but not negation queries.}
    \label{fig:train_queries}
\end{figure}

\begin{table}[t]
    \centering
    \caption[Ranking of easy and hard answers on the reference dataset]{Macro-averaged AUROC score over \textbf{unfiltered} predictions on the reference $\gE_{\textit{inf}} / \gE_{\textit{train}} = 175\%$ dataset to measure if all easy answers are ranked higher than hard answers. Higher is better.}
    \begin{adjustbox}{width=\textwidth}
    \begin{tabular}{lcccccccccccccccc}
        \toprule
        \bf{Model} & \bf{avg$_p$} & \bf{avg$_n$} & \bf{1p} & \bf{2p} & \bf{3p} & \bf{2i} & \bf{3i} & \bf{pi} & \bf{ip} & \bf{2u} & \bf{up} & \bf{2in} & \bf{3in} & \bf{inp} & \bf{pin} & \bf{pni} \\
        \midrule
        NodePiece-QE & 0.692 & & 0.623 & 0.710 & 0.711 & 0.657 & 0.654 & 0.692 & 0.731 & 0.723 & 0.729 & - & - & - & - & - \\
        NodePiece-QE w/ GNN & 0.776 & & 0.783 & 0.783 & 0.739 & 0.758 & 0.733 & 0.760 & 0.801 & 0.841 & 0.787 & - & - & - & - & - \\
        \midrule
        GNN-QE & 0.973 & 0.885 & 0.998 & 0.992 & 0.986 & 0.969 & 0.962 & 0.967 & 0.969 & 0.938 & 0.978 & 0.879 & 0.859 & 0.926 & 0.914 & 0.847 \\
        \bottomrule
    \end{tabular}
    \end{adjustbox}
    \label{tab:auroc}
\end{table}

\subsection{Ranking of Easy and Hard Answers}
\label{sec:easy_vs_hard}

In addition to evaluating \emph{faithfullness} that measures how a model can recover easy answers, we also expect neural models have higher confidence in their predictions for easy answers than hard answers. To this end, we compute an AUROC metric over original \textbf{unfiltered} scores. The AUROC score measures how many hard answers are ranked \emph{after} easy answers. Note that the score only reflects the relative ranking between easy and hard answers, but does not depend on their actual ranking among all entities. Therefore, AUROC still needs to be paired with MRR to see how good these models are at predicting the correct answers.

We compute AUROC for each query and average them over each query type thus making it \textbf{macro-averaged AUROC}. Our experimental results on all query types using the models reported in Table~\ref{tab:inductive_query} on the reference 175\% dataset are compiled in Table~\ref{tab:auroc}. Compared to NodePiece with moderate AUROC scores, we observe that GNN-QE achieves nearly perfect AUROC scores, since GNN-QE aligns well with symbolic algorithms like subgraph matching.

\section{Method: UltraQuery}
\label{sec:method}

We aim at designing a single foundation model for complex FOL queries on any knowledge graph in the zero-shot fashion, i.e.\ without training on a target graph. In the complex query literature~\cite{hamilton2018embedding, ren2020query2box, ren2020beta, arakelyan2021complex, zhu2022neural}, it is common to break down query execution into a \emph{relation projection} to traverse graph edges and predict missing links, and \emph{logical operators} that model conjunction, disjunction, and union. The main challenge boils down to designing inductive projection and logical operators suitable for any entity and relation vocabulary.

\subsection{Inductive Relation Projection}
\label{subsec:ultra_proj}

The vast majority of complex query models are inherently transductive and implement relation projections as functions over entity and relation embeddings fixed to a certain knowledge graph vocabulary, e.g.\ with scoring functions from knowledge graph completion methods~\cite{hamilton2018embedding, arakelyan2021complex, bai2023answering}, geometric functions~\cite{ren2020query2box, zhang2021cone}, or pure neural methods~\cite{amayuelas2022neural, wang2023logical}. The only method inductive to new entities~\cite{zhu2022neural} learns relation embeddings and uses those as a labeling trick~\cite{zhang2021labeling} for a GNN that implements the projection operator.

As fixed relation embeddings do not transfer to new knowledge graphs with new relations, we adapt \textsc{Ultra}~\cite{galkin2024towards}, an inductive approach that builds relation representations dynamically using the invariance of \emph{relation interactions}, as the backbone of the relation projection operator thanks to its good zero-shot performance on simple knowledge graph completion tasks across a variety of graphs. \textsc{Ultra} leverages theoretical findings in multi-relational link prediction~\cite{barcelo2022weisfeiler, huang2024theory} and learns relation representations from a \emph{meta-graph} of relation interactions\footnote{The meta-graph can be efficiently obtained from any knowledge graph.}. The meta-graph includes four learnable edge types or meta-relations (\emph{head-to-tail}, \emph{tail-to-head}, \emph{head-to-head}, \emph{tail-to-tail}) which are independent from the relation vocabularies of knowledge graphs and therefore transfer across any graph. Practically, given a graph $\gG$ and projection query $(h, r, ?)$, \textsc{Ultra} employs labeling trick GNNs on two levels. First, it builds a meta-graph $\gG_r$ of relation interactions (a graph of relations where each node is a unique edge type in $\gG$) and applies a labeling trick to initialize the query node $r$. Running a message passing GNN over $\gG_r$ results in \emph{conditional relation representation} which are used as initial edge type features in the second, entity-level GNN. There, a starting node $h$ is initialized with a query vector from the obtained relation representations and running another GNN over the entity graph (with a final sigmoid readout) returns a scalar score in $[0, 1]$ representing a probability of each node to be a tail of a query $(h,r,?)$.

\begin{figure*}[t]
    \centering
    \includegraphics[width=\linewidth]{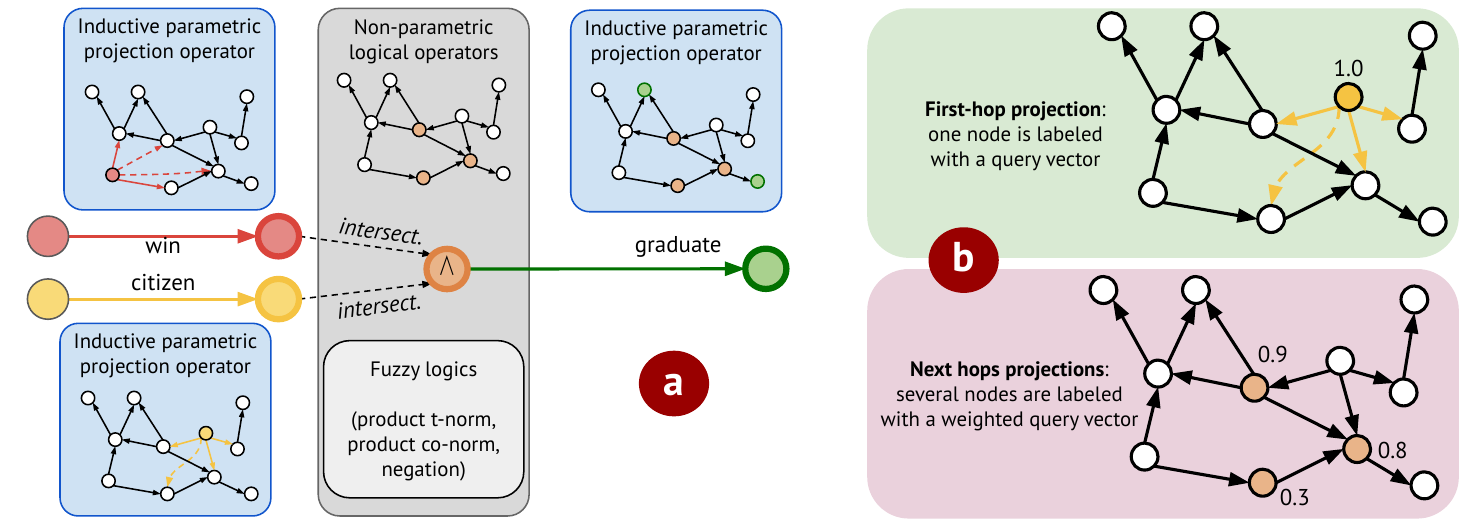}
    \caption[Overview of UltraQuery]{\textbf{(a)} Example of \emph{ip} query answering with \textsc{UltraQuery}: the inductive parametric projection operator (\Cref{subsec:ultra_proj}) executes relation projections on any graph and returns a scalar score for each entity; the scores are aggregated by non-parametric logical operators (\Cref{subsec:logic_ops}) implemented with fuzzy logics. Intermediate scores are used for weighted initializion of relation projections on the next hop. \textbf{(b)} The multi-source propagation issue with a pre-trained link predictor for relation projection: pre-training on  \emph{1p} link prediction is done in the single-source labeling mode (top) where only one query node is labeled with a non-zero vector; complex queries at later intermediate hops might have several plausible sources with non-zero initial weights (bottom) where a pre-trained operator fails. }
    \label{fig:ultraquery}
\end{figure*}

\autoref{fig:ultraquery}(a) illustrates the \emph{intersection-projection} query execution process where each projection step is tackled by the same inductive projection operator with initialization depending on the start anchor node or intermediate variables.

\smallskip \noindent \textbf{The multi-source propagation issue.}
While it is tempting to leverage \textsc{Ultra} pre-trained on multiple knowledge graph datasets for relation projection, there is a substantial distribution shift (\autoref{fig:ultraquery}(b)) between knowledge graph completion and complex queries. Specifically, knowledge graph completion is a special case of relation projection where the input always contains a single node. By comparison, in multi-hop complex queries, several likely nodes might have high intermediate scores and will be labeled with non-zero vectors leading to the \emph{multiple sources} propagation mode where a pre-trained operator is likely to fail. To alleviate the issue, we experimentally study two strategies: (1) short fine-tuning  of the pre-trained projection operator on complex queries (used in the main \textsc{UltraQuery} model), or (2) use the frozen pre-trained operator and threshold intermediate scores setting all scores below $0 < k < 1$ to zero (denoted as \textsc{UltraQuery LP}). The insight is to limit the propagation to one or a few source nodes, thereby reducing the discrepancy between training and test distributions.

\subsection{Inductive Logical Operations}
\label{subsec:logic_ops}
Learnable logical operators parameterized by neural nets in many complex query approaches~\cite{hamilton2018embedding, ren2020query2box, zhang2021cone, amayuelas2022neural} fit a particular embedding space and are not transferable. Instead, we resort to differentiable but non-parametric \emph{fuzzy logics}~\cite{van2022analyzing} that implement logical operators as algebraic operations (\emph{t-norms} for conjunction and \emph{t-conorms} for disjunction) in a bounded space $[0,1]$ and are used in several neuro-symbolic complex query approaches~\cite{arakelyan2021complex, zhu2022neural, arakelyan2024adapting, bai2023answering, yin2024rethinking}. \textsc{UltraQuery} employs fuzzy logical operators over \emph{fuzzy sets} $\vx \in [0,1]^{|\gV|}$ as the relation projection operator assigns a scalar in range $[0,1]$ for each entity in a graph. The choice of a fuzzy logic is often a hyperparameter although \cite{van2022analyzing} shows that the \emph{product logic} is the most stable. In product logic, given two fuzzy sets $\vx, \vy$, conjunction is element-wise multiplication $\vx \odot \vy$ and disjunction is $\vx + \vy - \vx \odot \vy$. Negation is often implemented as $\mathbf{1}-\vx$ where $\mathbf{1}$ is the \emph{universe} vector of all ones. For second- and later $i$-th hop projections, we obtain initial node states $\vh_v$ by weighting a query vector $\vr_i$ with their probability score $x_v$ from the fuzzy set of a previous step: $\vh_v = x_v \vr_i$.

\subsection{Training}
Following existing works~\cite{ren2020beta, zhu2022neural}, \textsc{UltraQuery} is trained on complex queries to minimize the binary cross entropy loss
\begin{align}
\hspace{-0.5em}\displaystyle{
    \gL = -\frac{1}{|\gA_{q}|}\sum_{a \in \gA_{q}}\log p(a|q) 
          -\frac{1}{|\gV\backslash\gA_{q}|}\sum_{a' \in \gV\backslash\gA_{q}}\log (1 - p(a'|q))}
    \label{eqn:loss}
\end{align}
where $\gA_{q}$ is the answer to the query $q$ and $p(a|q)$ is the probability of entity $a$ in the final output fuzzy set. \textsc{UltraQuery LP} uses a frozen checkpoint from knowledge graph completion and is not trained on complex logical queries.
\section{Experiments of UltraQuery}
\label{sec:experiments}

Our experiments focus on the following research questions: (1) How does a single \textsc{UltraQuery} model perform in the zero-shot inference mode on unseen graphs and queries compared to the baselines? (2) Does \textsc{UltraQuery} retain the quality metrics like \emph{faithfullness} and identify easy answers reachable by traversal? (3) How does the multi-source propagation issue affect the performance?

\subsection{Setup and Datasets}

\smallskip \noindent \textbf{Datasets.}
We employ 23 different complex query datasets each with 14 standard query types and its own underlying knowledge graph with different sets of entities and relations. We categorize the datasets into three groups (more statistics of the datasets and queries are provided in Section~\ref{app:dataset_gnn-qe}):
\begin{itemize}[label=$\bullet$, leftmargin=*]
    \item \emph{Transductive} (3 datasets) where training and inference graphs are the same $(\gtrain = \ginf)$ and test queries cover the same set of entities and relations: FB15k-237, NELL995 and FB15k all from \cite{ren2020beta} with at most 100 answers per query.
    \item \emph{Inductive entity} $(e)$ (9 datasets) from \cite{galkin2022inductive} where inference graphs extend training graphs $(\gtrain \subset \ginf)$ being up to 550\% larger in the number of entities. The set of relations is fixed in each training graph and does not change at inference making the setup inductive with respect to the entities. Training queries might have more true answers in the extended inference graph.
    \item \emph{Inductive entity and relation} $(e,r)$ (11 datasets): we sampled a novel suite of WikiTopics-CLQA datasets due to the absence of standard benchmarks evaluating the hardest inductive setup where inference graphs have both new entities and relations $(\gtrain \neq \ginf)$. The source graphs were adopted from the WikiTopics datasets~\cite{gao2023double}, we follow the \emph{BetaE setting} when sampling 14 query types with at most 100 answers.
\end{itemize} 

\smallskip \noindent \textbf{Implementation and Training.}
\textsc{UltraQuery} was trained on one FB15k-237 dataset with complex queries for 10,000 steps with batch size of 32 on 4 RTX 3090 GPUs for 2 hours (8 GPU-hours in total). We initialize the model weights with an available checkpoint of \textsc{Ultra} reported in \cite{galkin2024towards}. Following the standard setup in the literature, we train the model on 10 query types and evaluate on all 14 patterns. We employ \emph{product t-norm} and \emph{t-conorm} as non-parametric fuzzy logic operators to implement conjunction $(\wedge)$ and disjunction $(\lor)$, respectively, and use a simple $1-x$ negation. For the ablation study, \textsc{UltraQuery LP} uses the same frozen checkpoint (pre-trained on simple \emph{1p} link prediction) with scores thresholding to alleviate the multi-source propagation issue (\Cref{subsec:ultra_proj}).

\smallskip \noindent \textbf{Evaluation Protocol.}
As we train an \textsc{UltraQuery} model only on one FB15k-237 dataset and run zero-shot inference on other 22 graphs, the inference mode on those is \emph{inductive} $(e,r)$ since their entity and relation vocabularies are all different from the training set.

As common in the literature~\cite{ren2020beta, ren2023neural}, the answer set of each query is split into \emph{easy} and \emph{hard} answers. Easy answers are reachable by graph traversal and do not require inferring missing links whereas hard answers are those that involve at least one edge to be predicted at inference. In the rank-based evaluation, we only consider ranks of \emph{hard} answers and filter out easy ones and report filtered Mean Reciprocal Rank (MRR) and Hits@10 as main performance metrics.

Other qualitative metrics include: (1) \emph{faithfullness}~\cite{sun2020faithful}, i.e.\ the ability to recover \emph{easy} answers reachable by graph traversal. Here, we follow the setup in \cite{galkin2022inductive} and measure the performance of training queries on larger inference graphs where the same queries might have new true answers; (2) the AUROC score to estimate whether a model ranks easy answers higher than hard answers -- we compute AUROC over \emph{unfiltered} scores of easy answers as positive labels and hard answers as negative. (3) Mean Absolute Percentage Error (MAPE)~\cite{zhu2022neural} between the number of answers extracted from model's predictions and the number of ground truth answers (easy and hard combined) to estimate whether complex query models can predict the cardinality of the answer set.

\smallskip \noindent \textbf{Baselines.}
In transductive and inductive $(e)$ datasets, we compare a single \textsc{UltraQuery} model with the best reported models trained end-to-end on each graph (denoted as \emph{Best baseline} in the experiments): QTO~\cite{bai2023answering} for 3 transductive datasets (FB15k-237, FB15k, and NELL995) and GNN-QE~\cite{galkin2022inductive} for 9 inductive $(e)$ datasets. While a single \textsc{UltraQuery} model has 177k parameters, the baselines are several orders of magnitude larger with a parameters count depending on the number of entities and relations, e.g.\ a QTO model on FB15k-237 has 30M parameters due to having 2000$d$ entity and relation embeddings, and GNN-QE on a reference FB 175\% inductive $(e)$ dataset has 2M parameters. For a newly sampled suite of 11 inductive $(e, r)$ datasets, we compare against the edge-type heuristic baseline introduced in \cite{galkin2022inductive}. The heuristic selects the candidate nodes with the same incoming relation as the last hop of the query.

\subsection{Zero-shot Query Answering}

Here we measure the zero-shot query answering performance of \textsc{UltraQuery} trained on a fraction of complex queries of one FB15k-237 dataset. Figure~\ref{fig:main_fig1} and Table~\ref{tab:maintab1} illustrate the comparison with the best available baselines and ablated \textsc{UltraQuery LP} model on 23 datasets split into three categories (transductive, inductive $(e)$, and inductive $(e,r)$). For each dataset, we measure the average MRR on 9 EPFO queries with projection, intersection, and union operators, and 5 negation queries with the negation operator, respectively.

\begin{figure}[!t]
    \centering
    \includegraphics[width=\linewidth]{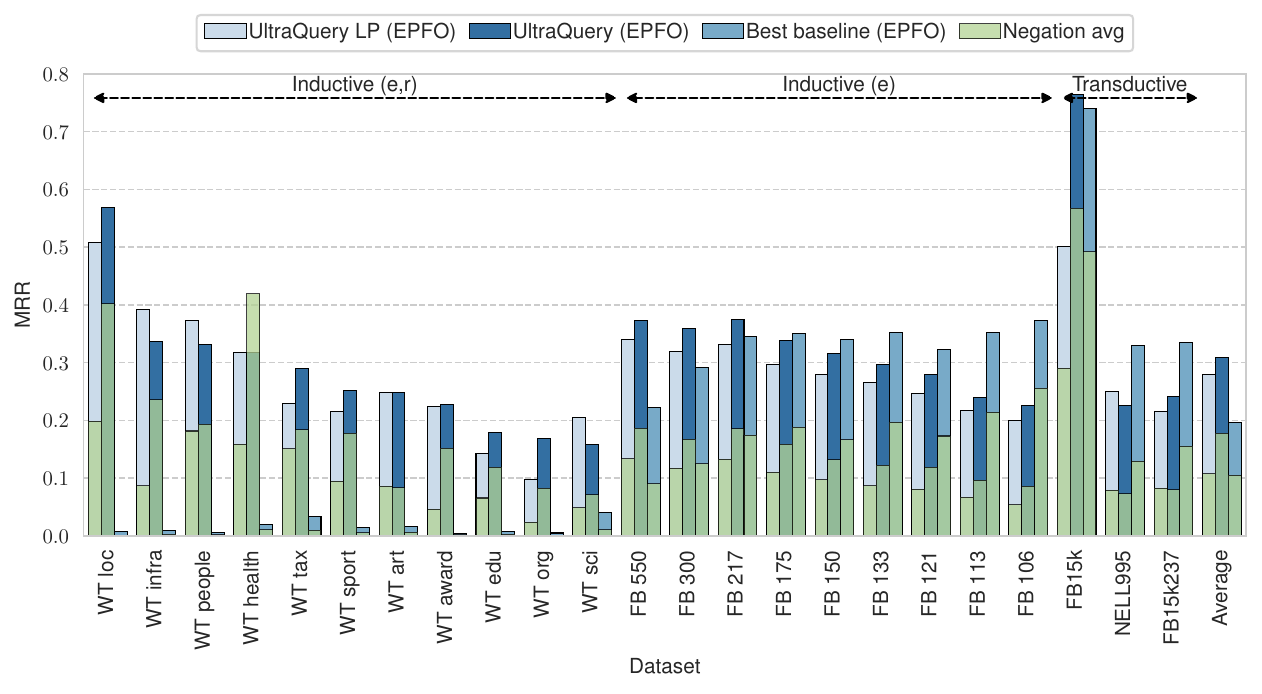}
    \caption[Zero-shot query answering performance of UltraQuery on 23 datasets]{Zero-shot query answering performance (MRR, higher is better) of a single \textsc{UltraQuery} model trained on one FB15k237 queries dataset compared to the best available baselines and ablated \textsc{UltraQuery LP} on 23 datasets. \emph{EPFO} is the average of 9 query types with $(\wedge, \lor)$ operators, \emph{Negation} is the average of 5 query types with the negation operator $(\neg)$. On average, a single \textsc{UltraQuery} model outperforms the best baselines trained specifically on each dataset.}
    \label{fig:main_fig1}
\end{figure}

\begin{table*}[!t]
    \centering
    \caption[Zero-shot results of UltraQuery and UltraQuery LP on 23 datasets]{Zero-shot inference results of \textsc{UltraQuery} and ablated \textsc{UltraQuery LP} on 23 datasets compared to the best reported baselines. \textsc{UltraQuery} was trained on one transductive FB15k-237 dataset, \textsc{UltraQuery LP} was only pre-trained on knowledge graph completion and uses scores thresholding. The \emph{no thrs.}\ version does not use any thresholding of intermediate scores (\Cref{subsec:ultra_proj}). The best baselines are trainable on each transductive and inductive $(e)$ dataset, and the non-parametric heuristic baseline on inductive $(e,r)$ datasets.}
    \begin{adjustbox}{width=\textwidth}
        \begin{tabular}{lrrrrrrrrrrrrrrrrr}
            \toprule
            \multirow{3}{*}{\bf{Model}} &\multicolumn{4}{c}{\bf{Inductive} $(e,r)$ (11 datasets)} &\multicolumn{4}{c}{\bf{Inductive} $(e)$ (9 datasets)} &\multicolumn{4}{c}{\bf{Transductive} (3 datasets)} &\multicolumn{4}{c}{\bf{Total Average} (23 datasets)} \\\cmidrule(l){2-5} \cmidrule(l){6-9} \cmidrule(l){10-13} \cmidrule(l){14-17}
            &\multicolumn{2}{c}{EPFO avg} &\multicolumn{2}{c}{neg avg} &\multicolumn{2}{c}{EPFO avg} &\multicolumn{2}{c}{neg avg} &\multicolumn{2}{c}{EPFO avg} &\multicolumn{2}{c}{neg avg} &\multicolumn{2}{c}{EPFO avg} &\multicolumn{2}{c}{neg avg} \\\cmidrule(l){2-3} \cmidrule(l){4-5} \cmidrule(l){6-7} \cmidrule(l){8-9} \cmidrule(l){10-11} \cmidrule(l){12-13} \cmidrule(l){14-15} \cmidrule(l){16-17}
            &\bf{MRR} &\bf{H@10} &\bf{MRR} &\bf{H@10} &\bf{MRR} &\bf{H@10} &\bf{MRR} &\bf{H@10} &\bf{MRR} &\bf{H@10} &\bf{MRR} &\bf{H@10} &\bf{MRR} &\bf{H@10} &\bf{MRR} &\bf{H@10} \\\midrule
            Best baseline &0.014 &0.029 &0.004 &0.007 &\bf{0.328} &\textbf{0.469} &\bf{0.176} &\bf{0.297} &\bf{0.468} &\bf{0.603} &\bf{0.259} &\bf{0.409} &0.196 &0.276 &0.105 &0.173 \\ \midrule
            \textsc{UltraQuery} 0-shot &\bf{0.280} &0.380 &\bf{0.193} &\bf{0.288} &0.312 &\bf{0.467} &0.139 &0.262 &0.411 &0.517 &0.240 &0.352 &\bf{0.309} &\bf{0.432} &\bf{0.178} &\bf{0.286} \\
            \textsc{UltraQuery LP} 0-shot &0.268 &\bf{0.409} &0.104 &0.181 &0.277 &0.441 &0.098 &0.191 &0.322 &0.476 &0.150 &0.263 &0.279 &0.430 &0.107 &0.195 \\
            \textsc{UltraQuery LP} no thrs. &0.227 &0.331 &0.080 &0.138 &0.246 &0.390 &0.085 &0.167 &0.281 &0.417 &0.127 &0.223 &0.242 &0.367 &0.088 &0.161 \\
            \bottomrule
        \end{tabular}
    \end{adjustbox}
    \label{tab:maintab1}
\end{table*}

Averaged across 23 datasets, \textsc{UltraQuery} outperforms available baselines by relative 50\% in terms of MRR and Hits@10 on EPFO and 70\% on negation queries (e.g.\ 0.31 vs 0.20 MRR on EPFO queries and 0.178 vs 0.105 on negation queries). The largest gains are achieved on the hardest inductive $(e,r)$ datasets where the heuristic baseline is not able to cope with the task. On inductive $(e)$ datasets, \textsc{UltraQuery} outperforms the trainable SOTA GNN-QE model on larger inductive inference graphs and performs competitively on smaller inductive versions. On transductive benchmarks, \textsc{UltraQuery} lags behind the SOTA QTO model which is expected and can be attributed to the sheer model size difference (177k of \textsc{UltraQuery} vs 30M of QTO) and the computationally expensive brute-force approach of QTO that materializes the whole $(\gV \times \gV \times \gR)$ 3D tensor of scores of all possible triplets. Pre-computing such tensors on three datasets takes considerable space and time, e.g.\ 8 hours for FB15k with heavy sparsification settings to fit onto a 24 GB GPU. Still, \textsc{UltraQuery} outperforms a much larger QTO model on the FB15k dataset on both EPFO and negation queries. The graph behind the NELL995 dataset is a collection of disconnected components which is disadvantageous for GNNs.

We note a decent performance of \textsc{UltraQuery LP} trained only on simple \emph{1p} link prediction and imbued with score thresholding to alleviate the multi-source message passing issue described in Section~\ref{subsec:ultra_proj}. Having a deeper look at other qualitative metrics in the following section, we reveal more sites where the issue incurs negative effects.

\begin{figure*}[t]
    \centering
    \includegraphics[width=\linewidth]{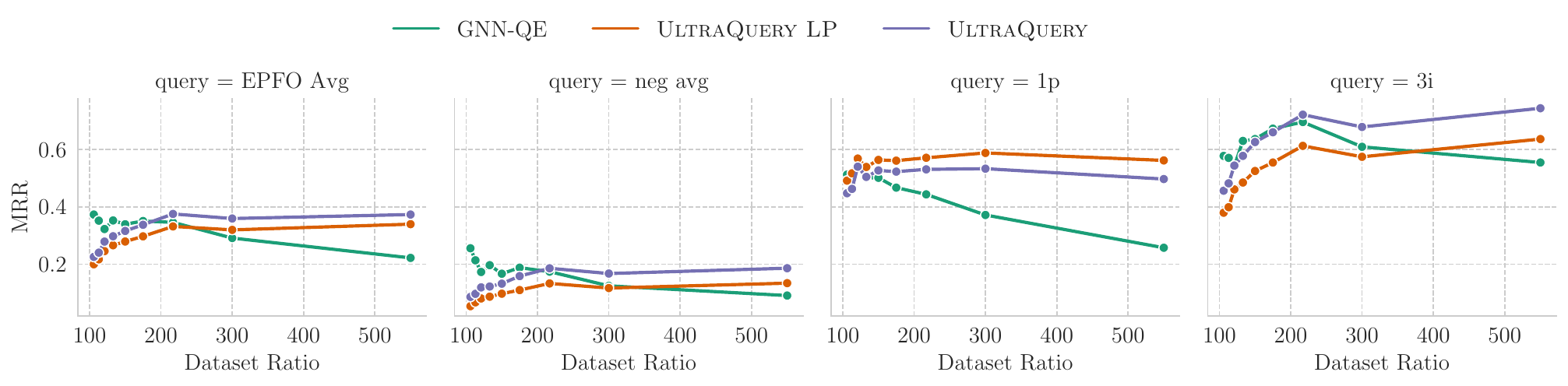}
    \caption[Mitigation of the multi-source message passing issue with UltraQuery]{Mitigation of the multi-source message passing issue (Section~\ref{sec:method}) with \textsc{UltraQuery}: while \textsc{UltraQuery LP} (pre-trained only on 1p link prediction) does reach higher 1p query performance (center right), it underperforms on negation queries (center left). \textsc{UltraQuery} adapts to the multi-source message passing scheme and trades a fraction of 1p query performance for better averaged EPFO, e.g.\ on the \emph{3i} query (right), and negation queries performance.}
    \label{fig:abl_multisource}
\end{figure*}

\subsection{Analysis}

\begin{figure*}[t]
    \centering
    \includegraphics[width=\linewidth]{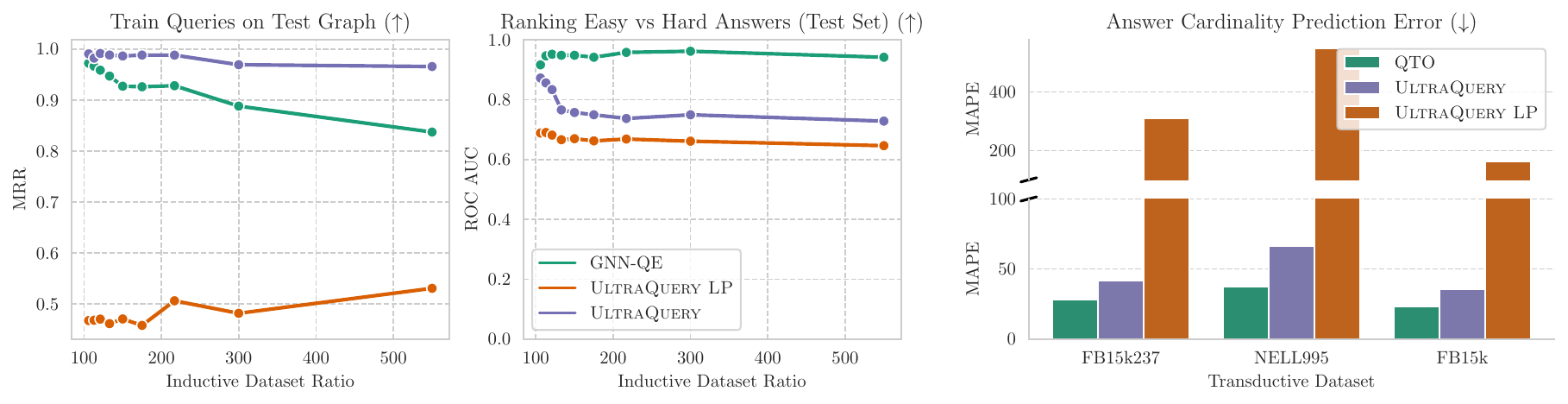}
    \caption[Qualitative analysis of UltraQuery on 9 inductive and 3 transductive datasets]{Qualitative analysis on 9 inductive $(e)$ and 3 transductive datasets averaged across all 14 query types. \textbf{Faithfullness, MRR (left):} \textsc{UltraQuery} successfully finds easy answers in larger inference graphs and outperforms trained GNN-QE baselines. \textbf{Ranking of easy vs hard answers, AUROC (center):} zero-shot inference methods slightly lag behind trainable GNN-QE due to assigning higher scores to hard answers. \textbf{Cardinality Prediction, MAPE (right):} \textsc{UltraQuery} is comparable to a much larger trainable baseline QTO. In all cases, \textsc{UltraQuery LP} is significantly inferior to the main model. }
    \label{fig:abl_quality}
\end{figure*}

Here, we study four aspects of model performance: the effect of the multi-source message passing issue mentioned in Section~\ref{subsec:ultra_proj}, the ability to recover answers achievable by edge traversal (\emph{faithfullness}), the ability to rank easy answers higher than hard answers, and the ability to estimate the cardinality of the answer set.

\smallskip \noindent \textbf{The multi-source message passing effect.}
The pre-trained \textsc{Ultra} checkpoint used in \textsc{UltraQuery LP} is tailored for singe-source message passing and struggles in the complex query setup on later hops with several initialized nodes (\autoref{tab:maintab1}). Training \textsc{UltraQuery} on complex queries alleviates this issue as shown in \autoref{fig:abl_multisource}, i.e.\ while \emph{1p} performance of \textsc{UltraQuery LP} is higher, the overall performance on EPFO and negative queries is lacking. In contrast, \textsc{UltraQuery} trades a fraction of \emph{1p} single-source performance to a much better performance on negative queries (about $2\times$ improvement) and better performance on many EPFO queries, for example, on \emph{3i} queries. Besides that, we note that the zero-shot performance of both \textsc{UltraQuery} models does not deteriorate from the increased size of the inference graph compared to the baseline GNN-QE.

\smallskip \noindent \textbf{Recovering easy answers on any graph.}
\emph{Faithfullness}~\cite{sun2020faithful} is the ability of a complex query model to return \emph{easy} query answers, i.e.\ the answers reachable by edge traversal in the graph without predicting missing edges. While faithfullness is a common problem for many complex query models, \autoref{fig:abl_quality} demonstrates that \textsc{UltraQuery} almost perfectly recovers easy answers on any graph size even in the zero-shot inference regime in contrast to the best baseline. Simple score thresholding does not help \textsc{UltraQuery LP} to deal with complex queries as all easy intermediate nodes have high scores above the threshold and the multi-source is more pronounced.

\smallskip \noindent \textbf{Ranking easy and hard answers.}
A reasonable complex query model is likely to score easy answers higher than hard ones that require inferring missing links~\cite{galkin2022inductive}.
Measuring that with AUROC (\autoref{fig:abl_quality}), \textsc{UltraQuery} is behind the baseline due to less pronounced decision boundaries (overlapping distributions of scores) between the scores of easy and hard answers. Still, due to scores filtering when computing ranking metrics, this fact does not have a direct negative impact on the overall performance.

\smallskip \noindent \textbf{Estimating the answer set cardinality.}
Neural-symbolic models like GNN-QE and QTO have the advantage of estimating the cardinality of the answer set based on the final scores without additional supervision. As shown in \autoref{fig:abl_quality}, \textsc{UltraQuery} is comparable to the larger and trainable QTO baseline on FB15k-237 (on which the model was trained) as well as on other datasets in the zero-shot inference regime. Since cardinality estimation is based on score thresholding, \textsc{UltraQuery LP} is susceptible to the multi-source propagation issue with many nodes having a high score and is not able to deliver a comparable performance.

\section{Dataset Statistics}
\label{app:dataset_gnn-qe}

\smallskip \noindent \textbf{Transductive Datasets.}
We use the complex query datasets generated by \cite{ren2020beta}. There is a total number of 14 query types, as showed in Figure~\ref{fig:query_type}. Statistics of all query types is summarized in Table~\ref{tab:statistics}.

\begin{figure*}[!h]
    \centering
    \includegraphics[width=0.98\textwidth]{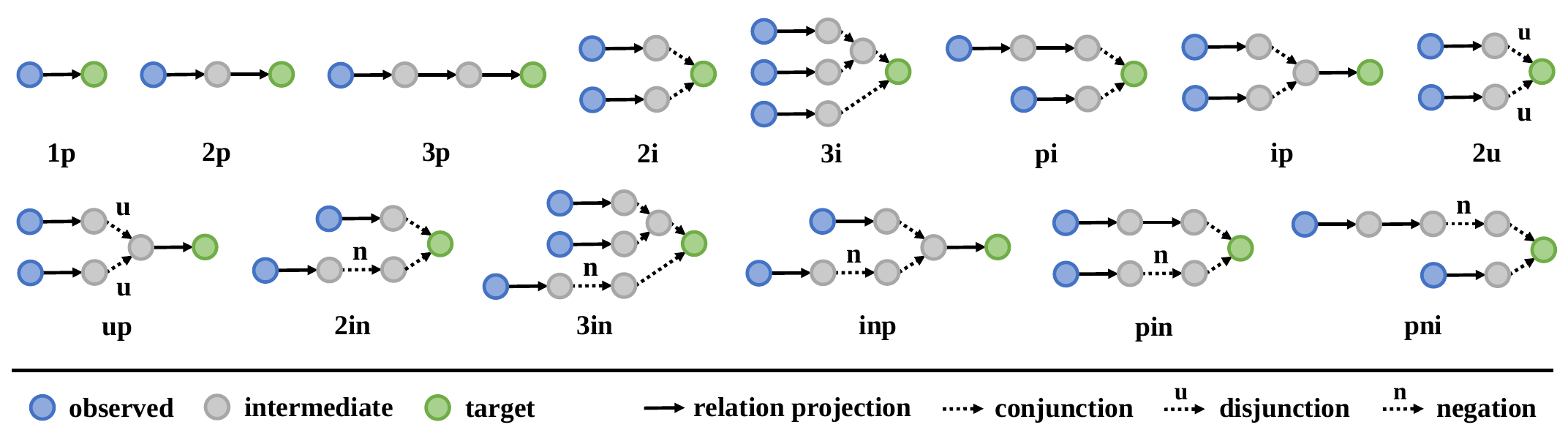}
    \caption[Types of complex FOL queries used in training and inference]{Types of complex FOL queries used in training and inference.}
    \label{fig:query_type}
\end{figure*}

\begin{table}[!h]
    \centering
    \caption[Statistics of different query types used in the transductive datasets]{Statistics of different query types used in the transductive datasets.}
    \footnotesize
    \begin{tabular}{llcccc}
        \toprule
        \bf{Split} & \bf{Query Type} & \bf{FB15k} & \bf{FB15k-237} & \bf{NELL995} \\
        \midrule
        \multirow{2}{*}{Train}
        & 1p/2p/3p/2i/3i & 273,710 & 149,689 & 107,982 \\
        & 2in/3in/inp/pin/pni & 27,371 & 14,968 & 10,798 \\
        \midrule
        \multirow{2}{*}{Valid}
        & 1p & 59,078 & 20,094 & 16,910 \\
        & Others & 8,000 & 5,000 & 4,000 \\
        \midrule
        \multirow{2}{*}{Test}
        & 1p & 66,990 & 22,804 & 17,021 \\
        & Others & 8,000 & 5,000 & 4,000 \\
        \bottomrule
    \end{tabular}
    \label{tab:statistics}
\end{table}

\smallskip \noindent \textbf{Inductive $(e)$ Datasets.}
We sampled 9 inductive datasets from the original FB15k-237~\cite{toutanova2015observed} with already added inverse edges. Statistics of the sampled graphs are presented in Table~\ref{tab:graphs}. The amount of new unique nodes is simply the difference $\gE_\textit{inf} - \gE_\textit{train}$ between entities in those graphs, e.g.\ for the dataset of ratio $175\%$, the validation inference graph contains $4,241$ new nodes and test inference graph contains $4,221$ news nodes. Note that the sets of new nodes introduced by the validation and test graphs are disjoint.

For each created inductive dataset, we sample queries of 14 query patterns following the BetaE~\cite{ren2020beta} procedure. We only retain queries that have less than 1000 answers. Table~\ref{tab:all_queries} summarizes the statistics of the sampled queries for each dataset ratio, graph and query type. In graphs with smaller inference graphs and smaller number of missing triplets, we sample fewer queries with negation (\emph{2in, 3in, inp, pin, pni}) for validation and test splits.

\begin{table}[!h]
    \centering
    \caption[Statistics of inductive datasets with various ratios $\gE_{\textit{inf}} / \gE_{\textit{train}}$]{Statistics of inductive datasets with various ratios $\gE_{\textit{inf}} / \gE_{\textit{train}}$. Originally inverse triplets are included. $\gR$ - number of unique relation types, $\gE$ - number of entities in various splits, $\gT$ - number of triplets. Validation and Test splits contain an inference graph $(\gE_\textit{inf}, \gT_\textit{inf})$ which is a superset of the training graph with new nodes, and missing edges to predict $\gT_\textit{pred}$.}
    \label{tab:graphs}
    \footnotesize
    \begin{adjustbox}{max width=\textwidth}
        \begin{tabular}{cccccccccccc}\toprule
            \multirow{2}{*}{\bf{Ratio}} &\multirow{2}{*}{$\gR$} &\multirow{2}{*}{$\gE_{\textit{total}}$} &\multicolumn{2}{c}{\bf{Training Graph}} &\multicolumn{3}{c}{\bf{Validation Graph}} &\multicolumn{3}{c}{\bf{Test Graph}} \\ \cmidrule(lr){4-5}  \cmidrule(lr){6-8}  \cmidrule(lr){9-11}
            & & &$\gE_{\textit{train}}$ & $\gT_{\textit{train}}$ & $\gE^{\textit{val}}_{\textit{inf}}$ & $\gT^{\textit{val}}_{\textit{inf}}$ & $\gT^{\textit{val}}_{\textit{pred}}$ & $\gE^{\textit{test}}_{\textit{inf}}$ & $\gT^{\textit{test}}_{\textit{inf}}$ & $\gT^{\textit{test}}_{\textit{pred}}$ \\ \midrule
            106\% &466 &14,512 &13,091 &493,425 &13,801 &551,336 &10,219 &13,802 &538,896 &8,023 \\
            113\% &468 &14,442 &11,601 &401,677 &13,022 &491,518 &15,849 &13,021 &486,068 &14,893 \\
            122\% &466 &14,444 &10,184 &298,879 &12,314 &413,554 &20,231 &12,314 &430,892 &23,289 \\
            134\% &466 &14,305 &8,634 &228,729 &11,468 &373,262 &25,477 &11,471 &367,810 &24,529 \\
            150\% &462 &14,333 &7,232 &162,683 &10,783 &311,462 &26,235 &10,782 &331,352 &29,755 \\
            175\% &436 &14,022 &5,560 &102,521 &9,801 &265,412 &28,691 &9,781 &266,494 &28,891 \\
            217\% &446 &13,986 &4,134 &52,455 &9,062 &227,284 &30,809 &9,058 &212,386 &28,177 \\
            300\% &412 &13,868 &2,650 &24,439 &8,252 &178,680 &27,135 &8,266 &187,156 &28,657 \\
            550\% &312 &13,438 &1,084 &5,265 &7,247 &136,558 &22,981 &7,275 &133,524 &22,503 \\
            \bottomrule
        \end{tabular}
    \end{adjustbox}
\end{table}

\smallskip \noindent \textbf{Inductive $(e,r)$ Datasets.}
The WikiTopics dataset introduced by~\cite{gao2023double} was used to evaluate link prediction model's zero-shot performance in the inductive $(e,r)$ setting, i.e.\ when the test-time inference graph contains \textit{both} new entities and new relations unseen in training. It grouped relations into 11 different topics, or domains, such as art, education, health care, and sport. Table~\ref{tab:app_datasets_indr_query} shows the statistics of the 11 topic-specific knowledge graphs in WikiTopics. We sample queries and answers of the 14 query patterns for WikiTopics following the procedure in BetaE~\cite{ren2020beta}. Table~\ref{tab:wikitopics} shows the statistics of complex queries generated for WikiTopics.

\pagebreak

\begin{table}[!h]
    \centering
    \caption[Statistics of different query types sampled for inductive datasets]{Statistics of different query types sampled for inductive datasets.}
    \label{tab:all_queries}
    \begin{adjustbox}{width=\textwidth}
        \begin{tabular}{lrrrrrrrrrrrrrrrr}\toprule
            \bf{Ratio} & \bf{Graph} & \multicolumn{1}{c}{\bf{1p}} & \multicolumn{1}{c}{\bf{2p}} & \multicolumn{1}{c}{\bf{3p}} & \multicolumn{1}{c}{\bf{2i}} & \multicolumn{1}{c}{\bf{3i}} & \multicolumn{1}{c}{\bf{pi}} & \multicolumn{1}{c}{\bf{ip}} & \multicolumn{1}{c}{\bf{2u}} & \multicolumn{1}{c}{\bf{up}} & \multicolumn{1}{c}{\bf{2in}} & \multicolumn{1}{c}{\bf{3in}} & \multicolumn{1}{c}{\bf{inp}} & \multicolumn{1}{c}{\bf{pin}} & \multicolumn{1}{c}{\bf{pni}} \\\midrule
            \multirow{3}{*}{106\%} &training & 135,613 &50,000 &50,000 &50,000 &50,000 &50,000 &50,000 &50,000 &50,000 &50,000 &40,000 &50,000 &50,000 &50,000 \\
            &validation & 6,582 &10,000 &10,000 &10,000 &10,000 &10,000 &10,000 &10,000 &10,000 &1,000 &1,000 &1,000 &1,000 &1,000 \\
            &test & 5,446 &10,000 &10,000 &10,000 &10,000 &10,000 &10,000 &10,000 &10,000 &1,000 &1,000 &1,000 &1,000 &1,000 \\ \midrule
            \multirow{3}{*}{113\%} &training & 115,523 &50,000 &50,000 &50,000 &50,000 &50,000 &50,000 &50,000 &50,000 &50,000 &40,000 &50,000 &50,000 &50,000 \\
            &validation & 10,256 &10,000 &10,000 &10,000 &10,000 &10,000 &10,000 &10,000 &10,000 &1,000 &1,000 &1,000 &1,000 &1,000 \\
            &test & 9,782 &10,000 &10,000 &10,000 &10,000 &10,000 &10,000 &10,000 &10,000 &1,000 &1,000 &1,000 &1,000 &1,000 \\ \midrule
            \multirow{3}{*}{122\%} &training & 91,228 &50,000 &50,000 &50,000 &50,000 &50,000 &50,000 &50,000 &50,000 &50,000 &40,000 &50,000 &50,000 &50,000 \\
            &validation & 12,696 &10,000 &10,000 &10,000 &10,000 &10,000 &10,000 &10,000 &10,000 &5,000 &5,000 &5,000 &5,000 &5,000 \\
            &test & 14,458 &10,000 &10,000 &10,000 &10,000 &10,000 &10,000 &10,000 &10,000 &5,000 &5,000 &5,000 &5,000 &5,000 \\ \midrule
            \multirow{3}{*}{134\%} &training & 75,326 &50,000 &50,000 &50,000 &50,000 &50,000 &50,000 &50,000 &50,000 &50,000 &40,000 &50,000 &50,000 &50,000 \\
            &validation & 15,541 &50,000 &50,000 &50,000 &50,000 &50,000 &50,000 &20,000 &20,000 &5,000 &5,000 &5,000 &5,000 &5,000 \\
            &test & 15,270 &50,000 &50,000 &50,000 &50,000 &50,000 &50,000 &20,000 &20,000 &5,000 &5,000 &5,000 &5,000 &5,000 \\ \midrule
            \multirow{3}{*}{150\%} &training & 56,114 &50,000 &50,000 &50,000 &50,000 &50,000 &50,000 &50,000 &50,000 &50,000 &40,000 &50,000 &50,000 &50,000 \\
            &validation & 16,229 &50,000 &50,000 &50,000 &50,000 &50,000 &50,000 &50,000 &50,000 &5,000 &5,000 &5,000 &5,000 &5,000 \\
            &test & 17,683 &50,000 &50,000 &50,000 &50,000 &50,000 &50,000 &50,000 &50,000 &5,000 &5,000 &5,000 &5,000 &5,000 \\ \midrule
            \multirow{3}{*}{175\%} &training & 38,851 &50,000 &50,000 &50,000 &50,000 &50,000 &50,000 &50,000 &50,000 &50,000 &40,000 &50,000 &50,000 &50,000 \\
            &validation & 17,235 &50,000 &50,000 &50,000 &50,000 &50,000 &50,000 &50,000 &50,000 &10,000 &10,000 &10,000 &10,000 &10,000 \\
            &test & 17,476 &50,000 &50,000 &50,000 &50,000 &50,000 &50,000 &50,000 &50,000 &10,000 &10,000 &10,000 &10,000 &10,000 \\ \midrule
            \multirow{3}{*}{217\%} & training & 22,422 &30,000 &30,000 &50,000 &50,000 &50,000 &50,000 &50,000 &50,000 &30,000 &30,000 &50,000 &50,000 &50,000 \\
            &validation & 18,168 &50,000 &50,000 &50,000 &50,000 &50,000 &50,000 &50,000 &50,000 &10,000 &10,000 &10,000 &10,000 &10,000 \\
            &test & 16,902 &50,000 &50,000 &50,000 &50,000 &50,000 &50,000 &50,000 &50,000 &10,000 &10,000 &10,000 &10,000 &10,000 \\ \midrule
            \multirow{3}{*}{300\%} &training & 11,699 &15,000 &15,000 &40,000 &40,000 &50,000 &50,000 &50,000 &50,000 &15,000 &15,000 &50,000 &40,000 &50,000 \\
            &validation & 16,189 &50,000 &50,000 &50,000 &50,000 &50,000 &50,000 &50,000 &50,000 &10,000 &10,000 &10,000 &10,000 &10,000 \\
            &test & 17,105 &50,000 &50,000 &50,000 &50,000 &50,000 &50,000 &50,000 &50,000 &10,000 &10,000 &10,000 &10,000 &10,000 \\ \midrule
            \multirow{3}{*}{550\%} &training & 3,284 &15,000 &15,000 &40,000 &40,000 &50,000 &50,000 &50,000 &50,000 &10,000 &10,000 &30,000 &30,000 &30,000 \\
            &validation & 13,616 &50,000 &50,000 &50,000 &50,000 &50,000 &50,000 &50,000 &50,000 &10,000 &10,000 &10,000 &10,000 &10,000 \\
            &test & 13,670 &50,000 &50,000 &50,000 &50,000 &50,000 &50,000 &50,000 &50,000 &10,000 &10,000 &10,000 &10,000 &10,000 \\ 
            \bottomrule
        \end{tabular}
    \end{adjustbox}
\end{table}

\pagebreak

\begin{table}[t]
    \centering
    \caption[Statistics of knowledge graphs in WikiTopics]{Statistics of knowledge graphs in WikiTopics. Triplets denote the number of edges of the graph given at training, validation, or test. Valid and Test denote triplets to be predicted in the validation and test sets respectively.}
    \label{tab:app_datasets_indr_query}
    \begin{adjustbox}{width=\textwidth}
        \begin{tabular}{lcccccccccccc}\toprule
            \multirow{2}{*}{\bf{Dataset}} &\multicolumn{3}{c}{\bf{Training Graph}} &\multicolumn{4}{c}{\bf{Validation Graph}} &\multicolumn{4}{c}{\bf{Test Graph}} \\ \cmidrule(l){2-4} \cmidrule(l){5-8} \cmidrule(l){9-12}
            &\bf{Entities} & \bf{Relations} & \bf{Triplets} & \bf{Entities} & \bf{Relations} & \bf{Triplets} & \bf{Valid} & \bf{Entities} & \bf{Relations} & \bf{Triplets} & \bf{Test} \\\midrule
            Art &10000 &65 &27262 &10000 &65 &27262 &3026 &10000 &65 &28023 &3113 \\
            Award &10000 &17 &23821 &10000 &13 &23821 &2646 &10000 &17 &25056 &2783 \\
            Education &10000 &19 &14355 &10000 &19 &14355 &1594 &10000 &19 &14193 &1575 \\
            Health &10000 &31 &15539 &10000 &31 &15539 &1725 &10000 &31 &15337 &1703 \\
            Infrastructure &10000 &37 &21990 &10000 &37 &21990 &2443 &10000 &37 &21646 &2405 \\
            Location &10000 &62 &85063 &10000 &62 &85063 &9451 &10000 &62 &80269 &8917 \\
            Organization &10000 &34 &33325 &10000 &34 &33325 &3702 &10000 &34 &31314 &3357 \\
            People &10000 &40 &55698 &10000 &40 &55698 &6188 &10000 &40 &58530 &6503 \\
            Science &10000 &66 &12576 &10000 &66 &12576 &1397 &10000 &66 &12516 &1388 \\
            Sport &10000 &34 &47251 &10000 &34 &47251 &5250 &10000 &34 &46717 &5190 \\
            Taxonomy &10000 &59 &18921 &10000 &59 &18921 &2102 &10000 &59 &19416 &2157 \\
            \bottomrule
        \end{tabular}
    \end{adjustbox}
\end{table}

\begin{table}[t]
    \centering
    \caption[Statistics of queries generated for knowledge graphs in WikiTopics]{Statistics of queries generated for knowledge graphs in WikiTopics. Numbers are the same for both the training and inference graph.}
    \begin{adjustbox}{width=\textwidth}
        \begin{tabular}{lrrrrrrrrrrrrrrr}\toprule
        \bf{Topics} & \bf{1p} & \bf{2p} & \bf{3p} & \bf{2i} & \bf{3i} & \bf{pi} & \bf{ip} & \bf{2in} & \bf{3in} & \bf{pin} & \bf{pni} & \bf{inp} & \bf{2u} & \bf{up} \\
        \midrule
        Art & 3113 & 10000 & 10000 & 10000 & 10000 & 10000 & 10000 & 1000 & 1000 & 1000 & 1000 & 1000 & 10000 & 10000 \\
        Award & 2783 & 10000 & 10000 & 10000 & 10000 & 10000 & 10000 & 1000 & 1000 & 1000 & 1000 & 1000 & 10000 & 10000 \\
        Education & 1575 & 10000 & 10000 & 10000 & 10000 & 10000 & 10000 & 1000 & 1000 & 1000 & 1000 & 1000 & 10000 & 10000 \\
        Health & 1703 & 10000 & 10000 & 10000 & 10000 & 10000 & 10000 & 1000 & 1000 & 1000 & 1000 & 1000 & 10000 & 10000 \\
        Infrastructure & 2405 & 10000 & 10000 & 10000 & 10000 & 10000 & 10000 & 1000 & 1000 & 1000 & 1000 & 1000 & 10000 & 10000 \\
        Location & 8000 & 8917 & 4000 & 8000 & 8000 & 8000 & 8000 & 1000 & 1000 & 1000 & 1000 & 1000 & 8000 & 8000 \\
        Organization & 3357 & 8000 & 4000 & 8000 & 8000 & 8000 & 8000 & 1000 & 1000 & 1000 & 1000 & 1000 & 8000 & 8000 \\
        People & 6503 & 10000 & 10000 & 10000 & 10000 & 10000 & 10000 & 1000 & 1000 & 1000 & 1000 & 1000 & 10000 & 10000 \\
        Science & 1388 & 10000 & 10000 & 10000 & 10000 & 10000 & 10000 & 1000 & 1000 & 1000 & 1000 & 1000 & 10000 & 10000 \\
        Sport & 5190 & 8000 & 4000 & 8000 & 8000 & 8000 & 8000 & 1000 & 1000 & 1000 & 1000 & 1000 & 8000 & 8000 \\
        Taxonomy & 2157 & 8000 & 8000 & 8000 & 8000 & 8000 & 8000 & 1000 & 1000 & 1000 & 1000 & 1000 & 8000 & 8000 \\
        \bottomrule
        \end{tabular}
    \end{adjustbox}
    \label{tab:wikitopics}
\end{table}
\section{Limitations and Future Work}

One limitation for GNN-QE is the scalability issue. GNN-QE has to materialize a vector representation for each entity in the graph within its relation projection, which takes much memory and computation time. Future work may address this issue with learned adaptive propagation like A*Net, or query optimization techniques that leverage the nature of complex queries to reduce the search space.

For \textsc{UltraQuery}, while we proposed two empirical solutions for mitigating the multi-source propagation issue, we lack a comprehensive understanding of the distribution shift between pre-trained single-hop models and relation projection required by multi-hop models. We expect future work to provide a better mathematical framework for unifying single-hop and multi-hop models. Besides, \textsc{UltraQuery} may be further improved by better multi-stage pre-training strategies and more pre-training datasets.

As highlighted in \cite{ren2023neural}, there is much potential work for complex query models as a part of neural graph databases. Some examples include better theoretical understanding of logical expressiveness bounds, supporting more query patterns beyond simple trees~\cite{yin2023text, yin2024rethinking}, queries without anchor nodes~\cite{barcelo2023neuro}, hyper-relational queries~\cite{alivanistos2022query} and queries with numerical literals~\cite{demir2023litcqd}. To extend the application of complex query models to natural language questions, it is also important to study end-to-end training of complex query models and semantic parsers without explicit query structure annotations.
\chapter[Solving Multi-step Queries with Large Language Models]{Solving Multi-step Queries with\linebreak Large Language Models}
\label{cha:htt}

There is a growing trend of applying large language models (LLMs) to solve reasoning problems, particularly in the form of few-shot chain-of-thought (CoT) prompting. With a few in-context examples and intermediate steps, an LLM is able to decompose a multi-step query into single steps and solve them sequentially. However, LLMs often fail to comprehend the knowledge implied by the examples, and instead rely on the implicit knowledge in their parameters, of which the errors may exacerbate over multiple steps. Can we improve or fix the knowledge of LLMs, even if they are black boxes?

In this chapter, we address this question with Hypotheses-to-Theories (HtT), a prompting method that learns explicit knowledge as textual rules, which help LLMs generalize better to problems longer than those in few-shot examples. The rules discovered by LLMs are not only aligned with human knowledge, but also naturally transferable to different models and to different forms of the same problem. HtT opens up a new learning paradigm that is transparent and interpretable to humans. We foresee that HtT will benefit many applications of LLMs, such as agents~\cite{deng2023mind2web}.

\smallskip \emph{This chapter is based on our work published at SoCal NLP Symposium~\cite{zhu2023large}.}

\section{Overview}

Coinciding with their tremendous growth in scale, large language models (LLMs)~\cite[][\textit{inter alia}]{brown2020language, chowdhery2022palm, achiam2023gpt, anil2023palm, team2023gemini} have demonstrated emergent capabilities across a wide range of reasoning tasks~\cite{wei2022emergent, bubeck2023sparks}, including program synthesis, arithmetic reasoning, symbolic reasoning and commonsense reasoning. Importantly, these abilities are commonly elicited by advanced prompting techniques~\cite{wei2022chain, zhou2023least, khot2023decomposed} that teach an LLM to decompose a complex problem into simple steps and perform reasoning step by step based on a small set of in-context examples.

For many reasoning problems, decomposing and conducting reasoning steps are not sufficient to solve the problem, since one needs domain-specific knowledge to generate correct steps. For instance, when inferring the relationship between two people in a family tree (Figure~\ref{fig:prompt}), an LLM should know the basic rules\footnote{In this paper, rules to refer to intermediate steps that are reusable across different samples. This is slightly different from the definition of rules in formal logic.} to merge family relations. However, LLMs are often prone to errors in generating such rules~\cite{zheng2023does, zhang2024language}, especially when the task deviates from requiring conventional knowledge (e.g.\ arithmetic in a non-decimal system)~\cite{tang2023large, wu2024reasoning}. To solve this challenge, one should find a way to equip LLMs with those domain-specific rules. While it is always possible to curate a dataset and inject required rules into an LLM via supervised finetuning~\cite{wang2021kepler, talmor2020leap, nye2021show}, we are interested in a generic solution that enables LLMs to automatically discover rules from standard datasets without rule annotation.

This paper proposes such a solution for LLMs to learn a library of textual rules and apply the rule library to solve new samples. Our framework, dubbed Hypotheses-to-Theories (HtT), consists of an induction stage and a deduction stage, akin to training and test stages of neural networks respectively. In the induction stage, an LLM is asked to generate rules for each example in the training set. The rules are verified by comparing the prediction of the LLM with the ground truth answer. We then filter the rules based on their number of occurrence and frequency of association with correct answers to construct the rule library for the deduction stage. In the deduction stage, the LLM is then asked to apply the learned rules to solve the reasoning problem, thereby reducing the chance of generating incorrect rules. To reduce the effort required for prompt engineering, we propose a technique called \emph{induction from deduction}, which fuses the rule generation and verification steps into a single deduction-like step. In this way, the prompts for both stages can be easily derived from existing few-shot prompting methods, such as chain-of-thought or least-to-most prompting.

Empirically, we verify the effectiveness of HtT with GPT-3.5 and GPT-4~\cite{achiam2023gpt} on the CLUTRR~\cite{sinha2019clutrr}, Arithmetic~\cite{wu2024reasoning} and List Functions~\cite{rule2020child} datasets, which correspond to relational reasoning, numerical reasoning and concept learning respectively. Experiments show that HtT consistently improves over baseline prompting methods across the models and datasets considered, with an absolute gain of 10-30\% in most cases. Moreover, the learned rules can be directly transferred to the textual version of CLUTRR, providing a practical advantage over previous reasoning approaches. Besides, We conduct extensive ablation studies to understand the properties of HtT, finding that the performance gain arises primarily from a reduction in the number of incorrect rules due to the use of the learned rules. We also observe a log-linear scaling law between accuracy and the number of training examples on all three datasets.

\begin{figure*}[t]
    \centering
    \includegraphics[width=\textwidth]{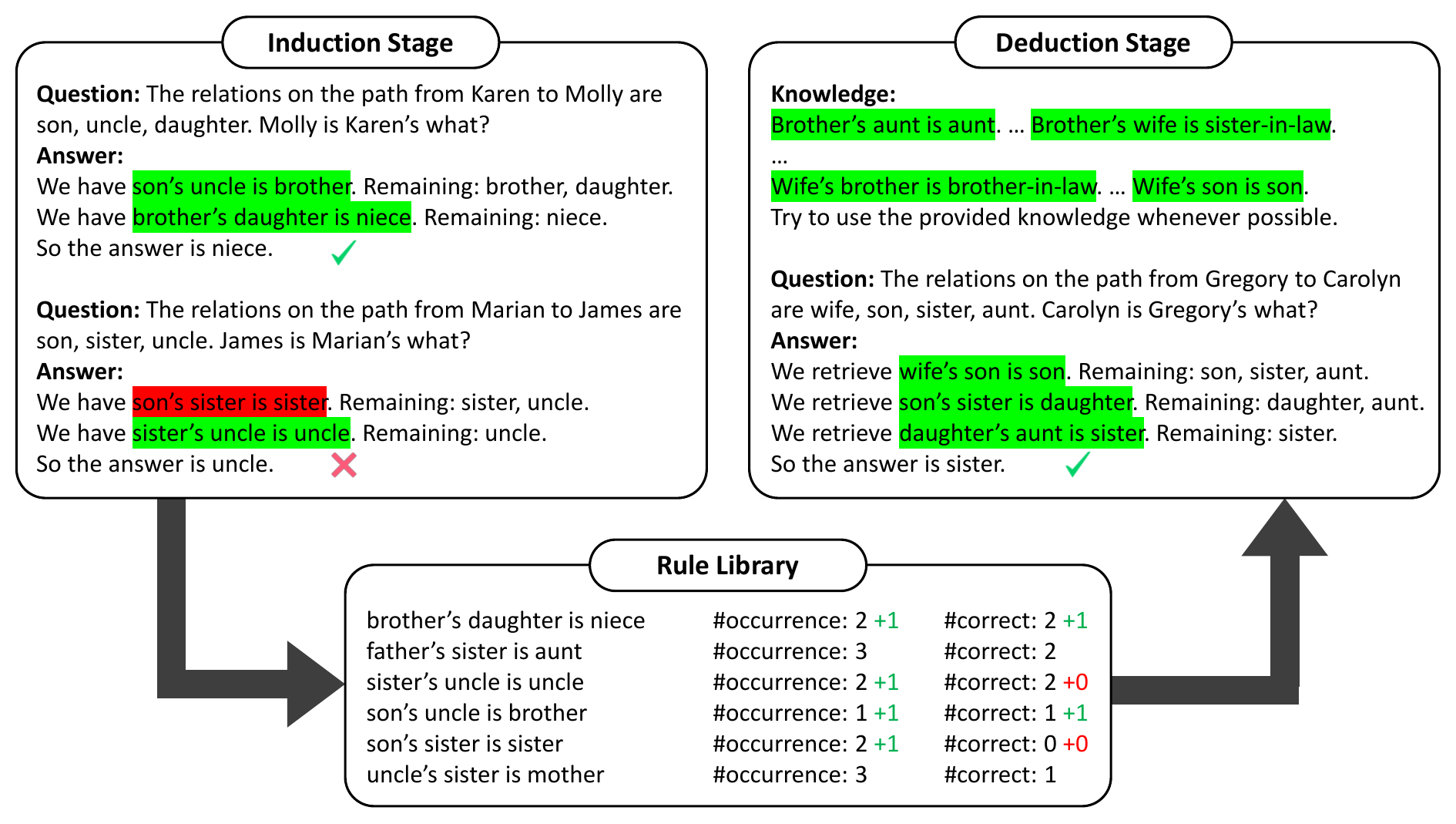}
    \caption[An example of Hypotheses-to-Theories on the relational reasoning problem]{An example of Hypotheses-to-Theories applied to chain-of-thought for the relational reasoning problem. Few-shot examples are omitted for brevity. The induction stage uses CoT to generate rules and verify them on the training samples. Rules are then collected and filtered to form the rule library. The deduction stage augments CoT with knowledge from the rule library. Correct and incorrect rules are marked with green and red respectively.}
    \label{fig:prompt}
\end{figure*}
\section{Method}

For many reasoning problems, the problem statements only contain the necessary facts within the context, but the rules are not explicitly stated. For instance, when an LLM is asked infer the relationship between two people, it is not given any kinship rules. An LLM pretrained on a massive corpus is able to recall certain commonsense rules from its parameters~\cite{petroni2019language, roberts2020much}, but due to the implicit nature of this process, it may often generate incorrect rules when solving reasoning problems. Our manual analysis indicates that such incorrect rules constitute 65\% and 81\% of the errors made by CoT on CLUTRR and base-16 Arithmetic respectively (Figure~\ref{fig:error_case}).

To solve the above challenge, we propose Hypotheses-to-Theories (HtT), a framework that learns a textual rule library from training examples and explicitly uses the rule library to solve test samples. HtT consists of an induction stage and a deduction stage, both implemented by few-shot prompting. In the induction stage, rules are generated and verified on a set of question-answer examples. The rules are then collected and filtered to form a library. In the deduction stage, we prompt the model to explicitly retrieve rules from the rule library to answer test questions. The two stages are similar to training and test stages of neural networks, except that we learn textual rules instead of model parameters.

\subsection{Induction Stage}

The induction stage aims to learn rules from training examples without rule annotation. For each training example (a question-answer pair), we ask an LLM to generate rules for answering the question. We extract rules from the output of the LLM with regular expressions, assuming the LLM follows the template of few-shot exemplars. Note that these rules can be either correct or incorrect. While we cannot judge these rules without golden rules, we can verify them by comparing the prediction of these rules against the ground truth answer. The insight is that if the model predicts the correct answer, it is very likely that all these rules are correct. Conversely, if the model predicts a wrong answer, it is very likely that at least one of the rules is incorrect. Due to the noisy nature of LLM reasoning, we collect rules and accuracy metrics from a reasonable number of training examples.

To filter the rules for the rule library, we follow the principles of rule mining~\cite{galarraga2013amie} and consider both coverage and confidence as criteria. The coverage of a rule tells how likely it will be reused, and the confidence of a rule indicates how likely it is correct. Specifically, we measure coverage based on the number of occurrence of each rule in the training set. Confidence is measured by the average accuracy of examples where a rule occurs. In practice, for each generated rule, we maintain two counters, number of occurrence and number of correct answers (Figure~\ref{fig:prompt} bottom). The confidence of each rule can be computed by dividing its number of correct answers by its number of occurrence. We fill the library with good rules exceeding a minimal coverage $k$ and a minimal confidence $p$.

\smallskip \noindent \textbf{Induction from Deduction.}
The induction stage introduces two sub-problems, rule generation and verification. Recent works on induction~\cite{yang2024language, qiu2024phenomenal} use two separate prompts for generation and verification, i.e.\ a prompt for generating rules based on the question and a prompt for applying the rules to deduce the answer. While it is possible to use two prompts here as well, this doubles the prompt engineering effort and complicates comparisons with other methods. Moreover, the multi-step nature of reasoning problems makes it challenging to generate rules for later steps at the beginning. Hence we propose induction from deduction, which adapts a deductive reasoning prompt (e.g.\ CoT, LtM) for both rule generation and verification (Figure~\ref{fig:prompt} left). The key idea is to explicitly declare a rule whenever a deduction is performed. In this way, both induction and deduction stages use the same base prompt, which is directly comparable to the base prompting method.

\subsection{Deduction Stage}
\label{sec:deduction}

In the deduction stage, we apply the rule library from the induction stage to solve test questions. Specifically, we prepend the rule library to a deductive reasoning prompt (e.g.\ CoT, LtM), and modify the exemplars to teach the LLM to retrieve rules from the library whenever it needs to generate a rule (Figure~\ref{fig:prompt} right). If all the rules required by a question are present in the library, the LLM should generate correct rules for each step without errors.

In practice, we find that even a strong LLM (e.g. GPT-4) somehow struggles to perform retrieval, especially when the library contains a large number of rules in an unstructured way. One workaround is to employ a pretrained passage retriever~\cite{karpukhin2020dense} and interleave retriever calls with LLM decoding~\cite{trivedi2023interleaving}. However, this introduces additional modules and makes the comparison against the base prompting method unfair. Here we propose a pure prompting solution to augment the retrieval ability of LLMs. 

\begin{wrapfigure}{R}{0.47\textwidth}
    \vspace{-1.6em}
    \centering
    \includegraphics[width=0.47\textwidth]{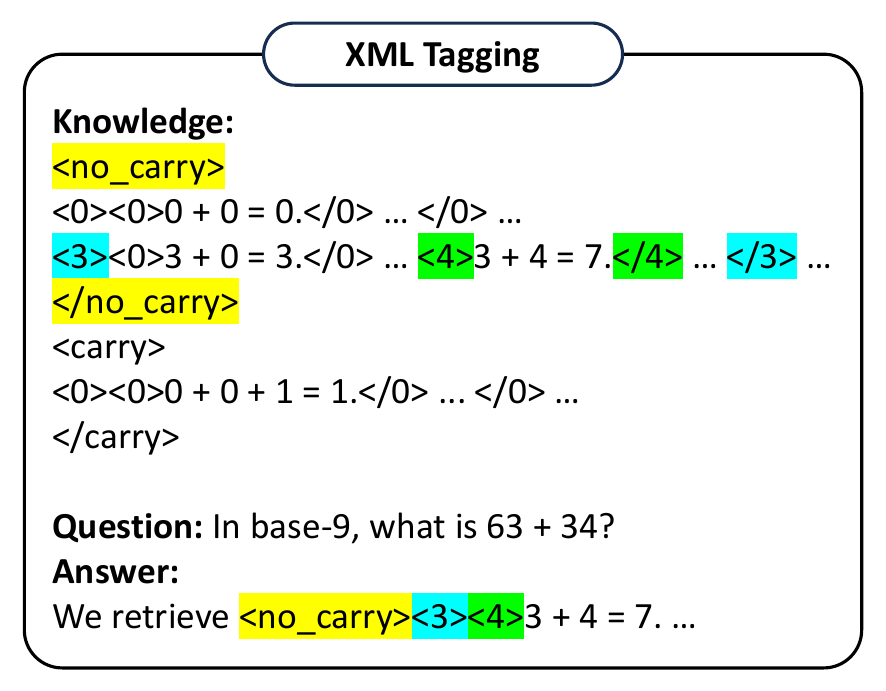}
    \caption[The XML tagging trick]{The XML tagging trick. With an XML hierarchy, we break down a hard retrieval problem into several easy retrieval problems.}
    \label{fig:xml_tag}
\end{wrapfigure}

\smallskip \noindent \textbf{In-Context Retrieval with XML Tags.}
We observe that retrieval often succeeds when the number of rules are limited, e.g.\ to at most 10 rules. Therefore, a natural idea is to organize the rule set into a hierarchy, such that each step in the hierarchy only involves a small number of options. We manually define a hierarchy by grouping similar rules together. Inspired by the XML tags used in prompting tutorials\footnotemark, we label each level of the hierarchy with pairs of XML tags like <carry> and </carry> (Figure~\ref{fig:xml_tag}). In order to index through the hierarchy, we ask the LLM to generate the tags for each level before outputting the retrieved rule. We find that the XML tagging trick significantly boosts the performance of HtT (Table~\ref{tab:ablation_htt}).

\subsection{Discussion}

\smallskip \noindent \textbf{Key Insights.}
It might look surprising that simply prompting the LLM and verifying the predictions on sufficient training examples can give us a library of good rules. Here we discuss the key insights behind HtT. While LLMs may occasionally generate incorrect rules, we conjecture they are able to produce correct rules on some examples with a non-trivial probability, similar to the assumption in \cite{wang2024chain}. With a sufficient set of training examples, we can extract most of the necessary rules for a problem class based on the coverage and confidence criteria. Since retrieving a rule is usually easier than generating the correct rule for an LLM, it will perform better on deductive reasoning when primed with a rule library.
\footnotetext{\url{https://docs.anthropic.com/claude/docs/constructing-a-prompt}}

\smallskip \noindent \textbf{Scope of Tasks.}
Generally, HtT has a similar scope of tasks to its base few-shot prompting method. To achieve substantial performance gain with HtT, two constraints apply: (1) To fit the library into the prompt, the number of rules required to solve most problems of the task should be moderately small ($\leq$500 in our experiments), excluding tasks that cannot be efficiently described by rules (e.g.\ natural language entailment). (2) To successfully induce most rules, the base prompting method should have a reasonable performance on the training examples ($\geq$20\% accuracy in our experiments), excluding tasks that are difficult for existing few-shot prompting methods, such as tasks requiring planning abilities~\cite{saparov2023language}. HtT does not impose constraints on the type of rules it learns. Our experiments show that HtT can learn kinship rules, numerical rules or even free-form rules that transform a list.
\section{Experiments}
\label{sec:experiment}

To evaluate HtT, we apply it as an augmentation to existing few-shot prompting methods. We benchmark the performance of HtT on relational reasoning and numerical reasoning that require multi-step reasoning and multiple rules, as well as concept learning that require a single complex rule. We also conduct ablation studies to thoroughly understand HtT.

\subsection{Implementation Details}
\label{sec:implementation}

We evaluate HtT and the baselines using two different LLMs, \texttt{gpt-3.5-turbo-0613} and \texttt{gpt-4-0613}. When the prompts exceed the 4k context length of \texttt{gpt-3.5-turbo-0613}, we use \texttt{gpt-3.5-turbo-16k-0613} instead. We use the default temperature of 1 for these models. Throughout the following, we will denote the two LLMs as GPT-3.5 and GPT-4 respectively. We further evaluate HtT with \texttt{gemini-1.0-pro}, \texttt{Mistral-7B-Instruct-v0.3} and \texttt{Meta-Llama-3-8B-Instruct} in Section~\ref{app:other_llms}.

On CLUTRR and Arithmetic, we perform the induction stage on 2,000 training examples for the proposed HtT. When the training set contains fewer than 2,000 examples, we resample the training examples. For each task in List Functions, we perform the induction stage on 20 training-validation splits sampled from the original data. We extract rules in the induction stage by searching string templates with regular expressions. We note that HtT does not rely on engineering of the string template, and any templates that can extract rules from the given few-shot examples suffice here. Even if the string templates recall wrong rules, they can be easily filtered by our minimal coverage criterion. We search the hyperparameters of HtT within the following grid: minimal coverage $k \in \{1, 2, 3\}$, minimal confidence $p \in \{0.1, 0.3, 0.5, 0.7, 0.9\}$. Due to the cost of LtM (3-5$\times$ compared to CoT), we induce rules and select the best hyperparameters based on CoT prompting, and only use LtM prompting for the deduction stage. This might slightly underestimate the performance of HtT for LtM prompting.

Since LLMs output free-form text to solve the problems, we evaluate models by matching the predicted text and the ground truth answer. We crop the last sentence from the predicted text, and check if the ground truth answer is present in that sentence. We only consider full word match and exclude partial matches like ``mother'' and ``grandmother''. If the LLM outputs more than one answer, we always consider it as wrong.

\subsection{Relational Reasoning}

We evaluate HtT on CLUTRR~\cite{sinha2019clutrr}, a relational reasoning dataset that queries the relationship between two family members in a family tree. CLUTRR comes in two forms: a symbolic version that only contains entities and their relationships, and a textual version that describes the relationships in a story. We evaluate HtT on both versions. We generate dataset splits by uniformly sampling the standard splits from \cite{sinha2019clutrr}. We use 2,000 samples of 2 and 3 hop examples for training, and 200 samples of 2 to 10 hop examples for both validation and test. For reference, we reproduce EdgeTransformer~\cite{bergen2021systematic}, one of the best domain-specific models on CLUTRR, in the same setting.

Table~\ref{tab:clutrr} shows the results on CLUTRR. Here HtT consistently improves both CoT and LtM prompting with both models by a margin of 11.1-16.4\% in average accuracy. Since induction is more challenging than deduction, we further evaluate GPT-3.5 with rules induced by GPT-4. Surprisingly, with rules from GPT-4, HtT increases the performance of CoT on GPT-3.5 by 27.2\%, doubling the performance of CoT. Compared to the supervised baseline EdgeTransformer, the performance of 5-shot CoT + HtT with GPT-4 is 7.5\% lower, which is reasonable since EdgeTransformer leverages forward chaining as a strong inductive bias and is specific to this problem. HtT has two advantages over such domain-specific models: (1) HtT does not require a predefined relation vocabulary; (2) the rules learned by HtT can directly transfer to textual inputs. As shown in Table~\ref{tab:clutrr_text}, rules learned from the symbolic version can also improve the performance of GPT-4 on the textual version. The improvement is not significant for GPT-3.5, since it often produces errors other than incorrect rules.

\begin{table}[t]
    \centering
    \caption[Results on the symbolic version of CLUTRR]{Results on the symbolic version of CLUTRR.}
    \label{tab:clutrr}
    \begin{adjustbox}{max width=\textwidth}
        \begin{tabular}{llcccccccccc}
            \toprule
            \bf{Model} & \bf{Prompt} & \bf{2 hops} & \bf{3 hops} & \bf{4 hops} & \bf{5 hops} & \bf{6 hops} & \bf{7 hops} & \bf{8 hops} & \bf{9 hops} & \bf{10 hops} & \bf{Average} \\
            \midrule
            \multicolumn{2}{l}{EdgeTransformer} & 100.0 & 94.4 & 96.8 & 88.0 & 68.8 & 61.9 & 50.0 & 50.0 & 36.0 & 71.8 \\
            \midrule
            & 0-shot CoT & 50.0 & 22.2 & 12.9 & 8.0 & 12.5 & 9.5 & 10.0 & 3.8 & 4.0 & 14.8 \\
            \cmidrule{2-12}
            & 5-shot CoT & 0.0  & 27.8 & 45.2 & 36.0 & 18.8 & 19.0 & 16.7 & 11.5 & 16.0 & 21.2 \\
            & + HtT & 87.5 & 38.9 & 35.5 & 44.0 & 37.5 & 14.3 & 33.3 & 11.5 & 36.0 & \bf{37.6 (+16.4)} \\
            GPT-3.5 & + HtT (GPT-4) & 100.0 & 55.6 & 32.3 & 60.0 & 50.0 & 47.6 & 43.3 & 19.2 & 28.0 & \bf{48.4 (+27.2)} \\
            \cmidrule{2-12}
            & 5-shot LtM & 37.5 & 22.2 & 29.0 & 36.0 & 25.0 & 14.3 & 10.0 & 23.1 & 20.0 & 24.1 \\
            & + HtT & 100.0 & 33.3 & 32.3 & 48.0 & 31.3 & 33.3 & 23.3 & 34.6 & 28.0 & \bf{40.5 (+16.4)} \\
            & + HtT (GPT-4) & 75 & 44.4 & 41.9 & 52.0 & 37.5 & 33.3 & 23.3 & 19.2 & 16.0 & \bf{38.1 (+14.0)} \\
            \midrule
            & 0-shot CoT & 50.0 & 22.2 & 22.6 & 32.0 & 37.5 &38.1 & 33.3 & 46.2 & 16.0 & 33.1 \\
            \cmidrule{2-12}
            & 5-shot CoT & 50.0 & 55.6 & 71.0 & 80.0 & 50.0 & 52.4 & 30.0 & 46.2 & 20.0 & 50.6 \\
            GPT-4 & + HtT & 100.0 & 61.1 & 74.2 & 84.0 & 75.0 & 38.1 & 56.7 & 53.8 & 36.0 & \bf{64.3 (+13.7)} \\
            \cmidrule{2-12}
            & 5-shot LtM & 62.5 & 38.9 & 58.1 & 68.0 & 50.0 & 38.1 & 43.3 & 34.6 & 28.0 & 46.8 \\
            & + HtT & 100.0 & 55.6 & 77.4 & 80.0 & 75.0 & 38.1 & 36.7 & 38.5 & 20.0 & \bf{57.9 (+11.1)} \\
            \bottomrule
        \end{tabular}
    \end{adjustbox}
\end{table}

\subsection{Numerical Reasoning}

We use the Arithmetic dataset~\cite{wu2024reasoning} to evaluate the LLMs on numerical reasoning in non-decimal systems. This dataset contains summation problems over 2 to 4 digits in several base systems. Since the rules in a non-decimal system are mostly different from those in the decimal system, arithmetic is considered to be a counterfactual setting that requires an LLM to perform reasoning rather than reciting. To prepare the dataset for HtT, we split it into training, validation and test. The training set contains 900 examples of 2 digit addition. Both the validation and test sets contain 100 examples of 2, 3 and 4 digit addition.

Table~\ref{tab:arithmetic} shows the results on Arithmetic. 0-shot CoT performs worst for both models in all base systems, because the LLMs with 0-shot CoT tend to convert non-decimal inputs to decimal, perform calculations, then revert to non-decimal, which is error prone due to the extra multiplications and divisions. For both CoT and LtM, HtT consistently improves the accuracy of two models by a large margin. The performance gain is less significant for GPT-3.5, since it is worse at inducing correct rules and retrieving rules from the library. This can be fixed by using a stronger model to induce the rules and offloading the retrieval steps to a separate prompt like in LtM. We observe a large improvement of LtM + HtT on GPT-3.5 with better rules from GPT-4, especially on base-11 and base-9 where GPT-3.5 struggles to induce correct rules. By contrast, there is no improvement for CoT + HtT with the better rules, because GPT-3.5 has a strong tendency to rely on its own beliefs (i.e.\ mostly decimal rules) when prompted with CoT, similar to the observation in \cite{longpre2021entity}.

\begin{table}[t]
    \centering
    \caption[Results on Arithmetic]{Results on Arithmetic. Note that GPT-4 (5-shot CoT) has 99.1\% accuracy on base-10.}
    \label{tab:arithmetic}
    \begin{adjustbox}{max width=\textwidth}
        \begin{tabular}{llcccccccccc}
            \toprule
            \multirow{2}{*}{\bf{Model}} & \multirow{2}{*}{\bf{Prompt}} & \multicolumn{3}{c}{\bf{Base-16}} & \multicolumn{3}{c}{\bf{Base-11}} & \multicolumn{3}{c}{\bf{Base-9}} & \multirow{2}{*}{\bf{Average}}\\
            & & \bf{2 digits} & \bf{3 digits} & \bf{4 digits} & \bf{2 digits} & \bf{3 digits} & \bf{4 digits} & \bf{2 digits} & \bf{3 digits} & \bf{4 digits} \\
            \midrule
            & 0-shot CoT & 30.6 & 10.5 & 0.0 & 5.6 & 5.3 & 0.0 & 11.1 & 10.5 & 0.0 & 8.2 \\
            \cmidrule{2-12}
            & 5-shot CoT & 83.3 & 34.2 & 11.5 & 5.6 & 2.6 & 0.0 & 25.0 & 13.2 & 11.5 & 20.8 \\
            & + HtT & 77.8 & 52.6 & 23.1 & 25.0 & 13.2 & 0.0 & 8.3 & 5.3 & 11.5 & \bf{24.1 (+3.3)} \\
            GPT-3.5 & + HtT (GPT-4) & 63.9 & 44.7 & 34.6 & 13.9 & 7.9 & 3.8 & 25.0 & 7.9 & 11.5 & \bf{23.7 (+2.9)} \\
            \cmidrule{2-12}
            & 5-shot LtM & 83.3 & 34.2 & 15.4 & 16.7 & 5.3 & 0.0 & 13.9 & 7.9 & 7.7 & 20.5 \\
            & + HtT & 80.6 & 39.5 & 26.9 & 16.7 & 2.6 & 3.8 & 19.4 & 5.3 & 3.8 & \bf{22.1 (+1.6)} \\
            & + HtT (GPT-4) & 72.2 & 31.6 & 30.8 & 47.2 & 15.8 & 11.5 & 44.4 & 21.1 & 15.4 & \bf{32.2 (+11.7)} \\
            \midrule
            & 0-shot CoT & 72.2 & 26.3 & 7.7 & 22.2 & 10.5 & 3.8 & 30.6 & 34.2 & 23.1 & 25.6 \\
            \cmidrule{2-12}
            & 5-shot CoT & 83.3 & 71.1 & 61.5 & 52.8 & 47.4 & 46.2 & 75.0 & 36.8 & 42.3 & 57.4 \\
            GPT-4 & + HtT & 100.0 & 94.7 & 84.6 & 88.9 & 71.1 & 46.2 & 86.1 & 68.4 & 65.4 & \bf{78.4 (+21.0)} \\
            \cmidrule{2-12}
            & 5-shot LtM & 88.9 & 81.6 & 61.5 & 52.8 & 47.4 & 30.8 & 52.8 & 31.6 & 11.5 & 51.0 \\
            & + HtT & 100.0 & 86.8 & 76.9 & 72.2 & 52.6 & 46.2 & 61.1 & 23.7 & 38.5 & \bf{62.0 (+11.0)} \\
            \bottomrule
        \end{tabular}
    \end{adjustbox}
\end{table}

\subsection{Concept Learning}

To assess the potential of HtT in learning complex rules, we further evaluate HtT on the concept learning problem using List Functions~\cite{rule2020child}. This dataset aims to identify a function that maps each input list to its corresponding output list, with 250 tasks grouped into 3 subsets: simple operations over numbers between 0 and 9 (P1), simple operations over numbers between 0 and 99 (P2), difficult operations over numbers between 0 and 99 (P3). For each task, we split 32 input-output pairs into 16 training samples and 16 test samples. For HtT, we further split the 16 training samples into 8 training samples and 8 validation samples to verify the rules based on the validation performance.

Table~\ref{tab:list_functions} shows the results on List Functions. Following \cite{qiu2024phenomenal}, we report both raw accuracy and task accuracy. Raw accuracy is the accuracy on test input-output pairs, while task accuracy is the ratio of tasks with all test input-output pairs correctly solved. HtT consistently improves 4-shot CoT on both models, with a gain of 18.5-18.7\% in raw accuracy and 10.2-14.5\% in task accuracy. Surprisingly, GPT-4 can discover some very complex rules in List Functions, as shown in Table~\ref{tab:learned_rules}. With rules learned by GPT-4, the task accuracy of GPT-3.5 can be boosted to 34.4\%, doubling the performance of GPT-3.5 with 4-shot CoT. This suggests that GPT-3.5 can understand most rules learned by GPT-4, and the challenge of concept learning lies more in induction than deduction. We also observe that the performance of GPT-3.5 decreases drastically on tasks involving large numbers (P2) or difficult operations (P3). By contrast, the decrease in performance is less significant for GPT-4, indicating that GPT-4 is more robust across various levels of difficulty.

\begin{table}[t]
    \centering
    \footnotesize
    \caption[Results on List Functions]{Results on List Functions.}
    \label{tab:list_functions}
    \begin{adjustbox}{max width=\textwidth}
        \begin{tabular}{llcccccccc}
            \toprule
            \multirow{2}{*}{\bf{Model}} & \multirow{2}{*}{\bf{Prompt}} & \multicolumn{4}{c}{\bf{Raw Accuracy}} & \multicolumn{4}{c}{\bf{Task Accuracy}} \\
            & & \bf{P1} & \bf{P2} & \bf{P3} & \bf{Average} & \bf{P1} & \bf{P2} & \bf{P3} & \bf{Average} \\
            \midrule
            & 0-shot CoT & 44.1 & 38.4 & 28.9 & 37.1 & 30.0 & 25.0 & 12.7 & 22.6 \\
            \cmidrule{2-10}
            \multirow{2}{*}{GPT-3.5} & 4-shot CoT & 32.8 & 32.2 & 19.3 & 28.1 & 22.5 & 15.0 & 8.0 & 15.2 \\
            & + HtT & 58.4 & 50.9 & 30.5 & \bf{46.6 (+18.5)} & 40.0 & 35.0 & 14.0 & \bf{29.7 (+14.5)} \\
            & + HtT (GPT-4) & 69.2 & 66.3 & 38.4 & \bf{58.0 (+29.9)} & 50.0 & 40.0 & 13.3 & \bf{34.4 (+19.2)} \\
            \midrule
            & 0-shot CoT & 69.3 & 56.6 & 48.3 & 58.1 & 51.3 & 45.0 & 26.0 & 40.8 \\
            \cmidrule{2-10}
            GPT-4 & 4-shot CoT & 67.3 & 60.9 & 43.9 & 57.4 & 53.8 & 55.0 & 29.3 & 46.0 \\
            & + HtT & 82.3 & 84.4 & 61.5 & \bf{76.1 (+18.7)} & 61.3 & 70.0 & 37.3 & \bf{56.2 (+10.2)} \\
            \bottomrule
        \end{tabular}
    \end{adjustbox}
\end{table}

\begin{table}[t]
    \centering
    \footnotesize
    \caption[Examples of complex rules learned by GPT-4 on List Functions]{Examples of complex rules learned by GPT-4 on List Functions.}
    \label{tab:learned_rules}
    \begin{adjustbox}{max width=\textwidth}
        \begin{tabular}{lp{17em}p{18em}}
            \toprule
            \bf{Task ID} & \bf{Ground Truth} & \bf{Top Learned Rule} \\
            \midrule
            c085 & remove all but element N + 1, N = element 1. & return a list with the element that corresponds to the first number in the list, where indexing starts at 1. \\
            \midrule
            c191 & repeat each element N times, where N is its tens digit, in order of appearance. & list each element as many times as its tens digit. \\
            \bottomrule
        \end{tabular}
    \end{adjustbox}
\end{table}

\subsection{Ablation Studies}
\label{sec:ablation_htt}

We conduct ablation studies with GPT-4, since GPT-3.5 sometimes struggles to induce and retrieve rules.

\begin{table}[t]
    \centering
    \footnotesize
    \begin{minipage}{0.45\textwidth}
        \centering
        \caption[Results on the textual version of CLUTRR]{Results on the textual CLUTRR w/ rules learned on the symbolic CLUTRR.}
        \vspace{-0.5em}
        \begin{tabular}{llc}
            \toprule
            \bf{Model} & \bf{Prompt} & \bf{Accuracy} \\
            \midrule 
            \multirow{2}{*}{GPT-3.5}
            & 5-shot CoT & 16.0 \\
            & + HtT & 16.3 (+0.3) \\
            \midrule
            \multirow{2}{*}{GPT-4}
            & 5-shot CoT & 48.7 \\
            & + HtT & \bf{59.1 (+10.4)} \\
            \bottomrule
        \end{tabular}
        \label{tab:clutrr_text}
        \\[0.5em]
        \caption[Ablation studies on random rules and the XML tagging trick]{Ablation studies on random rules and the XML tagging trick.}
        \vspace{-0.5em}
        \begin{adjustbox}{max width=\textwidth}
            \begin{tabular}{lcc}
                \toprule
                \bf{Prompt} & \bf{CLUTRR} & \bf{Arithmetic} \\
                \midrule
                5-shot CoT & 50.6 & 57.4 \\
                \midrule
                + random rules & 9.9 (-40.7) & 23.7 (-33.7)\\
                \midrule
                + HtT (unsorted) & 57.1 (+6.5) & 67.2 (+9.8) \\
                + HtT (sorted) & 60.0 (+9.4) & 72.5 (+15.1) \\
                + HtT (1 tag) & 59.6 (+9.0) & 74.8 (+17.4) \\
                + HtT (2 tags) & \bf{64.3} (+13.7) & 76.6 (+19.2) \\
                + HtT (3 tags) & N/A & \bf{78.4} (+21.0) \\
                \bottomrule
            \end{tabular}
        \end{adjustbox}
        \label{tab:ablation_htt}
    \end{minipage}
    \hfill
    \begin{minipage}{0.53\textwidth}
        \vspace{-1.8em}
        \centering
        \includegraphics[width=\textwidth]{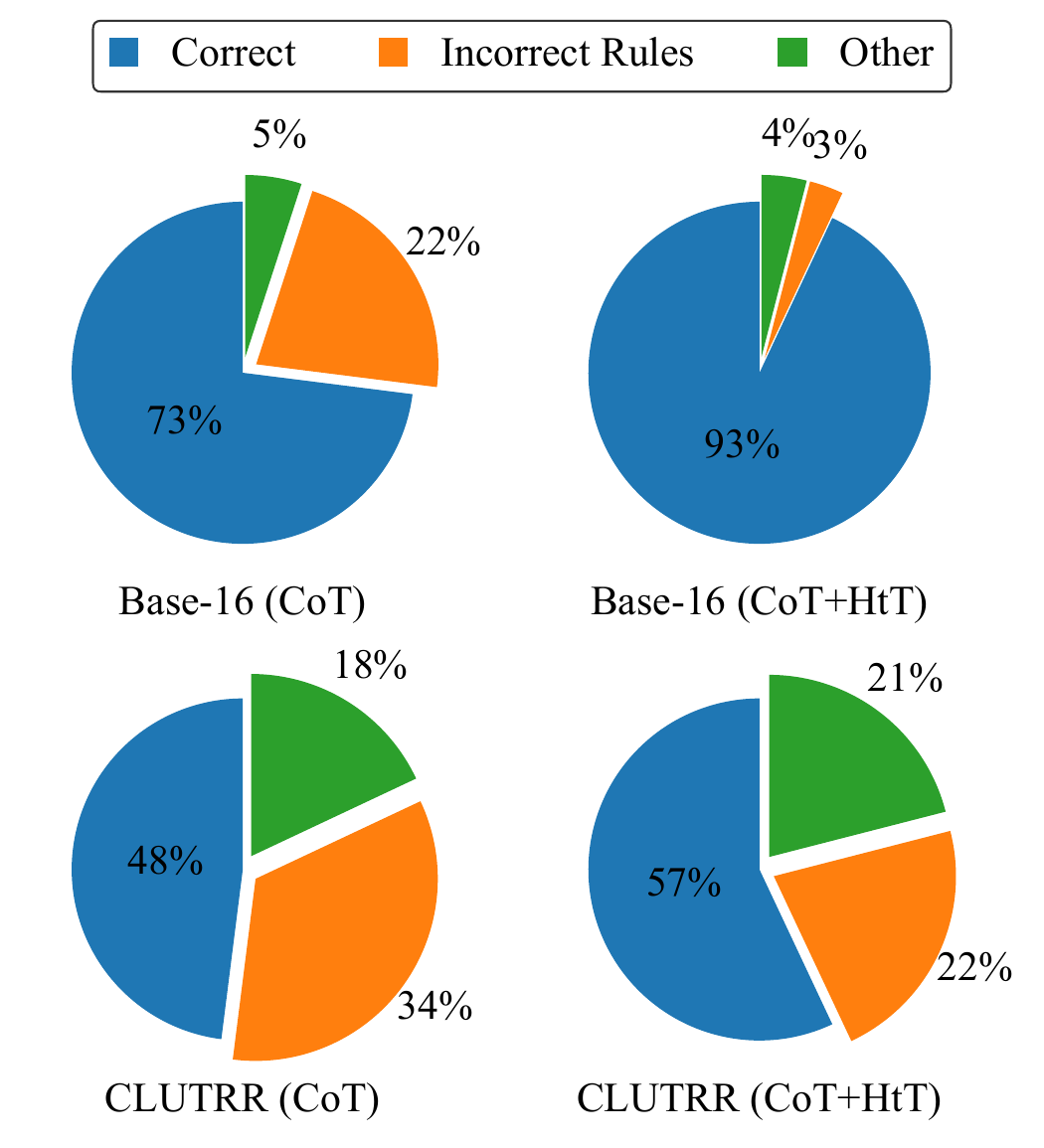}
        \captionof{figure}{Statistics of different error cases on CLUTRR and Arithmetic (base-16).}
        \label{fig:error_case}
    \end{minipage}
\end{table}

\smallskip \noindent \textbf{Does HtT reduce the occurrence of incorrect rules?}
Since an LLM generates free-form text to solve a problem, there can be multiple reasons for failure~\cite{zheng2023does}. While HtT boosts the overall performance on reasoning problems, it is not clear if the gain comes from less incorrect rules. We manually analyze the predictions of CoT and CoT + HtT on 100 test examples from CLUTRR and Arithmetic (base-16), and classify the predictions into 3 categories: correct, incorrect rules and other. Figure~\ref{fig:error_case} plots the distribution of error cases. We can see that most performance gain of HtT comes from reduction in incorrect rules.

\smallskip \noindent \textbf{Do the learned rules just hint the model about the rule space?}
A previous study~\cite{min2022rethinking} found that random labels perform similarly to gold labels in in-context learning. If that was the case for our problems, we could just generate random rules and do not have to learn the rule library. To answer this question, we replace the conclusions in the learned rules with random answers, e.g.\ changing 5 + A = E to 5 + A = 7 in base-16. Table~\ref{tab:ablation_htt} shows that random rules significantly hurt performance, indicating the necessity of learned rules in HtT. We conjecture that the contrary observation is because \cite{min2022rethinking} studied simple classification problems, whereas we are dealing with multi-step reasoning problems.

\smallskip \noindent \textbf{How do XML tags improve deductive reasoning?}
In Section~\ref{sec:deduction}, we introduce the XML tagging trick to augment the retrieval ability of an LLM in decoding. We use a 3-level hierarchy (carry, first addend, second addend) for Arithmetic and a 2-level hierarchy (first relation, second relation) for CLUTRR. Here we perform ablation studies to verify the importance of this design. We consider XML tagging with different levels of hierarchy. Since XML tagging requires the rules to be sorted, we also consider a variant with unsorted (i.e.\ randomly ordered) rules. As shown in Table~\ref{tab:ablation_htt}, the XML tagging trick significantly boosts performance, suggesting that even when the good rules are given, retrieval is an important ability in deductive reasoning.

\begin{figure}[t]
    \centering
    \includegraphics[width=\textwidth]{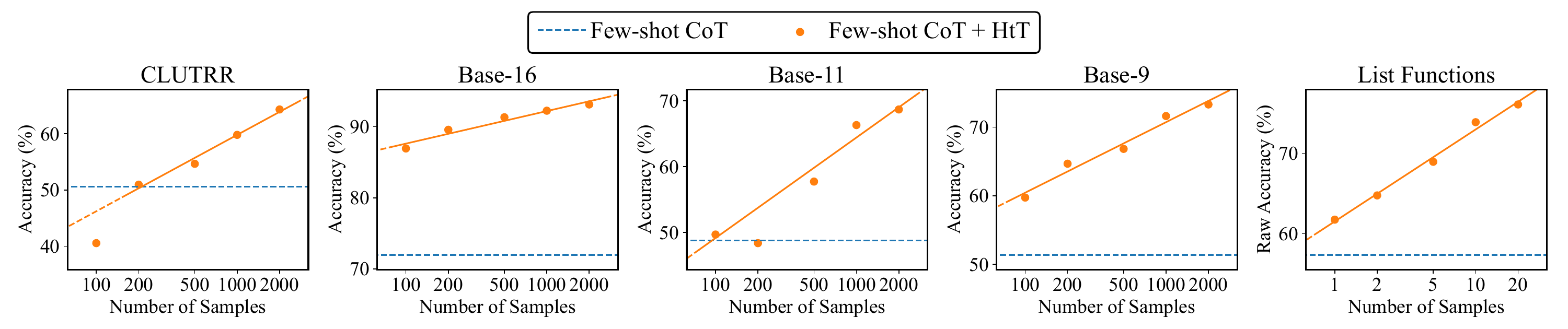}
    \caption[Performance of HtT w.r.t.\ the number of samples in the induction stage]{Performance of HtT w.r.t.\ the number of samples in the induction stage.}
    \label{fig:training_samples}
\end{figure}

\begin{figure}[t]
    \centering
    \includegraphics[width=0.66\textwidth]{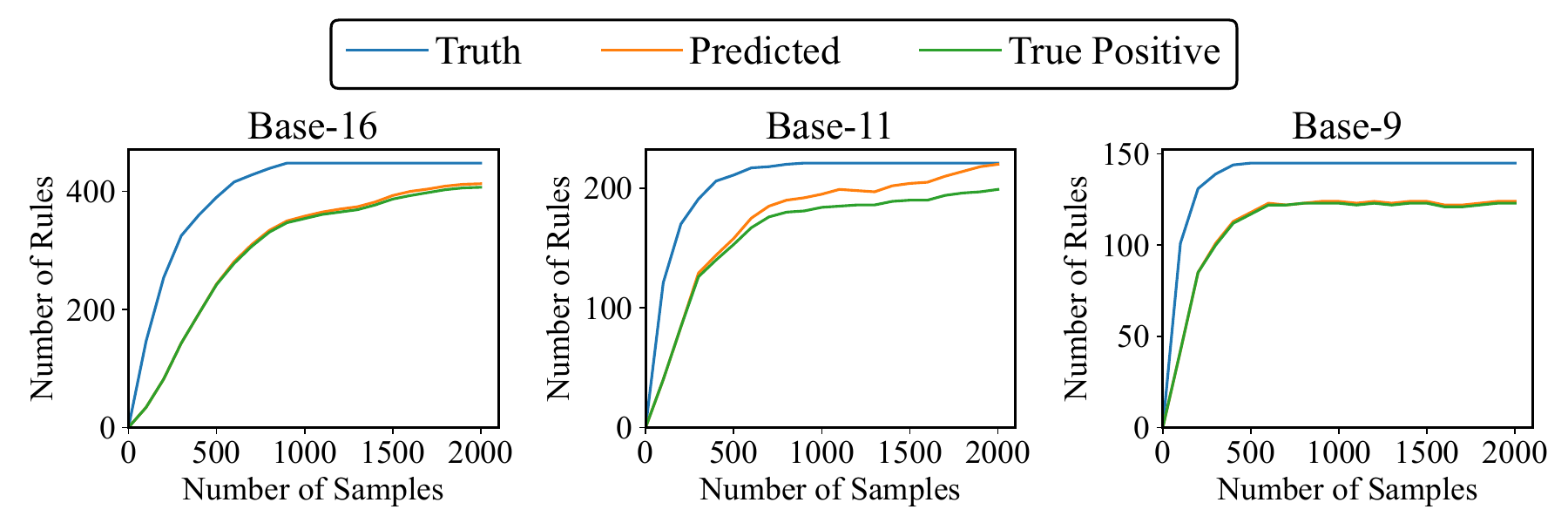}
    \caption[Number of rules discovered by HtT in the induction stage]{Number of rules discovered by HtT in the induction stage.}
    \label{fig:number_of_rules}
\end{figure}

\smallskip \noindent \textbf{How many samples does HtT need for the induction stage?}
One may be curious about how HtT scales with the number of samples and what is the minimal number of examples required. Here we conduct experiments with different numbers of examples for the induction stage. As shown in Figure~\ref{fig:training_samples}, there is a log-linear trend between performance and the number of examples, consistent with the scaling law for supervised learning~\cite{kaplan2020scaling}. The minimal number of examples varies across datasets. CLUTRR and base-11 require 500 examples to obtain a significant gain, while base-16 and base-9 only require 100 examples. On List Functions, 1 sample per task is enough to obtain some gain.

\smallskip \noindent \textbf{What proportion of rules are discovered by HtT?}
To investigate this question, we compare HtT with an oracle that always induces all necessary rules from an example. Note this comparison can only be made on Arithmetic, since rules in CLUTRR are not deterministic, e.g.\ a grandmother's daughter can be either a mother or an aunt. Figure~\ref{fig:number_of_rules} shows the number of rules induced by the oracle and HtT, as well as the number of true positive rules in HtT. We can see that HtT discovers more than 85\% of the rules in all datasets.

\subsection{Results with Other LLMs}
\label{app:other_llms}

Here we further evaluate HtT and CoT baselines using \texttt{gemini-1.0-pro}, \texttt{Mistral-7B-Instruct-v0.3} and \texttt{Meta-Llama-3-8B-Instruct}. We denote these models as Gemini Pro, Mistral 7B and Llama 3 8B respectively. The temperature is set to 0.9 for Gemini Pro, 0.7 for Mistral 7B and 0.6 for Llama 3 8B following their default values.

Table~\ref{tab:clutrr_other}, \ref{tab:arithmetic_other} and \ref{tab:list_functions_other} list the results of these LLMs on CLUTRR, Arithmetic and List Functions respectively. Generally, we observe that HtT consistently improves over CoT for all models on three datasets, similar to the trend of GPT models. The improvement of HtT is larger for models with higher CoT performance, such as Gemini Pro, since higher CoT performance indicates better rule induction abilities. For Mistral 7B and Llama 3 8B, we notice their performance with zero-shot CoT is higher than their few-shot counterparts, sometimes even stronger than CoT+HtT, e.g.\ Llama 3 8B on List Functions. We conjecture the reason is that these models may have been heavily tuned on instruction datasets and lost some of their in-context learning abilities. By comparing Table~\ref{tab:clutrr_other}-\ref{tab:list_functions_other} with Table~\ref{tab:clutrr}-\ref{tab:list_functions}, we conclude that Gemini Pro is slightly better than GPT-3.5, while Mistral 7B and Llama 3 8B are worse than GPT-3.5.

\begin{table}[!h]
    \centering
    \footnotesize
    \caption{Results on the symbolic version of CLUTRR.}
    \label{tab:clutrr_other}
    \begin{adjustbox}{max width=\textwidth}
        \begin{tabular}{llcccccccccc}
            \toprule
            \bf{Model} & \bf{Prompt} & \bf{2 hops} & \bf{3 hops} & \bf{4 hops} & \bf{5 hops} & \bf{6 hops} & \bf{7 hops} & \bf{8 hops} & \bf{9 hops} & \bf{10 hops} & \bf{Average} \\
            \midrule
            & 0-shot CoT & 12.5 & 50.0 & 9.7 & 8.0 & 12.5 & 4.8 & 20.0 & 3.8 & 8.0 & 14.4 \\
            \cmidrule{2-12}
            Gemini Pro & 5-shot CoT & 37.5 & 22.2 & 25.8 & 48.0 & 31.3 & 28.6 & 26.7 & 19.2 & 32.0 & 30.1 \\
            & + HtT & 100.0 & 55.6 & 51.6 & 68.0 & 43.8 & 19.0 & 43.3 & 34.6 & 28.0 & \bf{49.3 (+19.2)} \\
            \midrule
            & 0-shot CoT & 50.0 & 22.2 & 9.7 & 12.0 & 6.3 & 14.3 & 3.3 & 3.8 & 12.0 & \bf{14.8} \\
            \cmidrule{2-12}
            Mistral 7B & 5-shot CoT & 12.5 & 11.1 & 22.6 & 12.0 & 6.3 & 9.5 & 3.3 & 7.7 & 4.0 & 9.9 \\
            & + HtT & 37.5 & 22.2 & 22.6 & 24.0 & 0.0 & 14.3 & 10.0 & 3.8 & 4.0 & \bf{15.4 (+5.5)} \\
            \midrule
            & 0-shot CoT & 50.0 & 33.3 & 19.4 & 12.0 & 6.3 & 0.0 & 3.3 & 3.8 & 12.0 & 15.6 \\
            \cmidrule{2-12}
            Llama3 8B & 5-shot CoT & 50.0 & 0.0 & 22.6 & 36.0 & 50.0 & 19.0 & 16.7 & 3.8 & 12.0 & 23.3 \\
            & + HtT & 37.5 & 33.3 & 32.3 & 32.0 & 25.0 & 23.8 & 43.3 & 19.2 & 12.0 & \bf{28.7 (+5.4)} \\
            \bottomrule
        \end{tabular}
    \end{adjustbox}
\end{table}

\begin{table}[!h]
    \centering
    \footnotesize
    \caption{Results on Arithmetic.}
    \label{tab:arithmetic_other}
    \begin{adjustbox}{max width=\textwidth}
        \begin{tabular}{llcccccccccc}
            \toprule
            \multirow{2}{*}{\bf{Model}} & \multirow{2}{*}{\bf{Prompt}} & \multicolumn{3}{c}{\bf{Base-16}} & \multicolumn{3}{c}{\bf{Base-11}} & \multicolumn{3}{c}{\bf{Base-9}} & \multirow{2}{*}{\bf{Average}}\\
            & & \bf{2 digits} & \bf{3 digits} & \bf{4 digits} & \bf{2 digits} & \bf{3 digits} & \bf{4 digits} & \bf{2 digits} & \bf{3 digits} & \bf{4 digits} \\
            \midrule
            & 0-shot CoT & 16.7 & 2.6 & 0.0 & 0.0 & 0.0 & 0.0 & 13.9 & 13.2 & 0.0 & 5.2 \\
            \cmidrule{2-12}
            Gemini Pro & 5-shot CoT & 77.8 & 50.0 & 26.9 & 36.1 & 18.4 & 15.4 & 47.2 & 21.1 & 15.4 & 34.3 \\
            & + HtT & 91.7 & 57.9 & 38.5 & 55.6 & 36.8 & 23.1 & 77.8 & 28.9 & 23.1 & \bf{48.2 (+13.9)} \\
            \midrule
            & 0-shot CoT & 0.0 & 0.0 & 0.0 & 2.8 & 0.0 & 0.0 & 5.6 & 2.6 & 0.0 & 1.2 \\
            \cmidrule{2-12}
            Mistral 7B & 5-shot CoT & 2.8 & 0.0 & 0.0 & 16.7 & 0.0 & 0.0 & 13.9 & 0.0 & 0.0 & 3.7 \\
            & + HtT & 5.6 & 0.0 & 0.0 & 16.7 & 0.0 & 0.0 & 27.8 & 0.0 & 0.0 & \bf{5.9 (+2.2)} \\
            \midrule
            & 0-shot CoT & 0.0 & 0.0 & 0.0 & 0.0 & 0.0 & 0.0 & 2.8 & 0.0 & 0.0 & 0.3 \\
            \cmidrule{2-12}
            Llama3 8B & 5-shot CoT & 2.8 & 0.0 & 0.0 & 25.0 & 0.0 & 0.0 & 41.7 & 2.6 & 0.0 & 8.0 \\
            & + HtT & 8.3 & 0.0 & 0.0 & 33.3 & 7.9 & 0.0 & 44.4 & 5.3 & 0.0 & \bf{11.7 (+3.7)} \\
            \bottomrule
        \end{tabular}
    \end{adjustbox}
\end{table}

\pagebreak

\begin{table}[!h]
    \centering
    \footnotesize
    \caption{Results on List Functions.}
    \label{tab:list_functions_other}
    \begin{adjustbox}{max width=\textwidth}
        \begin{tabular}{llcccccccc}
            \toprule
            \multirow{2}{*}{\bf{Model}} & \multirow{2}{*}{\bf{Prompt}} & \multicolumn{4}{c}{\bf{Raw Accuracy}} & \multicolumn{4}{c}{\bf{Task Accuracy}} \\
            & & \bf{P1} & \bf{P2} & \bf{P3} & \bf{Average} & \bf{P1} & \bf{P2} & \bf{P3} & \bf{Average} \\
            \midrule
            & 0-shot CoT & 27.7 & 33.4 & 19.3 & 26.8 & 15.0 & 25.0 & 6.0 & 15.3 \\
            \cmidrule{2-10}
            Gemini Pro & 4-shot CoT & 43.4 & 37.5 & 22.6 & 34.5 & 25.0 & 30.0 & 9.3 & 21.4 \\
            & + HtT & 55.5 & 54.4 & 28.6 & \bf{46.1 (+11.6)} & 33.8 & 40.0 & 12.0 & \bf{28.6 (+7.2)} \\
            \midrule
            & 0-shot CoT & 26.1 & 27.8 & 11.2 & 21.7 & 11.3 & 20.0 & 2.7 & 11.3 \\
            \cmidrule{2-10}
            Mistral 7B & 4-shot CoT & 19.4 & 22.5 & 7.3 & 16.4 & 6.3 & 15.0 & 0.0 & 7.1 \\
            & + HtT & 30.2 & 32.8 & 10.0 & \bf{24.3 (+7.9)} & 16.3 & 25.0 & 3.3 & \bf{14.9 (+7.8)} \\
            \midrule
            & 0-shot CoT & 22.2 & 15.9 & 4.8 & 14.3 & 15.0 & 15.0 & 1.3 & \bf{10.4} \\
            \cmidrule{2-10}
            Llama3 8B & 4-shot CoT & 11.1 & 12.8 & 2.1 & 8.7 & 5.0 & 10.0 & 0.0 & 5.0 \\
            & + HtT & 15.2 & 23.4 & 9.6 & \bf{16.1 (+7.4)} & 8.8 & 15.0 & 2.7 & 8.8 (+3.8) \\
            \bottomrule
        \end{tabular}
    \end{adjustbox}
\end{table}
\section{Limitations and Future Work}

One limitation of HtT is that it requires the base model to have reasonably strong knowledge and retrieval ability. As shown in Table~\ref{tab:arithmetic}, the gain of HtT for GPT-3.5 is very marginal due to weak knowledge of non-decimal systems. Even with a rule library induced by GPT-4, GPT-3.5 has issues in retrieving correct rules, especially in very counterfactual settings like base-11 and base-9. Another limitation is that the number of rules is limited by the context length of the LLM. It remains an open problem to scale up deductive reasoning when the rule library cannot fit into the input context of an LLM.

\part{Systems}
\label{part:system}
\chapter[A Library for Structured Data and Applications]{A Library for\linebreak Structured Data and Applications}
\label{cha:torchdrug}

One hidden challenge in developing representation learning for structured data is the lack of proper infrastructure. As structured data often has flexible size and sparsity, they do not fit into modern machine learning frameworks designed for tensor computation on GPUs. In existing implementations, this issue is circumvented by padding structured data into grid data, which is both inefficient and counter-intuitive for debugging, posing a high barrier for average developers to enter domains related to structured data.

This chapter presents TorchDrug, a library we developed for accelerating development on structured data, and related applications, such as representation learning on graphs, molecules and proteins. TorchDrug simplifies the development of machine learning on structured data (reduce the lines of code by 20$\times$), and has brought many researchers and developers into the field of knowledge graph reasoning and drug discovery.

\smallskip \emph{This chapter is based on our work~\cite{zhu2022torchdrug}\footnote{The code is available at \url{https://github.com/DeepGraphLearning/torchdrug}}.}

\section{Overview}

Drug discovery is a long and costly process, taking on average 10 years and costing 2.5 billion US dollars to develop a new drug~\cite{dimasi2016innovation}. Machine learning has huge potential to accelerate the process of drug discovery by extracting evidence through mining and analyzing a large amount of data in the biomedical domain (e.g.\ scientific literature, bioassays, and clinical trials). Recently, machine learning methods have made significant progress in many drug discovery tasks, such as protein structure prediction~\cite{baek2021accurate, jumper2021highly}, molecular property prediction~\cite{duvenaud2015convolutional, hu2019strategies}, de novo molecular design and optimization~\cite{you2018graph, shi2020graphaf}, reaction prediction~\cite{jin2017predicting, bradshaw2019generative}, retrosynthesis prediction~\cite{dai2019retrosynthesis, shi2020graph}, and drug repurposing~\cite{wang2020covid, zhao2020biomedical}. However, it remains a challenge for machine learning researchers to work in this domain for a few reasons: (1) lacking domain knowledge of what are important tasks in the domain; (2) no standard benchmarks of different methods due to their completely different implementations; (3) the large cost of implementing complicated data preprocessing pipelines for each task.

To accelerate the process of drug discovery through machine learning, we see a critical need to develop an open-source machine learning platform for drug discovery. Here we present such a platform, called TorchDrug. TorchDrug provides a hierarchical interface to accommodate different demands in the development of drug discovery. At the low level, TorchDrug encapsulates graphs and molecules as basic data structures, and provides GPU-accelerated graph operations, along with standard datasets in a PyTorch-style interface. This significantly reduces the reliance on domain knowledge in model implementation. At the mid-level, TorchDrug supplies popular building blocks of graph representation learning models (e.g.\ MPNN~\cite{gilmer2017neural}), which can be used to quickly construct models for drug discovery. The high level contains reusable routines for a variety of important tasks in drug discovery, ranging from molecular property prediction, pretrained molecular representations, de novo molecule design and optimization, retrosynthesis prediction, knowledge graph reasoning (e.g.\ for drug repurposing), protein property prediction and protein-protein interaction prediction. Figure~\ref{fig:overview} presents an overview of the TorchDrug library. TorchDrug was released in 2021 and has received more than 50,000 downloads on PyPI and Anaconda by the year of 2024.

\begin{figure}[!h]
    \centering
    \includegraphics[width=0.85\textwidth]{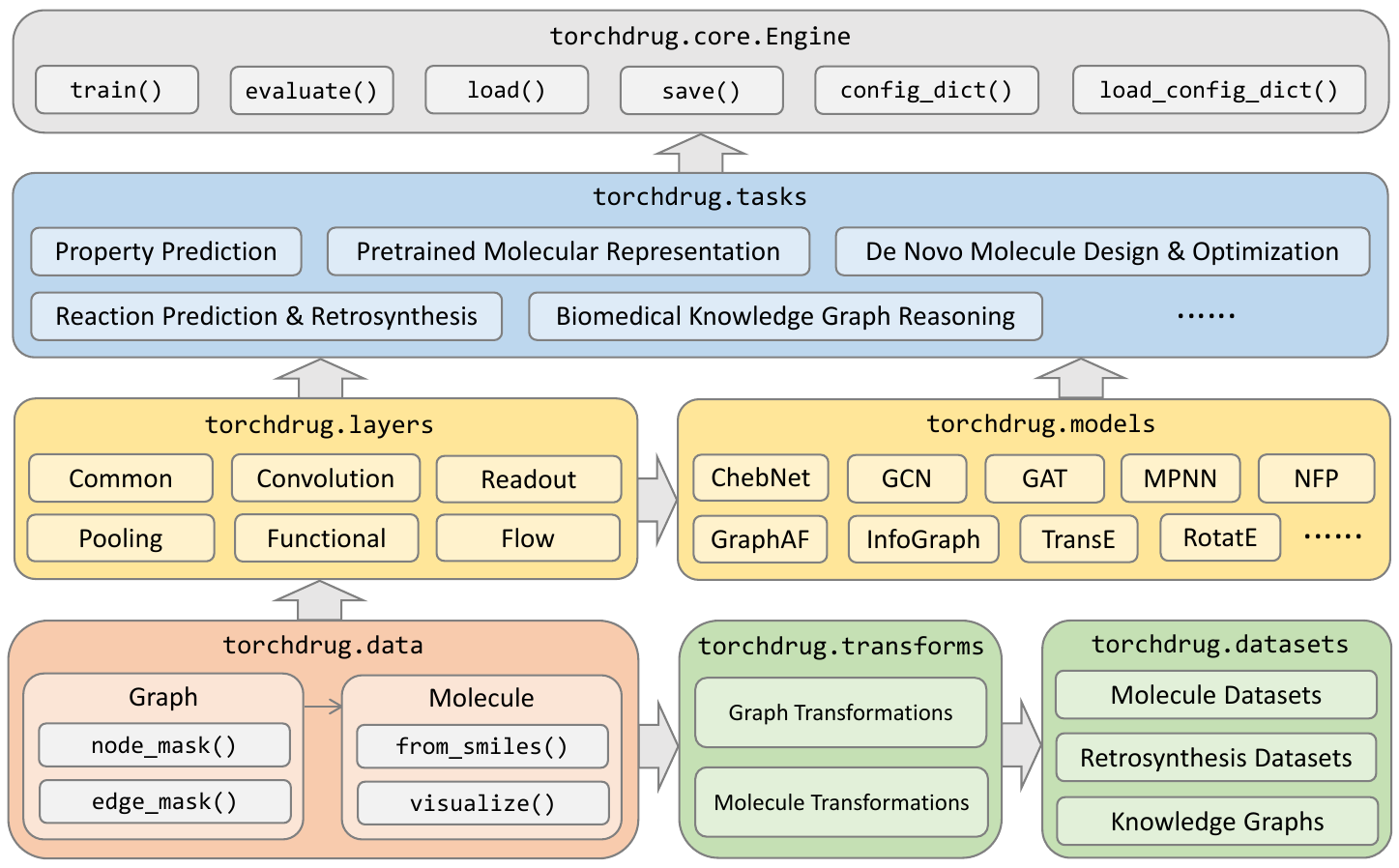}
    \caption[Overview of the hierarchy of TorchDrug]{Overview of the hierarchy of TorchDrug. The low level provides data structures and datasets, the mid level supplies representation learning layers and models, and the high level contains task routines. At the top of the hierarchy, \texttt{Engine} provides interface for scalable training and inference over multiple CPUs or GPUs.}
    \label{fig:overview}
\end{figure}
\section{Key Features}

TorchDrug offers two key features: (1) low-level data structures and graph operations that can be manipulated with minimal domain knowledge and GPU acceleration; (2) mid-level datasets, layers, models and high-level tasks that support rapid prototyping of ideas.

\subsection{Data Structures and Graph Operations}

The \texttt{data} module implements basic data structures for graph machine learning and drug discovery. It contains classes for homogeneous graphs, knowledge graphs (together in \texttt{data.Graph}), molecules (\texttt{data.Molecule}) and proteins (\texttt{data.Protein}). These data structures are designed to be the first-class citizen in TorchDrug, where many functions and classes take them as either input or output. Each class maintains the structure of a graph, with \texttt{data.Molecule} and \texttt{data.Protein} additionally supporting sanity check of the graph as a molecule or a protein respectively.
To deal with the diverse features used in drug discovery tasks, we also design a registration mechanism to support arbitrary number of node-level, edge-level or graph-level features in the data structures.

Graph operations are designed as member functions of the above data structures in an object-oriented programming style. While many GNN implementation relies on CPU-based libraries (e.g. NumPy~\footnote{\url{https://numpy.org/}} and NetworkX~\footnote{\url{https://networkx.org/}}\cite{hagberg2008exploring}) to perform graph operations, we directly implement graph operations as PyTorch operations. This allows us to seamlessly switch between CPUs and GPUs, as well as perform auto differentiation through the graph operations. For example, the following code snippet creates a batch of 4 molecules, sends it to a GPU, repeats the batch using GPU computation and visualizes the results.
\begin{figure}[!h]
    \centering
    \begin{minted}[bgcolor=mygray, fontsize=\small, baselinestretch=1.15]{python}
from torchdrug import data
smiles_list = ["N(Nc1ccccc1)c2ccccc2", "NC(=O)c1cccnc1"]
mols = data.PackedMolecule.from_smiles(smiles_list)
mols = mols.cuda()
mols = mols.repeat(2)
mols.visualize(num_row=1)
    \end{minted}
\end{figure}
\begin{figure}[!h]
    \centering
    \includegraphics[width=0.9\textwidth]{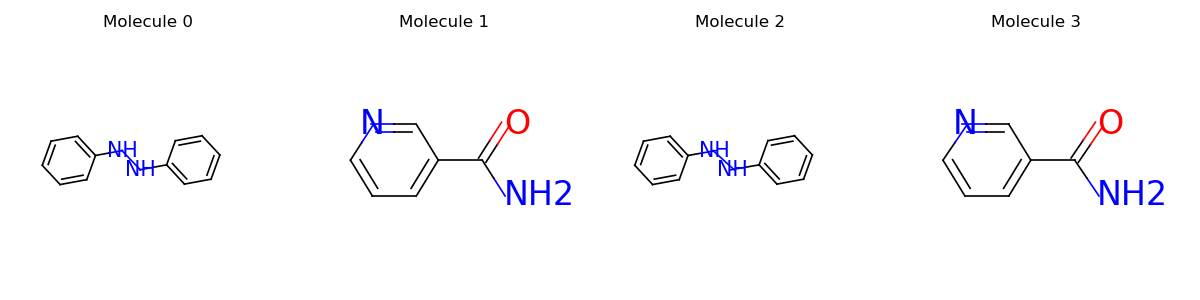}
    \vspace{-1em}
\end{figure}

Our data structures support a lot of graph operations, such as batching (\texttt{data.Graph.pack}) and de-batching (\texttt{data.PackedGraph.unpack}), node masking (\texttt{data.Graph.node\_mask}), edge masking (\texttt{data.Graph.edge\_mask}) and graph masking (\texttt{data.Graph.graph\_mask}). A list of core graph operations in TorchDrug is showed in Table~\ref{tab:graph_operation}. The data structures also contain several predefined node-level, edge-level and graph-level attributes that are useful for building machine learning models. For example, the type of atoms in a molecule may be used as an input feature to some property prediction model. For proteins, we additionally support residue-level attributes. Users may also register arbitrary attributes depending on their tasks. All the attributes are automatically maintained in all of our graph operations. The following example shows that the type of atoms and bonds are maintained after we mask out bonds without carbon atoms.
\begin{figure}[!h]
    \centering
    \begin{minted}[bgcolor=mygray, fontsize=\small, baselinestretch=1.15]{python}
import torchdrug as td
from torchdrug import data
smiles_list = ["CCSCCSP(=S)(OC)OC", "CCOC(=O)N"]
mols = data.PackedMolecule.from_smiles(smiles_list)
mols.visualize()
    \end{minted}
    \includegraphics[width=0.45\textwidth]{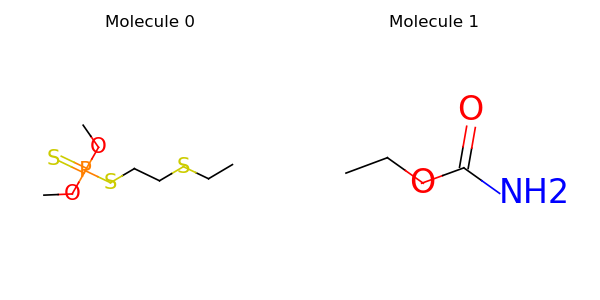}
    \vspace{-1em}
    \begin{minted}[bgcolor=mygray, fontsize=\small, baselinestretch=1.15]{python}
node_in, node_out, bond_type = mols.edge_list.t()
edge_mask = (mols.atom_type[node_in] == td.CARBON) | \
            (mols.atom_type[node_out] == td.CARBON)
mols = mols.edge_mask(edge_mask)
mols.visualize()
    \end{minted}
\end{figure}
\begin{figure}[!h]
    \centering
    \includegraphics[width=0.45\textwidth]{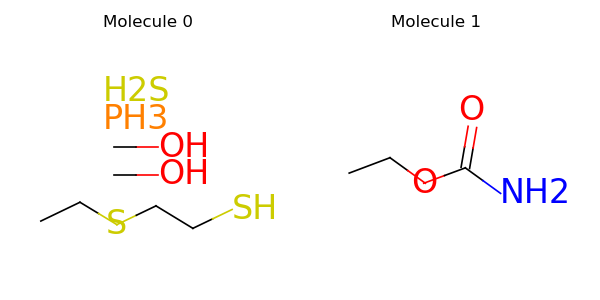}
    \vspace{-0.5em}
\end{figure}

To make TorchDrug more accessible to users, our data structures support a wide range of I/O interface, such as edge list, adjacency matrix, RDKit~\cite{landrum2006rdkit} molecules, or SMILES\footnotemark strings. For molecules and proteins, we additionally support common chemical feature functions for atoms, bonds and residues, so that users can easily obtain a good input representation for machine learning models.
\footnotetext{\url{https://en.wikipedia.org/wiki/Simplified_molecular-input_line-entry_system}}

\begin{table}[!h]
    \centering
    \caption[Core graph operations supported by TorchDrug]{Core graph operations supported by TorchDrug. All the operations of \texttt{data.Graph} are inherited by \texttt{data.Molecule} and \texttt{data.Protein}. All the operations over a single graph, molecule or protein are inherited by their corresponding batched variants.}
    \begin{adjustbox}{max width=\textwidth}
    \begin{tabular}{lll}
        \toprule
        \bf{Class} & \bf{API} & \bf{Graph Operation} \\
        \midrule
        \multirow{8}{*}{PyTorch-like}
        & \texttt{data.Graph.clone} & Clone this graph \\
        & \texttt{data.Graph.detach} & Detach this graph \\
        & \texttt{data.Graph.cpu} & Move this graph to CPU \\
        & \texttt{data.Graph.cuda} & Move this graph to GPU \\
        & \texttt{data.Graph.copy\_} & Copy data from another graph \\
        & \texttt{data.Graph.full} & Return a fully connected graph over nodes \\
        & \texttt{data.Graph.repeat} & Repeat this graph like \texttt{torch.repeat} \\
        & \texttt{data.PackedGraph.repeat\_interleave} & Repeat this graph like \texttt{torch.repeat\_interleave} \\
        \midrule
        \multirow{2}{*}{Node-level}
        & \texttt{data.Graph.node\_mask} & Mask out some nodes from this graph \\
        & \texttt{data.Graph.compact} & Remove isolated nodes \\
        \midrule
        \multirow{4}{*}{Edge-level}
        & \texttt{data.Graph.edge\_mask} & Mask out some edges from this graph \\
        & \texttt{data.Graph.directed} & Return a directed version of this graph \\
        & \texttt{data.Graph.undirected} & Return an undirected version of this graph \\
        & \texttt{data.Graph.match} & Search specific edges in this graph \\
        \midrule
        \multirow{7}{*}{Graph-level}
        & \texttt{data.Graph.connected\_components} & Split a graph into connected components \\
        & \texttt{data.Graph.split} & Split a graph into a batch of graphs \\
        & \texttt{data.Graph.pack} & Pack multiple graphs into a batch \\
        & \texttt{data.Graph.line\_graph} & Return a line graph of this graph \\
        & \texttt{data.PackedGraph.graph\_mask} & Mask out some graphs from this batch \\
        & \texttt{data.PackedGraph.merge} & Merge some graphs into a smaller batch \\
        & \texttt{data.PackedGraph.unpack} & Unpack a batch into multiple graphs \\
        \midrule
        \multirow{1}{*}{Molecule}
        & \texttt{data.Molecule.ion\_to\_molecules} & Convert ions to molecules \\
        \midrule
        \multirow{1}{*}{Protein}
        & \texttt{data.Protein.residue\_mask} & Mask out some residues from this protein \\
        \bottomrule
    \end{tabular}
    \end{adjustbox}
    \label{tab:graph_operation}
\end{table}

\subsection{Datasets, Layers, Models and Tasks}
\label{sec:modules}

\smallskip \noindent \textbf{Datasets.}
The \texttt{datasets} module provides 47 common datasets for 7 drug discovery tasks. These datasets inherit the \texttt{Dataset} class from PyTorch and further provide data loading and \texttt{\_\_getitem\_\_} functions, which facilitates the interaction with dataloaders in PyTorch. The following code snippet loads the ClinTox dataset for molecular property prediction and splits it into training, validation and test sets.
\begin{minted}[bgcolor=mygray, fontsize=\small, baselinestretch=1.15]{python}
import torch
from torchdrug import datasets
dataset = datasets.ClinTox("~/datasets/")
lengths = [int(0.8 * len(dataset)), int(0.1 * len(dataset))]
lengths += [len(dataset) - sum(lengths)]
train_set, valid_set, test_set = torch.utils.data.random_split(dataset, lengths)
\end{minted}

\smallskip \noindent \textbf{Layers and Models.}
The \texttt{layers} and \texttt{models} modules implement layers and models for representation learning respectively. This lets users switch between standard models or custom models from standard layers. Our interface follows the convention in PyTorch, which minimizes the cognitive load of users. Classes in layers (e.g.\ \texttt{GCNConv}) are similar to the layers in \texttt{torch.nn}, while classes in \texttt{models} (e.g.\ \texttt{GCN}) are similar to \texttt{torchvision.models}. We also include the common tricks used in previous state-of-the-art models~\cite{velivckovic2018graph, xu2018representation, xu2019powerful}, such as residual connection~\cite{he2016deep}, batch normalization~\cite{ioffe2015batch} and jumping knowledge~\cite{xu2018representation}, to provide more flexibility for users.

\smallskip \noindent \textbf{Tasks.}
The \texttt{tasks} module contains high-level routines of machine learning tasks in drug discovery. Typically, these include dataset preprocessing, prediction, training and evaluation. Each task is abstracted as a model-agnostic class in tasks, which can be used with any basic representation learning models (e.g.\ GIN~\cite{xu2019powerful}). Currently, TorchDrug supports 7 tasks: molecular property prediction, pretrained molecular representations, de novo molecule design, retrosynthesis, knowledge graph reasoning, protein property prediction and pretrained protein representations. A full list of tasks and models supported by TorchDrug are shown in Table~\ref{tab:task}. Here is an example of constructing a molecular property prediction task based on the GIN model~\cite{xu2019powerful}.
\begin{minted}[bgcolor=mygray, fontsize=\small, baselinestretch=1.15]{python}
from torchdrug import models, tasks
model = models.GIN(input_dim=dataset.node_feature_dim,
                   hidden_dims=[256, 256, 256, 256],
                   short_cut=True, batch_norm=True, concat_hidden=True)
task = tasks.PropertyPrediction(model, task=dataset.tasks,
                                criterion="bce", metric=("auprc", "auroc"))
\end{minted}

\begin{table}[!h]
    \centering
    \caption[Drug discovery tasks and models supported by TorchDrug]{Drug discovery tasks and models supported by TorchDrug.}
    \begin{adjustbox}{max width=\textwidth}
        \begin{tabular}{lllll}
            \toprule
            \bf{Task} & \bf{Model} \\
            \midrule
            & Neural Fingerprint~\cite{duvenaud2015convolutional} & ChebyNet~\cite{defferrard2016convolutional} & GCN~\cite{kipf2017semi} \\
            \bf{Molecular Property Prediction} & ENN-S2S~\cite{gilmer2017neural} & SchNet~\cite{schutt2017schnet} & GAT~\cite{velivckovic2018graph} \\
            & RGCN~\cite{schlichtkrull2018modeling} & GIN~\cite{xu2019powerful} \\
            \midrule
            \bf{Pretrained Molecular} & InfoGraph~\cite{sun2020infograph} & Edge Prediction~\cite{hamilton2017inductive} & Attribute Masking~\cite{hu2019strategies} \\
            \bf{Representations} & Context Prediction~\cite{hu2019strategies} \\
            \midrule
            \bf{De Novo Molecule Design}
            & GCPN~\cite{you2018graph} & GraphAF~\cite{shi2020graphaf} \\
            \midrule
            \bf{Retrosynthesis Prediction}
            & G2Gs~\cite{shi2020graph} \\
            \midrule
            & TransE~\cite{bordes2013translating} & DistMult~\cite{yang2015embedding} & ComplEx~\cite{trouillon2016complex} \\ 
            \bf{Knowledge Graph Reasoning}& NeuralLP~\cite{yang2017differentiable} & SimplE~\cite{kazemi2018simple} & RotatE~\cite{sun2019rotate} \\
            & KBGAT~\cite{nathani2019learning} \\
            \midrule
            \bf{Protein Property} & LSTM~\cite{rao2019evaluating} & ResNet~\cite{rao2019evaluating} & BERT~\cite{rao2019evaluating} \\
            \bf{Prediction} & CNN~\cite{shanehsazzadeh2020transfer} & ESM~\cite{rao2019evaluating} & GearNet~\cite{zhang2023protein} \\
            \midrule
            \bf{Pretrained Protein} & Distance Prediction~\cite{zhang2023protein} & Angle Prediction~\cite{zhang2023protein} & Dihedral Prediction~\cite{zhang2023protein} \\
            \bf{Representations} & Multiview Contrast~\cite{zhang2023protein} \\
            \bottomrule
        \end{tabular}
    \end{adjustbox}
    \label{tab:task}
\end{table}

\subsection{Training and Evaluation}

The \texttt{core} module links all the above modules and coordinates them for model training and evaluation with either single or multiple CPUs and GPUs. It wraps up all the ingredients of machine learning, such as models, datasets, optimizers, learning rate schedulers, together in a class, and automatically synchronize them when trained with multiple workers. It also provides convenient interface for saving and loading both parameters and hyperparameters. Using the dataset, model and task defined in Section~\ref{sec:modules}, we train the model and evaluate it on the validation set with the following code
\begin{minted}[bgcolor=mygray, fontsize=\small, baselinestretch=1.15]{python}
from torchdrug import core
optimizer = torch.optim.Adam(task.parameters(), lr=1e-3)
solver = core.Engine(task, train_set, valid_set, test_set, optimizer,
                     batch_size=1024, gpus=[0])
solver.train(num_epoch=100)
solver.evaluate("valid")
\end{minted}
\vspace{-1.8em}
\section{Model Benchmark}
\label{app:performance_benchmark}

We provide a comprehensive benchmark of models for 5 tasks implemented in TorchDrug.

\smallskip \noindent \textbf{Molecular Property Prediction.}
We benchmark 4 property prediction models: NFP~\cite{duvenaud2015convolutional}, GCN~\cite{kipf2017semi}, ENN-S2S~\cite{gilmer2017neural} and GIN~\cite{xu2019powerful}. The 15 benchmark datasets include QM9, QM8, BACE, BBBP, CEP, HIV, ClinTox, ESOL, FreeSolv, Lipophilicity, SIDER, Tox21, ToxCast, MUV and Malaria~\cite{wu2018moleculenet}. We consider both vanilla random split and scaffold-based random split for each dataset. The split for train/validation/test sets is 8:1:1. For each model on each dataset, we evaluate it with 5 different random splits and report the mean and the standard deviation of the performance. For datasets with a lot of tasks, only the first 16 tasks are plotted due to space limitations. We plot mean absolute error (MAE) and the coefficient of determination (R2) for regression tasks, AUROC and AUPRC for binary classification tasks. Figure~\ref{fig:property_prediction} and \ref{fig:property_prediction_scaffold} show the results on random splits and scaffold splits respectively.

\smallskip \noindent \textbf{Pretrained Molecular Representations.}
We benchmark 4 models: Edge Prediction~\cite{hamilton2017inductive}, InfoGraph~\cite{sun2020infograph}, Attribute Masking~\cite{hu2019strategies} and Context Prediction~\cite{hu2019strategies}. We follow the protocol in \cite{hu2019strategies} to first perform self-supervised pretraining on ZINC15~\cite{sterling2015zinc}, then perform supervised pretraining on ChEMBL~\cite{mayr2018large}. The models are finally finetuned and evaluated on standard property prediction datasets. Each property prediction dataset is split into train/validation/test with a ratio of 8:1:1 based on the scaffolds of molecules. For each model on each dataset, we evaluate with 10 random splits and report the mean and the standard deviation of the AUROC metric. Results are listed in Table~\ref{tab:pretrain}.

\begin{table}[!h]
    \centering
    \caption[Results of pretrained molecular representations]{Results of pretrained molecular representations.}
    \begin{adjustbox}{max width=\textwidth}
    \begin{tabular}{lccccccccc}
        \toprule
        \multirow{2}{*}{\bf{Method}} & \multicolumn{8}{c}{\bf{Dataset}} & \multirow{2}{*}{\bf{Average$\uparrow$}} \\
        & \bf{BBBP} & \bf{Tox21} & \bf{ToxCast} & \bf{Sider} & \bf{ClinTox} & \bf{MUV} & \bf{HIV} & \bf{Bace} \\
        \midrule
        No Pretrain & 67.1$\pm$2.9 & 75.0$\pm$0.2 & 60.6$\pm$0.7 & 58.9$\pm$0.8 & 60.8$\pm$3.9 & 64.3$\pm$3.4 & 76.4$\pm$1.6 & 66.5$\pm$9.0 & 66.2 \\
        Edge Prediction~\cite{hamilton2017inductive} & 67.1$\pm$2.6 & 74.6$\pm$0.7 & 69.8$\pm$0.5 & 59.4$\pm$1.5 & 59.0$\pm$2.6 & 66.8$\pm$1.0 & 76.3$\pm$2.0 & 68.4$\pm$3.9 & 67.7 \\
        InfoGraph~\cite{sun2020infograph} & 68.9$\pm$0.6 & 76.4$\pm$0.4 & 71.2$\pm$0.6 & 59.8$\pm$0.7 & 70.3$\pm$4.2 & 69.4$\pm$0.8 & 75.5$\pm$0.7 & 73.7$\pm$2.6 & 70.7\\
        Attribute Masking~\cite{hu2019strategies} & 65.2$\pm$0.9 & 75.8$\pm$0.5 & 70.6$\pm$0.6 & 58.9$\pm$0.9 & 79.0$\pm$2.3 & 68.3$\pm$2.1 & 76.9$\pm$0.9 & 78.1$\pm$0.8 & 71.6\\
        Context Prediction~\cite{hu2019strategies} & 71.1$\pm$1.8 & 75.6$\pm$0.3 & 71.1$\pm$0.3 & 61.7$\pm$0.5 & 65.9$\pm$1.9 & 68.5$\pm$0.6 & 77.1$\pm$0.3 & 78.6$\pm$0.5 & 71.2 \\
        \bottomrule
    \end{tabular}
    \end{adjustbox}
    \label{tab:pretrain}
\end{table}

\begin{wraptable}{R}{0.5\textwidth}
    \vspace{-1.5em}
    \centering
    \caption[Results of goal-directed property optimization]{Results of goal-directed property optimization.}
    \footnotesize
    \begin{tabular}{lcc}
        \toprule
        \bf{Method} & \bf{Penalized LogP$\uparrow$} & \bf{QED$\uparrow$} \\
        \midrule
        ZINC250k (Dataset) & 4.52 & 0.948 \\
        \midrule
        GCPN~\cite{you2018graph} & 6.560 & 0.948 \\
        GraphAF~\cite{shi2020graphaf} & 5.630 & 0.948 \\
        \bottomrule
    \end{tabular}
    \label{tab:molecule_generation}
\end{wraptable}

\smallskip \noindent \textbf{De Novo Molecule Design.}
We benchmark graph generative models for goal-directed property optimization, which aims to generate novel molecules with optimized chemical properties. We first pretrain the models on ZINC250k~\cite{irwin2012zinc} dataset, and then apply reinforcement learning algorithms to finetune the networks towards desired chemical properties, e.g. penalized logP and QED score. Penalized logP score is the octanol-water partition coefficient penalized by the synthetic accessibility score and the number of long cycles. QED score measures the drug-likeness of the molecule. We report the top-1 property scores of generated molecules by different models in Table~\ref{tab:molecule_generation}.

\smallskip \noindent \textbf{Retrosynthesis.}
We benchmark retrosynthesis models on the standard USPTO50k~\cite{lowe2012extraction} dataset, which contains 50k atom mapped reactions with 10 reaction types. We consider two settings, where the reaction type is given and unknown respectively. The top-k accuracy of the predictions is reported. Table~\ref{tab:retrosynthesis} shows the results for retrosynthesis models.

\begin{table}[!h]
    \centering
    \caption[Results of retrosynthesis]{Results of retrosynthesis.}
    \footnotesize
    \begin{tabular}{lcccccccc}
        \toprule
        \multirow{2}{*}{\bf{Method}}
        & \multicolumn{4}{c}{\bf{Given Reaction Class}} & \multicolumn{4}{c}{\bf{Unknown Reaction Class}} \\
        & \bf{Top-1$\uparrow$} & \bf{Top-3$\uparrow$} & \bf{Top-5$\uparrow$} & \bf{Top-10$\uparrow$} & \bf{Top-1$\uparrow$} & \bf{Top-3$\uparrow$} & \bf{Top-5$\uparrow$} & \bf{Top-10$\uparrow$} \\
        \midrule
        G2Gs~\cite{shi2020graph} & 0.639 & 0.852 & 0.904 & 0.938 & 0.438 & 0.677 & 0.748 & 0.822 \\
        \bottomrule
    \end{tabular}
    \label{tab:retrosynthesis}
\end{table}

\smallskip \noindent \textbf{Knowledge Graph Reasoning.}
We benchmark knowledge graph reasoning models on a biomedical knowledge graph Hetionet~\cite{himmelstein2017systematic}. The 7 models include TransE~\cite{bordes2013translating}, DistMult~\cite{yang2015embedding}, ComplEx~\cite{trouillon2016complex}, SimplE~\cite{kazemi2018simple}, RotatE~\cite{sun2019rotate}, KBGAT~\cite{nathani2019learning} and NeuralLP~\cite{yang2017differentiable}. Following standard knowledge graph completion evaluation, we report the mean rank (MR), mean reciprocal rank (MRR), hits at K (H@K) under filtered ranking setting. Table~\ref{tab:biokg_reasoning} summarizes the results of biomedical knowledge graph reasoning.

\begin{table}[!h]
    \centering
    \caption[Results of knowledge graph reasoning on Hetionet]{Results on knowledge graph reasoning on Hetionet.}
    \footnotesize
    \begin{tabular}{lccccc}
        \toprule
        \bf{Method} & \bf{MR$\downarrow$} & \bf{MRR$\uparrow$} & \bf{H@1$\uparrow$} & \bf{H@3$\uparrow$} & \bf{H@10$\uparrow$} \\
        \midrule
        TransE~\cite{bordes2013translating} & 1088 & 0.162 & 0.102 & 0.173 & 0.284 \\
        DistMult~\cite{yang2015embedding} & 941 & 0.187 & 0.128 & 0.199 & 0.304 \\
        ComplEx~\cite{trouillon2016complex} & 800 & 0.235 & 0.166 & 0.255 & 0.374 \\
        SimplE~\cite{kazemi2018simple} & 893 & 0.194 & 0.134 & 0.207 & 0.313 \\
        RotatE~\cite{sun2019rotate} & 744 & 0.257 & 0.185 & 0.282 & 0.403 \\
        KBGAT~\cite{nathani2019learning} & 1713 & 0.058 & 0.023 & 0.055 & 0.130 \\
        NeuralLP~\cite{yang2017differentiable} & 4017 & 0.175 & 0.128 & 0.182 & 0.273 \\
        \bottomrule
    \end{tabular}
    \label{tab:biokg_reasoning}
\end{table}

\pagebreak

\begin{figure}[!h]
    \centering
    \begin{subfigure}{0.9\textwidth}
        \includegraphics[width=\textwidth]{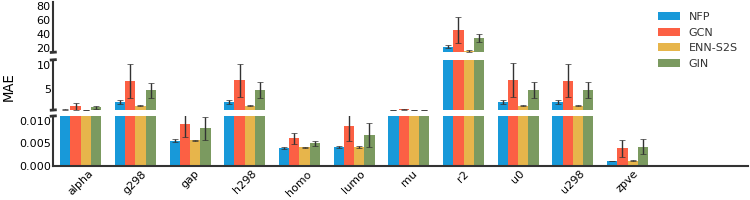}
        \includegraphics[width=\textwidth]{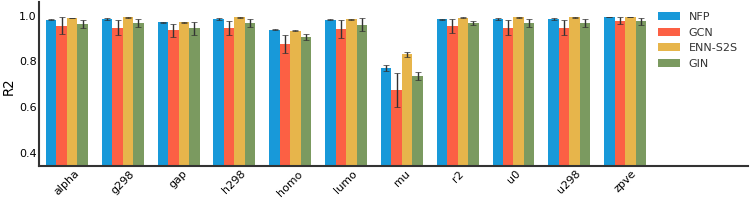}
        \caption{QM9}
    \end{subfigure}\
    \begin{subfigure}{0.9\textwidth}
        \includegraphics[width=\textwidth]{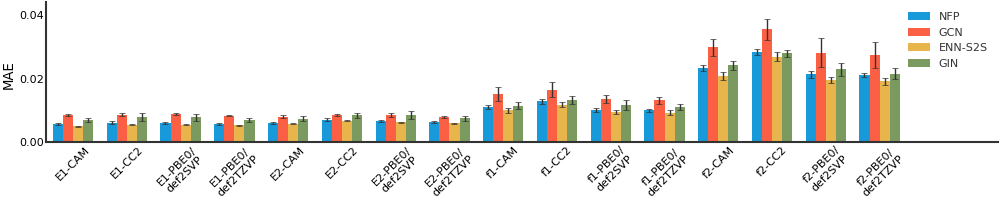}
        \includegraphics[width=\textwidth]{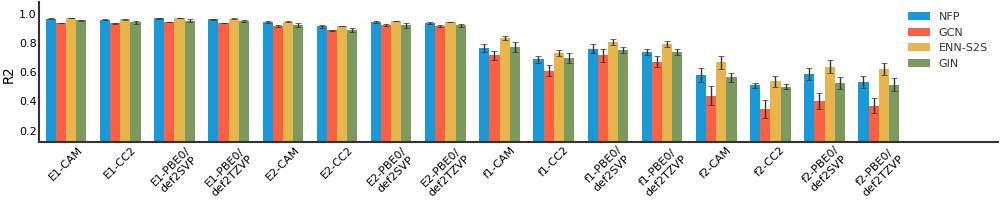}
        \caption{QM8}
    \end{subfigure}
    \begin{subfigure}{0.45\textwidth}
        \includegraphics[width=0.45\textwidth]{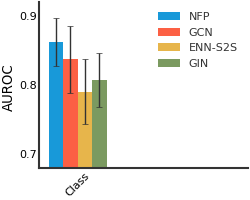}
        \includegraphics[width=0.45\textwidth]{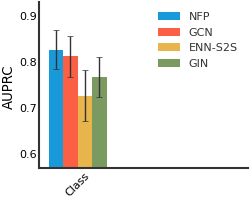}
        \caption{BACE}
    \end{subfigure}
    \begin{subfigure}{0.45\textwidth}
        \includegraphics[width=0.45\textwidth]{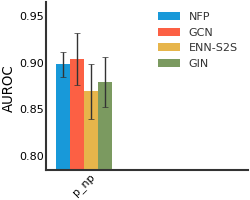}
        \includegraphics[width=0.45\textwidth]{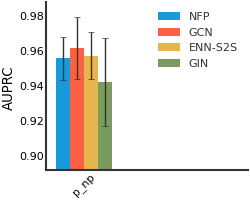}
        \caption{BBBP}
    \end{subfigure}
    \begin{subfigure}{0.45\textwidth}
        \includegraphics[width=0.45\textwidth]{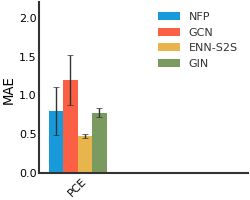}
        \includegraphics[width=0.45\textwidth]{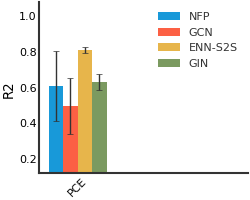}
        \caption{CEP}
    \end{subfigure}
    \begin{subfigure}{0.45\textwidth}
        \includegraphics[width=0.45\textwidth]{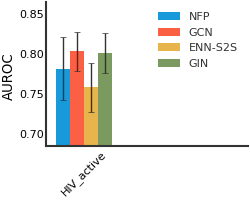}
        \includegraphics[width=0.45\textwidth]{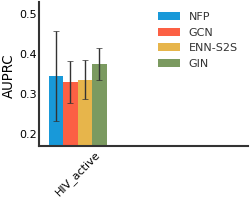}
        \caption{HIV}
    \end{subfigure}
    \caption[Molecular property prediction result on 15 datasets with random splits]{Molecular property prediction result on 15 datasets with random splits.}
    \label{fig:property_prediction}
\end{figure}
\begin{figure}[!h]\ContinuedFloat
    \begin{subfigure}{0.45\textwidth}
        \includegraphics[width=0.45\textwidth]{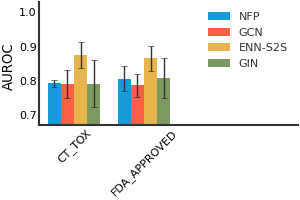}
        \includegraphics[width=0.45\textwidth]{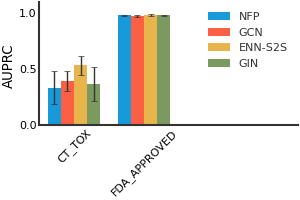}
        \caption{ClinTox}
    \end{subfigure}
    \begin{subfigure}{0.45\textwidth}
        \includegraphics[width=0.45\textwidth]{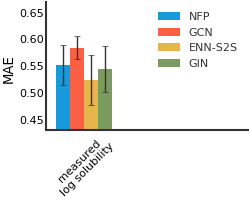}
        \includegraphics[width=0.45\textwidth]{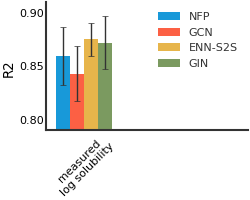}
        \caption{ESOL}
    \end{subfigure}
    \begin{subfigure}{0.45\textwidth}
        \includegraphics[width=0.45\textwidth]{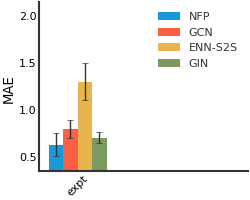}
        \includegraphics[width=0.45\textwidth]{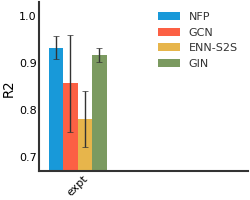}
        \caption{FreeSolv}
    \end{subfigure}
    \begin{subfigure}{0.45\textwidth}
        \includegraphics[width=0.45\textwidth]{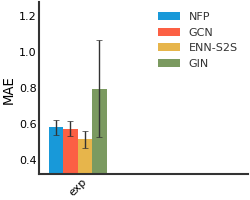}
        \includegraphics[width=0.45\textwidth]{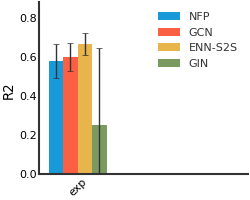}
        \caption{Lipophilicity}
    \end{subfigure}
    \begin{subfigure}{0.9\textwidth}
        \includegraphics[width=\textwidth]{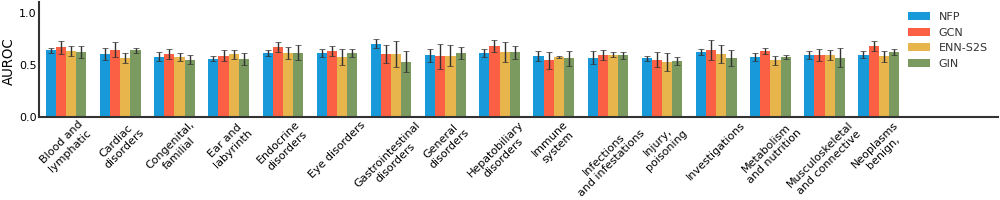}
        \includegraphics[width=\textwidth]{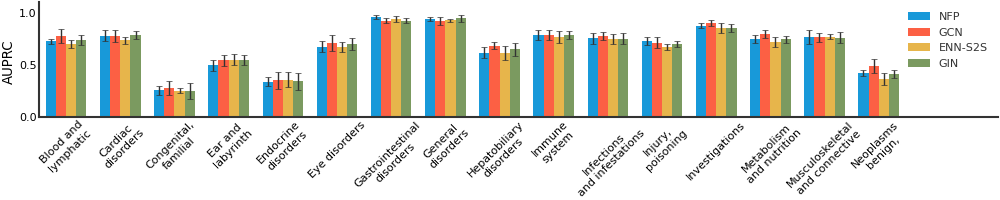}
        \caption{SIDER}
    \end{subfigure}
    \begin{subfigure}{0.9\textwidth}
        \includegraphics[width=\textwidth]{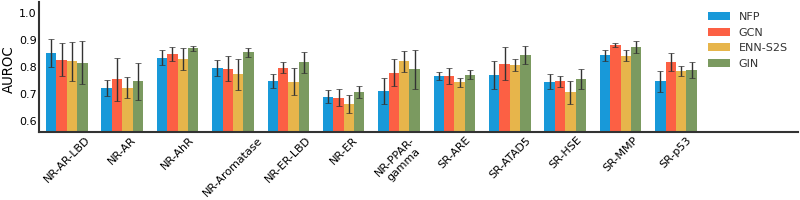}
        \includegraphics[width=\textwidth]{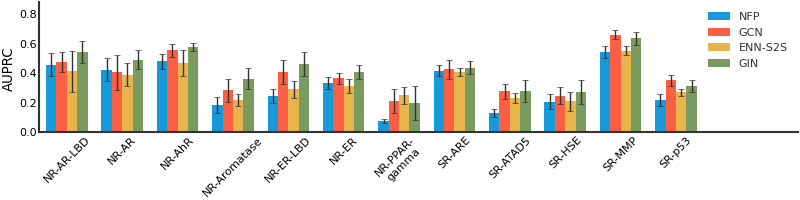}
        \caption{ToxCast}
    \end{subfigure}
    \caption[Molecular property prediction result on 15 datasets with random splits (cont. 1)]{Molecular property prediction result on 15 datasets with random splits (cont. 1).}
    \label{fig:property_prediction_1}
\end{figure}
\begin{figure}[!h]\ContinuedFloat
    \begin{subfigure}{0.9\textwidth}
        \includegraphics[width=\textwidth]{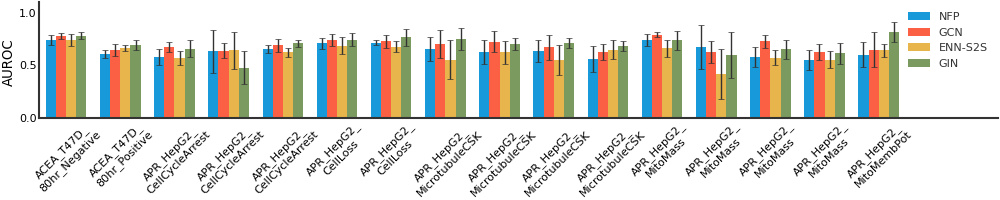}
        \includegraphics[width=\textwidth]{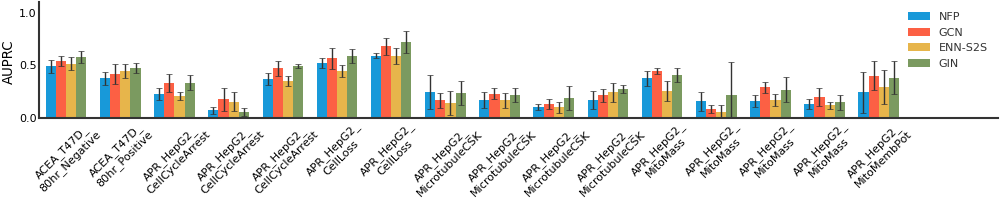}
        \caption{ToxCast}
    \end{subfigure}
    \begin{subfigure}{0.9\textwidth}
        \includegraphics[width=\textwidth]{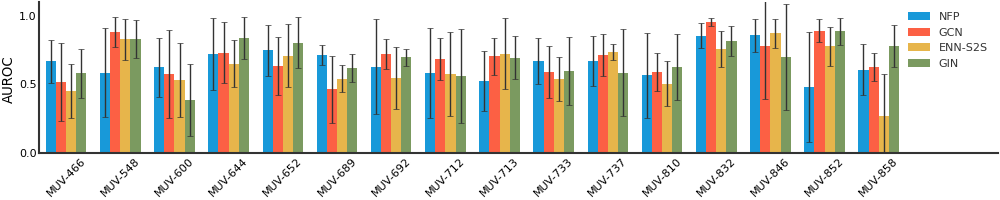}
        \includegraphics[width=\textwidth]{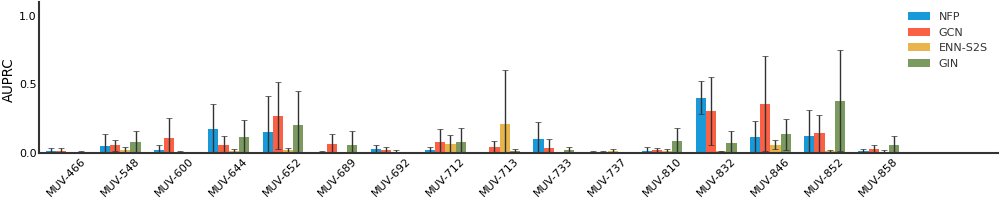}
        \caption{MUV}
    \end{subfigure}
        \begin{subfigure}{0.45\textwidth}
        \includegraphics[width=0.45\textwidth]{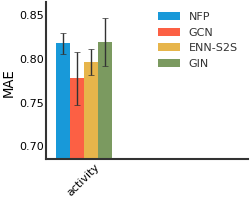}
        \includegraphics[width=0.45\textwidth]{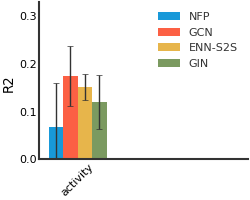}
        \caption{Malaria}
    \end{subfigure}
    \caption[Molecular property prediction result on 15 datasets with random splits (cont. 2)]{Molecular property prediction result on 15 datasets with random splits (cont. 2).}
    \label{fig:property_prediction_2}
\end{figure}

\begin{figure}[!h]
    \centering
    \begin{subfigure}{0.9\textwidth}
        \includegraphics[width=\textwidth]{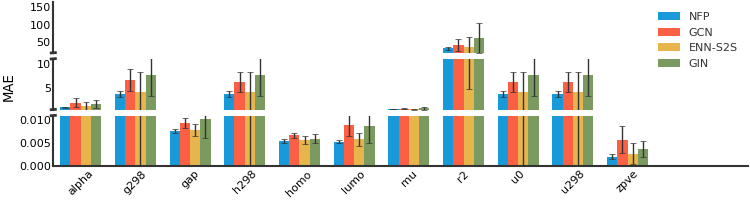}
        \includegraphics[width=\textwidth]{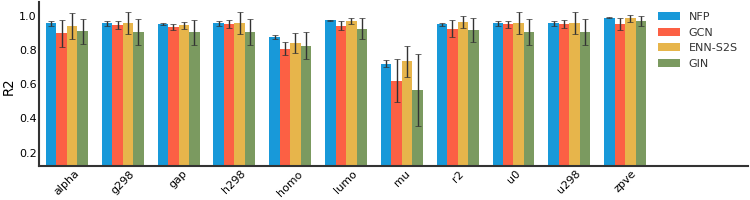}
        \caption{QM9}
    \end{subfigure}
    \begin{subfigure}{0.9\textwidth}
        \includegraphics[width=\textwidth]{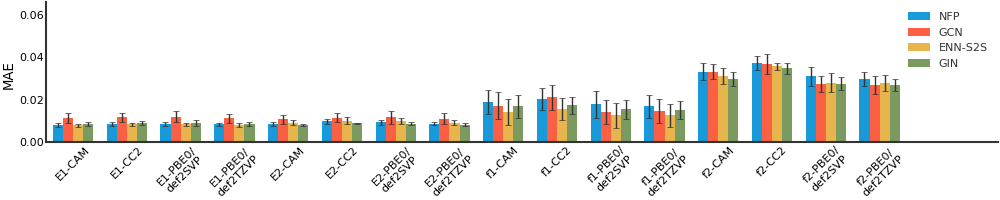}
        \includegraphics[width=\textwidth]{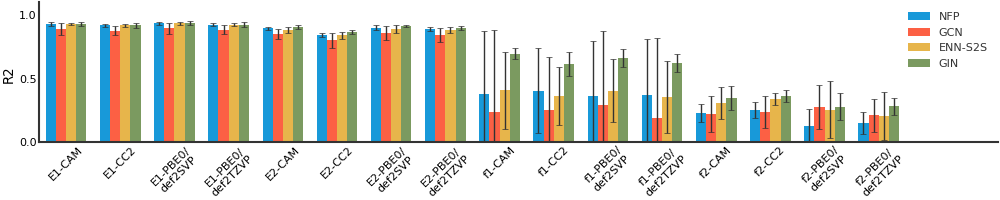}
        \caption{QM8}
    \end{subfigure}
    \begin{subfigure}{0.45\textwidth}
        \includegraphics[width=0.45\textwidth]{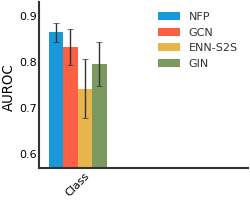}
        \includegraphics[width=0.45\textwidth]{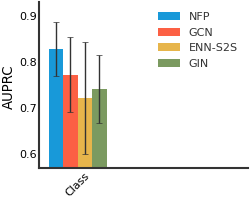}
        \caption{BACE}
    \end{subfigure}
    \begin{subfigure}{0.45\textwidth}
        \includegraphics[width=0.45\textwidth]{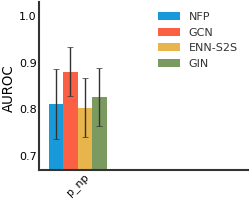}
        \includegraphics[width=0.45\textwidth]{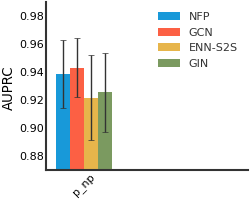}
        \caption{BBBP}
    \end{subfigure}
    \begin{subfigure}{0.45\textwidth}
        \includegraphics[width=0.45\textwidth]{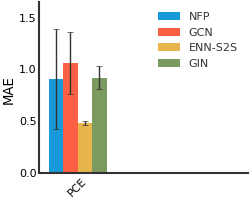}
        \includegraphics[width=0.45\textwidth]{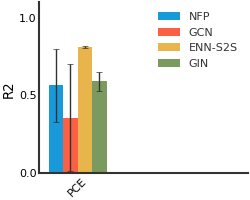}
        \caption{CEP}
    \end{subfigure}
    \begin{subfigure}{0.45\textwidth}
        \includegraphics[width=0.45\textwidth]{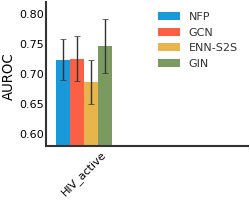}
        \includegraphics[width=0.45\textwidth]{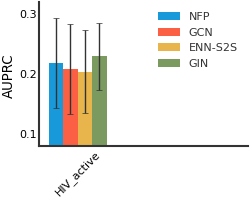}
        \caption{HIV}
    \end{subfigure}
    \caption[Molecular property prediction result on 15 datasets with scaffold splits]{Molecular property prediction result on 15 datasets with scaffold splits.}
    \label{fig:property_prediction_scaffold}
\end{figure}
\begin{figure}[!h]\ContinuedFloat
    \begin{subfigure}{0.45\textwidth}
        \includegraphics[width=0.45\textwidth]{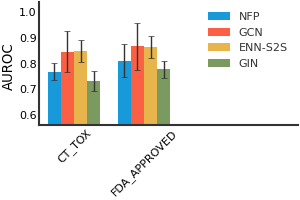}
        \includegraphics[width=0.45\textwidth]{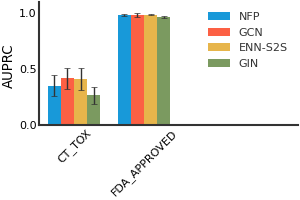}
        \caption{ClinTox}
    \end{subfigure}
    \begin{subfigure}{0.45\textwidth}
        \includegraphics[width=0.45\textwidth]{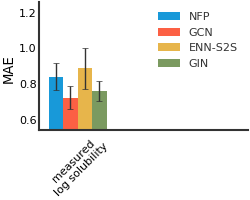}
        \includegraphics[width=0.45\textwidth]{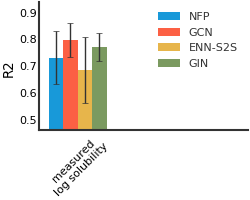}
        \caption{ESOL}
    \end{subfigure}
    \begin{subfigure}{0.45\textwidth}
        \includegraphics[width=0.45\textwidth]{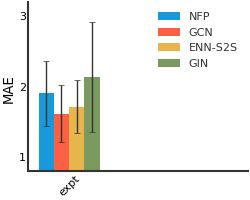}
        \includegraphics[width=0.45\textwidth]{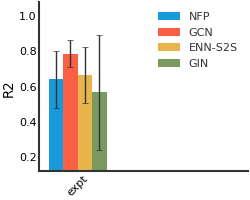}
        \caption{FreeSolv}
    \end{subfigure}
    \begin{subfigure}{0.45\textwidth}
        \includegraphics[width=0.45\textwidth]{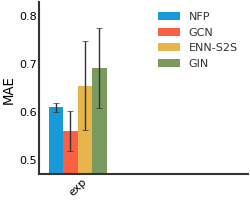}
        \includegraphics[width=0.45\textwidth]{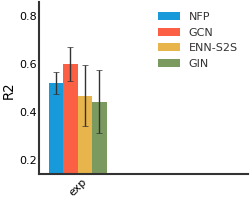}
        \caption{Lipophilicity}
    \end{subfigure}
    \begin{subfigure}{0.9\textwidth}
        \includegraphics[width=\textwidth]{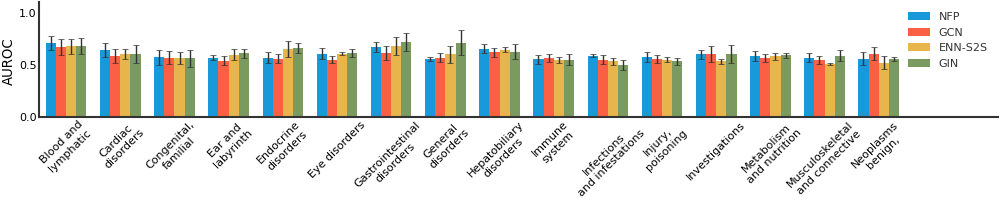}
        \includegraphics[width=\textwidth]{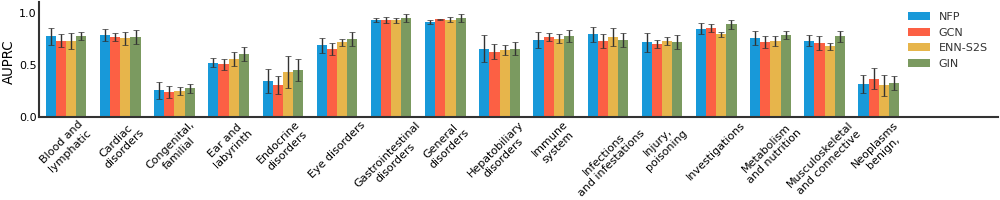}
        \caption{SIDER}
    \end{subfigure}
    \begin{subfigure}{0.9\textwidth}
        \includegraphics[width=\textwidth]{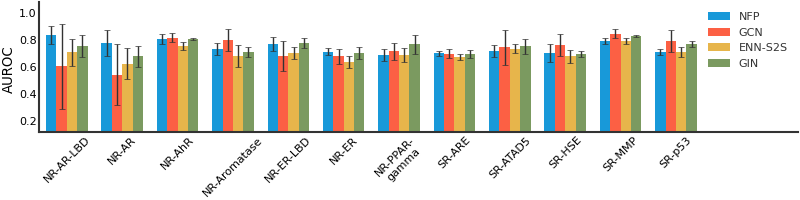}
        \includegraphics[width=\textwidth]{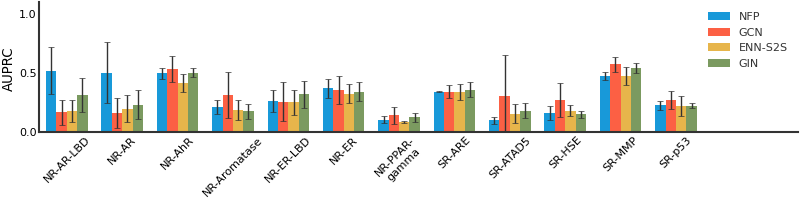}
        \caption{Tox21}
    \end{subfigure}
    \caption[Molecular property prediction result on 15 datasets with scaffold splits (cont. 1)]{Molecular property prediction result on 15 datasets with scaffold splits (cont. 1).}
    \label{fig:property_prediction_scaffold_1}
\end{figure}
\begin{figure}[!h]\ContinuedFloat
    \begin{subfigure}{0.9\textwidth}
        \includegraphics[width=\textwidth]{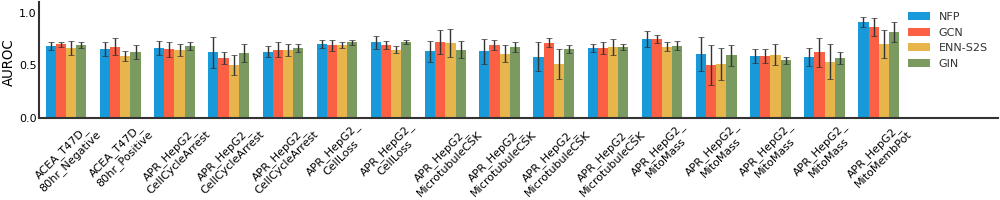}
        \includegraphics[width=\textwidth]{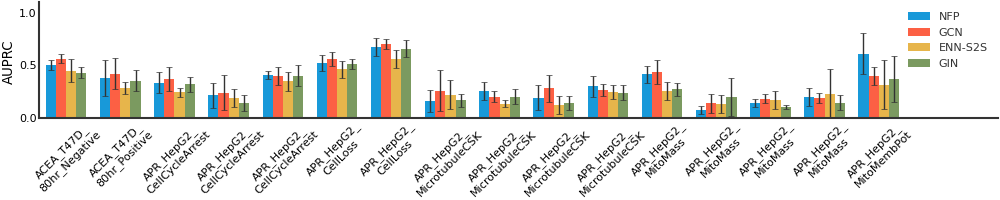}
        \caption{ToxCast}
    \end{subfigure}
    \begin{subfigure}{0.9\textwidth}
        \includegraphics[width=\textwidth]{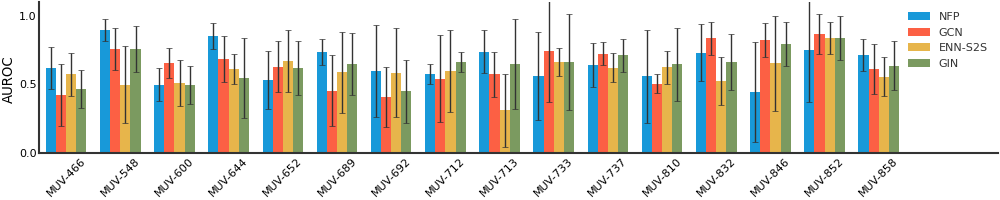}
        \includegraphics[width=\textwidth]{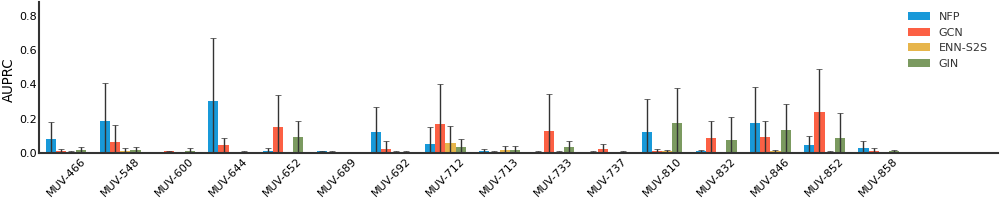}
        \caption{MUV}
    \end{subfigure}
        \begin{subfigure}{0.45\textwidth}
        \includegraphics[width=0.45\textwidth]{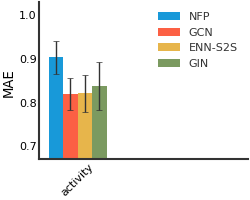}
        \includegraphics[width=0.45\textwidth]{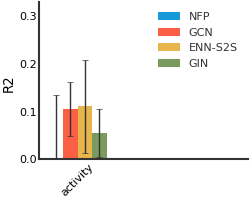}
        \caption{Malaria}
    \end{subfigure}
    \caption[Molecular property prediction result on 15 datasets with scaffold splits (cont. 2)]{Molecular property prediction result on 15 datasets with scaffold splits (cont. 2).}
    \label{fig:property_prediction_scaffold_2}
\end{figure}
\chapter[A System for Training Embeddings on Large Graphs]{A System for\linebreak Training Embeddings on Large Graphs}
\label{cha:graphvite}

Many real-world reasoning applications require dealing with large graphs, often at the scale of tens of millions or even billions of edges. While embedding methods are widely used in these applications, most existing embedding implementations utilize either multiple CPUs or a single GPU, both of which have limitations for large graphs. Additionally, embedding methods do not benefit much from the standard data parallelism on multiple GPUs. How can we harness the computation power of multiple GPUs to scale up embedding methods?

In this chapter, we introduce GraphVite, a system that scales up embedding methods on both homogeneous graphs and knowledge graphs with multiple CPUs and GPUs. GraphVite can handle large graphs whose embedding matrices cannot fit into the memory of a single GPU, supporting up to billion-scale homogeneous graphs or ten-million-scale knowledge graphs on a single machine. As of the year 2024, embedding methods have largely been replaced by inductive models, and there is no longer strong demand for training embeddings on large graphs. Readers interested in parallelism and system techniques may still find this chapter valuable.

\smallskip \emph{This chapter is based on our work published at WWW 2019~\cite{zhu2019graphvite}\footnote{The code is available at \url{https://github.com/DeepGraphLearning/graphvite}}.}

\section{Overview}

Graphs are ubiquitous in the real world. Examples like social networks~\cite{mislove2007measurement}, citation networks~\cite{sen2008collective}, protein-protein interaction networks~\cite{szklarczyk2016string} and many more cover a wide range of applications. In graph analysis, it is critical to have effective representations for nodes, as these representations largely determine the performance of many downstream tasks. Recently, there is a growing interest in unsupervised learning of continuous representations for nodes and edges, which is aimed at preserving the structure of graphs in a low-dimensional space. This kind of approaches has been proven successful in various applications, such as node classification~\cite{perozzi2014deepwalk}, link prediction~\cite{liben2007link}, and graph visualization~\cite{tang2016visualizing}.

Many works have been proposed on this stream, including DeepWalk~\cite{perozzi2014deepwalk}, LINE~\cite{tang2015line}, and node2vec~\cite{grover2016node2vec}. These methods learn effective embeddings by predicting the neighbors of each node and can be efficiently optimized by asynchronous stochastic gradient descent (ASGD)~\cite{recht2011hogwild}. On a single machine with multi-core CPUs, they are capable of processing graphs with one or a few millions of nodes. Given that real-world graphs easily go to tens of millions nodes and nearly billions of edges, how to adapt embedding methods to graphs of such large scales remains very challenging. One may think of exploiting computer clusters for training large-scale graphs. However, it is a non-trivial task to extend existing methods to distributed settings. Even if distributed algorithms are available, the cost of large CPU clusters is still prohibitive for many users. Therefore, we are wondering whether it is possible to scale embedding methods to very large graphs on a single machine, which should be particularly valuable for common users.

Inspired by the recent success of training deep neural networks with GPUs~\cite{ciresan2011flexible, krizhevsky2012imagenet}, we would like to utilize such highly parallel hardware to accelerate the training of embeddings. However, directly adopting GPUs for embedding methods could be inefficient, since the sampling procedure in embedding methods requires excessive random memory access on the graph structure, which is at the disadvantage of GPUs. Compared to GPUs, CPUs are much more capable of performing random memory access. Therefore, it would be wise to use both CPUs and GPUs for training embeddings. Along this direction, a straightforward solution is to follow the mini-batch stochastic gradient descent (mini-batch SGD) paradigm utilized in existing deep learning frameworks (e.g.\ TensorFlow~\cite{abadi2016tensorflow} and PyTorch~\cite{paszke2017automatic}). Different from deep neural networks, the training of embeddings involves much more memory access per computation. As a result, mini-batch SGD would suffer from severe memory latency on the bus before it benefits from fast GPU computation. Therefore, other than mini-batch SGD, we need to design a system that leverages distinct advantages of CPUs and GPUs and uses them collaboratively to train embeddings efficiently.
\begin{itemize}[label=$\bullet$, leftmargin=*]
    \item{\textbf{Limited GPU Memory.} The embedding matrices are quite large while the memory of a single GPU is very small. Modern GPUs usually have a capacity of 12GB or 16GB.}
    \item{\textbf{Limited Bus Bandwidth.} The bandwidth of the bus is much slower than the computation speed of GPUs. There will be severe latency if GPUs exchange data with the main memory frequently. }
    \item{\textbf{Large Synchronization Cost.} A lot of data are transferred between CPUs and GPUs. Both the CPU-GPU or inter-GPU synchronizations are very costly.}
\end{itemize}
Our multi-GPU system, GraphVite, addresses the above challenges by decomposing embedding methods into an edge augmentation stage and an embedding training stage, and deploying them to multiple CPUs and multiple GPUs respectively. In the edge sampling stage, we propose parallel online augmentation to augment the graph with random walks and generate augmented edge samples with multiple CPUs in an online fashion. In the embedding training stage, we propose parallel negative sampling to partition the training workload among GPUs, and assign GPUs to non-overlapping partitions so that multiple GPUs can perform gradient updates simultaneously without much inter-GPU synchronization. The parallel negative sampling also significantly reduces GPU memory usage as each GPU only stores the subset of embeddings corresponding to its partition. We further introduce a collaboration strategy to reduce the synchronization cost between CPUs and GPUs.

We evaluate GraphVite on 4 homogeneous graphs and 3 knowledge graphs of different scales. On a single machine with 4 Tesla P100 GPUs, our system only takes one minute to train embeddings on a homogeneous graph with 1 million nodes and 5 million edges. Compared to the current fastest system~\cite{tang2015line}, GraphVite is 51 times faster and does not sacrifice any performance. GraphVite can scale up to homogeneous graphs with 65 million nodes and 2 billion edges, or knowledge graphs with 5 million entities and 21 million triplets. We also investigate the speed of GraphVite under different hardware configurations. Even on economic GPUs like GeForce GTX 1080, GraphVite is able to achieve a speedup of 29 times compared to the current fastest system.
\section{Preliminary}
\label{sec:preliminary}

Here we review existing embedding methods on both knowledge graphs and homogeneous graphs from a system perspective. Without loss of generality, we consider a knowledge graph $\gG=(\gV, \gE, \gR)$, while homogeneous graphs can be viewed as knowledge graphs with the same relation for every edge.

The goal of embedding methods is to learn a low-dimensional representation for each entity and each relation in a knowledge graph, or just for each node in a homogeneous graphs. Towards this goal, existing embedding methods train embeddings to distinguish the edges in $\gE$ (i.e.\ positive samples) from some randomly corrupted edges (i.e.\ negative samples). In other words, edges are essentially utilized as training data.

Since real-world graphs may be sparse, some existing embedding methods~\cite{perozzi2014deepwalk, tang2015line, grover2016node2vec, guu2015traversing, garcia2015composing} conduct random walks on the original graph to introduce more connectivity. Specifically, they connect nodes within a specific distance on a random walk path as additional positive edges~\cite{perozzi2014deepwalk, tang2015line, grover2016node2vec}. For knowledge graphs, the additional edges may be further associated with an augmented relation according to the chain of relations \pagebreak on the path~\cite{guu2015traversing, garcia2015composing}. In the case of homogeneous graphs, the augmented relation is the same as other relations.

Once the graph is augmented, embeddings are trained with the samples from the augmented graph. Typically, there are three sets of embeddings, namely $\mathbf{head}$ embedding matrix, $\mathbf{relation}$ embedding matrix and $\mathbf{tail}$ embedding matrix. For an edge sample \edge{u, q, v}, a score function is computed based on $\mathbf{head}[u]$, $\mathbf{relation}[q]$ and $\mathbf{tail}[v]$ to predict the likelihood of the edge. In the case of homogeneous graphs, the relation is trivial and only $\mathbf{head}[u]$ and $\mathbf{tail}[v]$ are used in the score function. The embeddings are then optimized according to the label of the edge. In some cases, the embedding matrices $\mathbf{head}$ and $\mathbf{tail}$ may share their weights.

Overall, the computation procedures of these embedding methods can be divided into two stages: \textbf{edge augmentation} and \textbf{embedding training}. Algorithm~\ref{alg:embedding} summarizes the general framework of existing embedding methods. Note that the first stage can be easily parallelized, and the second stage can be parallelized via ASGD. In multi-CPU systems~\cite{perozzi2014deepwalk, tang2015line, grover2016node2vec}, these two stages are executed in a sequential order, with each stage parallelized by a bunch of CPU threads. In single-GPU systems~\cite{thunlp2017openne, han2018openke, luca2019ampligraph}, the first stage is executed by one or more CPUs, while the second stage is executed by the GPU.

\begin{algorithm}[!h]
    \captionsetup{font=footnotesize}\caption{General framework of embedding methods.}
    \begin{algorithmic}[1]
        \footnotesize
        \State{$\gE' \gets \gE$}
        \For{$u \in \gV$} \Comment{optional, parallelizable}
            \State{$path \gets \Call{RandomWalk}{u}$}
            \For{$v \in path$}
                \State{$q \gets \Call{AugmentRelation}{path, u, v}$} \Comment{trivial relation for homogeneous graphs}
                \State{$\gE' \gets \gE' \cup \{\langle u, q, v\rangle\}$}
            \EndFor
        \EndFor
        \For{each iteration} \Comment {parallelizable}
            \State{$u, q, v \gets \Call{PositiveSampling}{\gE'}$}
            \State{$\Call{Train}{\mathbf{head}[u], \mathbf{relation}[q], \mathbf{tail}[v], \text{label}=1}$} \Comment{ignore relation for homogeneous graphs}
            \For{$v' \in \Call{NegativeSampling}{\gV}$}
                \State{$\Call{Train}{\mathbf{head}[u], \mathbf{relation}[q], \mathbf{tail}[v'], \text{label}=0}$} \Comment{ignore relation for homogeneous graphs}
            \EndFor
        \EndFor
    \end{algorithmic}
    \label{alg:embedding}
\end{algorithm}
\vspace{-0.5em}
\section{Method}

GraphVite combines multiple CPUs and GPUs to scale up the training of embedding methods. Specifically, for the edge augmentation stage, we propose parallel online augmentation to efficiently generate augmented positive samples on multiple CPUs. For the embedding training stage, we propose parallel negative sampling to train the embeddings on multiple GPUs with minimal synchronization. A collaboration strategy is further proposed to reduce the synchronization cost. Figure~\ref{fig:system} shows the overview of our system.

\begin{figure}[t]
    \centering
    \includegraphics[width=0.7\textwidth]{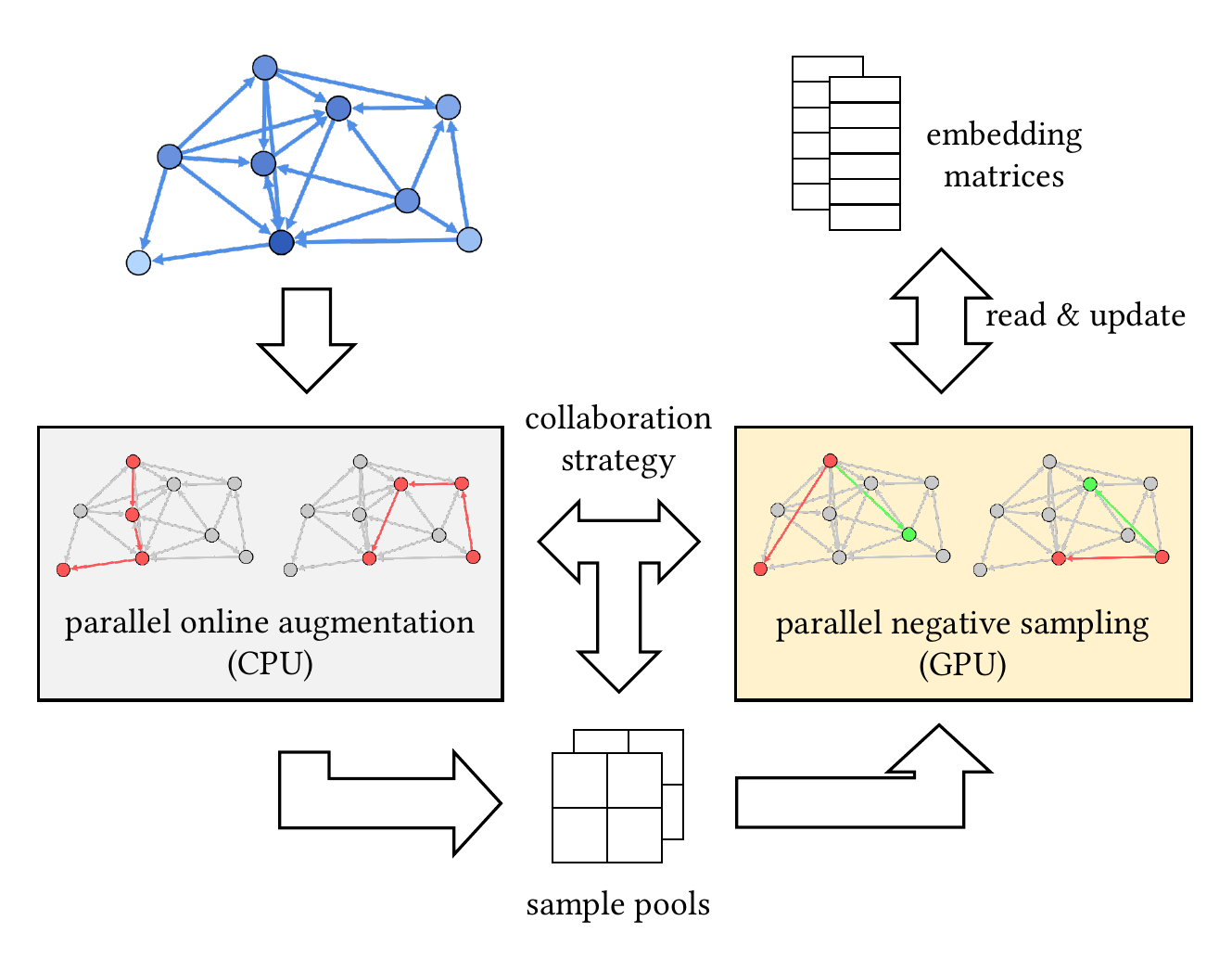}
    \caption[Overview of GraphVite]{Overview of GraphVite. The gray and yellow boxes correspond to the stages of edge augmentation and embedding training respectively. The former is performed by parallel online augmentation on CPUs, while the latter is performed by parallel negative sampling on GPUs. The two stages are executed asynchronously with our collaboration strategy.}
    \label{fig:system}
\end{figure}

\subsection{Parallel Online Augmentation}
\label{sec:parallel_augmentation}

In the edge augmentation stage, the augmented edge set $\gE'$ is usually one or two magnitude larger than the original edge set $\gE$, which makes it impossible to pre-compute the augmented edge set in the main memory for large-scale graphs. Therefore, we introduce a parallel online augmentation, which generates augmented edge samples on the fly without explicit edge augmentation. Our method can be viewed as an online extension of the augmentation and edge sampling method used in LINE~\cite{tang2015line}.

First, we draw a departure node with the probability proportional to the degree of each node. Then we perform a random walk from the departure node, and pick node pairs within a specific distance $s$ as edge samples. When $s$ is set to 1, the edge samples are equivalent to the samples drawn from the original graph, i.e.\ without augmentation. Note that edge samples generated in the same random walk are highly correlated and may degrade the performance of optimization. Inspired by the experience replay technique widely used in reinforcement learning~\cite{lin1993reinforcement, mnih2013playing}, we collect edge samples into a sample pool, and shuffle the sample pool before using it for embedding training.

The proposed edge sampling method can be parallelized when each CPU thread is allocated with an independent sample pool in advance. Algorithm~\ref{alg:parallel_online_augmentation} gives the process of parallel online augmentation in details.

\smallskip \noindent \textbf{Pseudo Shuffle.}
While shuffling the sample pool is important to optimization, it slows down the graph augmentation stage (see Table~\ref{tab:shuffle}). The reason is that a general shuffle consists of lots of random memory access and cannot be accelerated by the CPU cache. The loss in speed will be even worse if the server has more than one CPU socket. To mitigate this issue, we propose a pseudo shuffle technique that shuffles correlated samples in a much more cache-friendly way and improves the speed of the system significantly. Note that most correlation comes from edge samples that share the source node or the target node in the same random walk. As such correlation occurs in a group of $s$ samples for an augmentation distance $s$, we divide the sample pool into $s$ continuous blocks, and scatter correlated samples into different blocks. For each block, we always append samples sequentially at the end, which can benefit a lot from CPU cache. The $s$ blocks are concatenated to form the final sample pool.

\begin{algorithm}[!h]
    \captionsetup{font=footnotesize}\caption{Parallel Online Augmentation}
    \begin{algorithmic}[1]
        \footnotesize
        \Function{ParallelOnlineAugmentation}{$num\_CPU$}
            \For{$i \gets 0$ to $num\_CPU - 1$} \Comment{paralleled}
                \State{$pool[i] \gets \varnothing$}
                \While{$pool$ is not full}
                    \State{$x \gets \Call{DepartureSampling}{G}$}
                    \For{$u, v \in \Call{RandomWalkSampling}{x}$}
                        \If{$Distance(u, v) <= s$}
                            \State{$q \gets \Call{AugmentRelation}{path, u, v}$} \Comment{trivial relation for homogeneous graphs}
                            \State{$pool.append((u, q, v))$}
                        \EndIf
                    \EndFor
                \EndWhile
                \State{$pool[i] \gets \Call{Shuffle}{pool[i]}$}
            \EndFor
            \State{\Return${\Call{Concatenate}{pool[\cdot]}}$}
        \EndFunction
    \end{algorithmic}
    \label{alg:parallel_online_augmentation}
\end{algorithm}

\vspace{-0.5em}
\subsection{Parallel Negative Sampling}
\label{sec:parallel_training}

In the embedding training stage, we divide the training task into fragments and distribute them to multiple GPUs. The sub tasks are necessarily designed with little shared parameters to minimize the inter-GPU synchronization cost. \pagebreak To see how parameters can be distributed to multiple GPUs without overlap, we introduce a definition of \emph{$\epsilon$-gradient exchangeable}.
\begin{definition}
    \textbf{$\epsilon$-gradient exchangeable}. A loss function $\gL(X;\theta)$ is \emph{$\epsilon$-gradient exchangeable} on two sets of training data $X_1$, $X_2$ if for $\epsilon \geq 0$, $\forall \theta_0 \in \Theta$ and $\forall \alpha \in \mathbb{R}^+$, exchanging the order of two gradient descent steps results in a vector difference with norm no more than $\epsilon$.
    \begin{equation}
        \begin{cases}
            & \theta_1 \gets \theta_0 - \alpha \nabla \gL(X_1;\theta_0) \\
            & \theta_2 \gets \theta_1 - \alpha \nabla \gL(X_2;\theta_1)
        \end{cases}
        \label{eq:order1}
    \end{equation}
    \vspace{-0.5em}
    \begin{equation}
        \begin{cases}
            & \theta'_1 \gets \theta_0 - \alpha \nabla \gL(X_2;\theta_0) \\
            & \theta'_2 \gets \theta'_1 - \alpha \nabla \gL(X_1;\theta'_1)
        \end{cases}
        \label{eq:order2}
    \end{equation}
    i.e.\ $\lVert \theta_2 - \theta'_2 \rVert \leq \epsilon$ is true for the above equations.
\end{definition}
Particularly, we abbreviate \emph{0-gradient exchangeable} to \emph{gradient exchangeable}. Due to the sparse nature of embedding training, there are many sets that form \emph{gradient exchangeable} pairs in the graph. For example, for two edge sample sets $X_1, X_2 \subseteq E$, if they do not share any source nodes or target nodes, $X_1$ and $X_2$ are \emph{gradient exchangeable}. Even if $X_1$ and $X_2$ share some nodes, they can still be \emph{$\epsilon$-gradient exchangeable} if the learning rate $\alpha$ and the number of iterations are bounded.

Based on the gradient exchangeability observed in embeddings, we propose a parallel negative sampling algorithm for the embedding training stage. For $n$ GPUs, we partition rows of $\mathbf{head}$ and $\mathbf{tail}$ into $n$ partitions respectively (see the top-left corner of Figure~\ref{fig:parallel_negative_sampling}). We do not partition $\mathbf{relation}$ as it is usually very small and can easily fit into a GPU. This results in an $n \times n$ partition grid for the sample pool, where each edge belongs to one of the blocks. In this way, any pair of blocks that does not share row or column is \emph{gradient exchangeable}. Blocks in the same row or column are \emph{$\epsilon$-gradient exchangeable}, as long as we restrict the number of iterations on each block.

We define \emph{episode} as a block-level step used in parallel negative sampling. During each episode, we send $n$ orthogonal blocks and their corresponding $\mathbf{head}$ and $\mathbf{tail}$ partitions to $n$ GPUs respectively. Each GPU then updates its own embedding partitions with ASGD. Because these blocks are mutually \emph{gradient exchangeable} and do not share any row in the parameter matrices, multiple GPUs can perform ASGD concurrently without any synchronization. At the end of each episode, we gather the updated parameters from all GPUs and assign another $n$ orthogonal blocks. In the case of shared weights between $\mathbf{head}$ and $\mathbf{tail}$, we double the number of partitions to $2n$, and send $n$ blocks that do not share any index for rows and columns during each episode.

Here \emph{$\epsilon$-gradient exchangeable} is controlled by the number of total samples in $n$ orthogonal blocks, which we define as \emph{episode size}. The smaller episode size, the better \emph{$\epsilon$-gradient exchangeable} we will have for embedding training. However, smaller episode size will also induce more frequent synchronization. Hence the episode size is tuned so that there is a good trade off between the speed and \emph{$\epsilon$-gradient exchangeable} (see experiments in Section~\ref{sec:ablation_graphvite}). Figure~\ref{fig:parallel_negative_sampling} gives an example of parallel negative sampling with 4 partitions.

Typically, embedding methods generate negative edges with a tail node sampled from all possible nodes. However, it could be very time-consuming if GPUs have to communicate with each other to get the embeddings of their negative samples. To avoid this cost, we restrict that the tail node can only be drawn from the $\mathbf{tail}$ rows on the current GPU. Though this seems a little problematic, we find it works well in practice. An intuitive explanation is that with parallel online augmentation, every node is likely to have positive samples with nodes from all context partitions. As a result, every node can potentially form negative samples with all possible nodes.

Note that although we demonstrate with the number of partitions equal to $n$, the parallel negative sampling can be easily generalized to cases with any number of partitions greater than $n$, simply by processing the orthogonal blocks in subgroups of $n$ during each episode. Algorithm~\ref{alg:parallel_negative_sampling} illustrates the hybrid system for multiple GPUs.

\begin{figure}[t]
    \includegraphics[width=0.95\textwidth]{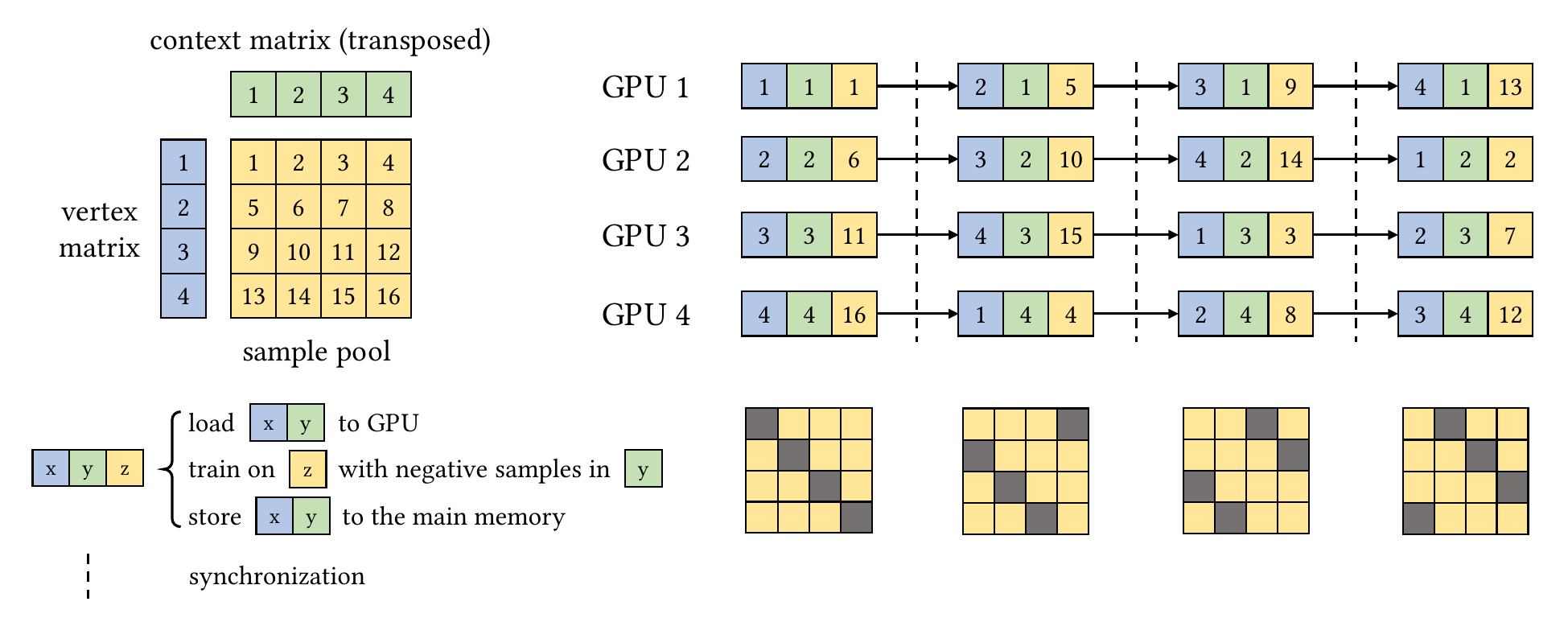}
    \vspace{-1em}
    \caption[Illustration of parallel negative sampling]{Illustration of parallel negative sampling on 4 GPUs. During each episode, GPUs take orthogonal blocks from the sample pool. Each GPU trains embeddings with negative samples drawn from its own tail partition. Each GPU updates its own copy of relation embeddings independently. The entity and relation embeddings are only synchronized between every two episodes.}
    \label{fig:parallel_negative_sampling}
    \vspace{-1em}
\end{figure}

\begin{algorithm}[!h]
    \captionsetup{font=footnotesize}\caption{Parallel Negative Sampling}
    \begin{algorithmic}[1]        
        \footnotesize
        \Function{ParallelNegativeSampling}{$num\_GPU$}        
            \State{$head\_partitions \gets \Call{Partition}{\mathbf{head}}$}
            \State{$tail\_partitions \gets \Call{Partition}{\mathbf{tail}}$}
            \While{not converge}
                \State{$pool \gets \Call{ParallelOnlineAugmentation}{num\_CPU}$}
                \State{$block[\cdot][\cdot] \gets \Call{Redistribute}{pool}$}
                \For{$\mathit{offset} \gets 0$ to $num\_GPU - 1$}
                    \For{$i \gets 0$ to $num\_GPU - 1$} \Comment{paralleled}
                        \State{$hid \gets i$}
                        \State{$tid \gets (i + \mathit{offset}) \mod num\_GPU$}
                        \State{send $head\_partitions[hid]$ to GPU $i$}
                        \State{send $tail\_partitions[tid]$ to GPU $i$}
                        \State{$relation\_copy[i] \gets relation$}
                        \State{send $relation\_copy[i]$ to GPU $i$} \Comment{ignore for homogeneous graphs}
                        \State{train $block[hid][tid]$ on GPU $i$}
                        \State{receive $head\_partitions[hid]$ from GPU $i$}
                        \State{receive $tail\_partitions[tid]$ from GPU $i$}
                        \State{receive $relation\_copy[i]$ from GPU $i$} \Comment{ignore for homogeneous graphs}
                        \State{$relation\_update[i] \gets relation\_copy[i] - relation$}
                        \State{$relation \gets relation + relation\_update[i]$}
                    \EndFor
                \EndFor
            \EndWhile
        \EndFunction
    \end{algorithmic}
    \label{alg:parallel_negative_sampling}
\end{algorithm}

\subsection{Collaboration Strategy}
\label{ref:collaboration_strategy}

Our parallel negative sampling enables different GPUs to train embeddings concurrently, with GPU synchronization only required between episodes. However, it should be noticed that the sample pool is also shared between CPUs and GPUs. If they synchronize on the sample pool, then GPUs are idle when CPUs produce the samples, and CPUs are idle when GPUs consume the samples. To maximize the usage of hardware, we propose a collaboration strategy to hide the synchronization cost. We allocate two sample pools in the main memory, and let CPUs and GPUs always work on different pools. CPUs first fill up a sample pool and pass it to GPUs. After that, parallel online augmentation and parallel negative sampling are performed concurrently on CPUs and GPUs respectively. The two pools are swapped when CPUs fill up a new pool. Figure~\ref{fig:system} illustrates this procedure. With the collaboration strategy, the throughput of our hybrid system is nearly doubled.

\subsection{Discussion}
Here we further discuss some practical details of our hybrid system.

\smallskip \noindent \textbf{Batched Transfer.} In parallel negative sampling, the sample pool is assigned to GPUs by block, which is sometimes very large for the memory of a GPU. Instead of copying the whole sample block to a GPU, we transfer the sample block by a small granularity. In this way, the memory cost of edge samples on GPUs becomes negligible.

\smallskip \noindent \textbf{CPU-GPU co Usage Optimization.} When the number of partitions equals the number of GPUs, we can further optimize the bus usage by fixing the tail partition for each GPU. In this way, we save the synchronization of $\mathbf{tail}$ matrix and reduce nearly half of the bus usage.

\smallskip \noindent \textbf{Single GPU Case.} Although parallel negative sampling is proposed for multiple GPUs, our hybrid system is compatible with a single GPU. Typically a GPU can hold at most 12 million embeddings. So a single GPU is sufficient for training embeddings on graphs that contain no more than 12 million nodes.
\section{Experiments}
\label{sec:experiment}

\subsection{Experiment Setup}

We compare GraphVite with existing systems on both homogeneous graph embeddings, and knowledge graph embeddings. For homogeneous graph embeddings, we evaluate different systems on Youtube~\cite{mislove2007measurement} dataset. We also report the time and performance of GraphVite on three larger homogeneous graphs, of which the scale is beyond existing embedding systems. For knowledge graph embeddings, we evaluate different systems on FB15k-237~\cite{toutanova2015observed} and WN18RR~\cite{dettmers2018convolutional}. We additionally report the time and performance of GraphVite on a larger knowledge graph, Wikidata5m~\cite{wang2021kepler}.

\smallskip \noindent \textbf{Implementation Details.}
Our implementation generally follows the open source codes for homogeneous graph embeddings~\footnote{\url{https://github.com/tangjianpku/LINE}}\footnote{\url{https://github.com/phanein/deepwalk}} and knowledge graph embeddings~\footnote{\url{https://github.com/DeepGraphLearning/KnowledgeGraphEmbedding}}. We adopt the asynchronous SGD~\cite{recht2011hogwild} in GPU training, and leverage the on-chip shared memory of GPU for fast forward and backward computation. We also utilize the alias table trick~\cite{tang2015line, grover2016node2vec} to boost parallel online augmentation and parallel negative sampling.

\begin{wrapfigure}{r}{0.48\textwidth}
    \vspace{-1.4em}
    \centering
    \includegraphics[width=0.48\textwidth]{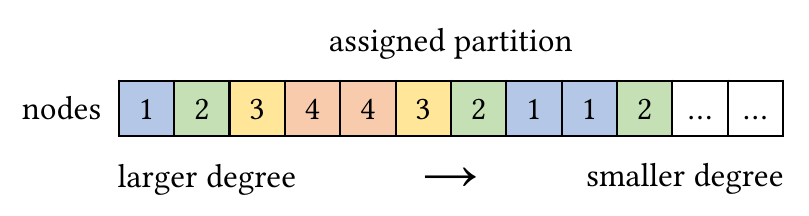}
    \caption[Degree-guided partition strategy]{Degree-guided partition strategy. Nodes are first sorted by their degrees, and then partitioned in a zig-zag fashion.}
    \label{fig:zig-zag_partition}
\end{wrapfigure}

We largely follow existing embedding methods~\cite{bordes2013translating, sun2019rotate, tang2015line, perozzi2014deepwalk} to set the hyperparameters for their multi-GPU implementation in GraphVite, except we tune the learning rate, the batch size, the episode size and other hyperparameters that are unique to GraphVite. For knowledge graph embeddings, we follow previous works~\cite{bordes2013translating, sun2019rotate} and directly sample positive edges without augmentation. Weights of $\mathbf{head}$ and $\mathbf{tail}$ are tied in knowledge graph embeddings. For homogeneous graph embeddings, we use edge augmentation with random walks of 40 edges. To balance the workload in different blocks, we partition $\mathbf{head}$ and $\mathbf{tail}$ matrices according to their degrees. Specifically, we first sort nodes by their degrees and then assign them into different partitions in a zig-zag fashion, as illustrated in Figure~\ref{fig:zig-zag_partition}. During the embedding training stage, negative samples are generated uniformly for knowledge graph embeddings~\cite{bordes2013translating, sun2019rotate} and with a probability proportional to the $3/4$ power of node degrees for homogeneous graph embeddings~\cite{tang2015line, perozzi2014deepwalk, grover2016node2vec}. We adopt the standard \textit{O3} compliation optimization in \emph{g++} and \emph{nvcc}.

\smallskip \noindent \textbf{Evaluation.}
For homogeneous graphs, we report the wall time of preprocessing and training for all systems. The performance of learned embeddings are evaluated on the multi-label node classification task~\cite{perozzi2014deepwalk, tang2015line}, where embeddings are frozen and fed into SVMs~\cite{cortes1995support} to predict labels. We report the micro-F1 as well as the macro-F1 scores for multi-label node classification. For datasets without node labels, we evaluate learned embeddings on the link prediction task~\cite{grover2016node2vec} and report AUROC scores. For knowledge graphs, since they do not require edge augmentation, we report the wall time of training for all systems. We evaluate the performance of systems on knowledge graph completion, and additionally report the evaluation time as it is commonly required to deploy large-scale knowledge graph completion in real-world scenarios. We follow the filtered ranking protocol~\cite{bordes2013translating} to evaluate knowledge graph embeddings. For every test triplet \edge{u, q, v}, we rank it against all negative triplets \edge{u, q, v'} or \edge{u', q, v} that do not appear in the knowledge graph. Mean rank (MR), mean reciprocal rank (MRR) and HITS at N (H@N) are reported for knowledge graph completion.

\smallskip \noindent \textbf{Baselines.}
We compare GraphVite against existing systems or implementations on embedding methods. For homogeneous graph embeddings, these include the official multi-CPU implementations of DeepWalk~\cite{perozzi2014deepwalk}, LINE~\cite{tang2015line} and node2vec~\cite{grover2016node2vec}, as well as a single-GPU implementation of LINE in OpenNE~\cite{thunlp2017openne}\footnote{OpenNE has been upgraded a lot through the years. Experiments are carried out on the OpenNE in 2018.}. For knowledge graph embeddings, the baselines include the single-GPU implementation of TransE and RotatE in the open source codebase for knowledge graph embeddings~\cite{sun2019rotate}.

\subsection{Results on Homogeneous Graphs}

\smallskip \noindent \textbf{Standard Datasets.}
Table~\ref{tab:time_youtube} presents the time of different systems. Among all existing systems, LINE~\cite{tang2015line} takes the minimal total time to run. However, the GPU implementation of LINE in OpenNE is even worse than its CPU counterpart, possibly due to the mini-batch SGD paradigm it uses. Compared to the current fastest system, LINE, GraphVite is much more efficient. With 4 GPUs, our system completes training emebddings on a million-scale graph in only one and a half minutes. Even on a single GPU, GraphVite takes no more than 4 minutes and is still 19 times faster than LINE.

One may be curious about the performance of embeddings learned by GraphVite. Table~\ref{tab:performance_youtube} summarizes the performance over different percentages of training data. It is observed that GraphVite achieves the best or competitive results in most settings, showing that GraphVite does not sacrifice any performance. In some small percentage cases, GraphVite falls a little behind DeepWalk. This is because GraphVite uses negative sampling for optimization, while DeepWalk uses both hierarchical softmax~\cite{mikolov2013efficient} and negative sampling~\cite{mikolov2013distributed}, which could be more robust to few labeled data.

\begin{table}[t]
    \centering
    \caption[Results of time of different systems on Youtube]{Results of time of different systems on Youtube. The preprocessing time refers to all the overhead before training, including graph input and offline edge augmentation. The preprocessing time of OpenNE is not comparable since it does not have the edge augmentation stage. The speedup ratio of GraphVite is computed w.r.t.\ LINE, which is the current fastest system.}
    \footnotesize
    \begin{tabular}{lcccc}
         \toprule
         \bf{System} & \bf{\#CPU Thread} & \bf{\#GPU} & \bf{Training Time} & \bf{Preprocessing Time} \\
         \midrule
         LINE~\cite{tang2015line}               & 20            & -         & 1.24 hrs                          & 17.4 mins                         \\
         DeepWalk~\cite{perozzi2014deepwalk}    & 20            & -         & 1.56 hrs                          & 14.2 mins                         \\
         node2vec~\cite{grover2016node2vec}     & 20            & -         & 47.7 mins                         & 25.9 hrs                          \\
         LINE in OpenNE~\cite{thunlp2017openne} & 1             & 1         & > 1 day                           & 2.14 mins                         \\
         GraphVite                              & 6             & 1         & \bf{3.98 mins}($18.7\times$)      & \bf{7.37 s}                       \\
         GraphVite                              & 24            & 4         & \bf{1.46 mins}($50.9\times$)      & \bf{16.0 s}                       \\
         \bottomrule
    \end{tabular}
    \label{tab:time_youtube}
\end{table}

\begin{table}[t]
    \centering
    \caption[Results of node classification on Youtube]{Results of node classification on Youtube.}
    \begin{adjustbox}{max width=\textwidth}
    \begin{tabular}{llcccccccccc}
        \toprule
                                        & \bf{\% Labeled Nodes} & \bf{1\%} & \bf{2\%} & \bf{3\%} & \bf{4\%} & \bf{5\%}
                                        & \bf{6\%} & \bf{7\%} & \bf{8\%} & \bf{9\%} & \bf{10\%} \\
        \midrule
        \multirow{4}{*}{\bf{Micro-F1(\%)}}   & LINE\cite{tang2015line}               & 32.98         & 36.70         & 38.93         & 40.26         & 41.08
                                                                                & 41.79         & 42.28         & 42.70         & 43.04         & 43.34 \\
                                        & LINE\cite{tang2015line}+augmentation  & 36.78         & 40.37         & 42.10         & 43.25         & 43.90
                                                                                & 44.44         & 44.83         & 45.18         & 45.50         & 45.67 \\
                                        & DeepWalk\cite{tang2015line}           & \bf{39.68}  & 41.78         & 42.78         & 43.55         & 43.96
                                                                                & 44.31         & 44.61         & 44.89         & 45.06         & 45.23 \\
                                        & GraphVite                             & 39.19         & \bf{41.89}  & \bf{43.06}  & \bf{43.96}  & \bf{44.53}
                                                                                & \bf{44.93}  & \bf{45.26}  & \bf{45.54}  & \bf{45.70}  & \bf{45.86}  \\ 
        \midrule
        \multirow{4}{*}{\bf{Macro-F1(\%)}}   & LINE\cite{tang2015line}               & 17.06         & 21.73         & 25.28         & 27.36         & 28.50
                                                                                & 29.59         & 30.43         & 31.14         & 31.81         & 32.32 \\
                                        & LINE\cite{tang2015line}+augmentation  & 22.18         & 27.25         & 29.87         & 31.88         & 32.86
                                                                                & 33.73         & 34.50         & 35.15         & 35.76         & 36.19 \\
                                        & DeepWalk\cite{tang2015line}           & \bf{28.39}  & \bf{30.96}  & \bf{32.28}  & \bf{33.43}  & \bf{33.92}
                                                                                & 34.32         & 34.83         & 35.27         & 35.54         & 35.86         \\
                                        & GraphVite                             & 25.61         & 29.46         & 31.32         & 32.70         & 33.81
                                                                                & \bf{34.59}  & \bf{35.27}  & \bf{35.82}  & \bf{36.14}  & \bf{36.49} \\ 
        \bottomrule
    \end{tabular}
    \end{adjustbox}
    \label{tab:performance_youtube}
\end{table}

\smallskip \noindent \textbf{Larger Datasets.}
To demonstrate the scalability of GraphVite, we further test GraphVite on three larger graphs. We learn the embeddings of Friendster-small with 1 GPU and 4 GPUs. For Hyperlink-PLD and Friendster, since their embedding matrices cannot fit into the memory of a single GPU, we only evaluate them with 4 GPUs. Table~\ref{tab:time_all} gives the training time of GraphVite on these datasets. The training time of baseline systems is not reported here, as all existing systems cannot solve such large graphs in a week, except LINE~\cite{tang2015line} on Friendster-small. Compared to them, GraphVite takes less than 1 day to train embeddings on the largest dataset Friendster with 1.8 billion edges, showing that GraphVite can be an efficient tool for embedding billion-scale graphs.

We also evaluate the performance of the learned embeddings on these datasets. Figure~\ref{fig:curves} presents the performance of GraphVite over different training epochs on these datasets. On Friendster-small, we also plot the performance of LINE for reference. Due to the long training time, we only report the performance of LINE by the end of all training epochs. It is observed that GraphVite converges on all these datasets. On the Friendster-small dataset, GraphVite significantly outperforms LINE. On the Hyperlink-PLD, we get an AUROC of 0.943. On Friendster, the Micro-F1 reaches about 81.0\%. All the above observations verify the performance of our system.

\begin{table}[t]
    \centering
    \caption[Results of time on larger datasets]{Results of time on larger datasets. The embedding matrices of Hyperlink-PLD and Friendster cannot fit into the memory of a single GPU.}
    \vspace{-0.2em}
    \footnotesize
    \begin{tabular}{lccc}
        \toprule
                    & \bf{Friendster-small} & \bf{Hyperlink-PLD} & \bf{Friendster}  \\
        \midrule
        1 GPU       & 8.78 hrs                      & -                         & -                     \\
        4 GPU       & 2.79 hrs                      & 5.36 hrs                  & 20.3 hrs              \\
        \bottomrule
    \end{tabular}
    \label{tab:time_all}
\end{table}

\begin{figure*}[t]
    \centering
    \begin{subfigure}{0.3\textwidth}
        \includegraphics[width=\textwidth]{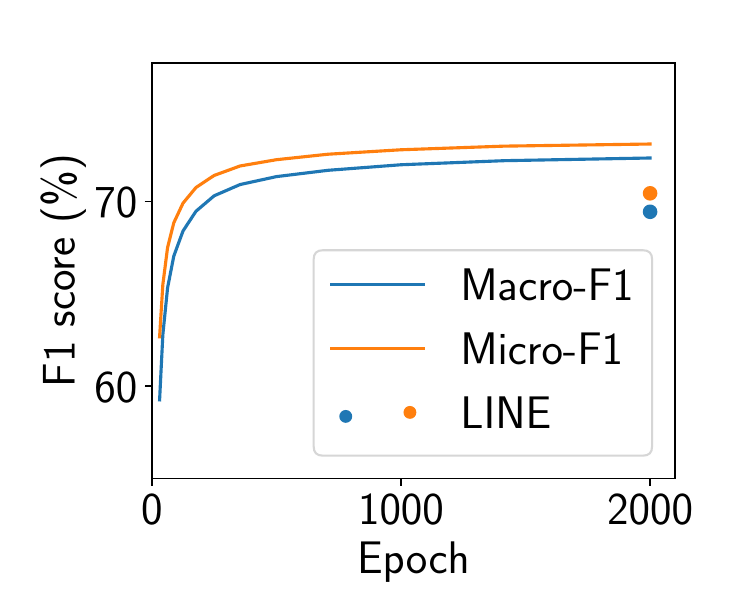}
        \caption{Friendster-small}
    \end{subfigure}
    \hfill
    \begin{subfigure}{0.3\textwidth}
        \includegraphics[width=\textwidth]{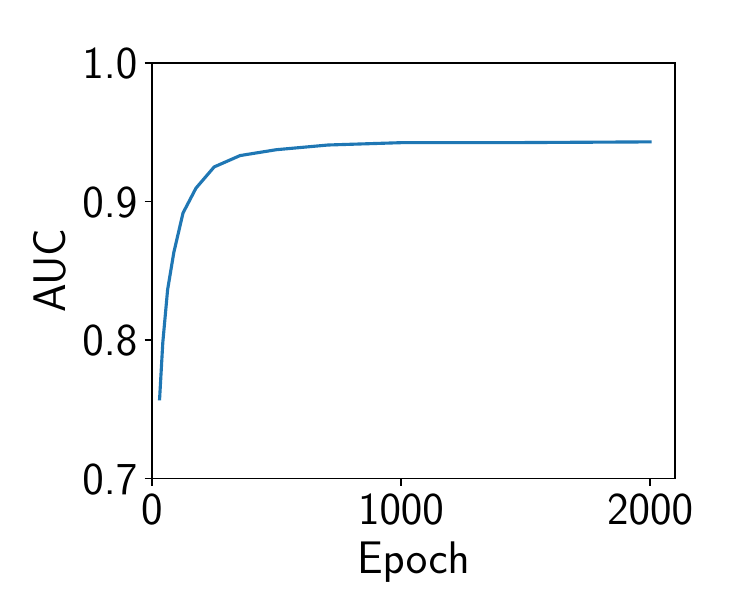}
        \caption{Hyperlink-PLD}
    \end{subfigure}
    \hfill
    \begin{subfigure}{0.3\textwidth}
        \includegraphics[width=\textwidth]{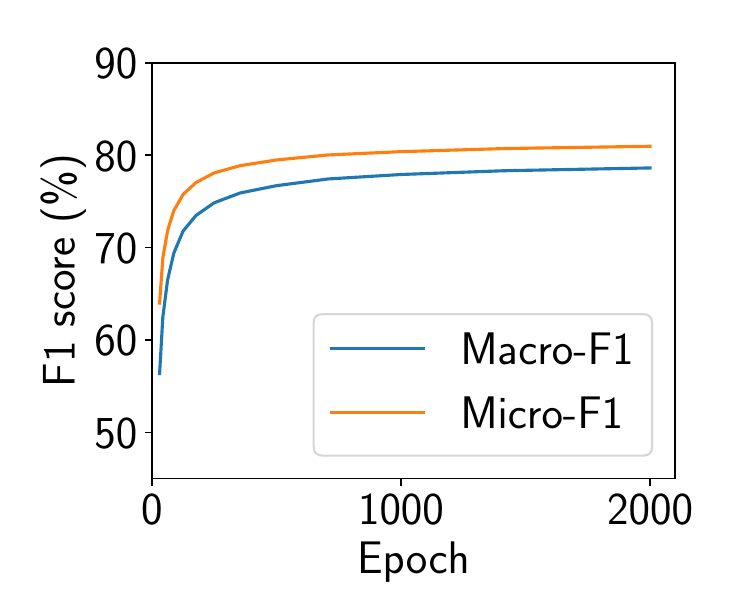}
        \caption{Friendster}
    \end{subfigure}
    \vspace{-0.2em}
    \caption[Performance curves of GraphVite on larger datasets]{Performance curves of GraphVite on larger datasets. For Friendster, we plot the results of LINE for reference. The other systems cannot solve any of these datasets within a week.}
    \label{fig:curves}
\end{figure*}

\subsection{Ablation Studies}
\label{sec:ablation_graphvite}

\smallskip \noindent \textbf{Contribution of Main Components.}
In GraphVite, parallel online augmentation, parallel negative sampling, and the collaboration strategy are the main components in the system. Here we study how these components contribute to the performance of our system. We compare GraphVite with a strong baseline system with single GPU. Specifically, the baseline has the same GPU implementation as GraphVite, while it uses the standard parallel edge sampling instead of parallel online augmentation, and executes two stages sequentially.

Table~\ref{tab:main_components} shows the result of the ablation study. Parallel online augmentation helps improve the quality of learned embeddings, since it introduces more connectivity to the sparse graph. Besides, parallel online augmentation also accelerates the system a little, as it reuses nodes and reduces the amortized cost of each sample. With parallel negative sampling, we can employ multiple GPUs for training, and the speed is boosted by about 3 times. Moreover, the collaboration strategy even improves the speed and does not impact the performance.

\begin{table*}
    \centering
    \caption[Ablation of main components in GraphVite]{Ablation of main components in GraphVite. Note that the baseline has the same GPU implementation with GraphVite and parallel edge sampling on CPU. The baseline should be regarded as a very strong one.}
    \begin{adjustbox}{max width=\textwidth}
    \begin{tabular}{lcccccc}
        \toprule
                            & \bf{Parallel Online}  & \bf{Parallel Negative} & \bf{Collaboration} & \multirow{2}{*}{\bf{Micro-F1}} & \multirow{2}{*}{\bf{Macro-F1}} & \multirow{2}{*}{\bf{Training Time}}  \\
                            & \bf{Augmentation}     & \bf{Sampling (4 GPUs)} & \bf{Strategy} \\
        \midrule
        Single-GPU Baseline &                       &                   &               & 35.26     & 20.38     & 8.61 mins         \\
                            & \checkmark            &                   &               & 41.48     & 29.80     & 6.35 mins         \\
                            &                       & \checkmark        &               & 34.38     & 19.81     & 2.66 mins         \\
                            & \checkmark            & \checkmark        &               & 41.75     & 29.30     & 2.24 mins         \\
        \midrule
        GraphVite           & \checkmark            & \checkmark        & \checkmark    & 41.89     & 29.46     & \bf{1.46 mins}  \\
        \bottomrule
    \end{tabular}
    \end{adjustbox}
    \label{tab:main_components}
\end{table*}

\smallskip \noindent \textbf{Pseudo Shuffle.}
In parallel online augmentation, GraphVite performs pseudo shuffle to decorrelate the augmented edge samples, while some existing systems~\cite{perozzi2014deepwalk, grover2016node2vec} do not shuffle their samples. We compare the proposed pseudo shuffle with three baselines, including no shuffle, a full random shuffle and an index mapping algorithm. The index mapping algorithm preprocesses a random mapping on the indexes of samples and saves the time of computing random variables.

Table~\ref{tab:shuffle} gives the results of different shuffle algorithms on a single GPU. It is observed that all shuffle algorithms are about 1 percent better than the no shuffle baseline. However, different shuffle algorithms vary largely in their speed. Compared to the no shuffle baseline, the random shuffle and index mapping algorithms slow down the system by several times, while our pseudo shuffle has only a little overhead. Therefore, we conclude that pseudo shuffle is the best practice considering both speed and performance.

\begin{table}[t]
    \centering
    \caption[Results of performance and speed by different shuffle algorithms]{Results of performance and speed by different shuffle algorithms. The proposed pseudo shuffle algorithm achieves the best trade off between performance and speed.}
    \footnotesize
    \begin{tabular}{lcc}
        \toprule
        \bf{Shuffle Algorithm}   & \bf{Micro-F1(\%)}  & \bf{Training Time}     \\
        \midrule
        None                & 40.41         & 3.60 mins         \\
        Random shuffle      & 41.61         & 17.1 mins         \\
        Index mapping       & 41.21         & 12.1 mins         \\
        \midrule
        Pseudo shuffle      & \bf{41.52}  & \bf{3.98 mins}  \\
        \bottomrule
    \end{tabular}
    \label{tab:shuffle}
\end{table}

\smallskip \noindent \textbf{Choice of Episode Size.}
In parallel negative sampling, GraphVite relies on the property of gradient exchangeability to ensure its approximation to standard SGD. While the smaller episode size provides better exchangebility, it will increase the frequency of synchronization over rows of the embedding matrices, and thus slows down embedding training. To quantify such influence in speed and performance, we examine our system on 4 GPUs with different episode sizes. 

\begin{figure}[t]
    \begin{minipage}[b]{0.55\textwidth}
        \centering
        \hspace{-0.5em}
        \begin{subfigure}{0.49\textwidth}
            \includegraphics[width=\textwidth]{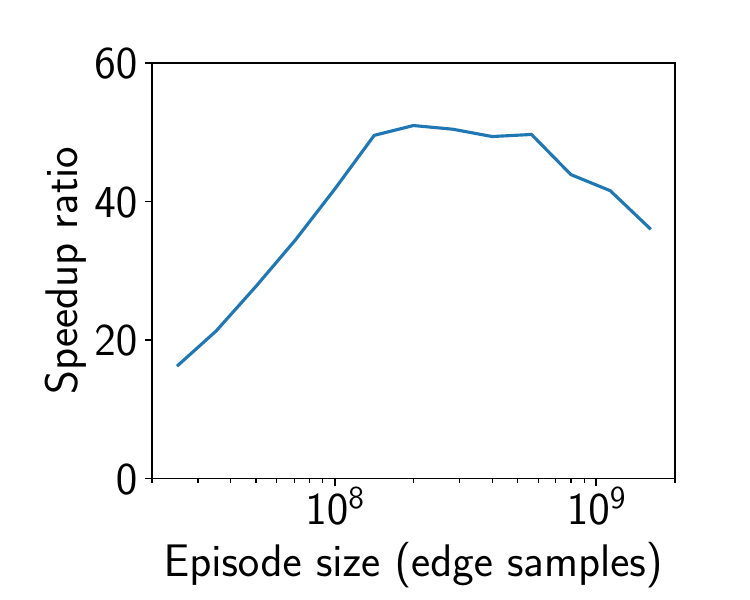}
        \end{subfigure}
        \hfill
        \begin{subfigure}{0.49\textwidth}
            \includegraphics[width=\textwidth]{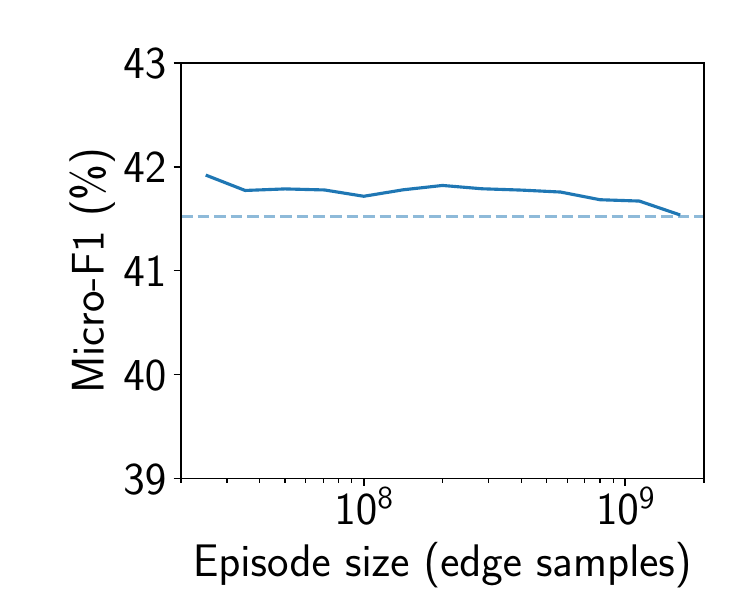}
        \end{subfigure}
        \caption[Speed and performance of GraphVite with respect to different episode sizes]{Speed and performance of GraphVite with respect to different episode sizes. The dashed line represents the single GPU baseline without parallel negative sampling.}
        \label{fig:episode_size}
    \end{minipage}
    \hfill
    \begin{minipage}[b]{0.42\textwidth}
        \centering
        \includegraphics[width=0.95\textwidth,trim={0 {.1\textwidth} 0 {.1\textwidth}},clip]{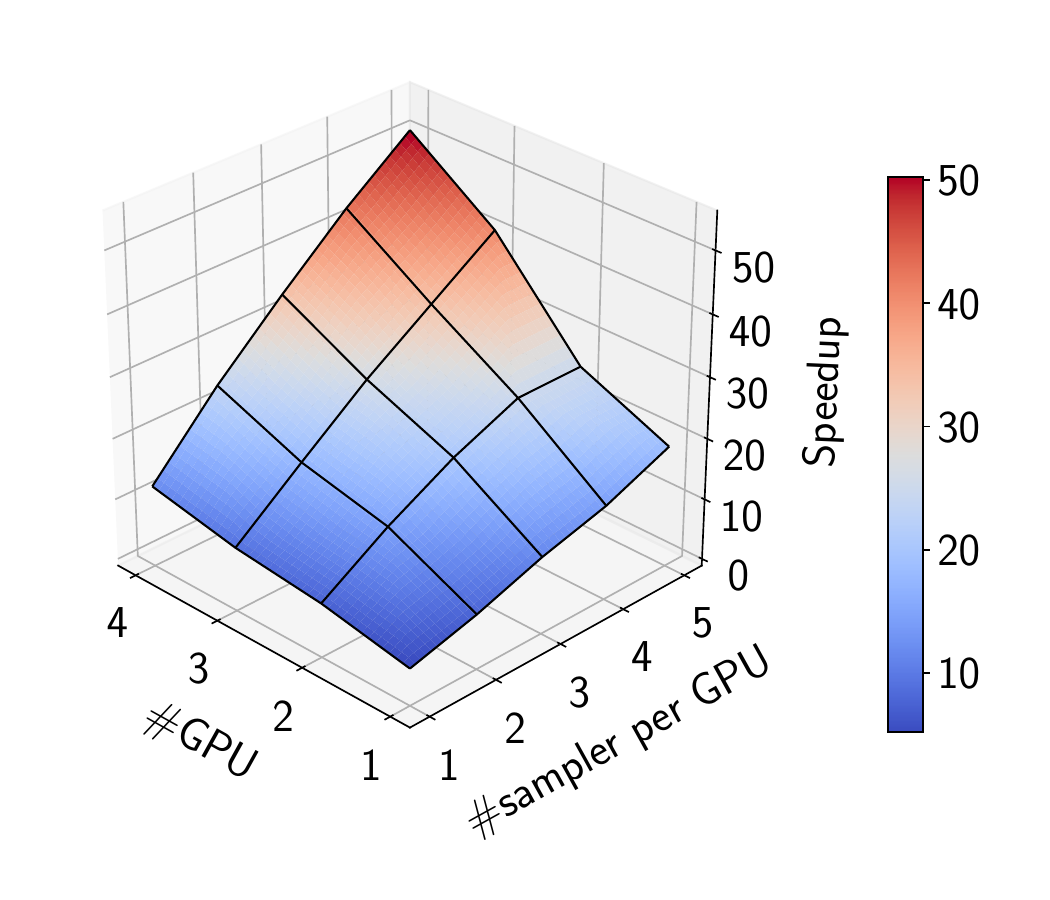}
        \caption[Results of speedup under different number of hardware]{Results of speedup under different number of hardware. It is observed that the speedup is almost linear to the number of CPUs and GPUs.}
        \label{fig:scalability}
    \end{minipage}
\end{figure}

Figure~\ref{fig:episode_size} plots the curves of speed and performance with respect to different episode sizes. On the performance side, we notice that the performance of GraphVite is insensitive to the choice of the episode size. Compared to the single GPU baseline, parallel negative sampling achieves competitive or slightly better results, probably due to the regularization effect introduced by partition. On the speed side, larger episode size achieves more speedup since it reduces the amortized burden of the bus. The speed drops at very large episode size, as there becomes only a few episodes in training. Therefore, we choose an episode size of $2 \times 10^8$ edge samples for Youtube. Generally, the best episode size is proportional to $|V|$, so one can set the episode size for other graphs accordingly.

\smallskip \noindent \textbf{Speedup w.r.t.\ the Number of CPUs and GPUs.}
In GraphVite, both online augmentation and negative sampling can be parallelized on multiple CPUs or GPUs, \pagebreak and synchronization is only required between episodes. Therefore, our system should have great scalability. To verify that point, we investigate our system with different 
number of CPU and GPU. We change the number of GPU from 1 to 4, and vary the number of sampler per GPU from 1 to 5. The effective number of CPU threads is $\#GPU \times (\#sampler~per~GPU + 1)$ as there is one scheduler thread for each GPU.

Figure~\ref{fig:scalability} plots the speedup ratio with respect to different number of CPUs and GPUs. The speedup ratio almost forms a plane over both variables, showing that our system scales almost linearly to the hardware. Quantitatively, GraphVite achieves a relative speedup of 11$\times$ when the hardware is scaled to 20$\times$. The speedup is about half of its theoretical maximum. We believe this is mainly due to the increased synchronization cost, as well as increased load on shared main memory and bus when we use more CPUs and GPUs.

\smallskip \noindent \textbf{Speed under Economic Hardware Configurations.}
Up to now, all experiments are conducted on a server with Xeon E5 CPUs and Tesla P100 GPUs. One might wonder whether such a high performance depends on the specific hardware configuration. Therefore, we further test our system on an economic server with Core i7 CPUs and GTX 1080 GPUs.

Table~\ref{tab:hardware} compares the results from two configurations. Different hardware configurations do have difference in speed, but the gap is marginal. The time only increases to 1.6$\times$ when we move to the economic server. Note that this two configurations are almost the best and the worst in current machine learning servers, so one could expect a running time between these two configurations on their own hardware.

\begin{table}[t]
    \centering
    \caption[Training time of GraphVite under different hardware configurations]{Training time of GraphVite under different hardware configurations. Generally GraphVite may take a time between these two configurations on most hardware.}
    \footnotesize
    \begin{tabular}{lcccc}
        \toprule
        \bf{Hardware}                      & \bf{CPU threads} & \bf{GPU} & \bf{Training time} \\
        \midrule
        \multirow{2}{*}{Tesla P100 server} & 6                & 1        & 3.98 mins          \\
                                           & 24               & 4        & 1.46 mins          \\
        \midrule
        \multirow{2}{*}{GTX 1080 server}   & 3                & 1        & 6.28 mins          \\
                                           & 12               & 4        & 2.48 mins          \\
        \bottomrule
    \end{tabular}
    \label{tab:hardware}
\end{table}

\subsection{Results on Knowledge Graphs}

\smallskip \noindent \textbf{Standard Datasets.}
Table~\ref{tab:time_kg} summarizes the time of GraphVite and an open source PyTorch implementation on knowledge graph embeddings. For both TransE and RotatE embedding methods, GraphVite outperforms existing implementation on both training time and evaluation time under the same single GPU setting. With 4 GPUs, GraphVite additionally accelerates training by nearly 4 times, reaching $6\times$ speedup for TransE and $7.5\times$ speedup for RotatE. The speedup in multi-GPU evaluation is not so significant, since single-GPU evaluation is already very fast on such datasets, and the bottleneck of GraphVite lies in the preprocessing for evaluation.

The performance of different systems is demonstrated in Table~\ref{tab:performance_kg}. GraphVite is slightly behind the official implementation of RotatE on FB15k-237, but achieves better results on WN18RR. We believe such discrepancy comes from the parallel negative sampling used in GraphVite. Nevertheless, GraphVite achieves competitive performance compared to other methods on knowledge graph completion.

\begin{table}[t]
    \centering
    \caption[Result of time on FB15k-237 and WN18RR]{Result of time on FB15k-237 and WN18RR.}
    \begin{adjustbox}{max width=\textwidth}
    \begin{tabular}{llcccccc}
        \toprule
        \multirow{2}{*}{\bf{Method}} & \multirow{2}{*}{\bf{System}} & \multirow{2}{*}{\bf{\#CPU Thread}} & \multirow{2}{*}{\bf{\#GPU}} & \multicolumn{2}{c}{\bf{FB15k-237}} & \multicolumn{2}{c}{\bf{WN18RR}} \\
        & & & & \bf{Training} & \bf{Evaluation} & \bf{Training} & \bf{Evaluation} \\
        \midrule
        \multirow{3}{*}{TransE~\cite{bordes2013translating}}
        & RotatE~\cite{sun2019rotate} & 6 & 1 & 44.6 mins & 1.0 min & 38.6 mins & 24.7 s \\
        & GraphVite & 6 & 1 & \bf{33.8 mins}($1.32\times$) & \bf{25.7s} & \bf{29.1 mins}($1.33\times$) & \bf{9.95 s} \\
        & GraphVite & 24 & 4 & \bf{6.98 mins}($6.39\times$) & \bf{19.5 s} & \bf{6.65 mins}($5.80\times$) & \bf{8.50 s} \\
        \midrule
        \multirow{3}{*}{RotatE~\cite{sun2019rotate}}
        & RotatE~\cite{sun2019rotate} & 6 & 1 & 2.10 hrs & 1.30 mins & 1.85 hrs & 23.0 s \\
        & GraphVite & 6 & 1 & \bf{1.04 hrs}($2.02\times$) & \bf{28.8 s} & \bf{55.4 mins}($2.0\times$) & \bf{10.1 s} \\
        & GraphVite & 24 & 4 & \bf{16.4 mins}($7.68\times$) & \bf{20.8 s} & \bf{14.8 mins}($7.5\times$) & \bf{8.90 s} \\
        \bottomrule
    \end{tabular}
    \end{adjustbox}
    \label{tab:time_kg}
\end{table}

\begin{table}[t]
    \centering
    \caption[Results of knowledge graph completion on FB15k-237 and WN18RR]{Results of knowledge graph completion on FB15k-237 and WN18RR.}
    \begin{adjustbox}{max width=\textwidth}
    \begin{tabular}{lcccccccccc}
        \toprule
        \multirow{2}{*}{\bf{Method}} & \multicolumn{5}{c}{\bf{FB15k-237}} & \multicolumn{5}{c}{\bf{WN18RR}} \\
        & \bf{MR} & \bf{MRR} & \bf{HITS@1} & \bf{HITS@3} & \bf{HITS@10} & \bf{MR} & \bf{MRR} & \bf{HITS@1} & \bf{HITS@3} & \bf{HITS@10} \\
        \midrule
        TransE~\cite{bordes2013translating} & 357 & 0.294 & - & - & 0.465 & 3384 & 0.226 & - & - & 0.501 \\
        DistMult~\cite{yang2015embedding} & 254 & 0.241 & 0.155 & 0.263 & 0.419 & 5110 & 0.43 & 0.39 & 0.44 & 0.49 \\
        ComplEx~\cite{trouillon2016complex} & 244 & 0.325 & 0.237 & 0.356 & 0.501 & 5261 & 0.44 & 0.41 & 0.46 & 0.51 \\
        RotatE~\cite{sun2019rotate} & \bf{177} & \bf{0.338} & \bf{0.241} & \bf{0.375} & \bf{0.533} & 3340 & 0.476 & 0.428 & 0.492 & 0.571 \\
        GraphVite (RotatE) & 201 & 0.314 & 0.218 & 0.348 & 0.506 & \bf{2359} & \bf{0.500} & \bf{0.455} &\bf{0.518} & \bf{0.589} \\
        \bottomrule
    \end{tabular}
    \end{adjustbox}
    \label{tab:performance_kg}
\end{table}

\smallskip \noindent \textbf{Larger Datasets.}
We further report the result of GraphVite on Wikidata5m, a large-scale knowledge graph that contains 5 million entities and 21 million triplets. We compare the results of 5 popular embedding methods in GraphVite. Each method is trained for 1000 epochs with 4 Tesla V100 GPUs and 24 CPU threads. Table~\ref{tab:wikidata5m} shows the time and performance of different embedding methods. It takes about 2 hours to train these methods on such a large graph with 4 GPUs. We can see that the performance of different methods generally follows their order in literatures, except that SimplE is slightly worse than ComplEx.

\begin{table}[t]
    \centering
    \caption[Result of time and performance on Wikidata5m]{Result of time and performance on Wikidata5m.}
    \footnotesize
    \begin{tabular}{lcccccc}
        \toprule
        \bf{Method} & \bf{Training Time} & \bf{MR} & \bf{MRR} & \bf{HITS@1} & \bf{HITS@3} & \bf{HITS@10} \\
        \midrule
        GraphVite (TransE) & 2.26 hrs & 109,370 & 0.253 & 0.170 & 0.311 & 0.392 \\
        GraphVite (DistMult) & 2.29 hrs & 279,091 & 0.248 & 0.204 & 0.273 & 0.331 \\
        GraphVite (ComplEx) & 2.23 hrs & 244,540 & 0.281 & 0.228 & 0.310 & 0.373 \\
        GraphVite (SimplE) & 2.22 hrs & 123,400 & 0.263 & 0.212 & 0.287 & 0.358 \\
        GraphVite (RotatE) & 2.10 hrs & 89,459 & 0.290 & 0.234 & 0.322 & 0.390 \\
        \bottomrule
    \end{tabular}
    \label{tab:wikidata5m}
\end{table}
\section{Dataset Statistics}

The following datasets are used in our experiments. Statistics of homogenenous graphs and knowledge graphs are summarized in Table~\ref{tab:homogeneous_graphs} and \ref{tab:knowledge_graphs} respectively. 

\begin{itemize}[label=$\bullet$, leftmargin=*]
    \begin{item}
        Youtube~\cite{mislove2007measurement} is a large-scale social network in the Youtube website. It contains 1 million nodes and 5 million edges. For some of the nodes, they have labels that represent the type of videos users enjoy.
    \end{item}
    \begin{item}
        Friendster-small~\cite{yang2015defining} is a sub-graph induced by all the labeled nodes in Friendster. It has 8 million nodes and 447 million edges. The node labels in this graph are the same as those in Friendster.
    \end{item}
    \begin{item}
        Hyperlink-PLD~\cite{meusel2015graph} is a hyperlink graph extracted from the Web corpus~\footnote{\url{http://commoncrawl.org/}}. We use the pay-level-domain aggregated version of the graph. It has 43 million nodes and 623 million edges. This dataset does not contain any label.
    \end{item}
    \begin{item}
        Friendster~\cite{yang2015defining} is a very large social network in an online gaming site. It has 65 million nodes and 1.8 billion edges. Some nodes have labels that represent the group users join.
    \end{item}
    \begin{item}
        FB15k-237~\cite{toutanova2015observed} is a subset of the encyclopedia knowledge graph FB15k~\cite{bordes2013translating} with duplicate and inverse relations removed. It has 15 thousand entities, 237 relations and 272 thousand triplets.
    \end{item}
    \begin{item}
        WN18RR~\cite{dettmers2018convolutional} is a subset of the ontology graph WN18~\cite{miller1998wordnet} constructed following a similar process of FB15k-237. It has 41 thousand entities, 11 relations and 87 thousand triplets.
    \end{item}
    \begin{item}
        Wikidata5m~\cite{wang2021kepler} is a very large encyclopedia knowledge graph extracted from Wikidata~\cite{vrandevcic2014wikidata}. It has 5 million entities, 822 relations and 21 million triplets.
    \end{item}
\end{itemize}

\begin{table}[!h]
    \centering
    \caption[Statistics of homogeneous graphs used in experiments]{Statistics of homogeneous graphs used in experiments.}
    \footnotesize
    \begin{tabular}{lccc}
        \toprule
        \bf{Dataset} & \bf{\#Nodes} & \bf{\#Edges} & \bf{Evaluation Task} \\
        \midrule
        Youtube & 1,138,499 & 4,945,382 & 47-class node classification \\
        Friendster-small & 7,944,949 & 447,219,610 & 100-class node classification \\
        Hyperlink-PLD & 39,497,204 & 623,056,313 & link prediction \\
        Friendster & 65,608,376 & 1,806,067,142 & 100-class node classification \\
        \bottomrule
    \end{tabular}
    \label{tab:homogeneous_graphs}
\end{table}

\begin{table}[!h]
    \centering
    \caption[Statistics of knowledge graphs used in experiments]{Statistics of knowledge graphs used in experiments.}
    \footnotesize
    \begin{tabular}{lcccc}
        \toprule
        \bf{Dataset} & \bf{\#Entities} & \bf{\#Relations} & \bf{\#Triplets} & \bf{Evaluation Task} \\
        \midrule
        FB15k-237 & 14,541 & 237 & 272,115 & knowledge graph completion \\
        WN18RR & 40,943 & 11 & 86,835 & knowledge graph completion \\
        Wikidata5m & 4,594,485 & 822 & 20,614,279 & knowledge graph completion \\
        \bottomrule
    \end{tabular}
    \label{tab:knowledge_graphs}
\end{table}

\chapter{Conclusion}

In this thesis, we have explored several representation learning models in reasoning domains, with a focus on generalization across structures. Different from prevalent embedding methods that memorize information of each element in a structure, we devised models to learn representations as functions of fundamental elements shared by different structures. Such elements include paths, relation-relation interactions, relation projections and rules. Our models offer advantages in generalization across various knowledge and query structures, such as knowledge graphs with arbitrary entity and relation vocabularies, multi-step queries in both graph and text modalities. Consequently, we unified different graph structures and developed the first foundation model for both single- and multi-step queries on knowledge graphs. Alongside these models, we have released two systems to facilitate machine learning development and accelerate representation learning models on structured data.

The assets we developed in this thesis have already influenced the community, leading to several follow-up works built on top of our models or systems. Some notable examples include expressiveness of GNNs~\cite{zhang2021labeling, huang2024theory}, distance-based propagation~\cite{shomer2023distance} and generalization across discrete attributes~\cite{shen2024zero}. TorchDrug has been adopted by industrial open-source software such as ChemicalX from AstraZeneca and GT4SD from IBM. The idea of GraphVite has been integrated into DGL-KE from Amazon. We expect our assets to continue benefiting both graph machine learning and reasoning communities.

While techniques may soon become outdated, there is some intangible heritage from our works, which we believe will have a relatively long-term impact. First, our works accelerated the transition from transductive models to inductive ones in the community, eliminating the need of training models separately for each dataset or knowledge update. This simplifies the development pipeline and also reduces carbon footprint. Second, our works reveal that the key to generalization in reasoning problems is having a proper inductive bias for models, and we provided several examples on how to draw inspiration from symbolic algorithms and inject certain inductive bias in models. Finally, our works empirically prove that many reasoning problems, even if they come from different datasets or domains, can be unified. This will serve as an important lesson and influence future models in the reasoning community.

Looking into the future, reasoning and generalization across structures will be mostly explored in the realm of large language models. Compared to knowledge graphs that are constructed based on handcrafted schemas (e.g.\ definitions of triplets), text is more general and flexible in terms of representing knowledge, queries and answers. However, this does not render our efforts on structures in vain. As pointed out in Chapter~\ref{cha:introduction}, structures are inherent in reasoning problems regardless of their modality. Our methods have matured in the testbed of explicit structures like knowledge graphs, and the future is to bring these ideas to LLMs to solve textual reasoning problems with implicit structures. Some future directions include generalization across task structures, updating knowledge in LLMs and systems for LLM reasoning.

\smallskip \noindent \textbf{Generalization across task structures.}
As LLMs unify all sequence-to-sequence tasks in natural languages, it is crucial to understand how they generalize across tasks with different structures and how to improve this generalization. Enhancing this capacity will significantly impact the scaling law~\cite{kaplan2020scaling}, as better generalization leads to better performance under the same amount of training data. Currently, LLMs acquire their zero-shot generalization ability after instruction tuning, but there lacks a clear understanding of how instructions transfer to new questions or domains, which causes the failure observed in Figure~\ref{fig:gpt_failure}. GPT-4 generally has the necessary knowledge for the questions if we prompt separately, but cannot follow the instruction to answer the question. To solve this challenge, we may follow the success in Chapter~\ref{cha:ultra} to design models based on meta structures extracted from each task.

A more explicit unification of tasks with different structures is adopted by function calling nowadays. LLMs have been trained on massive code snippets, and they have strong abilities in performing reasoning related to code. In function calling, we abstract task-specific procedures as functions, and leverage LLMs to perform task-independent reasoning strategies conditioned on the function signatures. If we can improve the generalization of LLMs over new tasks and tools, this will significantly extend the applicability of function calling. Additionally, generalization across tasks under distribution shift is of great interest, since data collection in some domains is challenging and we have to pretrain LLMs based on synthetic data or data from other domains.

\smallskip \noindent \textbf{Updating knowledge in large language models.}
Before the advent of LLMs, knowledge was usually stored in natural languages, or handcrafted forms such as knowledge graphs and databases. LLMs have emerged as a new source of knowledge in the form of parameters, which brings up a new challenge: how can we update the parameteric knowledge in LLMs? For example, we may want to inject commonsense knowledge or up-to-date information into models, and remove toxic knowledge from the model. Addressing this challenge is particularly important given the extremely high cost of re-training language models. A recent work~\cite{kandpal2023large} points out that LLMs cannot memorize all long-tail knowledge, even with reasonable scaling in model size. Consequently, controlling what to memorize is crucial to maximize the utility of LLMs. We estimate that such a goal may be achieved through techniques like knowledge editing~\cite{meng2022locating}, model merging~\cite{matena2022merging} or dynamic LoRA composition~\cite{huang2023lorahub}.

\smallskip \noindent \textbf{Systems for large language model reasoning.}
Systems have always been playing an important role in the development and deployment of machine learning algorithms, and this principle applies equally to large language model reasoning. Generally, we need systems for two purposes: (1) encapsulating existing routines into functions to support ideas of a higher level; (2) reducing the computation and memory cost to unlock new opportunities for research and applications. Currently, many reasoning pipelines involve multiple calls to an LLM, which requires a lot of trial and error during development. To streamline this process, it is crucial to develop automatic evaluation systems and automatic prompt optimization systems for LLMs. An interesting remark is that these systems may be implemented based on LLM themselves. Additionally, optimizing the computation and memory cost of a single LLM call benefits all kinds of applications, and we foresee this challenge will be resolved by automatic model optimization systems, such as compilation.

\bibliographystyle{plain}
\bibliography{reference}

\appendix

\end{document}